\pgfplotsset{compat=newest}
\newcommand{\prob}[1]{\mathbb{P}( #1 )}
\newcommand{\Prob}[1]{\mathbb{P}\left( #1 \right)}
\newcommand{\mean}[1]{\mathbb{E} [ #1 ]}
\newcommand{\Mean}[1]{\mathbb{E}\left[#1 \right]}
\newcommand{\norm}[1]{ \|   #1  \| }
\newcommand{\ind}[1]{ \mathbb{I} \left\{#1 \right\} }
\newcommand{\abs}[1]{\left| #1 \right|}
\newcommand{\cPeps}[1]{\mathcal{P}^{(#1)}_{\epsilon_{#1}}}
\newcommand{\opnorm}[1]{\left\| #1 \right\|_{\text{op}}}
\newcommand*\diff{\mathop{}\!\mathrm{d}}
\DeclareMathOperator*{\argmax}{argmax}
\DeclareMathOperator*{\argmin}{argmin}
\crefname{section}{Section}{Sections}
\crefname{figure}{Figure}{Figures}
\crefname{theorem}{Theorem}{Theorems}
\crefname{lemma}{Lemma}{Lemmas}
\crefname{remark}{Remark}{Remarks}
\crefname{appendix}{Appendix}{Appendicies}
\crefname{proposition}{Proposition}{Propositions}
\crefname{equation}{Equation}{Equations}
\crefname{table}{Table}{Tables}
\crefname{assumption}{Assumption}{Assumptions}
\crefname{algorithm}{Algorithm}{Algorithms}
\crefname{example}{Example}{Examples}
\newtheorem{theorem}{Theorem}
\newtheorem{proposition}{Proposition}
\newtheorem{lemma}{Lemma}
\newtheorem{corollary}{Corollary}
\newtheorem{definition}{Definition}
\newtheorem{assumption}{Assumption}
\newtheorem{remark}{Remark}
\newtheorem{example}{Example}
\title{Efficient Transfer Learning via Causal Bounds}
\author[1]{Xueping Gong}
\author[2]{Wei You}
\author[3]{Jiheng Zhang}
\affil[1]{School of Management, Xiamen University, \url{xgongah@xmu.edu.cn}}
\affil[2]{Department of Industrial Engineering and Decision Analytics, The Hong Kong University of Science and Technology, \url{weiyou@ust.hk}}
\affil[2]{Department of Industrial Engineering and Decision Analytics, The Hong Kong University of Science and Technology, \url{jiheng@ust.hk}}
\begin{document}

\maketitle

\begin{abstract}
  Transfer learning seeks to accelerate sequential decision-making by leveraging offline data from related agents.
  However, data from heterogeneous sources that differ in observed features, distributions, or unobserved confounders often render causal effects non-identifiable and bias naive estimators.
  We address this by forming ambiguity sets of structural causal models defined via integral constraints on their joint densities.
  Optimizing any causal effect over these sets leads to generally non-convex programs whose solutions tightly bound the range of possible effects under heterogeneity or confounding.
  To solve these programs efficiently, we develop a hit-and-run sampler that explores the entire ambiguity set and, when paired with a local optimization oracle, produces causal bound estimates that converge almost surely to the true limits.
  We further accommodate estimation error by relaxing the ambiguity set and exploit the Lipschitz continuity of causal effects to establish precise error propagation guarantees.
  These causal bounds are then embedded into bandit algorithms via arm elimination and truncated UCB indices, yielding optimal gap-dependent and minimax regret bounds.
  To handle estimation error, we also develop a safe algorithm for incorporating noisy causal bounds.
  In the contextual-bandit setting with function approximation, our method uses causal bounds to prune both the function class and the per-context action set, achieving matching upper and lower regret bounds with only logarithmic dependence on function-class complexity.
  Our analysis precisely characterizes when and how causal side-information accelerates online learning, and experiments on synthetic benchmarks confirm substantial regret reductions in data-scarce or confounded regimes.
\end{abstract}

\section{Introduction}

Traditional sequential decision-making algorithms typically operate without prior knowledge \citep{banditBook}.
With the abundance of data collected from various sources, transfer learning has emerged as a powerful tool to accelerate learning by leveraging knowledge from a related source agent \citep{TLsurvey}.
Most transfer learning methods assume that both the source and target agents have access to the same complete information \citep{TCBforCrossDomain,TLinCB}.
Often, the data available from the source and target agents differ in features, completeness, or distributions.
This heterogeneity poses a challenge when transferring knowledge is not directly compatible.
We address this challenge by focusing on transfer learning with partially observable or biased data, using contextual bandit as an illustrating example.
In this setting, an agent makes a series of decisions (such as choosing an action based on current context) to maximize rewards,
but both the target agents have access to possibly different and incomplete sets of information.

Transfer learning has practical applications for integrating knowledge from heterogeneous data sources.
For example, in training personalized recommendation systems for e-commerce platforms, the source agent learns from detailed website user behavior (full context), while the target agent focuses on mobile app users who provide only partial data due to privacy restrictions or technical limitations.
New influencing factors may also emerge that were absent in earlier data, making datasets from different periods or platforms heterogeneous.
A similar challenge arises in autonomous driving systems.
The source agent gathers data from human drivers, including sensor readings like steering angle and environmental conditions.
However, human decision-making involves cognitive processes that cannot be captured by sensors.
The target agent aims to develop autonomous vehicles capable of making safe and efficient decisions but faces heterogeneity because the context of human decision-making is only partially observable.
These examples highlight the need for effective knowledge transfer across heterogeneous datasets.

In the presence of heterogeneous data, transfer learning becomes more challenging.
Agents have access only to partial contextual information, which complicate the task of accurately estimating the rewards of actions using source data.
Previous works \citep{boundingCE_continuous_IV,gxp,HTbyWeightedModels,PLforCB,islam2022computational,nikolaev2013balance} have attempted to address this issue by introducing additional assumptions, such as the availability of proxy variables or predefined causal relationships between observed and unobserved information.
While these approaches can be effective, the assumptions they rely on are often impractical in real-world scenarios, which limits their applicability.
In the absence of such assumptions, transferring knowledge from a source domain can sometimes degrade the performance of the target model, leading to what is known as \emph{negative transfer} \citep{rosenstein2005transfer}.

To address these challenges, it is crucial to focus on causal effects,
which capture the intrinsic properties of the environment and remain invariant across domains
\citep{bandits_causal_approaches,causalbandit,TLinCB,TLinRL,eberhardt2024discovering}.
However, in the presence of heterogeneous data, causal effects are often \textit{non-identifiable} due to unobserved confounding variables or distribution shifts,
which creates a significant barrier to the effective application of transfer learning.
Moreover, most existing works on offline transfer learning rely on the availability of instrumental variables (IVs) or proxies to infer causal relationships.
As highlighted by \citet{IAB,DeepProxy}, obtaining such variables is often challenging or infeasible in many applications.

In this paper, we adopt the structural causal model (SCM) framework \citep{causalbook}.
Our approach distills information from the source agent into causal bounds on the confounded causal effects of actions on rewards.
By leveraging these bounds, we reduce inefficient exploration in bandit algorithms, thus accelerating learning and ensuring faster convergence to optimal policies.

\subsection{Our Contributions}

\paragraph{Novel sampling framework for causal bounds.}
We focus on the joint density functions of SCM variables, defined with respect to a \emph{reference measure} to treat discrete, continuous, and mixed distributions uniformly.
To capture causal non-identifiability, we form ambiguity sets of these densities via integral constraints, thereby defining \textit{compatible SCMs} that encode diverse offline knowledge.
This allows us to define compatible SCMs that are consistent with a great variety of offline knowledge.
Optimizing any causal effect over these sets leads to generally non-convex programs whose solutions tightly bound the range of possible effects under heterogeneity or confounding.
To solve these programs efficiently, we develop a hit-and-run sampler on density space that explores the entire ambiguity set and produces causal-bound estimates that converge to the true limits in probability.
When coupled with a local optimization oracle, our method yields refined causal-bound estimates that achieve almost sure convergence.
By relaxing the ambiguity set to account for estimation error and exploiting the Lipschitz continuity of causal effects, we derive precise error propagation guarantees, laying the theoretical groundwork for transferring noisy causal bounds.
Furthermore, the Hausdorff-metric convergence extends even in infinite-dimensional spaces; we provide constructive approximation schemes in Appendix~\ref{sec: approximation of infinite function space}.

\paragraph{Transfer learning for multi-armed bandit with estimation uncertainty.}
First, we show how to embed these into multi-armed bandit algorithms to accelerate online learning,
when our causal-bound sampler recovers tight causal bounds that contain the true expected rewards.
Recognizing that bounds estimated from limited or biased data may fail to contain the true rewards in practice, we then introduce a provably safe mechanism for using noisy causal bounds.
Our analysis delivers both gap-dependent and minimax regret bounds that explicitly quantify the impact of causal estimation error,
and shows that as estimation noise vanishes, the algorithm's performance converges to that under exact bounds.

\paragraph{Contextual bandit transfer with function approximation.}
We extend our framework to contextual bandits with continuous contexts via general function approximation, setting it apart from the linear reward model in the literature \citep{PObandit}.
Specifically, we provide
(i) theoretical guarantee with logarithmic dependence on function-class complexity---a strict improvement over the square-root dependence in \citet{boundingCE_continuous_IV};
and (ii) computationally efficient methods that bound the metric entropy of causally constrained function classes through diameter reduction while characterizing pruned action sets via tractable convex programming for both unstructured and linear reward spaces.
Extensive experiments validate theoretical improvements and computational efficiency.

\subsection{Related Works}

\paragraph{Robust causal inference via optimization.}
Our work connects to a broad line of research on robust causal inference via optimization.
The do-calculus of \citet{causaloverview} uses probabilistic rules to identify causal effects, and \citet{tianBound} derived model-free bounds when point identification fails.
In discrete settings, linear program (LP) formulations were introduced by \citet{TLinMAB} and later made scalable by \citet{scalableLPbound}; however, these LP-based methods can yield loose bounds and often assume specific confounder structures.
Extensions to non-binary outcomes and continuous variables using instrumental variables have appeared in \citet{CE_nonbinary,unitSelectionCE,boundingCE_continuous_IV}.
A robust-optimization perspective was developed by \citet{bound_noisy_covariates} for partially identifying average treatment effects under noisy covariates, generalizing back-door adjustment and inverse-probability weighting to bound estimation.
To handle partially observable back-door and front-door criteria, \citet{CEbound} employed nonlinear programs.
\citet{summaryCB} provided an automated approach for causal inference in discrete settings, summarizing these optimization-based methods.
MCMC-based approximations under latent confounding were explored in \citet{MCMCbound}, albeit still relying on structural assumptions.

\paragraph{Marginal sensitivity model (MSM).}
\citet{MSM} provides an MSM framework for handling unmeasured confounding by introducing sensitivity parameters that bound the influence of hidden variables.
This allows practitioners to quantify uncertainty in causal estimates and derive valid bounds, for instance, on the average treatment effect \citep{doublysharpMSM} or the conditional average treatment effect \citep{intervalMSM,Quasi-oraclebounds}.
Building on MSM, \citet{minimaxMSM} develop a minimax-optimal policy-learning algorithm that minimizes worst-case regret over uncertainty sets for propensity scores.
Moreover, MSM supports principled sensitivity analysis:
For example, \citet{conformalMSM} propose a conformal sensitivity approach for individual treatment effects, delivering distribution-free coverage guarantees under varying levels of confounding.

\paragraph{Policy learning in offline and online settings.}
Our work relates to policy learning in both offline and online environments, including multi-armed bandits \citep{TLinMAB,TLinMABwithFiniteModels}, contextual bandits \citep{TLinCB,PL_confounded_bandit}, and reinforcement learning \citep{TLinRL,onlineRL_offline,hybridRL}.
In linear bandits, \citet{PObandit} improve online performance using partially missing offline data; we generalize their framework to arbitrary reward functions via our adaptive inverse-gap weighting.
Although recent studies \citep{POCB_linear_TS} explore partially observable contextual bandits, transfer learning in that setting remains understudied.
Transfer learning also plays a key role in operations management applications such as offline pricing policy optimization \citep{OfflineFeaturePricing,ren2024personalized,bu2020online}.

\paragraph{Partially observable Markov decision processes (POMDPs).}
POMDPs and related dynamical systems share many of the confounding challenges we address.
In the offline setting, \citet{POMDP_IV,OEPOMDP} and \citet{offlinePOMDPproxy} leverage proxy variables to identify causal effects in policy evaluation, while \citet{proximalRL} introduce bridge functions for a similar purpose under unobserved confounding.
In online reinforcement learning, \citet{POMDPlinear} and \citet{ProvablyEfficientCRL} apply the back-door criterion within the Bellman equation, and obtain provably efficient algorithms with linear function approximation even under partial observability.
More generally, \citet{CI_POMDP}, \citet{robustQ_RL}, and \citet{future_pomdp} develop techniques for causal inference in POMDPs and dynamical systems.
Outside the RL literature, \citet{TaleofTwoCities} model offline pricing under inventory control as an MDP with missing price points in the dataset.
While other works \citep{yu2020mopo,buckman2020importance,pessimismOfflineRL} have relaxed full-coverage assumptions for offline data, \citet{TaleofTwoCities} explicitly introduce partial observability into the offline MDP framework.

\subsection{Organization}

The rest of the paper is organized as follows.
In \cref{sec: prelim}, we review structural causal models, introduce compatible SCMs under prior constraints, and formalize causal non-identifiability along with the extremal-value bounds.
In \cref{sec: sampling general}, we present our causal-bound samplers for computing these bounds.
In \cref{sec: transfer}, we first show how exact causal-effect bounds can be used to prune arms and truncate UCB for accelerated regret, and then extend this to handle noisy or biased bounds while preserving performance guarantees.
In \cref{sec: transfer learning to POCB}, we extend our framework to contextual bandits with continuous contexts via function approximation, showing how causal-bound-driven prunings of the function class and action set yield strictly improved regret guarantees.
In \cref{sec: numerical}, we present our numerical results.
Details of proofs are deferred to the appendix.

\section{Preliminaries}\label{sec: prelim}
To transfer knowledge safely across domains whose data-generating processes may differ, we first formalize any available prior information from offline data summaries or expert judgment as constraints on the joint law of all variables. Within the structural causal model (SCM) framework, we then define the class of compatible SCMs to be those whose causal graph matches our assumptions and whose joint distributions satisfy exactly these imposed constraints.
Finally, by optimizing over this restricted family of models, we derive the tight bounds on any target causal effect.

\subsection{A Motivating Example: Partially Observed Structural Causal Models}
We adopt structural causal models (SCMs)~\citep{causalbook} as our foundational semantic framework. An SCM $\mathcal{M}=(\bm{V},\bm{U},\mathcal{F},\mathbb{P}_{\bm{U}})$ comprises a set of endogenous variables $\bm{V}$ and exogenous variables $\bm{U}$, a collection of structural equations $\mathcal{F}$, and a probability distribution $\mathbb{P}_{\bm{U}}$ over the exogenous variables. Each endogenous variable $V_i\in\bm{V}$ is determined by a structural equation $V_i = f_{V_i}\bigl(\mathrm{pa}(V_i)\bigr),$ where $f_{V_i}\in\mathcal{F}$ and $\mathrm{pa}(V_i)\subseteq\bm{V}\cup\bm{U}$ denotes its direct causes. The set of equations $\mathcal{F}$ induces a causal diagram $\mathcal{G}_{\mathcal{M}}$, a directed acyclic graph whose nodes correspond to variables and whose edges represent direct causal influences.

We consider partially observed offline datasets in which some covariates affecting either the action or the reward are unobserved by the decision maker. As an illustrative example, we focus on the SCM depicted in Figure~\ref{fig: causal model with unobserved U and observed W}, an abstraction of the partially observed contextual bandit (POCB). Here, $W$ denotes the observed context, $U$ the unobserved context, $A$ the action, and $Y$ the reward. We allow arbitrary causal relations between $W$ and $U$. A learning policy generates $A$ based on both $W$ and $U$, and the reward $Y$ depends on $W$, $U$, and $A$. POCB provides a rich framework for modeling decision-making scenarios where the agent has access to an observed context $W$ but lacks complete information about the unobserved context $U$.

\begin{figure}[hbtp]
  \center
  \resizebox{.34\textwidth}{!}{
    \begin{tikzpicture}[->,shorten >=1pt,auto,node distance=2.5cm,-stealth,
      thick, base node/.style={circle,draw,minimum size=48pt}, real node/.style={double,circle,draw,minimum size=50pt}]

      \node[shape=circle,draw=black] (0) at (0,0) {$A$};
      \node[shape=circle,draw=black] (1) at (5,0) {$Y$};
      \node[shape=circle,draw=black] (2) at (2.5,1.3) {$W$};
      \node[shape=rounded rectangle, inner sep=7.5pt, dashed, draw=black] (3) at (2.5,3.6) {$U$};

      \node[draw, dotted, rectangle, fit={(0) (1)}, inner sep=0.2cm, label={[label distance=-0.55cm]below:multi-armed bandit}] {};

      \node[draw, dash dot, rectangle, fit={(0) (1) (2)}, inner sep=0.75cm, label={[label distance=-0.55cm]below:paritally observable contextual bandit}] {};

      \path[]
      (0) edge[->] node {} (1)
      (2) edge[->] node {} (0)
      (2) edge[->] node {} (1)
      (3) edge[->] node {} (1)
      (3) edge[->] node {} (0)
      (2) edge[dashed,<->] node {} (3);
    \end{tikzpicture}
  }
  \caption{Causal Diagram $\mathcal{G}$ for Contextual Bandit}
  \label{fig: causal model with unobserved U and observed W}
\end{figure}

\begin{example}[Warm-start online learning]
  Imagine a healthcare system where offline data has been collected from historical patient records.
  This dataset includes partially observable information: patients' health outcomes $Y$, the treatment administered $A$ and observed contexts $W$ (e.g., age, weight, baseline health conditions).
  However, critical information such as genetic markers or specific health risk factors (denoted by $U$), is unobserved due to limitations in past testing or incomplete records.
  Additionally, some features may be intentionally masked to protect patient privacy.
\end{example}

\begin{example}[Autonomous driving]
  Consider the task of training an autonomous driving system.
  During data collection, an autonomous vehicle is operated by a human driver, capturing the driving action $A$, the reward $Y$ (e.g., driving safety), and context variables $W$ (e.g., radar signals).
  However, the human driver's judgment $U$ of the environment cannot be collected by the sensors.
\end{example}

\subsubsection{Notation.}
Throughout the paper, we use the following notation. Uppercase letters (e.g., $X$) denote random variables and lowercase letters (e.g., $x$) their realizations.
Calligraphic letters (e.g., $\mathcal{X}$) denote the domain of a variable, with $|\mathcal{X}|$ its cardinality.
For a vector of random variables $\bm{X} = (X_1, \dots, X_d)$, we write its joint cumulative distribution function as $F(\bm{x}) \triangleq \mathbb{P}(\bm{X} \le \bm{x}),$
and its joint density (or probability mass) function as $\rho(\bm{x}) \triangleq \mathbb{P}(\bm{X} = \bm{x}).$
Conditional densities are denoted by $\rho(\bm{x} \mid \bm{y}) \triangleq \mathbb{P}(\bm{X} = \bm{x} \mid \bm{Y} = \bm{y})$.
When we wish to emphasize dependence on a particular structural causal model $\mathcal{M}$, we add a subscript $\mathcal{M}$; for example, $\rho_{\mathcal{M}}(\bm{v})$ denotes the joint density of $\bm{V}$ under model $\mathcal{M}$.
For any positive integer $k$, we denote the list $\{1,2,\ldots,k\}$ by $[k]$.

\subsection{Compatible Causal Models}
We assume all agents share a common structural causal graph $\mathcal{G}$.
They may observe different subsets of the endogenous variables $\bm{V}$ and exogenous variables $\bm{U}$, or operate in distinct environments that induce different observational distributions $\rho(\bm{v}) = \mathbb{P}(\bm{V}=\bm{v})$.
These differences mean the underlying SCMs need not be identical, so the usual \emph{unconfoundedness assumption} may fail.

To derive meaningful causal bounds, we focus on the subset of SCMs that are consistent with both the graph $\mathcal{G}$ and any available prior information such as offline dataset or domain expertise.
Formally, let $\mathcal{C}$ be a collection of constraints on the joint law $\rho(\bm{v},\bm{u})$.  The SCMs satisfying $\mathcal{C}$ form our class of \emph{compatible causal models}, and we compute bounds on causal effects by optimizing over this restricted class.

\begin{definition}[Compatible structural causal models]
  Let $\mathcal{G}$ be a causal diagram over variables $\bm{V} \cup \bm{U}$, and let $\mathcal{C}$ be a collection of constraints on the joint distribution $\rho(\bm{v},\bm{u})$.
  A structural causal model $\mathcal{M}$ is called \emph{compatible} with $(\mathcal{G},\mathcal{C})$ if $\mathcal{G}_{\mathcal{M}} = \mathcal{G}$ and $\rho_{\mathcal{M}}(\bm{v},\bm{u})$ satisfies all constraints in $\mathcal{C}$. We denote the set of all such models by $\mathfrak{C} = \mathfrak{C}(\mathcal{G},\mathcal{C})\,.$
\end{definition}

\begin{example}\label{ex:constraints}
  We list some common constraints that can be imposed on the joint law $\rho_{\mathcal{M}}(\bm{v},\bm{u})$:
  \begin{enumerate}[label=(\alph*)]
    \item \emph{Observational distribution:} In many applications, the agent has an offline dataset collected under the same environment, which provides the joint distribution $\rho(\bm{v})$ of the observed variables, commonly called the \emph{observational distribution}.
      We may incorporate this by adding the constraint $\{\rho_{\mathcal{M}}(\bm{v}) = \rho(\bm{v})\}$ to our set of  constraints $\mathcal{C}$. \label{ex:observational distribution}
    \item \emph{Known marginal distribution:} In many applications, the agent have knowledge of the existence of certain confounders, but have no access to their values.
      For example, in the healthcare setting, the agent may know that a patient's genetic predisposition $U$ is a confounder of the treatment $A$ and the outcome $Y$, but the agent does not have access to $U$ due to privacy concerns.
      However, it is usually possible to obtain the marginal distribution of $U$ from external sources, such as population studies or expert domain knowledge.
      Let $\rho(u)$ denote the marginal distribution $U$, then one imposes the constraint $\{\rho_{\mathcal{M}}(u) = \rho(u)\}$ on the joint law $\rho_{\mathcal{M}}(\bm{v},u)$. \label{ex:known marginal distribution}
    \item \emph{Estimation error and distribution shift:}
      When only an estimate $\widehat{\rho}(\bm{v})$ of the observational distribution is available, one may allow estimation uncertainty or distribution shift via
      $
      \bigl|\rho_{\mathcal{M}}(\bm{v}) - \widehat{\rho}(\bm{v})\bigr| \le \varepsilon(\bm{v}),
      $
      where $\varepsilon(\bm{v})\ge0$ is a tolerance or robustness parameter.
    \item \textit{Marginal sensitivity model:} \citet{rosenbaum_model} assumes that, for each action-context pair,
      \[
        \Gamma^{-1}
        \le \frac{ \rho(a_i = 1\mid w_i, u_i )/ ( 1- \rho(a_i = 1\mid w_i, u_i ))   }{ \rho(a_j = 1\mid w_j, u_j )/ ( 1- \rho(a_j = 1\mid w_j, u_j ))  }
        \le \Gamma,
        \quad \text{for constants }
        \Gamma >1 \text{ and } \forall w_i = w_j.
      \]
      Thie sensitivity model quantifies how causal conclusions might change due to unmeasured confounding by parameterizing the maximum possible bias introduced by an unobserved covariate $U$.
  \end{enumerate}
\end{example}

\subsection{Causal Non-Identifiability}

Let $V(\mathcal{M})$ denote a target causal quantity under the structural causal model $\mathcal{M}$.
A canonical example is the effect of an intervention on an action variable $A$, formalized by Pearl's do-operator.
Writing $\mathrm{do}(A=a)$ means we modify $\mathcal{M}$ by replacing the structural equation for $A$ with the constant assignment $A=a$, which graphically corresponds to deleting all incoming edges into $A$ while leaving the rest of the causal diagram unchanged.
The resulting interventional distribution $\mathbb{P}(Y=y\mid \mathrm{do}(A=a))$ describes the law of the outcome $Y$ after setting $A=a$, and its expectation $\mathbb{E}_{\mathcal{M}} [Y \mid \mathrm{do}(A=a)]$ is called the \emph{causal effect} of $A$ on $Y$.

When $V(\mathcal{M})$ represents a causal effect, its identifiability from observed data can be determined by do-calculus \citep{causaloverview}.
For example, suppose we know the full joint distribution $F(a,y,w,u)$, and our target is $V(\mathcal{M}) = \mean{Y|\mathrm{do}(A=a),w}$.
Since $W$ and $U$ together block all back-door paths from $A$ to $Y$, we may apply the back-door adjustment to uniquely identify the causal effect from $F$:
\[
  \mean{Y|\mathrm{do}(A=a),w} \triangleq \int_{u\in\mathcal{U}} \mean{Y|a,w,u}\diff F(u|w).
\]
We refer to \cref{sec in appendix: related material} for a concise review of do-calculus rules.

While do-calculus provides a systematic method for deriving identification formulas, it cannot recover a quantity that is \textit{non-identifiable}: whenever $U$ is unobserved, there may exist multiple SCMs compatible with the same observational distribution (even under additional constraints), yielding different values of $V(\mathcal{M})$.

\begin{definition}[Non-identifiability]
  Given a causal graph $\mathcal{G}$ and a constraint set $\mathcal{C}$, a quantity $V$ is called \emph{non-identifiable} if there exist two distinct models $\mathcal{M}, \mathcal{M}' \in \mathfrak{C}$ such that
  $V(\mathcal{M}) \neq V(\mathcal{M}')$.
\end{definition}

Causal non-identifiability presents an obstacle in transfer learning.
Consider the SCM depicted in \cref{fig: causal model with unobserved U and observed W}.
The unobserved expert context $U$ confounds both the action $A$ and the outcome $Y$.
Using only the observational distribution of $(A,Y,W)$ can induce negative transfer.
The root of the problem lies in the discrepancy between the conditional expectation $\mathbb{E}[Y|a,w]$ and the causal effect $\mathbb{E}[Y|\mathrm{do}(A = a),w]$.
Naive transfer methods that estimate $\mathbb{E}[Y|a,w]$ as a substitute for the causal effect incur bias from the unknown expert policy.
The following example demonstrates how such bias can lead to suboptimal policies and degraded performance in transfer learning.

\begin{example}[Negative transfer]
  Consider a 2-arm contextual bandit where $A$, $U$, and $W$ are all binary.  At the start of each round, the hidden and observed contexts $U,W$ are drawn independently with $\prob{U=1} = 0.9 $ and $ \prob{W=1} = 0.5.$
  These contexts jointly determine the reward $Y$ according to Table \ref{tb:Reward table}.
  \begin{table}[hbtp]
    \centering
    \begin{minipage}[b]{0.57\linewidth}
      \centering
      \small
      \begin{tabular}{c|cccc}
        \toprule
        $(w,u)$ & $(0,0)$ & $(1,0)$ & $(0,1)$ & $(1,1)$ \\
        \midrule
        $\mean{Y|\mathrm{do}(A=1),w,u}$   & $0$     & $0$     & $1$     & $1$     \\
        $\mean{Y|\mathrm{do}(A=0),w,u}$   & $10$    & $10$    & $0.9$   & $0.9$   \\
        \bottomrule
      \end{tabular}
      \caption{Reward Function}
      \label{tb:Reward table}
    \end{minipage}
    \hfill
    \begin{minipage}[b]{0.42\linewidth}
      \centering
      \small
      \begin{tabular}{c|cccc}
        \toprule
        $(a,w)$     & $(0,0)$ & $(0,1)$ & $(1,0)$ & $(1,1)$ \\
        \midrule
        $\mathbb{P}(a,w)$   & $0.05$     & $0.05$     & $0.45$     & $0.45$     \\
        \bottomrule
      \end{tabular}
      \caption{Observational Distribution}
      \label{tb:Expert probability table after transfer}
    \end{minipage}
  \end{table}
  An expert agent, observing both $U$ and $W$, follows the optimal policy $\pi^*_{\text{expert}}: \mathcal{W}\times\mathcal{U} \to \mathcal{A}$.
  and thus induces the empirical distribution $\prob{A,W}$ shown in Table \ref{tb:Expert probability table after transfer}.
  Since $U=1$ with probability $0.9$, the expert chooses arm 1 in $90\%$ of rounds.
  To see the causal effect of each arm, we compute for any fixed $w\in\{0,1\}$: $\mean{Y|\mathrm{do}(A=0),w} = 1.81$ and $\mean{Y|\mathrm{do}(A=1),w} = 0.9$.
  Hence the agent's optimal policy is $\pi_{\text{agent}}^*(W)=0$ which starkly contrasts with $\pi^*_{\mathrm{expert}}$ favoring arm 1.
  The discrepancy arises because the rare context $U=0$ yields a large reward gap, while for the common context $U=1$ the arms are almost equivalent.  An agent ignorant of $U$ but aware of its skewed distribution can infer that observed rewards are biased by this hidden context and must rely on the causal effect to transfer knowledge correctly.
\end{example}

\begin{remark}
  \citet{PObandit} tackle non-identifiability issue in the linear regime by encoding partially observable, confounded data as linear constraints in the online bandit setting.  In contrast, our framework makes no parametric assumptions on the reward mechanism---the structural equation for $Y$ may be completely arbitrary.
  Likewise, \citet{boundingCE_continuous_IV} extend to general function classes using instrumental variables; however, identifying valid instruments remains challenging, and their approach does not offer guarantees on regret optimality.
\end{remark}

\subsection{Causal Bounds}

We tackle the non-identifiability issue by producing tight bounds on $V(\mathcal{M})$, defined as
\begin{equation}\label{eq: causal bounds}
  V_{\min} = V_{\min}(\mathfrak{C}) \triangleq \inf_{\mathcal{M} \in \mathfrak{C}} V(\mathcal{M}),
  \quad
  V_{\max} = V_{\max}(\mathfrak{C}) \triangleq \sup_{\mathcal{M} \in \mathfrak{C}} V(\mathcal{M}).
\end{equation}
Causal bounds are essential because they quantify the full range of plausible values for our target quantity under all models consistent with the observed data and imposed constraints.  By characterizing both the worst-case and best-case scenarios for $V(\mathcal{M})$, we obtain a measure of uncertainty stemming from unobserved confounding or distribution shifts, and ensures that any decision or policy derived from these bounds is robust to hidden variation in the structural equations.
In subsequent sections, we show how to compute these bounds efficiently under various sensitivity models and integrate them into regret analysis for transfer learning.

\section{A Sampling Method for Causal Bound Computation}
\label{sec: sampling general}

Given a nonempty family of compatible causal models $\mathfrak{C}$, our goal is to compute the causal bounds defined in \eqref{eq: causal bounds} for some causal quantity of interest $V(\mathcal{M})$.
When $V$ depends nonlinearly on the causal model, these infimum and supremum problems can be non-convex, so standard convex solvers do not apply.

In what follows, we develop an ergodic hit-and-run MCMC sampler to approximate $V_{\min}$ and $V_{\max}$ under arbitrary continuous objectives.
We first show that the compatibility constraints defining $\mathfrak{C}$ forms a convex polytope $\mathcal{P}$ in the space of joint densities.
We then present a hit-and-run sampler that generates points in $\mathcal{P}$ whose empirical minima and maxima of $V$ converge in probability to $V_{\min}$ and $V_{\max}$.
To accelerate convergence, we describe how to feed each sampled point into a deterministic local-optimization oracle, strengthening convergence to almost sure.
Finally, we explain how to relax the polytope $\mathcal{P}$ to account for estimation error or mild distribution shift, yielding distributionally robust causal bounds.

\subsection{Sampling Density Functions}
Each SCM $\mathcal{M}$ defines a probability measure $\mathbb{P}_{\mathcal{M}}$ on $\Omega = \mathcal{A} \times \mathcal{Y} \times \mathcal{W} \times \mathcal{U}$,
which we assume is absolutely continuous with respect to a reference measure $\nu$.
Hence its Radon-Nikodym derivative $\rho_{\mathcal{M}} = \mathrm{d}\mathbb{P}_{\mathcal{M}}/\mathrm{d}\nu$ exists and satisfies $\rho_{\mathcal{M}} \in L^2(\Omega,\nu)$.
We assume that the feasible density set $\mathcal{P}$ is carved out by linear constraints:
\begin{equation}\label{eq: feasible set}
  \begin{aligned}
    \textstyle
    \mathcal{P}
    = \Bigl\{\rho \in L^2(\Omega,\nu)  \Bigm| \rho\geq 0,
      \, \,
      \int_{\Omega} \alpha_i \rho \mathrm{d}\nu = \beta_i,
      \, \,
      \int_{\Omega} \alpha_j \rho \mathrm{d}\nu \le \beta_j,
      \, \,
      \forall i \in [m], j \in [m']\backslash [m]
    \Bigr\},
  \end{aligned}
\end{equation}
where each $\alpha_i\in L^2(\Omega,\nu)$ is a given constraint function and $\beta_i\in\mathbb{R}$ is its associated constant.
Let $\bm x=(a,y,w,u)\in\Omega$ denote a generic point.
Without loss of generality, we assume that $\alpha_1(\bm x) = 1$ and $\beta_1=1$, corresponding to the normalization constraint.
The choices of $\alpha$ in the linear constraints can encode a variety of causal and structural conditions; see \cref{tab:causal_constraints_examples} for examples.

\begin{table}[h]
  \renewcommand{\arraystretch}{1.2}
  \centering
  \small
  \begin{tabular}{llll}
    \toprule
    Constraint & $\alpha(\bm x)$ & $\beta$ & Interpretation \\
    \midrule
    Normalization
    & $1$
    & $1$
    & $\int_{\Omega} \rho(\bm x) \mathrm{d}\nu(\bm x) = 1$ (total probability) \\
    Propensity
    & $\mathbb{I}\{A=a_0\}$
    & $\mathbb{P}(A=a_0)$
    & $\int_{\Omega} \mathbb{I}\{A=a_0\} \rho(\bm x) \mathrm{d}\nu(\bm x) = \mathbb{P}(A=a_0)$ \\
    Context marginal
    & $\mathbb{I}\{W\in\mathcal W_0\}$
    & $\mathbb{P}(W\in\mathcal W_0)$
    & $\int_{\Omega} \mathbb{I}\{W\in\mathcal W_0\} \rho(\bm x) \mathrm{d}\nu(\bm x) = \mathbb{P}(W\in\mathcal W_0)$ \\
    Conditional mean
    & $\mathbb{I}\{A=a_k\}\,Y$
    & $\mathbb{E}[Y\mid A=a_k]$
    & $\int_{\Omega} y \mathbb{I}\{A=a_k\} \rho(\bm x) \mathrm{d}\nu(\bm x) = \mathbb{E}[Y\mid A=a_k]$ \\
    Sensitivity bands
    & $\alpha_i(\bm x)$
    & $[\beta_i^-,\beta_i^+]$
    & $\beta_i^- \le \int_{\Omega} \alpha_i(\bm x) \rho(\bm x) \mathrm{d}\nu(\bm x)  \le \beta_i^+$ (uncertainty set) \\
    \bottomrule
  \end{tabular}
  \caption{Examples of Constraints on $\rho$ via Test Functions $\alpha(\bm x)$ and Corresponding Targets $\beta$}
  \label{tab:causal_constraints_examples}
\end{table}

\begin{remark}[Reference measures]
  The reference measure $\nu$ plays two essential roles. First, by assuming $\mathbb{P}_{\mathcal{M}}\ll\nu$, it guarantees the existence of the Radon-Nikodym derivative $\rho_{\mathcal{M}}$. Second, it adapts to the nature of the variables: for discrete components, $\nu$ is the counting measure; for continuous components, the Lebesgue measure; and in mixed settings, their product $\nu = \nu_{\mathcal{A}}\otimes\nu_{\mathcal{Y}}\otimes\nu_{\mathcal{W}}\otimes\nu_{\mathcal{U}}$.
  This unified construction covers both discrete and continuous cases seamlessly.
\end{remark}

We now present a hit-and-run sampler that constructs an ergodic Markov chain whose unique stationary distribution is uniform over $\mathcal{P}$.
We extend the sampling framework of \citet{convex_sampling}, designed for convex bodies in Euclidean space, to function spaces.
Our key innovation is a closed-form computation of step sizes that automatically respect all linear constraints, thus eliminating costly line searches.
Under mild regularity and finite-dimensionality assumptions, this algorithm runs efficiently in the constrained $L^2(\Omega,\nu)$ subspace while covering a wide range of practical causal scenarios.

\begin{assumption}\label{asp: regularity}
  The constraint functions $\{\alpha_1, \alpha_2, \dots, \alpha_m\}$ are linearly independent.
  Furthermore, the feasible set $\mathcal{P}$ is nonempty and bounded in the $L^2(\Omega,\nu)$ norm.
\end{assumption}

\begin{assumption}\label{asp: finite-dimension}
  There exists an $n$-dimensional subspace of $L^2(\Omega,\nu)$ with $n<\infty$ that contains $\mathcal{P}$.
\end{assumption}

The linear constraint structure in \eqref{eq: feasible set} permits an efficient projection scheme.
Define the equality-constraint operator $\mathscr{A}:L^2(\Omega,\nu)\to\mathbb{R}^m,
\mathscr{A}(g)=\bigl[\int_{\Omega} \alpha_i(\bm{x}) g(\bm{x}) \mathrm{d}\nu(\bm{x}) \bigr]_{i=1}^m.$
By Assumption~\ref{asp: regularity}, $\mathscr{A}$ has full row rank, and its adjoint $\mathscr{A}^\top: \mathbb{R}^m \to L^2(\Omega,\nu)$ is characterized via Riesz representation:
\[
  \mathscr{A}(g)^\top \bm{\xi}
  =\int_\Omega g(\bm{x})\,\mathscr{A}^\top(\bm{\xi})(\bm{x})\,\mathrm{d}\nu(\bm{x}),
  \quad
  \mathscr{A}^\top(\bm{\xi})=\sum_{i=1}^m\xi_i\,\alpha_i(\bm{x}).
\]
This yields the orthogonal projector
$
\mathscr{P} = \mathscr{I} - \mathscr{A}^\top (\mathscr{A}\mathscr{A}^\top)^{-1} \mathscr{A}
$
onto $\ker(\mathscr{A})$, where $\mathscr{I}$ is the identity operator on $L^2(\Omega,\nu)$.
By construction, any perturbation in $\ker(\mathscr{A})$ preserves all equality constraints in \eqref{eq: feasible set}.

Our Algorithm~\ref{alg: hit-run for density} operates as follows: (i) Draw a Gaussian process $G_t \sim \mathsf{GP}(0, \mathbb{K})$ with a pre-specified positive definite kernel $\mathbb{K}$, then project $G_t$ onto $\ker(\mathscr{A})$ to obtain a feasible direction $d_t$; (ii) compute \textit{closed-form} bounds $\lambda_t^{\min},\lambda_t^{\max}$ such that $\rho_{t-1}+\lambda d_t\in\mathcal{P}$ for all $\lambda\in[\lambda_t^{\min},\lambda_t^{\max}]$; and (iii) sample $\lambda_t$ uniformly from this interval and setting $\rho_t=\rho_{t-1}+\lambda_td_t$.
The following proposition guarantees the feasibility of the samples.

\begin{proposition}
  \label{prop: valid samples in P}
  The samples $\{\rho_t\}_{t=1}^T$ generated by Algorithm~\ref{alg: hit-run for density} satisfy the constraints in $\mathcal{P}$.
\end{proposition}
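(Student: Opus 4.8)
The plan is to argue by induction on the iteration index $t$, showing that if $\rho_{t-1}\in\mathcal{P}$ then the updated iterate $\rho_t=\rho_{t-1}+\lambda_t d_t$ remains in $\mathcal{P}$ for every admissible step size $\lambda_t\in[\lambda_t^{\min},\lambda_t^{\max}]$. The base case is immediate: Algorithm~\ref{alg: hit-run for density} is initialized at a feasible point $\rho_0\in\mathcal{P}$, whose existence is guaranteed by the nonemptiness part of Assumption~\ref{asp: regularity}. The three families of constraints defining $\mathcal{P}$ in \eqref{eq: feasible set}---nonnegativity, the $m$ equality constraints, and the $m'-m$ inequality constraints---are then verified separately.

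The equality constraints are the easiest and reveal why the projection step is essential. Since $d_t=\mathscr{P}G_t$ and $\mathscr{P}$ projects onto $\ker(\mathscr{A})$, we have $\mathscr{A}d_t=0$, i.e.\ $\int_\Omega\alpha_i d_t\,\mathrm{d}\nu=0$ for every $i\in[m]$. Hence, by linearity of the integral and the inductive hypothesis,
\[
  \int_\Omega\alpha_i(\rho_{t-1}+\lambda_t d_t)\,\mathrm{d}\nu
  =\int_\Omega\alpha_i\rho_{t-1}\,\mathrm{d}\nu+\lambda_t\int_\Omega\alpha_i d_t\,\mathrm{d}\nu
  =\beta_i+0=\beta_i,
\]
so every equality constraint is preserved regardless of the choice of $\lambda_t$.

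The remaining two families are enforced through the closed-form bounds $\lambda_t^{\min},\lambda_t^{\max}$. For each inequality constraint, write $c_j=\int_\Omega\alpha_j\rho_{t-1}\,\mathrm{d}\nu$ and $g_j=\int_\Omega\alpha_j d_t\,\mathrm{d}\nu$; the requirement $c_j+\lambda g_j\le\beta_j$ becomes an upper bound on $\lambda$ when $g_j>0$, a lower bound when $g_j<0$, and is vacuous when $g_j=0$ (since $c_j\le\beta_j$ by induction). For nonnegativity, the pointwise condition $\rho_{t-1}(\bm{x})+\lambda d_t(\bm{x})\ge 0$ forces $\lambda\ge-\rho_{t-1}(\bm{x})/d_t(\bm{x})$ wherever $d_t(\bm{x})>0$ and $\lambda\le-\rho_{t-1}(\bm{x})/d_t(\bm{x})$ wherever $d_t(\bm{x})<0$. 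Taking $\lambda_t^{\min}$ to be the maximum of all resulting lower bounds and $\lambda_t^{\max}$ the minimum of all resulting upper bounds guarantees $\rho_{t-1}+\lambda d_t\in\mathcal{P}$ for every $\lambda\in[\lambda_t^{\min},\lambda_t^{\max}]$; moreover, because $\rho_{t-1}\in\mathcal{P}$ the value $\lambda=0$ always lies in this interval, so the interval is nonempty and the sampling step in stage (iii) is well defined.

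The main obstacle is the nonnegativity constraint: it is an uncountable family of pointwise conditions, so the extrema defining $\lambda_t^{\min},\lambda_t^{\max}$ must be read as essential suprema and infima over $\Omega$ and, a priori, need neither be attained nor finite. This is precisely where Assumption~\ref{asp: finite-dimension} does the work: confining $\mathcal{P}$---and therefore $\rho_{t-1}$ and $d_t$---to a fixed $n$-dimensional subspace of $L^2(\Omega,\nu)$ reduces the ratio $-\rho_{t-1}/d_t$ to a calculation over finitely many coordinate functions, so the essential extrema are well defined and admit the advertised closed form with no line search. I would take care to verify that the projected Gaussian direction $d_t=\mathscr{P}G_t$ genuinely lies in this subspace, so that the ratio test is exact rather than a discretized approximation.
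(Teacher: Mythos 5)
Your proof is correct and follows essentially the same route as the paper's: induction on $t$, with the equality constraints preserved because $d_t=\mathscr{P}(G_t)\in\ker(\mathscr{A})$, and the nonnegativity and inequality constraints enforced by the sign-based case analysis underlying $\lambda_t^{\min}$ and $\lambda_t^{\max}$. Your added observations---that $\lambda=0$ always lies in the step-size interval, and that Assumption~\ref{asp: finite-dimension} keeps the pointwise extrema well defined---are sound refinements the paper's proof leaves implicit, but they do not change the argument.
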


\begin{algorithm}[hbtp]
  \renewcommand{\algorithmicrequire}{\textbf{Input:}}
  \renewcommand{\algorithmicensure}{\textbf{Output:}}
  \caption{Sampling Method for Compatible Causal Models}
  \label{alg: hit-run for density}
  \begin{algorithmic}[1]
    \Require Sample space $\Omega$, reference measure $\nu$, feasible set $\mathcal{P}$ defined in \eqref{eq: feasible set}, number of iterations $T$, positive definite covariance kernel $\mathbb{K}$, and initial density $\rho_0 \in \mathcal{P}$ w.r.t. $\nu$

    \State Initialize $v_{0,j} \gets \int_{\Omega} \alpha_j(\bm x) \rho_0(\bm x) \mathrm{d}\nu(\bm x) $ for $j \in [m'] \backslash [m]$
    \For{$t=1$ \textbf{to} $T$}
    \State Generate $G_t \sim \mathsf{GP}(0, \mathbb{K})$ and get projection $d_t \gets \mathscr{P}(G_t)$
    \State Compute $c_{t,j} \gets \int_{\Omega} \alpha_j(\bm x) d_t(\bm x) \mathrm{d}\nu(\bm x) $ for $j \in [m'] \backslash [m]$

    \State Compute the range of stepsize
    $$
    \lambda_t^{\min} \gets \max\biggl\{\sup_{\substack{j\in[m']\backslash [m],\,  c_{t,j} < 0}} \bigl(\beta_j - v_{t-1,j}\bigr)/c_{t,j},
    \sup_{\substack{\bm{x} \in \Omega,\,  d_t(\bm{x}) > 0}} \bigl( -\rho_{t-1}(\bm{x})/ d_t(\bm{x}) \bigr)\biggr\}
    $$
    $$
    \lambda_t^{\max} \gets \min\biggl\{ \inf_{\substack{j \in [m']\backslash [m],\,  c_{t,j} > 0}} \bigl(\beta_j - v_{t-1,j}\bigr)/c_{t,j},
    \inf_{\substack{\bm{x} \in \Omega,\,  d_t(\bm{x}) < 0}} \bigl( -\rho_{t-1}(\bm{x})/d_t(\bm{x}) \bigr)\biggr\}
    $$

    \State $\lambda_t \sim \operatorname{Uniform}[\lambda_t^{\min}, \lambda_t^{\max}]$
    \State $\rho_t \gets \rho_{t-1} + \lambda_t d_t$
    \State $v_{t,j} \gets v_{t-1,j} + \lambda_t c_{t,j}$ for $j \in [m'] \backslash [m]$
    \EndFor
    \Ensure $\{\rho_t\}_{t=0}^T$
  \end{algorithmic}
\end{algorithm}

We now show that Algorithm~\ref{alg: hit-run for density} induces an ergodic Markov chain on the convex polytope $\mathcal{P}$ whose unique stationary distribution is the uniform measure over $\mathcal{P}$.

\begin{proposition}
  \label{prop: hit-and-run}
  Under Assumptions~\ref{asp: regularity} and \ref{asp: finite-dimension}, the sequence $\{\rho_t\}_{t=0}^T$ generated by Algorithm~\ref{alg: hit-run for density} defines a reversible, ergodic Markov chain with its unique stationary distribution given by the uniform measure on $\mathcal{P}$.
\end{proposition}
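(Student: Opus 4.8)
The plan is to collapse the infinite-dimensional problem to a textbook hit-and-run chain on a convex body in Euclidean space, and then to establish reversibility by detailed balance and ergodicity by an irreducibility argument adapted to the Gaussian-projected direction. By Assumption~\ref{asp: finite-dimension}, $\mathcal{P}$ lies in an $n$-dimensional subspace $\mathcal{L}\subset L^2(\Omega,\nu)$; fixing an orthonormal basis of $\mathcal{L}$ identifies it isometrically with $\mathbb{R}^n$, carrying the $L^2$ inner product to the Euclidean one. Under this identification $\mathscr{A}$ becomes a full-rank $m\times n$ matrix (Assumption~\ref{asp: regularity}), its kernel $K:=\ker(\mathscr{A})$ is an $(n-m)$-dimensional subspace, and the equality constraints cut out the affine set $H=\rho_0+K$; intersecting $H$ with the inequality and positivity constraints gives the convex, bounded set $\mathcal{P}$. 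I would work inside $\mathrm{aff}(\mathcal{P})$ with its induced Lebesgue measure and take $\pi$ to be the normalized uniform measure, which is a well-defined probability measure once $\mathcal{P}$ is bounded with nonempty relative interior. Proposition~\ref{prop: valid samples in P} already guarantees that each move $\rho_{t-1}\mapsto\rho_{t-1}+\lambda_t d_t$ with $d_t\in K$ stays in $\mathcal{P}$ and that $[\lambda_t^{\min},\lambda_t^{\max}]$ is exactly the chord $\mathcal{P}\cap\{\rho_{t-1}+\lambda d_t:\lambda\in\mathbb{R}\}$, so the algorithm is precisely hit-and-run on the convex body $\mathcal{P}$.

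Next I would characterize the proposal direction $d_t=\mathscr{P}(G_t)$, the orthogonal projection onto $K$ of a centered Gaussian process $G_t\sim\mathsf{GP}(0,\mathbb{K})$. In finite-dimensional coordinates $d_t$ is a centered Gaussian vector supported on $K$ with covariance $\mathscr{P}\Sigma\mathscr{P}^\top$, where $\Sigma$ is the Gram matrix of $\mathbb{K}$ in the chosen basis. I extract two properties, which are all the rest of the argument needs: (i) \emph{symmetry}, $d_t\overset{d}{=}-d_t$, immediate from centering; and (ii) \emph{full support} on $K$, since positive-definiteness of $\mathbb{K}$ makes $\Sigma$ nondegenerate and projection preserves nondegeneracy on $K$, so the law of the normalized direction $d_t/\|d_t\|$ has strictly positive density on the unit sphere of $K$.

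For reversibility I would verify detailed balance of the one-step kernel with respect to $\pi$. Writing the kernel in chord coordinates, the density of moving from $x$ to $y$ within $\mathrm{aff}(\mathcal{P})$ (of dimension $k=n-m$) takes the classical form
\[
  p(x,y)=\frac{f(u)+f(-u)}{L(x,y)\,\|y-x\|^{\,k-1}},\qquad u=\frac{y-x}{\|y-x\|},
\]
where $f$ is the density of the normalized direction on the unit sphere of $K$, $L(x,y)$ is the length of the chord of $\mathcal{P}$ through $x$ and $y$, the factor $\|y-x\|^{k-1}$ is the polar-coordinate Jacobian, and $1/L$ comes from sampling $\lambda_t$ uniformly. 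Because the chord through $x$ and $y$ is the same set from either endpoint, $L(x,y)=L(y,x)$, and by direction symmetry $f(u)=f(-u)$, so $p(x,y)$ is manifestly symmetric in $x$ and $y$. Since $\pi$ has constant density, $\pi(x)p(x,y)=\pi(y)p(y,x)$, giving reversibility and hence stationarity of $\pi$. Crucially this uses only symmetry of $f$, so the Gaussian proposal needs no special treatment beyond the previous paragraph.

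Finally, for ergodicity I would use convexity for irreducibility: for any $x\in\mathcal{P}$ and any relatively open $B\subseteq\mathcal{P}$ with $\pi(B)>0$, the directions from $x$ into $B$ form a nonempty open cone, which by full support receives positive probability, and the chord toward $B$ lies in $\mathcal{P}$ by convexity; thus $P(x,B)>0$, yielding $\pi$-irreducibility in a single step. Aperiodicity follows since $P(x,\cdot)$ has a density component on every neighborhood of $x$. Having a stationary probability measure, $\pi$-irreducibility, and aperiodicity on a bounded convex body, I would invoke general-state-space theory (Meyn--Tweedie, specialized to hit-and-run by \citet{convex_sampling} and Lov\'asz--Vempala) to conclude Harris ergodicity and uniqueness of $\pi$. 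The main obstacle I anticipate is not the reversibility computation---which is standard once symmetry is in hand---but the finite-dimensional reduction together with the support analysis: one must confirm that $\mathcal{P}$ has nonempty relative interior, so that the uniform-measure and Lebesgue-density arguments are nonvacuous and the chain is not trapped on a lower-dimensional face, and that the projected Gaussian genuinely charges every direction in $K$, since this proposal departs from the classical uniform-on-sphere construction.
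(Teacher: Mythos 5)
Your proposal is correct and follows essentially the same route as the paper's proof: reversibility via detailed balance, using the symmetry of the zero-mean projected Gaussian direction together with the uniform step over the full feasible chord, plus irreducibility of the chain on the compact convex set $\mathcal{P}$ to obtain uniqueness of the uniform stationary measure. Your write-up is in fact more careful than the paper's, which simply asserts that the chain is ``by construction'' irreducible and Harris recurrent and that the transition probabilities are symmetric; you supply the pieces that make those assertions rigorous --- the explicit chord-coordinate kernel $p(x,y)=\bigl(f(u)+f(-u)\bigr)/\bigl(L(x,y)\,\|y-x\|^{k-1}\bigr)$, the full-support argument for the projected direction on $\ker(\mathscr{A})$, and the nonempty-relative-interior caveat that the paper leaves implicit.
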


\begin{remark}[Speeding up the sampling procedure]
  We discuss several strategies to accelerate the sampling procedure in Algorithm~\ref{alg: hit-run for density}.
  First, rather than project arbitrary functions, we precompute an orthonormal basis $\{\psi_k\}_{k=1}^{n-m}$ of $\ker(\mathscr{A})$.
  Then each random direction can be drawn simply as $d_t = \sum_{k=1}^{n-m} \zeta_k\,\psi_k,$ where $\zeta_k \stackrel{\text{i.i.d.}}{\sim} \mathcal{N}(0,1).$
  This leverages the finite-dimensionality guaranteed by Assumption~\ref{asp: finite-dimension} and eliminates the need to compute $\mathscr{A}^\top$ at runtime.
  Second, several steps of the sampler admit straightforward parallel execution:
  the Gaussian coefficients $\{\zeta_k\}$ for each $t$ are independent across iterations and can be drawn in parallel;
  the formulas for $\lambda_t^{\min}$ and $\lambda_t^{\max}$ depend only on the current $\rho_{t-1}$ and $d_t$, so each can be evaluated concurrently;
  and running several hit-and-run chains in parallel accelerates coverage of $\mathcal{P}$ and improves overall sample efficiency.
  Finally, we can further exploit structure in particular SCMs.
  For example, in the \textit{discrete} POCB model of Figure~\ref{fig: causal model with unobserved U and observed W}, one can reduce the effective projection dimension when estimating $V(\mathcal{M}) = \mathbb{P}(Y=y\mid\mathrm{do}(A=a),W=w)$; see Appendix~\ref{app: problem-specific acceleration}.
\end{remark}

\subsection{Causal Bounds and Its Convergence}
Algorithm~\ref{alg: hit-run for density} generates a sequence of densities $\{\rho_t\}_{t=1}^T$ whose law converges in total variation to the uniform distribution on $\mathcal{P}$.
Consequently, we can estimate the causal bounds by simple Monte Carlo: for each sample $\rho_t$ (corresponding to a model $\mathcal{M}_t$) we compute the quantity $V(\rho_t) = V(\mathcal{M}_t)$, and set
\[
  \widehat V_{\min}(T) = \min_{t \in [T]} V(\rho_t),
  \qquad
  \widehat V_{\max}(T) = \max_{t \in [T]} V(\rho_t).
\]
Denote the true bounds by $V_{\min} = \inf_{\rho\in\mathcal{P}} V(\rho)$ and $V_{\max} = \sup_{\rho\in\mathcal{P}} V(\rho)$.
To guarantee $\widehat V_{\min}(T)\to V_{\min}$ and $\widehat V_{\max}(T)\to V_{\max}$ as $T\to\infty$, we require continuity of $V$ in the $L^2(\Omega,\nu)$ topology.
\begin{assumption}\label{assu: continuous}
  There exists a compact set $\mathcal{K}\subset L^2(\Omega,\nu)$ such that  $\mathcal{P} \subset \mathcal{K}$, and $V : \mathcal{K} \rightarrow \mathbb{R}$ is continuous with respect to the $L^2(\Omega,\nu)$ norm.
\end{assumption}
Indeed, many causal effects of interest satisfies \cref{assu: continuous}.
For example, in Propositions~\ref{prop: continuity of E[Y|do(a)]} and~\ref{prop: continuity of E[Y|do(a),w]}, we show that the causal effects $\mathbb{E}_{\rho}[Y\mid \mathrm{do}(A=a)]$ and $\mathbb{E}_{\rho}[Y\mid \mathrm{do}(A=a),W=w]$ are in fact Lipschitz continuous in $\|\cdot\|_{L^2(\Omega,\nu)}$ under standard assumptions on the SCM~$\mathcal{M}$.

Combining the compactness of $\mathcal{P}$ (Assumptions~\ref{asp: regularity} and~\ref{asp: finite-dimension}) with the continuity of $V$ (Assumption~\ref{assu: continuous}) yields consistency of our Monte Carlo estimates:
\begin{theorem}\label{thm: convergence in probability in Markov}
  Under Assumptions~\ref{asp: regularity}, \ref{asp: finite-dimension}, and \ref{assu: continuous}, let $\{\rho_t\}_{t=1}^T$ be the output of Algorithm~\ref{alg: hit-run for density}.  Then
  \[
    \widehat V_{\min}(T) \xrightarrow{p} V_{\min},
    \quad
    \widehat V_{\max}(T) \xrightarrow{p} V_{\max}.
  \]
\end{theorem}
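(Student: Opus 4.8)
The plan is to combine three ingredients already established: the compactness of $\mathcal{P}$ (from \cref{asp: regularity,asp: finite-dimension}), the continuity of $V$ (\cref{assu: continuous}), and the ergodicity of the chain with uniform stationary law (\cref{prop: hit-and-run}). I will prove the statement for $\widehat V_{\min}(T)$; the argument for $\widehat V_{\max}(T)$ is symmetric, replacing sublevel sets by superlevel sets. First I would record that near-minimizers occupy positive stationary mass. Since $\mathcal{P}$ is closed and bounded inside a finite-dimensional subspace, it is compact, so $V$ attains its infimum at some $\rho^\star \in \mathcal{P}$ and $V_{\min} = V(\rho^\star)$. Fix $\epsilon > 0$ and set $G_\epsilon = \{\rho \in \mathcal{P} : V(\rho) < V_{\min} + \epsilon\}$. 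Continuity of $V$ makes $G_\epsilon$ relatively open in $\mathcal{P}$, and it is nonempty because it contains $\rho^\star$. The crucial geometric point is that $\pi(G_\epsilon) > 0$ under the uniform stationary distribution $\pi$: because $\mathcal{P}$ is a convex body it equals the closure of its relative interior, so any nonempty relatively open subset meets $\operatorname{relint}(\mathcal{P})$ and hence contains a relatively open Euclidean ball of positive volume. Write $p_\epsilon \triangleq \pi(G_\epsilon) > 0$.

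Next I would control how often the chain visits $G_\epsilon$. Let $N_T = \sum_{t=1}^T \ind{\rho_t \in G_\epsilon}$ count the visits. By \cref{prop: hit-and-run} the chain is ergodic with stationary law $\pi$, so the Markov-chain ergodic theorem, applied to the bounded indicator observable $\mathbb{I}\{\,\cdot \in G_\epsilon\}$, gives $\tfrac1T N_T \to p_\epsilon$ (in probability, indeed almost surely) from any initialization $\rho_0 \in \mathcal{P}$. Since $\{N_T = 0\} \subseteq \{|\tfrac1T N_T - p_\epsilon| \ge p_\epsilon\}$, this yields $\Prob{N_T = 0} \to 0$.

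Finally I would convert a single visit into the desired error bound. Every sample $\rho_t$ lies in $\mathcal{P}$, so $\widehat V_{\min}(T) \ge V_{\min}$ holds deterministically. On the event $\{N_T \ge 1\}$ there is some $t \le T$ with $\rho_t \in G_\epsilon$, whence $\widehat V_{\min}(T) \le V(\rho_t) < V_{\min} + \epsilon$. Combining the two inequalities gives the inclusion $\{|\widehat V_{\min}(T) - V_{\min}| \ge \epsilon\} \subseteq \{N_T = 0\}$, so $\Prob{|\widehat V_{\min}(T) - V_{\min}| \ge \epsilon} \le \Prob{N_T = 0} \to 0$, which is precisely $\widehat V_{\min}(T) \xrightarrow{p} V_{\min}$.

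The step I expect to require the most care is establishing $p_\epsilon = \pi(G_\epsilon) > 0$: continuity of $V$ and the convex-body structure of $\mathcal{P}$ must together force near-minimizers to occupy positive volume even when $\rho^\star$ sits on the boundary of $\mathcal{P}$ (as is typical when $V$ is affine or concave). This hinges on $\mathcal{P}$ having nonempty relative interior and on interpreting $\pi$ as the uniform measure relative to the affine hull of $\mathcal{P}$. A secondary point worth verifying is that the ergodic theorem is legitimately invoked from a fixed, possibly deterministic, starting density $\rho_0$; this is justified because \cref{prop: hit-and-run} provides genuine ergodicity rather than mere stationary reversibility. Note that total-variation convergence of the marginals alone would \emph{not} suffice, since positive correlations across $t$ could in principle keep the chain out of $G_\epsilon$ for arbitrarily long stretches, so the averaged (ergodic) control of $N_T$ is essential.
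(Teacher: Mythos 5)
Your proof is correct, but it takes a genuinely different route from the paper's. The paper argues via mixing: it invokes a total-variation mixing-time bound for hit-and-run, fixes a ball $\mathcal{B}(\rho_{\min},\delta)$ on which $V<V_{\min}+\epsilon$, partitions the horizon into blocks of one mixing time $\tau$, and uses the Markov property to show the probability of missing the ball in all $\lfloor T/\tau\rfloor$ blocks decays geometrically like $\bigl(1-\tfrac12\pi(\mathcal{B}(\rho_{\min},\delta))\bigr)^{\lfloor T/\tau\rfloor}$ --- a quantitative, finite-$T$ estimate. You instead apply the Birkhoff law of large numbers for Harris ergodic chains to the indicator of the open sublevel set $G_\epsilon$, deduce $\mathbb{P}(N_T=0)\to 0$ from $N_T/T\to\pi(G_\epsilon)>0$, and close with the deterministic sandwich $V_{\min}\le \widehat V_{\min}(T)< V_{\min}+\epsilon$ on $\{N_T\ge 1\}$ (feasibility of all samples being \cref{prop: valid samples in P}); this yields convergence in probability with no rate, but is more elementary in that it needs only the ergodicity asserted in \cref{prop: hit-and-run} rather than an external mixing-time theorem. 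Notably, your proof also fills a step the paper leaves implicit: the paper's bound is vacuous unless $\pi(\mathcal{B}(\rho_{\min},\delta)\cap\mathcal{P})>0$, which is never verified there, whereas your relative-interior argument (nonempty convex sets in finite dimension have nonempty relative interior, $\mathcal{P}=\overline{\operatorname{relint}\mathcal{P}}$, so the relatively open set $G_\epsilon$ meets $\operatorname{relint}\mathcal{P}$ in a set of positive Lebesgue measure within $\operatorname{aff}\mathcal{P}$) makes this explicit and handles minimizers on the boundary of $\mathcal{P}$. One small caveat: your closing remark that total-variation convergence of the marginals ``would not suffice'' is slightly overstated --- the paper's block argument shows TV control of the marginals \emph{combined with the Markov property} does suffice; what fails is using marginal convergence alone without conditioning across blocks.
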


\subsection{Accelerated Convergence via a Deterministic Optimization Oracle}\label{sec: optimization oracle}
In high-dimensional settings, uniform sampling from the feasible polytope $\mathcal{P}$ may suffer from the curse of dimensionality, causing the empirical bounds $\widehat V_{\min}(T)$ and $\widehat V_{\max}(T)$ to converge slowly to the true extremes.
To accelerate convergence, we assume access to deterministic local-optimization oracles satisfying:
\begin{assumption}[Local-optimization oracle]\label{asp: OPT}
  There exists a radius $\delta>0$ such that, for each local optimum $\rho_{\mathrm{loc}}\in\mathcal{P}$, the oracle $\texttt{OPT}$ returns $\rho_{\mathrm{loc}}$ whenever its input $\rho_0$ lies in the ball $\mathcal{B}(\rho_{\mathrm{loc}},\delta)\cap\mathcal{P}$.
\end{assumption}

\begin{remark}
  Although densities $\rho$ formally live in an infinite-dimensional function space, practical implementations employ finite-dimensional parameterizations (e.g., discretization, basis expansions, or neural network weights).
  Under such a parametrization, $\mathcal{P}$ reduces to a subset of finite-dimensional Euclidean space, and $\texttt{OPT}$ corresponds to standard local optimizers (e.g., L-BFGS, gradient descent, or Newton methods).
  These routines exhibit well-known local convergence guarantees: when initialized within an attraction basin $\mathcal{B}(\rho_{\mathrm{loc}},\delta)$, they converge to $\rho_{\mathrm{loc}}$ at linear or superlinear rates.
  \cref{asp: OPT} thus captures the empirical behavior of practical solvers and breaks the curse of dimensionality by leveraging local geometry, accelerating bound estimation by orders of magnitude compared to naive sampling.
\end{remark}

Let $\texttt{OPT}_{\min}$ and $\texttt{OPT}_{\max}$ be local minimization and maximization oracles satisfying \cref{asp: OPT}.
While neither oracle alone can escape local optima of the generally non-convex objective $V(\rho)$, their combination with hit-and-run sampling yields dramatic acceleration:
the sampler ergodically explores $\mathcal P$ to generate diverse starting points, and each oracle then rapidly refines its input to the nearest local extremum.
This hybrid strategy is detailed in Algorithm~\ref{alg:sampling with optimization oracle}.

\begin{algorithm}[hbtp]
  \renewcommand{\algorithmicrequire}{\textbf{Input:}}
  \renewcommand{\algorithmicensure}{\textbf{Output:}}
  \caption{Accelerated Sampling of Causal Bounds with Local Optimization}
  \label{alg:sampling with optimization oracle}
  \begin{algorithmic}[1]
    \Require Initial density $\rho_0 \in \mathcal{P}$, number of iterations $T$, local oracles $\texttt{OPT}_{\min}, \texttt{OPT}_{\max}$
    \State Initialize the hit-and-run sampler (\cref{alg: hit-run for density}) at $\rho_0$
    \For{$t = 1,\dots,T$}
    \State Sample $\rho_t$ via one step of \cref{alg: hit-run for density}
    \State $\rho_{\min,t} \gets \texttt{OPT}_{\min}(\rho_t)$ and $\rho_{\max,t} \gets \texttt{OPT}_{\max}(\rho_t)$
    \EndFor
    \Ensure 
    $\widehat V_{\min}^{\texttt{OPT}}(T) = \min_{1 \le t \le T} V(\rho_{\min,t})$ and
    $\widehat V_{\max}^{\texttt{OPT}}(T) = \max_{1 \le t \le T} V(\rho_{\max,t})$
  \end{algorithmic}
\end{algorithm}

\begin{theorem}\label{thm: a.s. convergence in Markov}
  Under Assumptions~\ref{asp: regularity}, \ref{asp: finite-dimension}, \ref{assu: continuous}, and \ref{asp: OPT}, the outputs of Algorithm~\ref{alg:sampling with optimization oracle} satisfy almost-sure convergence:
  \[
    \widehat V_{\min}^{\texttt{OPT}}(T)\xrightarrow{\mathrm{a.s.}} V_{\min},
    \qquad
    \widehat V_{\max}^{\texttt{OPT}}(T)\xrightarrow{\mathrm{a.s.}} V_{\max}.
  \]
\end{theorem}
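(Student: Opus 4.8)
The plan is to exploit an \emph{absorbing} structure that the optimization oracle injects into the running extrema, upgrading the in-probability statement of \cref{thm: convergence in probability in Markov} to almost-sure convergence. I focus on $\widehat V_{\min}^{\texttt{OPT}}$; the argument for $\widehat V_{\max}^{\texttt{OPT}}$ is symmetric. First I would record that the infimum $V_{\min}$ is attained: under \cref{asp: regularity,asp: finite-dimension} the feasible set $\mathcal{P}$ is a closed, bounded subset of a finite-dimensional subspace, hence compact, and $V$ is continuous on a compact $\mathcal{K}\supseteq\mathcal{P}$ by \cref{assu: continuous}, so the Weierstrass theorem yields a global minimizer $\rho^*_{\min}\in\mathcal{P}$ with $V(\rho^*_{\min})=V_{\min}$. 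A global minimizer is in particular a local optimum, so \cref{asp: OPT} supplies a radius $\delta>0$ such that $\texttt{OPT}_{\min}$ returns \emph{exactly} $\rho^*_{\min}$ on any input in the basin $B_{\min}\triangleq \mathcal{B}(\rho^*_{\min},\delta)\cap\mathcal{P}$.

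The crux is a monotone/absorbing observation. Because each oracle output is feasible, $\rho_{\min,t}\in\mathcal{P}$ and hence $V(\rho_{\min,t})\ge V_{\min}$ for every $t$; thus $\widehat V_{\min}^{\texttt{OPT}}(T)=\min_{t\le T}V(\rho_{\min,t})$ is non-increasing in $T$ and bounded below by $V_{\min}$. Moreover, the first time the hit-and-run iterate enters the basin, say at $\tau\triangleq\inf\{t\ge 1:\rho_t\in B_{\min}\}$, the oracle snaps to $\rho^*_{\min}$, giving $V(\rho_{\min,\tau})=V_{\min}$ and therefore $\widehat V_{\min}^{\texttt{OPT}}(T)=V_{\min}$ for all $T\ge\tau$. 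Consequently it suffices to prove $\tau<\infty$ almost surely; this in fact yields the stronger conclusion that the estimate equals the true bound exactly after finitely many steps.

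To show $\tau<\infty$ a.s., I would argue that the chain visits $B_{\min}$ with positive frequency. By \cref{prop: hit-and-run} the iterates $\{\rho_t\}$ form a reversible, ergodic Markov chain whose stationary law $\pi$ is uniform on $\mathcal{P}$. Since $\mathcal{P}$ is a full-dimensional convex body inside the affine hull cut out by the equality constraints (nonempty relative interior, guaranteed by \cref{asp: regularity,asp: finite-dimension}), any ball about a point $\rho^*_{\min}\in\mathcal{P}$ meets $\mathcal{P}$ in a set of positive volume, so $\pi(B_{\min})>0$. Invoking Harris recurrence of the hit-and-run chain on a bounded convex body together with the Birkhoff ergodic theorem, the empirical frequency $\frac1T\sum_{t=1}^T\mathbb{I}\{\rho_t\in B_{\min}\}\to\pi(B_{\min})>0$ almost surely from the deterministic start $\rho_0$; a positive limiting frequency forces infinitely many visits, hence $\tau<\infty$ a.s. Combining with the absorbing observation gives $\widehat V_{\min}^{\texttt{OPT}}(T)\xrightarrow{\mathrm{a.s.}}V_{\min}$, and identical reasoning applied to a global maximizer $\rho^*_{\max}$ and $\texttt{OPT}_{\max}$ gives $\widehat V_{\max}^{\texttt{OPT}}(T)\xrightarrow{\mathrm{a.s.}}V_{\max}$.

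I expect the main obstacle to be the recurrence step rather than the absorbing bookkeeping: one must verify both that $\pi(B_{\min})>0$---which hinges on $\mathcal{P}$ having nonempty relative interior so the basin is not measure-zero even when $\rho^*_{\min}$ sits on the boundary of the polytope---and that the ergodic/recurrence theorem applies from a fixed initial density rather than from stationarity, for which I would appeal to the Harris recurrence and geometric ergodicity known for hit-and-run samplers on bounded convex bodies. A minor point to confirm is that $\texttt{OPT}_{\min}$ returns points of $\mathcal{P}$ even on inputs outside every basin, so that the lower bound $V(\rho_{\min,t})\ge V_{\min}$ is never violated.
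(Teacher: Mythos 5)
Your proposal follows essentially the same route as the paper's proof: locate a global minimizer $\rho_*\in\mathcal{P}$ by compactness and continuity, use Assumption~\ref{asp: OPT} to obtain a basin of attraction with positive mass under the uniform stationary law $\pi$, and conclude from the ergodicity of the hit-and-run chain that the basin is visited infinitely often almost surely, after which the oracle returns $\rho_*$ and the running extremum equals $V_{\min}$ exactly for all later $T$. Your additional care---the monotone/absorbing bookkeeping, invoking Birkhoff/Harris recurrence from a deterministic start, checking $\pi(\mathcal{B}(\rho_*,\delta)\cap\mathcal{P})>0$ even when $\rho_*$ lies on the boundary, and confirming oracle outputs stay in $\mathcal{P}$---merely tightens steps the paper treats informally.
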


We demonstrate the effectiveness of combining our hit-and-run sampler with a local-optimization oracle on a synthetic POCB example; see \cref{subsubsec in appendix: numerical setups for sampling general} for details.  First, we run Algorithm~\ref{alg: hit-run for density} to generate $10^4$ feasible causal-model densities $\{\rho_t\}$.  Each $\rho_t$ then initializes both a minimization oracle $\texttt{OPT}_{\min}(\rho_t)$ and a maximization oracle $\texttt{OPT}_{\max}(\rho_t)$.  Figure~\ref{fig: sampling general} presents three histograms: the left and right panels show the oracle outputs, and the middle panel shows the raw evaluations $V(\rho_t)$ prior to optimization.  Despite the highly non-convex landscape and multitude of local optima, the sampler produces sufficiently diverse starting points so that the oracles reliably approach the near-global bounds.
Moreover, whereas the raw sampler outputs (middle) converge slowly toward the extremes (as guaranteed by Theorem~\ref{thm: convergence of empirical bounds}), the optimization steps concentrate samples quickly at the true minimum and maximum, yielding high density at the boundaries.

\begin{figure}[hbtp]
  \centering
  \includegraphics[width=\textwidth]{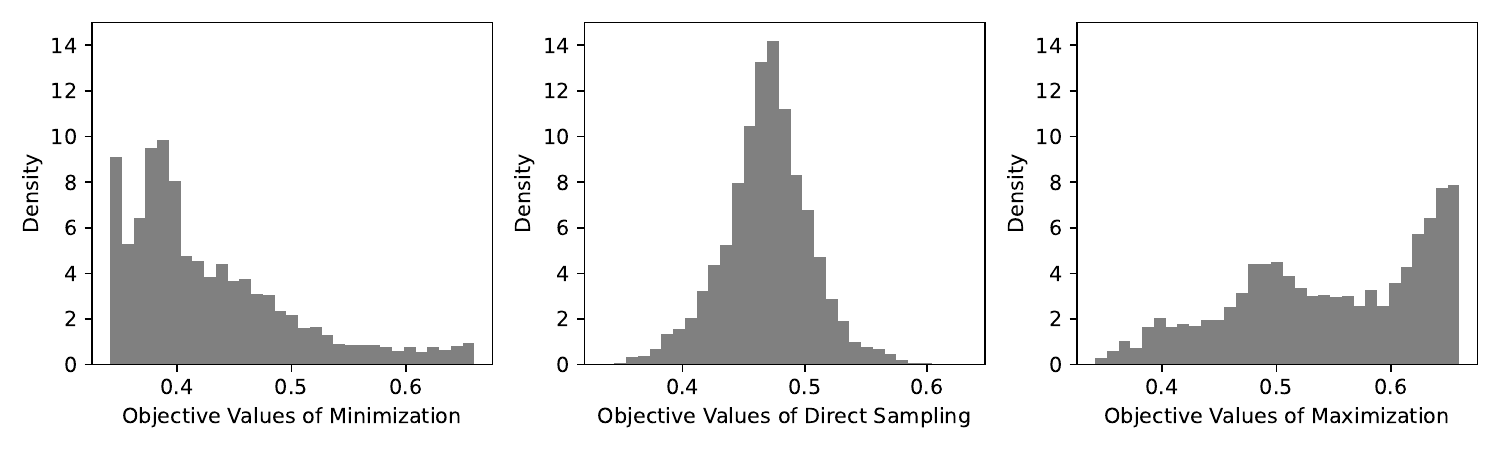}
  \caption{Histograms of Sampled Causal-Effect Values}
  \label{fig: sampling general}
\end{figure}

\subsection{Incorporating Parameter Uncertainty}\label{sec: parameter uncertainty}
To account for estimation error, we replace the original feasible set $\mathcal{P}$ from \eqref{eq: feasible set} by the relaxed set
\begin{align*}
  \mathcal{P}^{(N)}_{\epsilon_N}
  =
  \Bigl\{
    \rho\in L^2(\Omega,\nu) \Bigm|
    \rho\geq 0, \, \,& \textstyle\int_{\Omega} \rho \mathrm{d} \nu = 1,
    \, \, \bigl| \int_{\Omega} \alpha_i \rho \mathrm{d} \nu - \widehat\beta_i^{(N)}\bigr|\le\epsilon_N, \\
    & \textstyle\int_{\Omega} \alpha_j \rho \mathrm{d} \nu \le \widehat\beta_j^{(N)} + \epsilon_N,
    \, \,
    \forall i \in [m], j \in [m']\backslash [m]
  \Bigr\},
\end{align*}
where $\widehat{\bm{\beta}}^{(N)}$ are estimated target quantities, and the residual $\epsilon_N$ quantifies sampling error or possible distribution shift.
Whenever $\mathcal{P}^{(N)}_{\epsilon_N}$ contains the true model with high probability (e.g.\ via concentration inequalities), optimizing over it yields conservative, distributionally robust causal bounds that still cover the truth with the same confidence. If the residuals $\epsilon_N \to 0$ as $N \to \infty$, $\mathcal{P}^{(N)}_{\epsilon_N}$ collapses to $\mathcal{P}$, recovering exact bounds.

\begin{remark}
  In practice, our relaxed feasible set $\mathcal{P}^{(N)}_{\epsilon_N}$ offers robustness under several common sources of uncertainty, while guaranteeing convergence to the true bounds as uncertainty vanishes:
  \begin{itemize}
    \item \textit{Estimation error.}
      When moment estimates $\widehat{\bm{\beta}}^{(N)}$ are noisy (e.g., small clinical cohorts) a tolerance $\epsilon_N = O(1/\sqrt{N})$ absorbs sampling variability.  As the sample size $N$ grows, $\epsilon_N\to0$ and the optimal bounds converge to the true values.

    \item \textit{Distribution shift.}
      In settings such as recommendation systems facing user-preference drift, one may set a fixed $\epsilon_N>0$ to account for persistent shifts.  If the environment stabilizes (e.g., after market saturation) and the shift magnitude decays, letting $\epsilon_N\to0$ ensures convergence to the stationary optimum.

    \item \textit{Privacy protection.}
      Under differential privacy, database queries inject noise scaling like $O(1/\sqrt{N})$ \citep{hanLDP}.  Our framework absorbs this noise into $\epsilon_N$, yet still guarantees that as $N\to\infty$ (and $\epsilon_N\to0$), the computed causal bounds recover the true effects.
  \end{itemize}
\end{remark}

Before turning to convergence analysis, we impose the following regularity on our feasible sets.
Recall that $\mathcal{K}$ is the compact superset from Assumption \ref{assu: continuous}.
\begin{assumption}
  \label{asp: finite-measure and uniform compactness}
  For every $N$, the relaxed feasible set $\mathcal{P}^{(N)}_{\epsilon_N} \subset \mathcal{K}$. Furthermore, the measure $\nu$ is finite on $\Omega$, i.e., $\nu(\Omega)<\infty$.
\end{assumption}

We will show that, as $\epsilon_N\to0$, the sets $\mathcal{P}^{(N)}_{\epsilon_N}$ converge to $\mathcal{P}$ in the Hausdorff metric.
Recall that for any two subsets $\mathcal{P}_1,\mathcal{P}_2\subset L^2(\Omega,\nu)$, their Hausdorff distance is
\[
  d_H(\mathcal{P}_{1},\mathcal{P}_{2})
  = \max\biggl\{
    \sup_{a\in \mathcal{P}_{1}}\inf_{b\in \mathcal{P}_{2}}\norm{a-b}_{L^2(\Omega,\nu)},\;
    \sup_{b\in \mathcal{P}_{2}}\inf_{a\in \mathcal{P}_{1}}\norm{a-b}_{L^2(\Omega,\nu)}
  \biggr\}.
\]

\begin{proposition}
  \label{prop: hausdorff convergence for extended feasible set}
  Under Assumptions \ref{asp: regularity} and \ref{asp: finite-measure and uniform compactness}, let $\widehat{\bm{\beta}}^{(N)}$ be an estimate satisfying $\norm{\bm{\beta} - \widehat{\bm{\beta}}^{(N)} }_{\infty} \leq \epsilon_N$ and $\lim_{N\to\infty}\epsilon_N=0$.
  Then the relaxed feasible sets converge in Hausdorff distance \[\lim_{N\to\infty} d_H(\mathcal{P}^{(N)}_{\epsilon_N},\mathcal{P}) = 0.\]
\end{proposition}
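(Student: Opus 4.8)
The plan is to establish the two one-sided inequalities defining $d_H$ separately. Writing $\mathrm{dist}(\rho,\mathcal{P}) = \inf_{a\in\mathcal{P}}\norm{\rho-a}_{L^2(\Omega,\nu)}$, I would first dispatch the easy direction $\sup_{a\in\mathcal{P}}\inf_{b\in\cPeps{N}}\norm{a-b}_{L^2(\Omega,\nu)} = 0$ by proving the inclusion $\mathcal{P}\subseteq\cPeps{N}$. Indeed, for $\rho\in\mathcal{P}$ the equality $\int_\Omega\alpha_i\rho\diff\nu = \beta_i$ together with $|\beta_i - \widehat\beta_i^{(N)}|\le\epsilon_N$ gives $|\int_\Omega\alpha_i\rho\diff\nu - \widehat\beta_i^{(N)}|\le\epsilon_N$, and the inequality constraint satisfies $\int_\Omega\alpha_j\rho\diff\nu\le\beta_j\le\widehat\beta_j^{(N)}+\epsilon_N$; nonnegativity and normalization are unchanged. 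Since $\mathcal{P}$ is nonempty by \cref{asp: regularity}, every point of $\mathcal{P}$ already belongs to $\cPeps{N}$, so this side contributes nothing. All of the work therefore lies in the reverse term $h_N \triangleq \sup_{b\in\cPeps{N}}\mathrm{dist}(b,\mathcal{P})$.

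To bound $h_N$ I would argue by contradiction using compactness. Suppose $h_N\not\to0$. Then there is $\delta>0$ and a subsequence along which one may choose densities $\rho_{N}\in\cPeps{N}$ with $\mathrm{dist}(\rho_N,\mathcal{P})\ge\delta$. By \cref{asp: finite-measure and uniform compactness} each $\rho_N$ lies in the compact set $\mathcal{K}$, so a further subsequence $\rho_{N_k}$ converges in $L^2(\Omega,\nu)$ to some limit $\rho^\ast\in\mathcal{K}$. The crux is to verify $\rho^\ast\in\mathcal{P}$. Nonnegativity passes to the limit because $\{\rho\ge0\}$ is a strongly closed convex cone in $L^2(\Omega,\nu)$. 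For each index the map $\rho\mapsto\int_\Omega\alpha_i\rho\diff\nu$ is a bounded, hence continuous, linear functional since $\alpha_i\in L^2(\Omega,\nu)$; here the finiteness $\nu(\Omega)<\infty$ is exactly what guarantees that the normalization test function $\alpha_1\equiv1$ itself lies in $L^2(\Omega,\nu)$. Combining continuity with the triangle inequality
\[
  \Bigl|\int_\Omega\alpha_i\rho_{N_k}\diff\nu - \beta_i\Bigr|
  \le \Bigl|\int_\Omega\alpha_i\rho_{N_k}\diff\nu - \widehat\beta_i^{(N_k)}\Bigr| + \bigl|\widehat\beta_i^{(N_k)} - \beta_i\bigr|
  \le 2\epsilon_{N_k}\xrightarrow{k\to\infty}0,
\]
and letting $k\to\infty$ yields $\int_\Omega\alpha_i\rho^\ast\diff\nu = \beta_i$ for every equality index; the same passage to the limit turns $\int_\Omega\alpha_j\rho_{N_k}\diff\nu\le\widehat\beta_j^{(N_k)}+\epsilon_{N_k}\le\beta_j+2\epsilon_{N_k}$ into $\int_\Omega\alpha_j\rho^\ast\diff\nu\le\beta_j$. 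Hence $\rho^\ast\in\mathcal{P}$, so $\mathrm{dist}(\rho^\ast,\mathcal{P})=0$; but $\mathrm{dist}(\cdot,\mathcal{P})$ is $1$-Lipschitz, so $\mathrm{dist}(\rho_{N_k},\mathcal{P})\to0$, contradicting $\mathrm{dist}(\rho_{N_k},\mathcal{P})\ge\delta$. Therefore $h_N\to0$, and combining the two sides gives $d_H(\cPeps{N},\mathcal{P})\to0$.

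The main obstacle is precisely this reverse direction: feasible points of the relaxed set could a priori drift arbitrarily far from $\mathcal{P}$, and ruling this out requires both the uniform compact enclosure $\cPeps{N}\subset\mathcal{K}$ (so that limits exist) and the closedness of the constraint set (so that those limits remain feasible). The two features of \cref{asp: finite-measure and uniform compactness} play distinct roles: compactness supplies the convergent subsequence, while $\nu(\Omega)<\infty$ ensures every constraint functional, including normalization, is $L^2$-continuous so the constraints survive the limit. Everything else is soft; linear independence of the $\alpha_i$ in \cref{asp: regularity} is used only to guarantee $\mathcal{P}\neq\emptyset$, making the first one-sided term well defined. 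If a quantitative rate were desired I would instead seek a uniform estimate of the form $h_N = O(\epsilon_N)$ via a Slater-type interior point of $\mathcal{P}$ together with a Hoffman-style constraint-qualification bound, but the asymptotic statement needs only the compactness argument above.
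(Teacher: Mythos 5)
Your proof is correct, and its two computational ingredients coincide exactly with the paper's: the exact inclusion $\mathcal{P}\subseteq\cPeps{N}$ (the paper's ``lower semicontinuity'' step, also realized there via the constant sequence $\rho_N=\rho$), and the passage of all constraints to the $L^2$ limit via Cauchy--Schwarz continuity of the functionals $\rho\mapsto\int_\Omega\alpha_i\rho\,\mathrm{d}\nu$ together with $\|\bm{\beta}-\widehat{\bm{\beta}}^{(N)}\|_\infty\le\epsilon_N$ (the paper's ``upper semicontinuity'' step). Where you genuinely differ is in the final assembly: the paper establishes the two Kuratowski-type inclusions $\limsup_{N}\cPeps{N}\subseteq\mathcal{P}$ and $\mathcal{P}\subseteq\liminf_{N}\cPeps{N}$ and then concludes by (implicitly) invoking the equivalence of Kuratowski and Hausdorff convergence for compact sets recorded in \cref{thm:hausdorff-convergence}, whereas you bypass that cited equivalence entirely with a self-contained compactness--contradiction argument: extract a convergent subsequence inside $\mathcal{K}$ from any putative sequence with $\mathrm{dist}(\rho_N,\mathcal{P})\ge\delta$, verify the limit lies in $\mathcal{P}$, and use the $1$-Lipschitz property of $\mathrm{dist}(\cdot,\mathcal{P})$ to derive the contradiction. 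Your version buys transparency---it makes explicit exactly where the two halves of \cref{asp: finite-measure and uniform compactness} enter (compact enclosure $\cPeps{N}\subset\mathcal{K}$ for subsequence extraction; $\nu(\Omega)<\infty$ so that $\alpha_1\equiv1\in L^2(\Omega,\nu)$ and the normalization constraint survives the limit), a step the paper leaves to the appendix theorem---while the paper's version is shorter modulo that citation and phrases the result in the standard set-convergence language. One pedantic correction: linear independence in \cref{asp: regularity} is not what gives $\mathcal{P}\neq\emptyset$; nonemptiness is assumed there directly, and that is all your argument needs for $\mathrm{dist}(\cdot,\mathcal{P})$ to be well defined.
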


We now establish the Hausdorff convergence rate of the feasible region $\cPeps{N}$ to the polytope $\mathcal{P}$,
which is crucial for controling the uncertainty levels.

\begin{proposition}\label{prop: hausdorff-rate}
  Assume \cref{asp: regularity} and \cref{asp: finite-measure and uniform compactness}, and suppose that there exists a Slater point $\rho^*\in\mathcal{P}$ satisfying $\rho^*(\bm x)\stackrel{\nu\text{-a.e.}}{\ge}\delta>0$ and $\int_{\Omega} \alpha_j(\bm x) \rho^*(\bm x) \mathrm{d}\nu(\bm x) \le \beta_j-\gamma_j$ for all $j \in [m']\backslash [m]$ for some constants $\delta,\gamma_j>0$.  If $\|\widehat{\bm{\beta}}^{(N)}-\bm{\beta}\|_{\infty}\le\epsilon_N \to 0$, then there is a constant $L_H = L_H(\delta,\gamma_j,a_j,\mathcal K,m,\nu(\Omega))>0$ such that
  \[
    d_H\bigl(\mathcal{P}^{(N)}_{\epsilon_N}, \mathcal{P}\bigr)
    \le
    L_H \epsilon_N.
  \]
\end{proposition}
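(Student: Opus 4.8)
The plan is to exploit the nesting of the two feasible sets and then push every point of the relaxed set back into $\mathcal{P}$ by a perturbation of size $O(\epsilon_N)$. First I would verify that $\mathcal{P}\subseteq\cPeps{N}$: any $\rho\in\mathcal{P}$ has exact moments $\int_\Omega\alpha_i\rho\diff\nu=\beta_i$, so $\|\widehat{\bm\beta}^{(N)}-\bm\beta\|_\infty\le\epsilon_N$ forces $|\int_\Omega\alpha_i\rho\diff\nu-\widehat\beta_i^{(N)}|\le\epsilon_N$ and $\int_\Omega\alpha_j\rho\diff\nu\le\beta_j\le\widehat\beta_j^{(N)}+\epsilon_N$, while nonnegativity and normalization are inherited. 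Hence the first term of the Hausdorff distance vanishes and
\[
  d_H(\cPeps{N},\mathcal{P})=\sup_{\rho\in\cPeps{N}}\inf_{\rho'\in\mathcal{P}}\norm{\rho-\rho'}_{L^2(\Omega,\nu)}.
\]
It therefore suffices to construct, for each $\rho\in\cPeps{N}$, a feasible $\rho'\in\mathcal{P}$ with $\norm{\rho-\rho'}_{L^2}\le L_H\epsilon_N$. Such a $\rho$ violates each equality by at most $\norm{\bm\beta-\mathscr{A}\rho}_\infty\le 2\epsilon_N$ (triangle inequality through $\widehat{\bm\beta}^{(N)}$) and each inequality by at most $2\epsilon_N$.

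The main obstacle is that equality constraints admit no Slater interior, so the textbook ``shrink toward an interior point'' argument—which would give the rate immediately if all constraints were inequalities—cannot restore them; conversely, surjectivity of $\mathscr{A}$ restores equalities but a generic $L^2$ correction may destroy nonnegativity and the inequalities. My construction interleaves the two. Fix $\rho\in\cPeps{N}$ and blend with the Slater point, $\rho_\theta:=(1-\theta)\rho+\theta\rho^*$, which inherits a pointwise floor $\rho_\theta\ge\theta\delta$ and an inequality slack, since $\int_\Omega\alpha_j\rho_\theta\diff\nu\le\beta_j+(1-\theta)2\epsilon_N-\theta\gamma_j$. Its residual equality error $e_\theta:=\bm\beta-\mathscr{A}\rho_\theta=(1-\theta)(\bm\beta-\mathscr{A}\rho)$ still obeys $\norm{e_\theta}_2=O(\epsilon_N)$. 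I then restore the equalities exactly with the minimum-norm correction $h:=\mathscr{A}^\top(\mathscr{A}\mathscr{A}^\top)^{-1}e_\theta$, well defined because $\mathscr{A}\mathscr{A}^\top$ is invertible by the linear independence in \cref{asp: regularity}. Writing $h=\sum_i c_i\alpha_i$ with $\norm{c}_2=O(\norm{e_\theta}_2)$ gives $\norm{h}_{L^2}=O(\epsilon_N)$ immediately, and—crucially—using boundedness of the constraint functions (encoded in the constants $a_j$) it gives $\norm{h}_\infty=O(\epsilon_N)$; this is where $m$, the $\alpha$-bounds, and $\nu(\Omega)<\infty$ (to convert between $L^\infty$, $L^2$, and $L^1$) enter.

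Finally I set $\rho':=\rho_\theta+h$, so that $\mathscr{A}\rho'=\bm\beta$ holds exactly (normalization included). It remains to pick $\theta=c_0\epsilon_N$ large enough that the blend slacks dominate the correction: nonnegativity holds once $\theta\delta\ge\norm{h}_\infty$, and the inequalities hold once $\theta\gamma_j\ge 2\epsilon_N+\norm{\alpha_j}_{L^2}\norm{h}_{L^2}$, both of which reduce to $\theta\ge c_0\epsilon_N$ for an explicit $c_0=c_0(\delta,\gamma_j,a_j,m,\sigma_{\min},\nu(\Omega))$, where $\sigma_{\min}>0$ is the least singular value of $\mathscr{A}$ determined by the constraint functions. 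For $\epsilon_N$ small this gives $\theta\le1$, hence $\rho'\in\mathcal{P}$. Since $\rho-\rho'=\theta(\rho-\rho^*)-h$, I conclude $\norm{\rho-\rho'}_{L^2}\le\theta\,\mathrm{diam}(\mathcal{K})+\norm{h}_{L^2}\le L_H\epsilon_N$, the diameter being finite because $\cPeps{N}\subset\mathcal{K}$ is compact by \cref{asp: finite-measure and uniform compactness}; taking the supremum over $\rho\in\cPeps{N}$ proves the claim. For the finitely many $N$ (if any) with $c_0\epsilon_N>1$ the bound is trivial after enlarging $L_H$, since $d_H\le\mathrm{diam}(\mathcal{K})$ always holds. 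The delicate point throughout is the simultaneous control of nonnegativity and equality restoration, which is resolved precisely by pairing the sup-norm estimate on $h$ with the $\theta\delta$ floor produced by the blend.
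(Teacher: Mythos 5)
Your proof is correct and uses the same two ingredients as the paper's---the minimum-norm equality correction through $\mathscr{A}^\top(\mathscr{A}\mathscr{A}^\top)^{-1}$ and a convex blend with the Slater point $\rho^*$---but applies them in the opposite order, and this reversal is not cosmetic. The paper first restores the equalities (setting $\sigma_{\mathcal S}=\sigma_N+w$) and then blends, $\rho_\lambda=(1-\lambda)\sigma_{\mathcal S}+\lambda\rho^*$; at the nonnegativity step it controls only $\|(\sigma_{\mathcal S})^-\|_{L^1(\Omega,\nu)}\le \sqrt{\nu(\Omega)}\,\|w\|_{L^2(\Omega,\nu)}$, which does \emph{not} imply $\rho_\lambda\ge 0$ pointwise: the requirement is $(1-\lambda)(\sigma_{\mathcal S})^-(\bm x)\le\lambda\rho^*(\bm x)$ for $\nu$-a.e.\ $\bm x$, and an $L^1$-small negative part can still contain a tall thin spike that no $\lambda=O(\epsilon_N)$ can absorb. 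Your ordering---blend first to manufacture the pointwise floor $\rho_\theta\ge\theta\delta$, then correct equalities---confronts this honestly by demanding the sup-norm bound $\|h\|_\infty=O(\epsilon_N)$, obtained from $h=\sum_i c_i\alpha_i$ with $\|c\|_2=O(\epsilon_N)$ and boundedness of the $\alpha_i$. The one caveat is that this boundedness is an assumption beyond the paper's stated regularity $\alpha_i\in L^2(\Omega,\nu)$ (it holds for the indicator-type constraints in the paper's examples, and the constant $L_H$ is anyway allowed to depend on the $\alpha_j$'s); but since the paper's own argument implicitly needs exactly the same pointwise control and never establishes it, your version is, if anything, the more rigorous of the two. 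The remaining steps---$\mathcal{P}\subseteq\cPeps{N}$ killing one side of the Hausdorff distance, the $2\epsilon_N$ residual bound via the triangle inequality through $\widehat{\bm\beta}^{(N)}$, the choice $\theta=c_0\epsilon_N$, the final estimate $\|\rho-\rho'\|_{L^2(\Omega,\nu)}\le\theta\,\mathrm{diam}(\mathcal K)+\|h\|_{L^2(\Omega,\nu)}$, and the trivial handling of the finitely many $N$ with $c_0\epsilon_N>1$---all match the paper's logic and are sound.
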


Define the empirical bounds
\[
  V_{\min}^{(N)}  = \min_{\rho\in\mathcal{P}^{(N)}_{\epsilon_N}}V(\rho),
  \quad
  V_{\max}^{(N)}  = \max_{\rho\in\mathcal{P}^{(N)}_{\epsilon_N}}V(\rho).
\]
We now show that if $\mathcal{P}^{(N)}_{\epsilon_N}\to\mathcal{P}$ in Hausdorff distance, then $V_{\min}^{(N)}\to V_{\min}$ and $V_{\max}^{(N)}\to V_{\max}$.

\begin{theorem}\label{thm: convergence of empirical bounds}
  Under \cref{asp: regularity},\ref{assu: continuous} and \ref{asp: finite-measure and uniform compactness}, we have
  $\lim_{N\to\infty}V_{\min}^{(N)}= V_{\min}$
  and
  $\lim_{N\to\infty}V_{\max}^{(N)}= V_{\max}$.
\end{theorem}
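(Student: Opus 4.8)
The plan is to treat this as a classical stability result for parametric optimization: once the feasible region converges in Hausdorff distance and the objective is continuous, the optimal values must converge. The two ingredients I would assemble are (i) uniform continuity of $V$, which I get for free from Assumption~\ref{assu: continuous} because $V$ is continuous on the compact metric space $\mathcal{K}$, and a continuous function on a compact metric space is uniformly continuous; and (ii) the Hausdorff convergence $d_H(\mathcal{P}^{(N)}_{\epsilon_N},\mathcal{P})\to 0$ furnished by Proposition~\ref{prop: hausdorff convergence for extended feasible set}, whose hypotheses (Assumptions~\ref{asp: regularity} and~\ref{asp: finite-measure and uniform compactness}) are exactly those assumed here.

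Before the main estimate I would check that every extremum is genuinely attained, so that $V_{\min},V_{\max},V_{\min}^{(N)},V_{\max}^{(N)}$ are bona fide minima and maxima (recall $V_{\min}=\inf_{\mathcal{P}}V$ was only written as an infimum). Each of $\mathcal{P}$ and $\mathcal{P}^{(N)}_{\epsilon_N}$ is a closed subset of the compact $\mathcal{K}$, hence itself compact: the cone $\{\rho\ge 0\}$ is $L^2$-closed, and every constraint functional $\rho\mapsto\int_\Omega\alpha_i\rho\,\mathrm{d}\nu$ is bounded and linear on $L^2(\Omega,\nu)$, so the equality and inequality constraints cut out closed sets. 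This is precisely where $\nu(\Omega)<\infty$ from Assumption~\ref{asp: finite-measure and uniform compactness} is needed: finiteness guarantees that the constant test function $1$ (normalization) and the indicator test functions of Table~\ref{tab:causal_constraints_examples} actually lie in $L^2(\Omega,\nu)$. A continuous function attains its extrema on a nonempty compact set, giving existence of all four quantities, and compactness will also let me realize the Hausdorff projections exactly below.

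With these in hand the core argument is a two-sided sandwich, which I would write out for $V_{\max}$ (the $V_{\min}$ case is identical up to signs). Fix $\eta>0$ and let $\delta>0$ be a uniform-continuity modulus, so that $\|\rho-\rho'\|_{L^2}\le\delta$ implies $|V(\rho)-V(\rho')|\le\eta$. Pick $N$ large enough that $d_H(\mathcal{P}^{(N)}_{\epsilon_N},\mathcal{P})\le\delta$. For the upper bound, let $\rho_N\in\mathcal{P}^{(N)}_{\epsilon_N}$ be a maximizer; the first term of the Hausdorff distance yields $\rho\in\mathcal{P}$ with $\|\rho_N-\rho\|_{L^2}\le\delta$, so $V_{\max}^{(N)}=V(\rho_N)\le V(\rho)+\eta\le V_{\max}+\eta$. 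For the lower bound, let $\rho^\star\in\mathcal{P}$ be a maximizer; the second term yields $\rho_N'\in\mathcal{P}^{(N)}_{\epsilon_N}$ with $\|\rho^\star-\rho_N'\|_{L^2}\le\delta$, so $V_{\max}^{(N)}\ge V(\rho_N')\ge V(\rho^\star)-\eta=V_{\max}-\eta$. Hence $|V_{\max}^{(N)}-V_{\max}|\le\eta$ for all large $N$, and letting $\eta\downarrow 0$ gives $V_{\max}^{(N)}\to V_{\max}$; the symmetric computation gives $V_{\min}^{(N)}\to V_{\min}$.

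I do not expect a serious obstacle here, since this is essentially Berge's maximum theorem specialized to a fixed objective. The only point demanding genuine care is the infinite-dimensional bookkeeping, namely verifying $L^2$-closedness of the feasible sets and attainment of the extrema, which is exactly what Assumption~\ref{asp: finite-measure and uniform compactness} secures (finiteness of $\nu$ makes the constraint functionals bounded, and $\mathcal{P}^{(N)}_{\epsilon_N}\subset\mathcal{K}$ supplies compactness). Everything else reduces to the uniform-continuity sandwich above.
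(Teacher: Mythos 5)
Your proposal is correct and follows essentially the same route as the paper's proof: uniform continuity of $V$ on the compact set $\mathcal{K}$ combined with the Hausdorff convergence of $\mathcal{P}^{(N)}_{\epsilon_N}$ to $\mathcal{P}$, applied in the same two-sided sandwich (matching each maximizer of one set with a nearby point of the other). Your extra verification that the feasible sets are closed in $L^2(\Omega,\nu)$ so the extrema are attained is a point the paper asserts with less detail, but it adds no new idea and does not change the argument.
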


We can now combine the sampling procedure in Algorithm~\ref{alg:sampling with optimization oracle} with the uncertainty-robust feasible set $\mathcal{P}^{(N)}_{\epsilon_N}$ to obtain distributionally robust causal bounds.
If the target quantity $V(\rho)$ is further Lipschitz continuous in the $L^2(\Omega,\nu)$ norm (see examples in Proposition~\ref{prop: continuity of E[Y|do(a)]} and Proposition~\ref{prop: continuity of E[Y|do(a),w]}),
then the difference of empirical bounds with the true bounds can be further controlled by the uncertainty level.

\begin{theorem}\label{thm: difference of empirical bounds and true bounds}
  Suppose that
  $
  |V(\rho_1) - V(\rho_2)| \leq L_V \|\rho_1 - \rho_2\|_{L^2(\Omega,\nu)}, \forall \rho_1,\rho_2\in\mathcal{K}
  $
  and
  $d_H\left( \cPeps{N}, \mathcal{P} \right) \leq L_H \epsilon_N$,
  then
  $ |V_{\min}^{(N)} - V_{\min}| \leq L_VL_H \epsilon_N $
  and
  $ |V_{\max}^{(N)} - V_{\max}| \leq L_VL_H \epsilon_N $.
\end{theorem}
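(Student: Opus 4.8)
The plan is to exploit the $L_V$-Lipschitz continuity of $V$ on $\mathcal{K}$ together with the assumed Hausdorff bound $d_H(\cPeps{N},\mathcal{P}) \le L_H \epsilon_N$ through a standard two-sided perturbation argument for optimal values. Since both $\mathcal{P}$ and $\cPeps{N}$ are closed subsets of the compact set $\mathcal{K}$ (Assumptions \ref{assu: continuous} and \ref{asp: finite-measure and uniform compactness}) and $V$ is continuous, the extrema defining $V_{\min}, V_{\max}, V_{\min}^{(N)}, V_{\max}^{(N)}$ are all attained; likewise the inner infima in the definition of $d_H$ are attained, so the Hausdorff bound furnishes genuine nearest points rather than merely approximating sequences. (If one prefers not to invoke attainment, the same argument runs verbatim with $\eta$-optimizers and a limit $\eta\downarrow0$.)

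I would first treat $|V_{\min}^{(N)} - V_{\min}|$ and split it into two one-sided inequalities. For the upper direction, let $\rho^* \in \mathcal{P}$ attain $V_{\min} = V(\rho^*)$. Because the second half of the Hausdorff maximum satisfies $\sup_{b \in \mathcal{P}} \inf_{a \in \cPeps{N}} \norm{a-b}_{L^2(\Omega,\nu)} \le d_H \le L_H \epsilon_N$, there is a feasible $\tilde\rho \in \cPeps{N}$ with $\norm{\tilde\rho - \rho^*}_{L^2(\Omega,\nu)} \le L_H \epsilon_N$, and the Lipschitz estimate gives
\[
  V_{\min}^{(N)} \le V(\tilde\rho) \le V(\rho^*) + L_V \norm{\tilde\rho - \rho^*}_{L^2(\Omega,\nu)} \le V_{\min} + L_V L_H \epsilon_N .
\]
For the reverse direction, let $\rho_*^{(N)} \in \cPeps{N}$ attain $V_{\min}^{(N)}$; the first half of the Hausdorff maximum, $\sup_{a \in \cPeps{N}} \inf_{b \in \mathcal{P}} \norm{a-b}_{L^2(\Omega,\nu)} \le L_H \epsilon_N$, produces $\hat\rho \in \mathcal{P}$ with $\norm{\rho_*^{(N)} - \hat\rho}_{L^2(\Omega,\nu)} \le L_H \epsilon_N$, whence $V_{\min} \le V(\hat\rho) \le V_{\min}^{(N)} + L_V L_H \epsilon_N$. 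Combining the two inequalities yields $|V_{\min}^{(N)} - V_{\min}| \le L_V L_H \epsilon_N$.

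The bound for the maximum is entirely symmetric: I would replace minimizers by maximizers and reverse the sign in each Lipschitz estimate, again drawing on both halves of the symmetric Hausdorff maximum. There is no genuine analytical obstacle here—the result is a direct stability estimate for optimal values under set perturbation, and the heavy lifting (the Hausdorff rate $L_H\epsilon_N$ and the Lipschitz constant $L_V$) is already supplied by the hypotheses and by \cref{prop: hausdorff-rate}. The only point demanding care is the bookkeeping: one must invoke the correct half of the Hausdorff distance in each one-sided step, mapping $\mathcal{P}$-points into $\cPeps{N}$ in one direction and $\cPeps{N}$-points into $\mathcal{P}$ in the other, and confirm attainment of all four extrema so that the nearest-point selections are legitimate.
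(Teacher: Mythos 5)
Your proposal is correct and follows essentially the same argument as the paper: both halves of the Hausdorff bound supply nearby feasible points for the respective optimizers, and the $L_V$-Lipschitz estimate converts those distances into the two one-sided inequalities, which combine to give $|V_{\min}^{(N)} - V_{\min}| \leq L_V L_H \epsilon_N$ and likewise for the maxima. Your extra care about attainment of the extrema (and the $\eta$-optimizer fallback) matches the paper's implicit use of compactness of $\mathcal{K}$ and continuity of $V$, so nothing is missing.
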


\begin{remark}[Infinite-dimensional function spaces]
  The convergence guarantees of Theorem \ref{thm: difference of empirical bounds and true bounds} hold more generally for infinite-dimensional function spaces.
  In particular, one can apply our hit-and-run sampler to any finite-dimensional approximation of an otherwise infinite-dimensional function space.
  This allows us to solve optimization problems on infinite-dimensional function spaces.
  In Section \ref{sec: approximation of infinite function space}, we illustrate this by constructing two common infinite-dimensional spaces and showing that their finite-dimensional approximation converge in the Hausdorff metric.
  Consequently, even when we optimize over these finite subspaces, Theorem \ref{thm: difference of empirical bounds and true bounds} ensures that the approximated solutions $V_{\min}^{(N)}$ and $V_{\max}^{(N)}$ (which can be obtained using our sampling method) converge to the true bounds $V_{\min}$ and $V_{\max}$ as the approximation dimension grows.
\end{remark}

\section{Transfer Learning for Multi-Armed Bandits}\label{sec: transfer}

In this section, we consider two contextual-bandit agents sharing the same SCM $\mathcal{M}$ in \cref{fig: causal model with unobserved U and observed W}.
A fully-observable contextual bandit (FOCB) source agent sees both $W$ and a sensitive attribute $U$, chooses action $A$, and receives reward $Y\in [0,1]$.
A multi-armed bandit (MAB) target agent observes none of $W$ and $U$ while choosing action $A$ and receiving reward $Y$; see the dotted box in \cref{fig: causal model with unobserved U and observed W} for an illustration.
The MAB target agent aims to learn the causal effect of $A$ on $Y$, i.e., $\mu_a \triangleq \mean{Y|\mathrm{do}(A=a)}$, for each arm $a\in\mathcal{A}$.
Let $\mu^*=\max_{a\in \mathcal{A}} \mu_a $ and $a^*$ denote the optimal expected reward and the optimal arm, respectively.
The goal is to minimize the cumulative regret under all offline knowledge represented by $\mathcal{H}$:
\begin{equation}
  \label{eq: MAB regret}
  \textstyle
  \mathrm{Reg}(T) =\mathbb{E} \Bigl[T\mu^* - \sum_{t=1}^T \mu_{a_t} \,\Bigm|\, \mathcal{H}\Bigr],
\end{equation}
where the expectation is taken with respect to the randomness of the algorithm.

Due to privacy concerns, the FOCB only passes $(A,Y,W)$ to the target MAB agent, which must learn without ever observing $U$.
We leverage offline data from a FOCB source agent to accelerate learning in a MAB target agent.
For any target causal effects under any set of compatible causal models, we assume that the offline knowledge has been encoded in a set of valid causal bounds, e.g., obtained by \Cref{alg:sampling with optimization oracle}.
In Section \ref{sec: transfer MAB valid bounds}, we show how to incorporate these bounds to safely prune suboptimal arms and truncate UCB indices, yielding provably faster regret rates than vanilla UCB.
In Section \ref{sec: transfer noisy}, we further extend our framework to explicitly incorporate estimation noise or bias in the causal bounds, which commonly arises when offline data is limited or when distribution shift exists between source and target.

\subsection{Transfer Learning via Valid Causal Bounds}\label{sec: transfer MAB valid bounds}
To illustrate, we take as running example the case where our only prior knowledge consists of the marginal distributions $\rho(a,y,w)$ of the observed variables and the marginal distribution $\rho(u)$ of the unobserved variables.
In our framework, these marginals are imposed as linear constraints on the compatible-model class (cf. Examples \ref{ex:constraints}\ref{ex:observational distribution} and \ref{ex:constraints}\ref{ex:known marginal distribution}). If $W$ or $U$ is continuous, we enforce each marginal constraint only at a finite set of context values, so that the overall number of constraints remains finite.
Focusing on a finite set of constraints arises naturally in real-world offline data settings, where you only ever observe a limited number of context values.
Moreover, limiting the model to this finite grid of contexts helps prevent overfitting, sacrificing only a bit of flexibility in exchange for substantially greater stability.

\subsubsection{Causal Bound Calculation.}\label{sec: causal bounds MAB}
Using do-calculus, the interventional mean reward is
\begin{align}
  V_a(\rho) \triangleq\mathbb{E}_{\rho}\bigl[Y\mid \mathrm{do}(A=a)\bigr]
  & = \int_{\mathcal W}\int_{\mathcal U} \mathbb{E}\bigl[Y\mid A=a,w,u\bigr] \rho(w,u) \diff u \diff w \notag \\
  & = \int_{\mathcal W}\int_{\mathcal U}\int_{\mathcal Y} y \rho(y \mid a,w,u) \diff y \; \rho(w,u) \diff u \diff w,
\end{align}
where the second line follows from the usual back-door adjustment.
Consequently, the causal-effect bounds for arm $a$ reduce to solving $l(a) = \min_{\rho \in\mathcal{P}} V_a(\rho)$ and $ h(a)= \max_{\rho \in\mathcal{P}} V_a(\rho),$
where $\mathcal{P}$ is the convex polytope of all joint densities $\rho(a,y,w,u)$ satisfying the linear constraints on the marginals of $(A,Y,W)$ and $U$ (evaluated at a finite grid of $w,u$ values), as in \eqref{eq: feasible set}.
These two generally non-convex programs instantiate the generic causal-bounds problem \eqref{eq: causal bounds} in the MAB setting.
By running Algorithm~\ref{alg:sampling with optimization oracle}, we obtain consistent estimators $\widehat{l}(a)$ and $\widehat{h}(a)$ for the true bounds $l(a)$ and $h(a)$ of each arm.

\subsubsection{Arm Elimination and Upper Confidence Bounds Truncations.}
\label{subsec: transfer learning in MAB}

To isolate the impact of causal bounds on learning speed, we assume that for each arm $a\in\mathcal{A}$ we have valid bounds
\[
  \mu_a = \mean{Y |\mathrm{do}(A=a)} \in \bigl[\,\widehat{l}(a), \widehat{h}(a)\bigr].
\]
This reflects the case where abundant offline data are available under the same causal model, so that causal bounds can be estimated accurately without bias.
For notational simplicity in this subsection, we write $\widehat{l}(a) \equiv l(a)$ and $\widehat{h}(a)\equiv h(a)$, respectively.
We turn next, in Section \ref{sec: transfer noisy}, to the more challenging case of noisy or misspecified bounds to expose their impact.

We present the proposed algorithm in \cref{alg: TL-MAB}, which prunes active arms and truncates the standard UCB algorithm using the valid causal bounds.
First, any arm $a$ for which $h(a)<\max_{i\in\mathcal{A}} l(i)$ is pruned.
Indeed, there exists $a'\in\mathcal{A}$ such that $\mu_a < h(a) < l(a') < \mu_{a'}$ for any such arm, implying that it is guaranteed to be suboptimal.
We denote the remaining active arms as
\begin{equation}\label{eq:prune_mab_action}
  \mathcal{A}^* \triangleq \Bigl\{a \in \mathcal{A}: h(a) \ge \max_{i\in\mathcal{A}} l(i)\Bigr\}.
\end{equation}
Next, the upper confidence bound $U_a(t)$ for each remaining arm is truncated into $\widehat{U}_a(t) = \min\{U_a(t), h(a)\}$. Since $\mu_a \leq h(a)$, this truncation preserves optimism while incorporating offline evidence.
To further reduce the confidence width, we define the maximum variance of the reward distribution for each action $a\in\mathcal{A}$ as
\begin{equation}
  \label{eq: maximum variance}
  {\sigma}_a^2 = \max\Bigl\{\mu(1-\mu):  \mu \in \bigl[{l}(a),{h}(a)\bigr]\Bigr\}.
\end{equation}
The algorithm then selects the arm with the highest truncated UCB.

\begin{algorithm}[hbtp]
  \renewcommand{\algorithmicrequire}{\textbf{Input:}}
  \renewcommand{\algorithmicensure}{\textbf{Output:}}
  \caption{Transfer Learning for Multi-Armed Bandit with Valid Causal Bounds}
  \label{alg: TL-MAB}
  \begin{algorithmic}[1]
    \Require time horizon $T$, causal bounds $[l(a),h(a)]$ for each arm $a\in\mathcal{A}$, confidence parameter $\delta$
    \State Remove any arm $a$ for which $h(a)<\max_{i\in\mathcal{A}} l(i)$ to obtain the active arm set $\mathcal{A}^*$
    \State Compute the maximum variance for each arm according to \eqref{eq: maximum variance}
    \State Initialize the empirical mean $\widehat\mu_{a}(1) = 0$ and the number of pulls $n_{a}(1)  = 0$
    \For{$t = 1,\cdots,T$}
    \For{each arm $a \in \mathcal{A}^*$}
    \State Compute the upper confidence bound $U_a(t) = \min\left\{1, \widehat{\mu}_a(t) + \sqrt{2\sigma_a^2 \log (2t/\delta) / n_a(t)} \right\}$
    \State Compute the truncated UCB $\widehat{U}_a(t) = \min\{ U_a(t), h(a) \}$
    \EndFor
    \State Pull arm $a_t = \argmax_{a \in \mathcal{A}^*} \widehat{U}_a(t)$ and observe a reward $y_t$
    \State Update $\widehat{\mu}_{a_t}(t+1) = \bigl(\widehat{\mu}_{a_t}(t) \cdot n_{a_t}(t) + y_t\bigr) / (n_{a_t}(t) + 1)$ and $n_{a_t}(t+1) = n_{a_t}(t) + 1$
    \State For each $a\neq a_t$, update $\widehat{\mu}_{a}(t+1) = \widehat{\mu}_{a}(t)$ and $n_{a}(t+1) = n_{a}(t)$
    \EndFor
  \end{algorithmic}
\end{algorithm}

\subsubsection{Regret Analysis.}
We first upper bound the expected number of pulls for each sub-optimal arm.

\begin{theorem}
  \label{thm: number of pulling in MAB}
  For a MAB problem with a finite action set $|\mathcal{A}| < \infty$ and rewards bounded within $[0,1]$,
  the number of draws $\mathbb{E}[n_a(T)]$ in \cref{alg: TL-MAB} with $\delta=\frac{1}{T}$ for any sub-optimal arm $a \neq a^*$ is upper bounded as follows:
  $$
  \mathbb{E}[n_a(T)] \leq
  \begin{cases}
    0, & \text{if } a \notin \mathcal{A}^*, \text{ i.e., }h(a)<\max_{i\in\mathcal{A}} l(i) \le \mu^*, \\
    |\mathcal{A}|, & \text{if } a \in \mathcal{A^*} \text{ and } h(a) < \mu^*,  \\
    8\sigma_a^2\log (T) / \Delta_a^2, & \text{if } a \in \mathcal{A^*} \text{ and } h(a) \geq \mu^*.
  \end{cases}
  $$
  where $\Delta_a = \mu^*- \mu_a$ is the sub-optimality gap for the arm $a$.
\end{theorem}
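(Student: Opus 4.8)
The plan is to condition on a single high-probability \emph{clean event} on which all empirical means concentrate around their causal effects, and then treat the three cases by showing that each corresponds to a distinct elimination mechanism. Throughout I use the two facts that make the truncation compatible with optimism: since $\mu_a \le h(a)$ and rewards lie in $[0,1]$, the truncated index $\widehat U_a(t) = \min\{U_a(t), h(a)\}$ still satisfies $\widehat U_a(t) \ge \mu_a$ on the clean event (so optimism is preserved), while it is simultaneously capped by $h(a)$. A preliminary observation is that $a^* \in \mathcal{A}^*$: because $\max_i l(i) \le \mu^*$ (each $l(i) \le \mu_i \le \mu^*$) and $h(a^*) \ge \mu_{a^*} = \mu^*$, the optimal arm survives pruning and is always available as a comparator.

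The case $a\notin\mathcal{A}^*$ is immediate: Step 1 of \cref{alg: TL-MAB} removes such arms before any pull, so $n_a(T)=0$ deterministically, and the accompanying inequality $h(a) < \max_i l(i) \le \mu^*$ simply records that pruning is never applied to a near-optimal arm. For the case $a\in\mathcal{A}^*$ with $h(a)<\mu^*$, I would argue that on the clean event arm $a$ is never selected. Indeed, the truncation forces $\widehat U_a(t)\le h(a)<\mu^*$ at every round, whereas the optimal arm obeys $\widehat U_{a^*}(t)\ge\mu^*$ on the clean event---this uses $h(a^*)\ge\mu^*$ together with $U_{a^*}(t)\ge\mu^*$, and it also holds trivially when $n_{a^*}(t)=0$ since then $U_{a^*}(t)=1$. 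Hence $a$ loses to $a^*$ at every round and $n_a(T)=0$ on the clean event, so $\mathbb{E}[n_a(T)] \le T\,\mathbb{P}(\text{clean event fails})$. Choosing $\delta = 1/T$ and summing the per-arm, per-round deviation probabilities makes the failure probability at most $|\mathcal{A}|/T$, yielding $\mathbb{E}[n_a(T)]\le|\mathcal{A}|$.

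For the remaining case $a\in\mathcal{A}^*$ with $h(a)\ge\mu^*$, the truncation no longer pushes $\widehat U_a$ below $\mu^*$, so I fall back to a variance-aware UCB argument. On the clean event, whenever $a_t=a$ we have $\widehat U_a(t)\ge\widehat U_{a^*}(t)\ge\mu^*$, and since $\widehat U_a(t)\le U_a(t)$ this forces $U_a(t)\ge\mu^*$. Expanding $U_a(t)=\widehat\mu_a(t)+\sqrt{2\sigma_a^2\log(2t/\delta)/n_a(t)}$ and using the clean-event bound $\widehat\mu_a(t)\le\mu_a+\sqrt{2\sigma_a^2\log(2t/\delta)/n_a(t)}$ gives $\mu^*\le\mu_a+2\sqrt{2\sigma_a^2\log(2t/\delta)/n_a(t)}$, i.e.\ $\sqrt{2\sigma_a^2\log(2t/\delta)/n_a(t)}\ge\Delta_a/2$. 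Rearranging bounds the count at the time of any pull by $n_a(t)\le 8\sigma_a^2\log(2t/\delta)/\Delta_a^2$; taking $t\le T$, $\delta=1/T$, and absorbing the lower-order contribution of the clean-event complement yields the stated $8\sigma_a^2\log(T)/\Delta_a^2$. Using $\sigma_a^2$ (the maximal Bernoulli variance over $[l(a),h(a)]$) rather than the crude $1/4$ is exactly what tightens the width, and it is legitimate because $\mu_a\in[l(a),h(a)]$ implies the true reward variance is at most $\sigma_a^2$.

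The main obstacle is the concentration bookkeeping behind the clean event. Because the confidence radius is self-normalized by the \emph{random} pull count $n_a(t)$, establishing $\widehat\mu_a(t)\le\mu_a+\sqrt{2\sigma_a^2\log(2t/\delta)/n_a(t)}$ uniformly requires a union (peeling) bound over the possible values $s=n_a(t)\in\{1,\dots,t\}$ together with a Bernstein-type inequality that actually exploits the variance bound $\sigma_a^2$---a plain Hoeffding bound would not reproduce the $\sigma_a^2$ dependence and would degrade as $\sigma_a^2\to 0$. Getting the per-round failure probability to decay fast enough that the sum over rounds and arms stays at the $|\mathcal{A}|/T$ level---so that Case 2 collapses to the clean constant $|\mathcal{A}|$ and Case 3 incurs only a lower-order additive term---is where the choice $\delta=1/T$ and the precise form of the width must be reconciled; the $n_a(t)=0$ boundary (infinite width, index capped at $1$) must also be threaded through each case so that optimism and the domination by $a^*$ never break down.
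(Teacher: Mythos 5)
Your proposal follows the paper's own proof essentially step for step: up-front pruning handles the first case, on a Bernstein-type clean event the truncated index satisfies $\widehat{U}_a(t)\le h(a)<\mu^*\le \widehat{U}_{a^*}(t)$ so arm $a$ always loses to $a^*$ in the second case, and the third case is the standard variance-aware pull-count bound $n_a(t)\le 8\sigma_a^2\log(2t/\delta)/\Delta_a^2$, with the same $\delta=1/T$ choice and the same (shared) factor-of-two slack in the constants. The only bookkeeping difference is in Case 2: the paper aggregates per-round events via $\mathbb{E}[n_a(T)]\le\sum_{t=1}^{T}\mathbb{P}\bigl(\overline{\mathcal{E}(t)}\bigr)\le\sum_{t=1}^{T}|\mathcal{A}|\delta/t\le|\mathcal{A}|$, whereas your single global clean event multiplied by $T$ costs an extra $\log T$ as literally stated; switching to the per-round summation (which your round-by-round domination argument already supports) recovers the clean $|\mathcal{A}|$ bound.
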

\Cref{thm: number of pulling in MAB} formally demonstrates that causal bounds accelerate learning by classifying arms into three categories: first, definitively suboptimal arms with $\{a \mid h(a)<\max_{i\in\mathcal{A}}l(i)\}$ are eliminated without any pulls; then, arms with a large-gap but informative causal bounds, i.e., those $a\in\mathcal{A}^*$ with $h(a)<\mu^*$, which the theorem shows are pulled only $\mathcal{O}(1)$ times; finally, potentially optimal arms with inconclusive causal bounds, i.e., those $a\in\mathcal{A}^*$ with $h(a)\geq \mu^*$, attract the main exploration budget.
We collect these potentially optimal arms into $\widetilde{\mathcal{A}^*} \triangleq \left\{ a \in\mathcal{A} \mid h(a) \ge \mu^* \right\}\subset \mathcal{A}.$
As a result, our algorithm achieves strictly lower regret than vanilla UCB: tight bounds eliminate hopeless arms early, while truncated UCB curbs over-optimism even when bounds are less informative.  The following corollary of Theorem~\ref{thm: number of pulling in MAB} makes this precise.
Denote the family of instances with rewards bounded within $[0,1]$ and consistent with causal bounds $\mu_a \in [l(a), h(a)]$ as
$$
\mathfrak{M} = \Bigl\{ \text{MAB instances with } Y\in[0,1], \; l(a) \leq \mu_{a} \leq h(a), \forall a\in\mathcal{A} \Bigr\}.
$$

\begin{theorem}\label{thm:instance_LB_UB_MAB_with_causal_bounds}
  For a fixed instance in $\mathfrak{M}$ and horizon $T$, let $a^*=\arg\max_a\mu_a$ denote the best arm and define $\Delta_a=\mu_{a^*}-\mu_a>0$.
  Then the regret of \cref{alg: TL-MAB} is upper bounded by
  \[\textstyle
    \mean{ \mathrm{Reg}(T)}  \leq  \mathcal{O}\Bigl(\sum_{a\in \widetilde{\mathcal{A}}^*:\Delta_a>0 } \log T / \Delta_a \Bigr), \quad \text{where } \widetilde{\mathcal{A}}^* = \left\{ a \in\mathcal{A} \mid h(a) \ge \mu^* \right\}.
  \]
  Furthermore, for any consistent algorithm $\pi$, the expected regret satisfies:
  \[\textstyle
    \mathbb{E}[\mathrm{Reg}(T)] \geq \Omega\Bigl(\sum_{a\in \widetilde{\mathcal{A}}^*:\Delta_a>0 } \log T / \Delta_a \Bigr).
  \]
\end{theorem}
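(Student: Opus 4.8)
The plan is to establish the two inequalities separately: the upper bound follows almost mechanically from the per-arm pull counts of Theorem~\ref{thm: number of pulling in MAB}, while the matching lower bound requires a change-of-measure argument tailored to the constraint class $\mathfrak{M}$. For the upper bound I would begin with the standard decomposition $\mean{\mathrm{Reg}(T)} = \sum_{a\neq a^*}\Delta_a\,\mean{n_a(T)}$ and insert the three regimes of Theorem~\ref{thm: number of pulling in MAB}. Arms outside $\mathcal{A}^*$ are never pulled and vanish; arms in $\mathcal{A}^*\setminus\widetilde{\mathcal{A}}^*$ (those with $h(a)<\mu^*$) are pulled at most $|\mathcal{A}|$ times, and since $\Delta_a\le 1$ they together contribute only a horizon-independent $O(|\mathcal{A}|^2)$; the only horizon-growing terms come from $a\in\widetilde{\mathcal{A}}^*$ with $\Delta_a>0$, each contributing $\Delta_a\cdot 8\sigma_a^2\log T/\Delta_a^2 = 8\sigma_a^2\log T/\Delta_a$. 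Using $\sigma_a^2\le\tfrac14$ (valid since $\mu(1-\mu)\le\tfrac14$ for rewards in $[0,1]$) collapses this to the claimed $O\bigl(\sum_{a\in\widetilde{\mathcal{A}}^*:\Delta_a>0}\log T/\Delta_a\bigr)$.

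For the lower bound I would use the classical information-theoretic framework for consistent policies. Fixing a consistent $\pi$ and the true instance $\nu\in\mathfrak{M}$, for each suboptimal $a\in\widetilde{\mathcal{A}}^*$ I construct an alternative instance $\nu^{(a)}$ identical to $\nu$ except that arm $a$'s reward mean is raised to some $\mu'_a\in(\mu^*,h(a)]$, so that $a$ becomes uniquely optimal. Crucially $\nu^{(a)}$ still lies in $\mathfrak{M}$: the perturbation respects $\mu'_a\le h(a)$ exactly because $h(a)\ge\mu^*$, and every other arm is untouched so its bound is preserved. The divergence decomposition $\mathrm{KL}(\mathbb{P}^\pi_\nu,\mathbb{P}^\pi_{\nu^{(a)}}) = \mean{n_a(T)}\cdot\mathrm{KL}(\nu_a,\nu^{(a)}_a)$, combined with the Bretagnolle--Huber inequality and the consistency requirement $\mean{\mathrm{Reg}(T)}=o(T^p)$ for all $p>0$, yields $\liminf_T \mean{n_a(T)}/\log T\ge 1/\mathrm{KL}(\nu_a,\nu^{(a)}_a)$. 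Letting $\mu'_a\downarrow\mu^*$ minimizes the divergence, and for bounded (e.g.\ Bernoulli) rewards $\mathrm{KL}(\nu_a,\nu^{(a)}_a)=\Theta(\Delta_a^2)$, so $\mean{n_a(T)}\gtrsim\log T/\Delta_a^2$ and the regret contribution of arm $a$ is $\gtrsim\log T/\Delta_a$. Summing over $\widetilde{\mathcal{A}}^*$ produces the matching $\Omega(\sum\log T/\Delta_a)$.

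The main obstacle is the feasibility of the lower-bound construction inside $\mathfrak{M}$: I must certify that raising arm $a$ to optimality keeps the perturbed instance in the constraint class. This is precisely where $\widetilde{\mathcal{A}}^*$ enters --- for arms with $h(a)<\mu^*$ no optimal-making perturbation is admissible, so they correctly drop out of the lower bound, mirroring their $O(1)$ pull count above. A secondary subtlety is the boundary case $h(a)=\mu^*$, where strict optimality is unattainable; I would handle it by a limiting argument ($\mu'_a\uparrow h(a)$) and absorb the vanishing slack into the $\Omega(\cdot)$ constant. Finally, reconciling the variance factor $\sigma_a^2$ of the upper bound with the Bernoulli KL of the lower bound confirms that the two rates agree up to universal constants, establishing that $\widetilde{\mathcal{A}}^*$ exactly characterizes the arms driving $\log T$ regret.
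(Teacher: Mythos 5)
Your proposal is correct and follows essentially the same route as the paper: the upper bound is the identical regret decomposition $\sum_a \Delta_a\,\mathbb{E}[n_a(T)]$ fed by the pull-count trichotomy of Theorem~\ref{thm: number of pulling in MAB}, and the lower bound is the same two-instance change of measure (raising arm $a$'s mean toward $h(a)$, KL chain rule, Bretagnolle--Huber, and the consistency condition $\mathbb{E}[\mathrm{Reg}(T)] \le cT^p$), with membership in $\mathfrak{M}$ secured exactly by $h(a)\ge\mu^*$. If anything you are slightly more careful than the paper at the boundary case $h(a)=\mu^*$, where the paper's proof sets $\mu_a^Q=\mu^*$ (creating a tie, so its claim that every non-pull of $a$ incurs $\Delta_a$ under $Q$ is imprecise), whereas you explicitly flag the degeneracy and propose a limiting perturbation.
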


By \cref{thm:minimax_UB_MAB_with_causal_bounds}, \cref{alg: TL-MAB} is a consistent algorithm.
Consequently, \cref{thm:instance_LB_UB_MAB_with_causal_bounds} demonstrates that the regret bounds of \cref{alg: TL-MAB} are tight up to constant factors for any given instance.
\begin{theorem}\label{thm:minimax_UB_MAB_with_causal_bounds}
  For any horizon $T$, the worst-case regret of \cref{alg: TL-MAB} satisfies:
  \[
    \sup_{\mathfrak{M}} \mathbb{E}[\mathrm{Reg}(T)] \leq \mathcal{O}\left( \min\Bigl\{ \max_{a\in\mathcal{A}^*} \sigma_a \sqrt{|\mathcal{A}^*| T \log T},  wT \Bigr\} \right),
  \]
  where $w = \max_{a\in\mathcal{A}^*} (h(a) -  l(a)) > 0$ is the largest confidence width.
\end{theorem}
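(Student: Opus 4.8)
The plan is to establish two separate upper bounds on $\mean{\mathrm{Reg}(T)}$ that hold simultaneously for every instance in $\mathfrak{M}$, and then conclude by taking their minimum and the supremum over $\mathfrak{M}$. The first, ``trivial'' bound $O(wT)$ exploits the narrowness of the causal intervals, while the second is the classical variance-aware minimax bound obtained from the per-arm pull counts already established in \cref{thm: number of pulling in MAB}. Since neither the $wT$ term nor the minimax term depends on the particular reward means (only on the fixed bounds $l,h$, the variances $\sigma_a^2$ from \eqref{eq: maximum variance}, the cardinality $|\mathcal{A}^*|$, and $T$), once both are shown to hold for an arbitrary instance the supremum over $\mathfrak{M}$ is immediate.

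For the $wT$ bound, I would first record that the optimal arm survives pruning: since $\mu^*=\mu_{a^*}\le h(a^*)$ and $\max_i l(i)\le \max_i\mu_i=\mu^*$, we have $h(a^*)\ge\max_i l(i)$, so $a^*\in\mathcal{A}^*$. Next I would verify that truncation preserves optimism: on the good event that $U_{a^*}(t)\ge\mu^*$ for all $t$, the truncated index satisfies $\widehat U_{a^*}(t)=\min\{U_{a^*}(t),h(a^*)\}\ge\mu^*$, because both arguments of the minimum dominate $\mu^*$. Consequently, for the pulled arm $a_t=\argmax_{a\in\mathcal{A}^*}\widehat U_a(t)$ we obtain $h(a_t)\ge\widehat U_{a_t}(t)\ge\widehat U_{a^*}(t)\ge\mu^*$, and combining this with $\mu_{a_t}\ge l(a_t)$ yields the per-round regret bound $\Delta_{a_t}=\mu^*-\mu_{a_t}\le h(a_t)-l(a_t)\le w$. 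Summing over $t$ gives $wT$ on the good event; under the choice $\delta=1/T$ the complementary event has probability $O(\log T/T)$ and contributes at most $O(\log T)$ to the expected regret, which is absorbed into $O(wT)$ since $w>0$ is fixed.

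For the minimax bound, I would start from the decomposition $\mean{\mathrm{Reg}(T)}=\sum_{a\in\mathcal{A}^*}\Delta_a\mean{n_a(T)}$ and invoke \cref{thm: number of pulling in MAB}, which gives $\mean{n_a(T)}\le 8\sigma_a^2\log T/\Delta_a^2$ for every arm in $\widetilde{\mathcal{A}}^*$; the remaining active arms (those with $h(a)<\mu^*$) are each pulled at most $|\mathcal{A}|$ times and so contribute only an $O(1)$ term in $T$. Fixing a threshold $\Delta>0$, I would bound the contribution of arms with $\Delta_a>\Delta$ by $\sum_a 8\sigma_a^2\log T/\Delta_a\le 8|\mathcal{A}^*|\max_a\sigma_a^2\log T/\Delta$, and the contribution of arms with $\Delta_a\le\Delta$ by $\Delta\sum_a\mean{n_a(T)}\le\Delta T$, using that the total number of pulls is $T$. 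Optimizing $\Delta=\sqrt{8|\mathcal{A}^*|\max_a\sigma_a^2\log T/T}$ balances the two terms and yields $O(\max_{a\in\mathcal{A}^*}\sigma_a\sqrt{|\mathcal{A}^*|T\log T})$.

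Combining the two bounds gives $\mean{\mathrm{Reg}(T)}\le O(\min\{\max_{a\in\mathcal{A}^*}\sigma_a\sqrt{|\mathcal{A}^*|T\log T},\,wT\})$ for each instance, and taking the supremum over $\mathfrak{M}$ preserves this. I expect the main obstacle to be the rigorous handling of the optimism-under-truncation step together with the failure-event accounting---in particular, confirming that the variance-aware confidence radius with $\delta=1/T$ makes the bad-event regret of lower order and that the pruning property genuinely guarantees the per-round regret is at most $w$---whereas the threshold-balancing argument for the $\sqrt{T}$ term is routine once \cref{thm: number of pulling in MAB} is in hand.
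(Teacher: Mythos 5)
Your proposal is correct and takes essentially the same route as the paper, whose proof is only a two-sentence sketch combining exactly your two ingredients: the classical variance-aware UCB minimax analysis restricted to the pruned set $\mathcal{A}^*$ for the $\max_a \sigma_a\sqrt{|\mathcal{A}^*|T\log T}$ term, and the observation that the pulled arm's confidence width bounds its per-round regret by $w$ for the $wT$ term. One small simplification worth noting: the $wT$ bound holds deterministically, with no good-event accounting, since every $a\in\mathcal{A}^*$ satisfies $h(a)\ge \max_i l(i)\ge l(a^*)\ge \mu^*-w$ and hence $\Delta_a \le \mu^*-l(a)\le \mu^*-h(a)+w\le 2w$, which removes the need to absorb the failure-event contribution into $\mathcal{O}(wT)$.
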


We now derive a matching lower bound on the minimax regret under this transfer learning setting.

\begin{theorem}\label{thm:minimax_LB_MAB_with_causal_bounds}
  Suppose that $h(a)-l(a)=w$ for all arms $a\in\mathcal{A}^*$.
  Furthermore, there exist constants $\mu_0\in[0,1]$ and $\kappa \in (0,0.5)$ such that
  $ [\mu_0,\mu_0+\kappa w] \subset [l(a),h(a)]$ for all arms $a\in\mathcal{A}^*$.
  Then for any algorithm $\pi$,
  \[
    \sup_{\mathfrak{M}} \mathbb{E}[\mathrm{Reg}(T)] \geq \Omega\left( \min\bigl\{ \sqrt{|\mathcal{A}^*| T},  wT \bigr\} \right).
  \]
\end{theorem}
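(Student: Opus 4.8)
The plan is to establish the minimax lower bound $\Omega(\min\{\sqrt{|\mathcal{A}^*|T}, wT\})$ via a standard information-theoretic reduction, adapting the classical multi-armed bandit lower bound argument (as in \citealp{banditBook}) to respect the causal-bound constraints that define $\mathfrak{M}$. The key structural observation is that the hypothesis $[\mu_0, \mu_0 + \kappa w] \subset [l(a), h(a)]$ for every active arm guarantees we have enough ``room'' inside the causal bounds to construct a family of hard instances that are mutually indistinguishable yet force any algorithm to incur regret. I would first reduce attention to the active arm set $\mathcal{A}^*$, since pruned arms contribute nothing, and then split into two regimes according to which term in the minimum dominates.

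First I would handle the regime where $\sqrt{|\mathcal{A}^*|T} \le wT$, i.e.\ the confidence width $w$ is large relative to $1/\sqrt{|\mathcal{A}^*|T}$. Here I construct the usual family of $|\mathcal{A}^*|$ Bernoulli instances: a base instance in which every active arm has mean $\mu_0 + \tfrac{1}{2}\kappa w$ (so that all means lie in the admissible window and the instance is feasible for $\mathfrak{M}$), together with $|\mathcal{A}^*|$ perturbed instances, in each of which one designated arm $a$ is boosted to mean $\mu_0 + \tfrac12\kappa w + \Delta$ for a gap $\Delta$ to be optimized, still inside $[l(a), h(a)]$ provided $\Delta \le \tfrac12 \kappa w$. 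The core steps are then: (i) bound the KL divergence between the base and perturbed reward distributions for the pulled arm, using $\mathrm{KL}(\mathrm{Ber}(p)\Vert\mathrm{Ber}(q)) \lesssim (p-q)^2$ near $\mu_0$; (ii) apply the divergence-decomposition lemma to relate the occupation measures $\mathbb{E}[n_a(T)]$ across instances; (iii) invoke Pinsker's inequality together with a pigeonhole argument over the $|\mathcal{A}^*|$ alternatives to show that no algorithm can simultaneously identify the boosted arm on all instances; and (iv) optimize the gap $\Delta \asymp \sqrt{|\mathcal{A}^*|/T}$, which yields regret $\Omega(\sqrt{|\mathcal{A}^*|T})$. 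I must check that this choice of $\Delta$ stays below $\tfrac12\kappa w$, which is precisely guaranteed by the regime assumption $\sqrt{|\mathcal{A}^*|T}\lesssim wT$.

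For the complementary regime $wT < \sqrt{|\mathcal{A}^*|T}$, the width $w$ is the binding constraint: the gap cannot be taken as large as $\sqrt{|\mathcal{A}^*|/T}$ without violating the causal bounds. Here I would instead take the maximal admissible gap $\Delta \asymp \kappa w$ between the base and a single perturbed arm, so that distinguishing two instances requires $\Omega(1/w^2)$ pulls of the relevant arm, but the total horizon $T$ may be too short to accumulate this many observations. A two-point (rather than $|\mathcal{A}^*|$-point) construction suffices: whichever of the two instances the algorithm's play is ``closer'' to, it incurs per-round regret $\Omega(w)$ on the other for a constant fraction of the $T$ rounds, giving $\Omega(wT)$. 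Taking the worse of the two constructions across the two regimes produces the stated $\min\{\sqrt{|\mathcal{A}^*|T}, wT\}$.

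The main obstacle I anticipate is ensuring that every instance in both constructions genuinely lies in $\mathfrak{M}$, i.e.\ that all perturbed means remain inside their respective causal windows $[l(a),h(a)]$ and that the pruning rule does not inadvertently eliminate a boosted arm (which would make the instance trivially distinguishable or change $\mathcal{A}^*$). The uniform-width hypothesis $h(a)-l(a)=w$ and the common sub-window $[\mu_0,\mu_0+\kappa w]$ are exactly what make this feasible, but I would need to verify carefully that the boosted arm remains active and optimal, that the base instance's optimal arm is well-defined, and that the constant $\kappa<1/2$ leaves slack for both the baseline offset and the perturbation. A secondary technical point is the transition between the two regimes: I would confirm that the $\min\{\cdot,\cdot\}$ emerges cleanly by showing each construction is valid precisely on its regime and that the crossover gap choice is consistent, so that the two bounds glue into a single $\Omega(\min\{\sqrt{|\mathcal{A}^*|T}, wT\})$ without a gap at the boundary.
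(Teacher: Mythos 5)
Your Regime~1 construction is sound and is essentially the paper's Case~1 in different clothing: the paper uses two instances plus a pigeonhole choice of an under-sampled arm and Bretagnolle--Huber, while you use the classical $|\mathcal{A}^*|$-alternative family with divergence decomposition and Pinsker; both give $\Omega(\sqrt{|\mathcal{A}^*|T})$ once $\Delta \asymp \sqrt{|\mathcal{A}^*|/T} \le \tfrac12\kappa w$, and your feasibility checks (all means inside $[\mu_0,\mu_0+\kappa w]$, active set unchanged since $\mathcal{A}^*$ is determined by the bounds rather than the means) are exactly the right ones.

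However, there is a genuine gap in your Regime~2 argument. You justify the two-point construction with a \emph{fixed} distinguished arm by claiming that ``the total horizon $T$ may be too short'' to accumulate the $\Omega(1/w^2)$ pulls needed to distinguish the instances. But the regime condition $wT < \sqrt{|\mathcal{A}^*|T}$ only gives $1/w^2 > T/|\mathcal{A}^*|$, \emph{not} $1/w^2 > T$. In the sub-regime $1/\sqrt{T} \ll w \ll \sqrt{|\mathcal{A}^*|/T}$ (nonempty once $|\mathcal{A}^*|$ exceeds a constant), an algorithm facing your fixed two-point family can pull the single distinguished arm $\Theta(1/w^2) \le T$ times, paying only $O(\kappa w \cdot w^{-2}) = O(1/w)$ regret, identify the true instance with constant confidence, and then commit; since $1/w \ll wT$ exactly when $w^2T \gg 1$, this algorithm defeats your construction, so the family cannot certify $\Omega(wT)$. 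The repair is to retain the multi-alternative (pigeonhole) structure from Regime~1 with the gap merely capped at $\Delta = \kappa w$: under the base instance some arm $a'$ receives at most $T/(|\mathcal{A}^*|-1)$ expected pulls, and boosting \emph{that} arm gives
\[
  \mathrm{KL}(\mathbb{P}_P\Vert\mathbb{P}_Q) \lesssim \frac{T}{|\mathcal{A}^*|}\,\kappa^2 w^2 \lesssim 1,
\]
where the last inequality is precisely the Regime~2 condition $\kappa^2 w^2 \lesssim |\mathcal{A}^*|/T$; Bretagnolle--Huber then forces $\Omega(T)$ rounds of per-round regret $\Omega(\kappa w)$ on one of the two instances, yielding $\Omega(wT)$. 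This is in fact the structure of the paper's Case~2 (its instance $Q$ should boost the pigeonhole arm $a'$): the arm whose mean is perturbed must be chosen adversarially from the $|\mathcal{A}^*|$ alternatives based on the algorithm's occupation measure, not fixed in advance, and the ``too few samples'' intuition applies per-arm via the pigeonhole, not to the horizon as a whole.
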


\subsection{Transfer Learning via Noisy Causal Bounds}\label{sec: transfer noisy}

In Section \ref{subsec: transfer learning in MAB}, we assumed access to valid causal bounds that hold for every arm, which permits straightforward arm elimination and UCB truncation.
In practice, however, causal-bound estimates can be noisy when offline data is limited, or biased due to distribution shift between source and target.
In this section, we show how to incorporate such noisy bounds into online learning while retaining performance guarantees.
We begin with the following assumption on the quality of the noisy bounds and their associated confidence level.

\begin{assumption}[Noisy causal bounds]
  \label{asp: causal bounds with noisy estimates}
  There exists a nonnegative function $\epsilon_a(\delta)$ such that the estimated causal bounds $\widehat l(a)$ and $\widehat h(a)$ satisfies
  $$
  \mathbb{P}\Bigl(
    \forall\,a\in\mathcal{A}:\,
    \bigl|\widehat h(a)-h(a)\bigr|\le\epsilon_a(\delta)
    \;\text{ and }\;
    \bigl|\widehat l(a)-l(a)\bigr|\le\epsilon_a(\delta)
  \Bigr) \ge 1-\delta.
  $$
\end{assumption}

Since $\widehat{l}(a)$ and $\widehat{h}(a)$ may be noisy or biased, we cannot eliminate suboptimal arms with absolute confidence.
How, then, can we safely leverage these uncertain bounds in an online learning algorithm, and how does their uncertainty impact regret? To address this, we propose a bandit algorithm that integrates the estimated causal intervals $[\widehat{l}(a), \widehat{h}(a)]$ into UCB algorithms, ensuring robust performance even under noisy bound estimates.

\begin{example}
  We present concrete examples where \Cref{asp: causal bounds with noisy estimates} is satisfied:
  \begin{enumerate}[label=(\alph*)]
    \item \textit{Lipschitz-based error propagation from parameter estimates.}
      By Theorem~\ref{thm: difference of empirical bounds and true bounds}, the causal bounds $l(a)$ and $h(a)$ are Lipschitz continuous functions of the underlying parameter vector $\bm\beta$.  If our offline sample yields an estimate $\widehat{\bm\beta}^{(N)}$ satisfying $\prob{ \norm{\bm\beta - \widehat{\bm\beta}^{(N)}  }_{\infty} < \epsilon_N } \geq 1-\delta$.
      Then by Lipschitz continuity, we can set $\epsilon_a(\delta) = L_H L_V \epsilon_N$ to satisfy Assumption~\ref{asp: causal bounds with noisy estimates}.

    \item \textit{Distribution shift with bounded differences.}
      \citet{biased_offline} assume that $|\mu^{\text{on}}_a - \mu^{\text{off}}_a|\leq \Xi(a)$,
      where $\mu^{\text{on}}_a ,\mu^{\text{off}}_a$ are mean rewards for the arm $a$ in the online and offline environments, respectively,
      and $\Xi(a)$ quantifies the distribution shift.
      Let $\bar{y}_a$ be the offline empirical mean for arm $a$.  Define $\widehat h(a)= \bar y_a+\Xi(a)$ and $\widehat l(a)= \bar y_a-\Xi(a)$.
      Given $T^S_a$ offline samples for arm $a$, standard sub-Gaussian concentration yields:
      $\bigl|\bar{y}_a- \mu^{\text{off}}_a \bigr|\le \mathcal{O}\bigl(\sqrt{\log(1/\delta)/T^S_a}\bigr)$
      with probability $1-\delta$.  Hence setting
      $\epsilon_a(\delta)=\mathcal{O}\bigl(\sqrt{\log(|\mathcal{A}|/\delta)/T^S_a}\bigr)$
      ensures Assumption~\ref{asp: causal bounds with noisy estimates}.

    \item \textit{Nonparametric bounds via observational constraints.} Even without any parametric assumptions, we always have for each arm-reward pair $a,y$: $\rho(a,y) \leq \rho(y \mid \mathrm{do}(A = a)) \leq 1- \sum_{y'\neq y} \rho(a,y')$.
      The quantity $\epsilon_a(\delta)$ can be derived from the empirical estimation error of the distribution $\rho(a,y)$.
      For example, if $Y$ is binary, let $\widehat{\rho}(a,y)$ be the empirical estimators. We may define $\widehat{l}(a) = \widehat{\rho}(a,1), \widehat{h}(a) = 1- \widehat{\rho}(a,0)$
      and set $\epsilon_a(\delta) = \max \{|\widehat{\rho}(a,1) - {\rho}(a,1)|, |\widehat{\rho}(a,0) - {\rho}(a,0)| \}$.
      Moreover, these bounds extend immediately to conditional versions, since $\rho(a,y\mid w) \leq \rho(Y = y\mid\mathrm{do}(a),w) \leq 1- \sum_{y'\neq y}  \rho(a,y'\mid w)$.
  \end{enumerate}
\end{example}

\subsubsection{Transfer Learning for MAB with Noisy Causal Bounds.}
In this section, we present a transfer learning algorithm for MAB that exploits noisy causal bounds estimated from offline data to improve online performance even under covariate shift.
Based on the estimated causal bounds, we define the set of candidate actions $\widehat{\mathcal{A}}$ using the estimated causal bounds and their confidence intervals:
\begin{equation}\label{eq: active action set}
  \widehat{\mathcal{A}} =\mathcal{A} - \left\{ a\in\mathcal{A} \; \left| \; \widehat{h}(a) + \epsilon_a(\delta) < \max_{i\in\mathcal{A}} \Bigl[\widehat{l}(i) -  \epsilon_i(\delta)\, \Bigr] \right.  \right\}.
\end{equation}
\cref{asp: causal bounds with noisy estimates} guarantees that, with probability at least $1-\delta$, every action $a\notin\widehat{\mathcal{A}}$ is suboptimal, and the optimal arm remains in $\widehat{\mathcal{A}}$.
For brevity, let $\widehat{\mathcal{E}}$ denote this ``good'' event of retaining the best arm.
Our algorithm will only consider actions in $\widehat{\mathcal{A}}$, which are those that are not eliminated with high probability.
As a side benefit, these bounds also tighten our estimate of each arm's maximum reward variance defined as
\[
  \widehat{\sigma}_a^2 = \max\Bigl\{\mu(1-\mu):  \mu \in \bigl[\, \widehat{l}(a) - \epsilon_a(\delta) ,\widehat{h}(a) + \epsilon_a(\delta) \bigr]\Bigr\}.
\]

For each active arm in $\widehat{\mathcal{A}}$, we compute two upper confidence bounds.  First, the conventional UCB:
\begin{equation}\label{eq:conventional UCB}
  U_a(t) \triangleq \widehat{\mu}_{a}(t) + \sqrt{ 2\widehat{\sigma}_a^2 \log\bigl(2t/\delta\bigr) / n_{a}(t)}
\end{equation}
where $\widehat{\mu}_{a}(t)$ is the empirical mean reward of action $a$ after $n_{a}(t)$ pulls during online learning.
Second, the warm-start UCB incorporates noisy causal bounds:
\begin{equation}\label{eq:warm-start UCB}
  U^{\epsilon}_a(t) \triangleq \widehat{\mu}^{\epsilon}_a(t) + \sqrt{\frac{ 2\widehat{\sigma}_a^2\log\bigl(2t/\delta\bigr)  + 1  }{ n_{a}(t)+ \epsilon^{-2}_{a}(\delta) }}, \quad \text{where} \quad  \widehat{\mu}^{\epsilon}_a(t) \triangleq \frac{n_{a}(t) \cdot \widehat{\mu}_{a}(t) + \epsilon^{-2}_a(\delta) \cdot \widehat{h}(a)}{n_{a}(t) + \epsilon^{-2}_a(\delta)}.
\end{equation}
Here, the warm-start UCB combines the action's empirical mean during online learning with its estimated causal upper bound $\widehat{h}(a)$ from the offline data, weighted by the number of online pulls $n_{a}(t)$ and estimation accuracy $ \epsilon^{-2}_a(\delta)$.
The next lemma establishes that both $ U_a(t) $ and $ U^{\epsilon}_a(t) $ are valid confidence bounds.
\begin{lemma}
  \label{lem: warm-start UCB event}
  Define the event
  \begin{align*}
    \mathcal{E} = & \Bigg\{
      \bigl|\widehat{\mu}_{a}(t)-\mu_a\bigr|
      \le
      \sqrt{\frac{ 2\widehat{\sigma}_a^2 \log\bigl(2t/\delta\bigr)}{ n_{a}(t)}}, \quad
      |\widehat{\mu}^{\epsilon}_a(t) - \mu_a   |
      \le
      \sqrt{\frac{ 2\widehat{\sigma}_a^2\log\bigl(2t/\delta\bigr)  + 1  }{ n_{a}(t)+ \epsilon^{-2}_{a}(\delta) }}+ \frac{  \epsilon^{-2}_a(\delta) (h(a)-\mu_a) }{n_{a}(t) + \epsilon^{-2}_a(\delta) }
    \Bigg\}.
  \end{align*}
  Then $\mathbb{P}(\mathcal{E}) \geq 1 - 2|\widehat{\mathcal{A}}|\delta.$
\end{lemma}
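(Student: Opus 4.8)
The plan is to establish the two inequalities defining $\mathcal{E}$ separately, for each arm $a\in\widehat{\mathcal{A}}$, each failing with probability at most $\delta$, and then to union bound over the $|\widehat{\mathcal{A}}|$ active arms and the two inequalities to produce the factor $2|\widehat{\mathcal{A}}|\delta$. Throughout I would condition on the offline data, so that $\widehat h(a)$, $\widehat l(a)$, $\epsilon_a(\delta)$ and $\widehat\sigma_a^2$ are constants, and I would work on the good event of Assumption~\ref{asp: causal bounds with noisy estimates}, on which $|\widehat h(a)-h(a)|\le\epsilon_a(\delta)$. A preliminary observation that drives both bounds: since the true causal bounds are valid, $\mu_a\in[l(a),h(a)]\subseteq[\widehat l(a)-\epsilon_a(\delta),\widehat h(a)+\epsilon_a(\delta)]$, so the per-pull variance obeys $\mathrm{Var}(Y)\le\mu_a(1-\mu_a)\le\widehat\sigma_a^2$. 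This is precisely what licenses $\widehat\sigma_a^2$ as a variance proxy.

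For the first inequality I would invoke a variance-aware (sub-Gaussian/Bernstein-type) tail bound on $\widehat\mu_a(t)-\mu_a$. Because the pull count $n_a(t)$ is random, I would fix $t$ and union bound over its possible values $s\in\{1,\dots,t\}$: for each fixed count $s$, the deviation $\sqrt{2\widehat\sigma_a^2\log(2t/\delta)/s}$ is exceeded with probability at most $\delta/t$, and summing over the $t$ counts gives total failure at most $\delta$ for this arm. This delivers the first line of $\mathcal{E}$, and the same event simultaneously controls the stochastic part of the warm-start bound.

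The crux is the warm-start inequality. With $N=n_a(t)+\epsilon_a^{-2}(\delta)$ and $Z=\sum_{i=1}^{n_a(t)}(Y_i-\mu_a)$, the definition of $\widehat\mu_a^{\epsilon}(t)$ yields the exact identity $\widehat\mu_a^{\epsilon}(t)-\mu_a=\frac{Z}{N}+\frac{\epsilon_a^{-2}(\delta)(\widehat h(a)-\mu_a)}{N}$. I would split $\widehat h(a)-\mu_a=(\widehat h(a)-h(a))+(h(a)-\mu_a)$, peeling off $\frac{\epsilon_a^{-2}(\delta)(h(a)-\mu_a)}{N}$ as the explicit nonnegative bias term in $\mathcal{E}$, leaving $\bigl|\frac{Z}{N}+\frac{\epsilon_a^{-2}(\delta)(\widehat h(a)-h(a))}{N}\bigr|$ to be bounded by $\sqrt{(2\widehat\sigma_a^2\log(2t/\delta)+1)/N}$. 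The good event gives $\epsilon_a^{-2}(\delta)|\widehat h(a)-h(a)|\le\epsilon_a^{-1}(\delta)$, while the sharp stochastic control from the previous step is $|Z|\le\sqrt{2\,n_a(t)\,\widehat\sigma_a^2\log(2t/\delta)}$. Dividing by $N$ and applying the triangle inequality, it suffices to show $\bigl(\sqrt{2 n_a(t)\widehat\sigma_a^2\log(2t/\delta)}+\epsilon_a^{-1}(\delta)\bigr)^2\le N\bigl(2\widehat\sigma_a^2\log(2t/\delta)+1\bigr)$. Expanding the square and using $\epsilon_a^{-2}(\delta)\le N$ together with the arithmetic--geometric inequality $2\sqrt{\bigl(2\epsilon_a^{-2}(\delta)\widehat\sigma_a^2\log(2t/\delta)\bigr)\cdot n_a(t)}\le 2\epsilon_a^{-2}(\delta)\widehat\sigma_a^2\log(2t/\delta)+n_a(t)$ on the cross term, the left side collapses to exactly $2N\widehat\sigma_a^2\log(2t/\delta)+N$ (recall $N-n_a(t)=\epsilon_a^{-2}(\delta)$), which is the required budget. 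This exact cancellation is what makes the additive ``$+1$'' precisely the right slack.

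I expect the main obstacle to be this warm-start combination: one must use the \emph{sharp} variance $n_a(t)\widehat\sigma_a^2/N^2$ of the shrinkage estimator (rather than the cruder $\widehat\sigma_a^2/N$) so the AM--GM step closes with equality, and one must ensure the concentration used is genuinely variance-proportional without an extra additive correction, since the above leaves no room for a lower-order term. The secondary, more routine, difficulty is the time-uniform bookkeeping for the random count $n_a(t)$, which the $\log(2t/\delta)$ threshold is designed to absorb via the union over counts.
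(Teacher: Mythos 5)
Your proposal is correct and follows essentially the same route as the paper's proof: the same decomposition of $\widehat{\mu}^{\epsilon}_a(t)-\mu_a$ into the Bernstein-controlled stochastic part scaled by $n_a(t)/N$, the estimation-error term bounded by $\epsilon_a^{-1}(\delta)/N$ via $|\widehat h(a)-h(a)|\le\epsilon_a(\delta)$, and the peeled-off bias $\epsilon_a^{-2}(\delta)(h(a)-\mu_a)/N$, with the union bound over pull counts $s\le t$ and over arms yielding the $2|\widehat{\mathcal{A}}|\delta$ factor. Your explicit AM--GM verification that $\bigl(\sqrt{2n_a(t)\widehat\sigma_a^2\log(2t/\delta)}+\epsilon_a^{-1}(\delta)\bigr)^2\le N\bigl(2\widehat\sigma_a^2\log(2t/\delta)+1\bigr)$ is precisely the step the paper leaves implicit in its final chain of inequalities.
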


At each time $t$, we pick the action $ a_t $ that maximizes the \textit{minimum} of the computed upper confidence bounds $ U_a(t) $ and $ U^{\epsilon}_a(t) $.
By taking the smaller of the two confidence bounds, the algorithm incorporates the causal information conservatively.
If $ U^{\epsilon}_a(t) $ is much larger than $ U_a(t) $, the selection will favor relying on the online data alone.
\cref{alg: MAB with noisy causal bounds} summarizes this procedure.

\begin{algorithm}
  \caption{Transfer Learning for Multi-Armed Bandit with Noisy Causal Bounds}
  \label{alg: MAB with noisy causal bounds}
  \begin{algorithmic}[1]
    \State \textbf{Input:} Confidence parameter $\delta$, noisy bounds $\bigl\{(\widehat{l}(a), \widehat{h}(a)), \epsilon_a(\delta)\bigr\}_{a\in\mathcal{A}}$
    \State Initialize  $n_a(0) = 0$,  $\widehat\mu_a(0) = 0$ for each $a\in\mathcal{A} $
    \State Compute active set $\widehat{\mathcal{A}}$ via \eqref{eq: active action set}
    \For{$t = 1, \cdots$}
    \State Compute upper confidence bounds $U_{a}(t)$ and $U^{\epsilon}_{a}(t)$ by \eqref{eq:conventional UCB} and \eqref{eq:warm-start UCB} for each $a\in\widehat{\mathcal{A}}$
    \State Select
    $
    a_t \in \argmax_{a\in\widehat{\mathcal{A}}} \left\{ U_{a}(t) \wedge U^{\epsilon}_{a}(t)  \right\}.
    $
    \State Update number of pulls $i$ as $n_{a}(t+1) = n_{a}(t) + \ind{a=a_t}$
    \State Observe reward $y_t$ and update empirical means $\widehat{\mu}_{a}(t+1) = (\widehat{\mu}_{a}(t)n_{a}(t) + y_t\ind{a=a_t}) / (n_{a}(t+1)) $
    \EndFor
  \end{algorithmic}
\end{algorithm}
We remark that, when the estimation error $\epsilon_a(\delta)$ is small (i.e., $ \epsilon_a(\delta) \to 0$ for all $a\in\mathcal{A}$),
the algorithm reduces to \cref{alg: TL-MAB} with valid causal bounds.
The new \cref{alg: MAB with noisy causal bounds} explicitly handles the intricate interplay between noise in the causal estimates, and balance of online-offline learning.

\subsubsection{Regret Analysis.}
We now analyze the regret of \cref{alg: MAB with noisy causal bounds} using the estimated causal bounds. In particular, we highlight how eliminating suboptimal arms and employing the warm-start UCB improve the regret guarantees.
For each arm $a\in\mathcal{A}$, we define
\begin{equation}\label{eq:H_a}
  H_a =  \epsilon^{-2}_a(\delta)  \cdot \bigl(\mu^* - h(a)\bigr)_+^2,
\end{equation}
where $x_+ = \max\{x,0\}$ denote the positive part.
We show that the number of pulls assigned to arm $a$ scales in the order of $\mathcal{O}\bigl((\log T - H_a)/\Delta_a\bigr)$, where $\Delta_a = \mu^* - \mu_a$ is the optimality gap.
This leads to the following upper bound on the regret of \cref{alg: MAB with noisy causal bounds}.
\begin{theorem}
  \label{thm: regret upper bound of MAB with noisy causal bounds}
  The regret of \cref{alg: MAB with noisy causal bounds} satisfies
  \[\textstyle
    \mathbb{E}[\mathrm{Reg}(T)] =  \mathcal{O}\Bigl(\sum_{a \in \widehat{\mathcal{A}}: \Delta_a > 0} \bigl( \widehat{\sigma}_a^2 \log( T ) - H_a\bigr)_+ / \Delta_a\Bigr).
  \]
\end{theorem}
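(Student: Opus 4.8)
The plan is to follow the classical UCB template: first bound the expected number of pulls $\mathbb{E}[n_a(T)]$ of each suboptimal arm $a\in\widehat{\mathcal{A}}$, then recover the regret via the decomposition $\mathbb{E}[\mathrm{Reg}(T)] = \sum_{a:\Delta_a>0}\Delta_a\,\mathbb{E}[n_a(T)]$. Arms pruned from $\widehat{\mathcal{A}}$ are never pulled, so the sum ranges only over $\widehat{\mathcal{A}}$, matching the statement. Throughout I would work on the intersection of the concentration event $\mathcal{E}$ of \cref{lem: warm-start UCB event} with the retention event $\widehat{\mathcal{E}}$ that the optimal arm survives the pruning in \eqref{eq: active action set}; by \cref{asp: causal bounds with noisy estimates} and the lemma these hold together with probability at least $1-\mathcal{O}(|\widehat{\mathcal{A}}|\delta)$. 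On the complementary event I bound per-step regret by $1$, contributing at most $\mathcal{O}(|\widehat{\mathcal{A}}|\delta T)$, so taking $\delta=1/T$ renders this $\mathcal{O}(1)$ while keeping every $\log(2t/\delta)=\mathcal{O}(\log T)$. Since \cref{lem: warm-start UCB event} is stated at a fixed $t$, I would first promote it to a uniform-in-$t$ bound by the standard union/peeling argument that the $\log(2t/\delta)$ width is designed to absorb.

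The engine is the selection rule $a_t\in\argmax_a\{U_a(t)\wedge U^\epsilon_a(t)\}$. On $\widehat{\mathcal{E}}\cap\mathcal{E}$ both indices of the optimal arm dominate $\mu^*$: for $U_{a^*}(t)$ this is ordinary optimism, and for $U^\epsilon_{a^*}(t)$ it holds because the bias in $\widehat\mu^\epsilon_{a^*}$ points \emph{upward}, as $h(a^*)\ge\mu^*=\mu_{a^*}$. Hence whenever a suboptimal $a$ is pulled we must have $\min\{U_a(t),U^\epsilon_a(t)\}\ge\mu^*$, i.e. both $U_a(t)\ge\mu^*$ and $U^\epsilon_a(t)\ge\mu^*$. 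The second inequality is the informative one. Substituting the warm-start concentration from \cref{lem: warm-start UCB event} and writing $k=\epsilon_a^{-2}(\delta)$, $b=\mu^*-h(a)$, and $C=2\widehat\sigma_a^2\log(2t/\delta)+1$, I would simplify the bias-plus-width expression to the key inequality $n_a(t)\,\Delta_a + k\,b \le 2\sqrt{C\,(n_a(t)+k)}$. Here I crucially use $\mu_a\le h(a)$ — valid because $h(a)$ is a genuine upper causal bound — which gives $b\le\Delta_a$ and keeps all relevant quantities nonnegative.

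The final step is to read off the pull count by inverting this quadratic in $n_a(t)$. Interpreting the $k=\epsilon_a^{-2}(\delta)$ summand as a head start of pseudo-observations anchored at the upper bound $h(a)$, the solution takes the form $n_a(t)\,\Delta_a^2 \lesssim \bigl(\widehat\sigma_a^2\log T - H_a\bigr)$ with $H_a=k\,b_+^2$, where the subtracted $H_a$ quantifies how much an offline anchor at $h(a)<\mu^*$ shortens exploration; intersecting this with the standard bound extracted from $U_a(t)\ge\mu^*$ produces the variance-adaptive main term. When $H_a$ exceeds the confidence budget the quadratic has no admissible root, meaning the warm-start confidence region has already fallen below $\mu^*$ and the arm is pulled only $\mathcal{O}(1)$ times; this is precisely what yields the positive part $(\cdot)_+$. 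Summing $\Delta_a\,\mathbb{E}[n_a(T)]$ over $a\in\widehat{\mathcal{A}}$ and folding in the $\mathcal{O}(1)$ bad-event contribution gives the stated bound.

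The main obstacle is the careful inversion of the quadratic confidence inequality while tracking the sign and magnitude of the bias $k(h(a)-\mu_a)/(n_a(t)+k)$: because $\widehat\mu^\epsilon_a$ is pulled toward $h(a)$ rather than $\mu_a$, one must separate the benign regime $h(a)\ge\mu^*$ (where $H_a=0$ and the bound degrades gracefully to variance-adaptive UCB) from the informative regime $h(a)<\mu^*$, and verify that the $\epsilon_a^{-2}(\delta)$ pseudo-count converts into exactly the $-H_a$ savings up to constants across the positive-part threshold. A secondary technical point is ensuring that the concentration of $\widehat\mu^\epsilon_a$ holds uniformly in $t$ and that replacing $\widehat h(a)$ by $h(a)$ incurs only the lower-order $\mathcal{O}(\epsilon_a(\delta))$ slack already permitted by \cref{asp: causal bounds with noisy estimates}.
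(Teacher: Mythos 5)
Your proposal is correct and follows essentially the same route as the paper's proof: condition on $\mathcal{E}\cap\widehat{\mathcal{E}}$ with $\delta=1/T$ (bad event contributing $\mathcal{O}(1)$), use the selection rule $a_t\in\argmax_a\{U_a(t)\wedge U^\epsilon_a(t)\}$ together with optimism of both indices at $a^*$ to bound suboptimal pulls (this is exactly \cref{lem: number of suboptimal pulls in noisy MAB}), and then sum $\Delta_a\,\mathbb{E}[n_a(T)]$. Your single quadratic inversion of $n_a\Delta_a+\epsilon_a^{-2}(\delta)\bigl(\mu^*-h(a)\bigr)\le 2\sqrt{C\,(n_a+\epsilon_a^{-2}(\delta))}$ is just a merged presentation of the paper's two-case split in that lemma (the regime $H_a\le 8L_t$ handled via the conventional UCB, the regime $H_a>8L_t$ via the warm-start UCB), and it reproduces the same $(\widehat\sigma_a^2\log T - H_a)_+$ threshold up to constants, including the same boundary-level slack near $H_a\approx 8L_t$ that the paper's own argument carries.
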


\begin{remark}
  The term $H_a$ quantifies far below the true optimal reward $\mu^*$ the causal upper bound $h(a)$ lies.
  When $\mu^* > h(a)$, the squared gap $(\mu^* - h(a))_+^2$ shrinks the amount of necessary exploration for arm $a$.
  Hence larger $H_a$ cuts exploration for arm $a$.
  When the estimator $\epsilon_a(\delta)$ becomes small, $H_a$ increases as long as $h(a) < \mu^*$.
  Therefore, accurate causal bounds let the algorithm sharply reduce exploration as \cref{alg: TL-MAB} does.
  A special case arises when the error satisfy $\epsilon_a(\delta) = \mathcal{O}\bigl( (\mu^{*}-h(a)) / \sqrt{\widehat{\sigma}_a^2 \log T} \bigr).$
  In this case, the term $H_a$ dominates $\log\bigl(  T\bigr)$ for any arm with $\mu^* > h(a)$, leading to zero regret for that arm.
  This corresponds to the scenario where the estimate $\widehat{h}(a)$ is nearly exact estimate of the causal upper bound, allowing for near perfect elimination of all suboptimal arms with $h(a) < \mu^*$.
  This aligns with the theory established in \cref{thm:instance_LB_UB_MAB_with_causal_bounds}.
  By contrast, when $h(a)$ is very close to $\mu^*$, $H_a$ stays small and we must continue sampling $a$.
\end{remark}

\section{Transfer Learning for Partially Observable Contextual Bandits}
\label{sec: transfer learning to POCB}

In the previous section, we introduced transfer learning for MAB by leveraging exact and noisy causal bounds derived from offline data.
We now extend this framework to the partially observable contextual bandit (POCB) setting, where the agent observes a context at each round and selects an action based on both the context and historical information.

The causal mechanism for a POCB is illustrated in the dash-dotted box in \cref{fig: causal model with unobserved U and observed W}.
At each round $t = 1,2,\cdots,T$, the agent observe a context $w_t$ and performs an action $\mathrm{do}(A = a_t)$ based on the context and historical information.
For each arm $a\in\mathcal{A}$, the expected reward of intervention $a$ given context $w$ is $\mu_{a,w} \triangleq \mean{Y |\mathrm{do}(A=a),w}$.
Let $\mu^*_w$ denote the optimal expected reward with respect to context $w$.
The objective of the POCB agent is to minimize the cumulative regret $\mathrm{Reg}(T) =  \sum_{t=1}^T \Mean{ \mu^*_{w_t} -  \mu_{a_t,w_t} \mid w_t, \mathcal{H} },$
where the expectation is taken with respect to the randomness of the algorithm and $\mathcal{H}$ is the filtration capturing the known marginal distributions $\rho(a,y,w)$ and $\rho(u)$.

Using do-calculus, the conditional causal effect for a continuous contexts can be written as
\begin{align*}
  \mu(a, w) \triangleq \mathbb{E}\bigl[Y \mid \mathrm{do}(A=a),W=w\bigr]
  & = \int_{\mathcal U} \mathbb{E}\bigl[Y \mid A=a,W=w,U=u\bigr] \rho(u \mid w)\diff u \\
  & = \int_{\mathcal U} \int_{\mathcal Y} y \rho\bigl(y \mid a,w,u\bigr) \diff y \, \rho(u \mid w) \diff u.
\end{align*}
Similar to \cref{sec: causal bounds MAB}, we can derive valid causal bounds $l(a,w)$ and $h(a,w)$ for each $(a,w) \in \mathcal{A} \times \mathcal{W}$ such that $l(a,w) \leq \mu_{a,w} \leq h(a,w)$, serving as the prior knowledge for the POCB agent.

\subsection{Continuous Context Space via Function Approximation}
Without further structure on the reward function (e.g. linearity), finding the globally optimal mapping from every possible context to an action is generally intractable.
To cope with an continuous or richly structured context space, we adopt a function approximation framework.

Throughout this section, we assume the agent has access to a class of reward functions $\mathcal{F}\subset \mathcal{A}\times \mathcal{W} \to [0,1]$ that characterizes the mean reward distribution for a given context-action pair.
We make the standard realizability assumption \citep{fasterCB,CBwithRegressionOracles}:
\begin{assumption}
  \label{asp: realizability}
  There exists $f^*\in\mathcal{F}$ such that $f^*(a, w) = \mu_{a,w}$, for all $(a,w) \in \mathcal{A}\times\mathcal{W}$.
\end{assumption}
For any $f \in \mathcal{F}$, define the induced greedy policy $\pi_f (w) = \argmax_{a \in \mathcal{A}} f(a, w)$ and let $\Pi = \{\pi_f \mid f \in \mathcal{F}\}$ be the policy class.
Under Assumption~\ref{asp: realizability}, the cumulative regret is
\[
  \mathrm{Reg}(T) = \sum_{t=1}^T  \Mean{f^*(\pi_{f^*} (w_t), w_t) - f^*(a_t, w_t) \mid w_t, \mathcal{H}}.
\]

Given a set of valid causal bounds $[l(a,w),h(a,w)]$ for all $(a,w)$, we prune the function class $\mathcal{F}$, focusing on those that are consistent with the bounds.
Concretely, define the pruned function class
\[
  \mathcal{F}^* = \{  f\in\mathcal{F} \mid  l(a,w) \leq f(a,w) \leq h(a,w), \quad \forall (a,w) \}.
\]
Moreover, for each context $w$, we need only consider actions that could be optimal under some feasible $f \in \mathcal{F}^*$, i.e., a pruned action set
\begin{equation}\label{eq:prune_contextual_bandit_action}
  \mathcal{A}^*(w) = \left\{  a \in\mathcal{A} \,\Big|\, \exists f \in \mathcal{F}^* \text{ with } a = \argmax_{i \in \mathcal{A}} f(i,w) \right\}.
\end{equation}

By pruning function space and pruning action set, we eliminate both implausible reward models and actions that cannot be optimal, thereby focusing exploration on the most promising candidates.
We consequently propose \cref{alg: TL-function approximation} to integrate causal bounds and function approximation in transfer learning for contextual bandits,
inspired by the inverse gap weighting (IGW) technique \citep{CBwithOracle, CBwithPredictableRewards, instanceCB_RL, fasterCB}.

\begin{algorithm}[ht]
  \renewcommand{\algorithmicrequire}{\textbf{Input:}}
  \renewcommand{\algorithmicensure}{\textbf{Output:}}
  \caption{Transfer Learning for Contextual Bandit with Function Approximation}
  \label{alg: TL-function approximation}
  \begin{algorithmic}[1]
    \Require time horizon $T$, function space $\mathcal{F}$, confidence parameter $\delta$, tuning parameters $\eta$, and causal bounds $[l(a,w),h(a,w)]$
    \State Eliminate function space $\mathcal{F}$ and obtain $\mathcal{F}^*$ via causal bound
    \State Set epoch schedule $\{\tau_m=2^m,\forall m \in \mathbb{N} \}$
    \For{ epoch $m = 1,2,\cdots,\lceil \log_2 T \rceil$ }
    \State Compute the least square estimation
    $
    \widehat{f}_m = \argmin_{f\in\mathcal{F}^*} \sum_{t=1}^{\tau_{m-1}}( f(a_t,w_t) -y_t )^2
    $
    \For{ round $t = \tau_{m-1}+1,\cdots, \tau_{m}$ }
    \State Observe the context $w_t$
    \State Compute the best action candidate set $\mathcal{A}^*(w_t) $ by \eqref{eq:prune_contextual_bandit_action}
    \State Compute $\gamma_t =  \sqrt{  \frac{\eta |\mathcal{A}^*(w_t)| \tau_{m-1}  }{   \log (   2\delta^{-1}|\mathcal{F}^*|\log T   )  }     }    $ (for the first epoch, $\gamma_1 = 1$)
    \State  Compute $\widehat{f}_m(a,w_t)$ for each action $a\in\mathcal{A}^*(w_t)$, $\widehat{a}_t =\max_{a\in\mathcal{A}} \widehat{f}_m(a,w_t)$, and
    $$
    p_t(a) =
    \begin{cases}
      0 ,  & \text{ for all } a\in \mathcal{A} -\mathcal{A}^*(w_t),  \\
      \Bigl(|\mathcal{A}^*(w_t)|+\gamma_t \bigl(\widehat{f}_m (\widehat{a}_t ,w_t)- \widehat{f}_m ( a,w_t ) \bigr) \Bigr)^{-1},  & \text{ for all } a\in \mathcal{A}^*(w_t)-\{ \widehat{a}_t \} \\
      1- \sum_{a\neq \widehat{a}_t }  p_t(a), & \text{ for } a = \widehat{a}_t ,\\
    \end{cases}
    $$
    \State Sample $a_t \sim p_t(\cdot)$, take action $a_t$, and observe a reward $y_t$
    \EndFor
    \EndFor
  \end{algorithmic}
\end{algorithm}

\subsection{Regret Analysis of Transfer Learning with Function Approximation}

As noted by \citet{instanceCB_RL}, gap-dependent regret bounds are generally not feasible for contextual bandits, so our focus remains on minimax regret.
The following theorem establishes the regret upper bound.

\begin{theorem}
  \label{thm: upper bound for TL with function approximation}
  Consider a contextual bandit problem with $|\mathcal{A}|<\infty$ and $|\mathcal{F}|<\infty$ under \cref{asp: realizability}.
  With probability at least $1-\delta$, the expected regret of \cref{alg: TL-function approximation} is upper bounded by
  $$
  \mean{\mathrm{Reg}(T)} \le \mathcal{O} \left(\sqrt{ \mathbb{E}_W[ \mathcal{A}^*(W) ] T  \log (\delta^{-1}|\mathcal{F}^*|\log T  ) } \right).
  $$
\end{theorem}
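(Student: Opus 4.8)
The plan is to follow the epoch-based inverse-gap-weighting (IGW) analysis, adapting it to the pruned function class $\mathcal{F}^*$ and the per-context pruned action set $\mathcal{A}^*(w)$. The first step is to condition on a good event on which the causal bounds are valid, so that $f^*\in\mathcal{F}^*$ (since $l(a,w)\le\mu_{a,w}=f^*(a,w)\le h(a,w)$) and, crucially, the optimal action $\pi_{f^*}(w)=\argmax_a f^*(a,w)$ lies in $\mathcal{A}^*(w)$ for every $w$. The latter holds because $f^*\in\mathcal{F}^*$ itself witnesses that $\pi_{f^*}(w)$ is optimal under some feasible function, so pruning the action set never discards the optimal arm. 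On this event the regret of sampling only from $\mathcal{A}^*(w_t)$ is well-defined and incurs no additional bias.

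Second, I would invoke the standard least-squares regression guarantee for a finite class: since $f^*\in\mathcal{F}^*$ and $\widehat{f}_m$ minimizes the squared loss over $\mathcal{F}^*$ on the data collected before epoch $m$, a martingale (online-to-batch) concentration argument yields, with probability $1-\delta$, a cumulative squared-prediction-error bound of the form $\sum_{t\le\tau_{m-1}}\mathbb{E}[(\widehat{f}_m(a_t,w_t)-f^*(a_t,w_t))^2]\le O(\log(|\mathcal{F}^*|\log T/\delta))$. This is exactly the step that produces the logarithmic---rather than square-root---dependence on the function-class complexity, and it is the source of the improvement over instrumental-variable approaches.

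The third step is the per-round IGW decomposition. Writing $\widehat{a}_t=\argmax_a\widehat{f}_m(a,w_t)$ and using the IGW probabilities $p_t$ supported on $\mathcal{A}^*(w_t)$, the key algebraic lemma bounds the instantaneous regret by
\[
  \mathbb{E}_{a\sim p_t}\bigl[f^*(\pi_{f^*}(w_t),w_t)-f^*(a,w_t)\bigr]
  \le
  \frac{c\,|\mathcal{A}^*(w_t)|}{\gamma_t}
  +\frac{\gamma_t}{4}\,\mathbb{E}_{a\sim p_t}\bigl[(\widehat{f}_m(a,w_t)-f^*(a,w_t))^2\bigr].
\]
Here the factor $|\mathcal{A}|$ appearing in the vanilla analysis is replaced by $|\mathcal{A}^*(w_t)|$ precisely because $p_t$ is supported on the pruned set while $\pi_{f^*}(w_t)\in\mathcal{A}^*(w_t)$. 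Summing over the rounds of epoch $m$, the prediction-error term is controlled by the regression guarantee of the previous step applied to the preceding epoch, using the doubling schedule $\tau_m=2^m$ so that consecutive epoch sizes are comparable and $\widehat{f}_m$ is frozen within each epoch.

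Finally, substituting the tuned $\gamma_t=\sqrt{\eta|\mathcal{A}^*(w_t)|\tau_{m-1}/\log(2\delta^{-1}|\mathcal{F}^*|\log T)}$ balances the two terms round-by-round, so each round contributes $O\bigl(\sqrt{|\mathcal{A}^*(w_t)|\log(\cdot)/\tau_{m-1}}\bigr)$; summing over rounds and epochs and applying Cauchy--Schwarz (or Jensen) over the realized contexts converts $\sum_t\sqrt{|\mathcal{A}^*(w_t)|}$ into $\sqrt{T\,\mathbb{E}_W[|\mathcal{A}^*(W)|]}$, giving the claimed bound. I expect the main obstacle to be exactly this last conversion: obtaining the \emph{expected} pruned-action-set size $\mathbb{E}_W[|\mathcal{A}^*(W)|]$ rather than a worst-case $\max_w|\mathcal{A}^*(w)|$, which requires carefully coupling the per-context adaptivity of $\gamma_t$ with a concentration argument for $\sum_t|\mathcal{A}^*(w_t)|$ around $T\,\mathbb{E}_W[|\mathcal{A}^*(W)|]$, together with the standard difficulty of transferring the offline regression bound to the online exploration within the epoch structure.
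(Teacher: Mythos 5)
Your outline is correct up to the last step of the IGW argument, but there is a genuine gap exactly where you place the estimation-error term, and it is the gap that forces the paper onto a different route. Your per-round lemma needs the squared prediction error \emph{under the current epoch's action distribution}, $\mathbb{E}_{a\sim p_m(\cdot\mid w_t)}\bigl[(\widehat{f}_m(a,w_t)-f^*(a,w_t))^2\bigr]$, whereas the least-squares guarantee you invoke (the paper's event $\Gamma$) controls the cumulative error under the \emph{historical} distributions, $\sum_{t\le\tau_{m-1}}\mathbb{E}_{a_t\sim p_{m(t)}}\bigl[(\widehat{f}_m(a_t,w_t)-f^*(a_t,w_t))^2\bigr]\lesssim \log(\delta^{-1}|\mathcal{F}^*|\log T)$, with $m(t)<m$. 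The doubling schedule makes the \emph{sample sizes} comparable, but it does nothing about the change of measure between $p_{m-1}$ and $p_m$: a direct importance-ratio bound costs a factor of order $\sup_{a,w} 1/p_{m-1}(a\mid w)\approx \gamma_{m-1}=\Theta\bigl(\sqrt{|\mathcal{A}^*|\tau_{m-2}/\log(\cdot)}\bigr)$, after which the error term in your decomposition is no longer $O(\gamma_t\log(\cdot)/\tau_{m-1})$ and the $\frac{|\mathcal{A}^*(w_t)|}{\gamma_t}$-versus-$\gamma_t\cdot\mathrm{err}$ balance collapses to a constant per-round contribution. Your decomposition is the SquareCB-style argument, which is sound when the regression oracle is \emph{online} and updated every round (so the oracle's guarantee is stated under the very distributions being played); for the epoch-frozen least squares of \cref{alg: TL-function approximation} it does not close.

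The paper bridges this mismatch with a different mechanism, which is the missing idea in your proposal: it lifts the analysis to the universal policy space $\Psi=\prod_{w}\mathcal{A}^*(w)$ and proves an implicit-optimization-problem lemma for the induced policy distribution $Q_m$, namely $\sum_{\pi}Q_m(\pi)\widehat{\mathrm{Reg}}_t(\pi)\le \mathbb{E}_W[\sqrt{|\mathcal{A}^*(W)|}]/\rho_m$ together with $\mathbb{E}_W\bigl[1/p_m(\pi(W)\mid W)\bigr]\le \mathbb{E}_W[|\mathcal{A}^*(W)|]+\mathbb{E}_W[\sqrt{|\mathcal{A}^*(W)|}]\,\rho_m\,\widehat{\mathrm{Reg}}_t(\pi)$ for every $\pi\in\Psi$. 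The cumulative squared error is then transferred to \emph{per-policy value} error via Cauchy--Schwarz with the importance weight $\sqrt{\max_{m'<m}\mathbb{E}_W[1/p_{m'}(\pi(W)\mid W)]}$, and an induction across epochs shows $\mathrm{Reg}(\pi)\le 2\widehat{\mathrm{Reg}}_t(\pi)+c_0\sqrt{\mathbb{E}_W[|\mathcal{A}^*(W)|]}/\rho_m$ and its converse; it is this bootstrap, not a per-round error bound, that implicitly controls the distribution shift. Two smaller notes: the quantity that naturally appears with the adaptive $\gamma_t$ is $\mathbb{E}_W[\sqrt{|\mathcal{A}^*(W)|}]$, and Jensen's inequality converts it to $\sqrt{\mathbb{E}_W[|\mathcal{A}^*(W)|]}$ inside the expectation over contexts, so no concentration argument for $\sum_t|\mathcal{A}^*(w_t)|$ is needed (the regret is defined in expectation over $w_t$); and your observation that pruning never discards $\pi_{f^*}(w)$ is correct and is used implicitly in the paper's definition of $\Psi$.
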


\begin{remark}[Comparison with the literature]
  While drawing inspiration from the IGW method of \citet{instanceCB_RL}, our algorithm departs in three fundamental respects.
  First, we incorporate causal bounds into the exploration process, effectively restricting the hypothesis class to $\mathcal{F}^*$ and reducing the average action-set size to $\mathbb{E}_W[|\mathcal{A}^*(W)|]$.
  This refinement lowers the worst-case regret dependence from $|\mathcal{F}|$ and $|\mathcal{A}|$ to $|\mathcal{F}^*|$ and $\mathbb{E}_W[|\mathcal{A}^*(W)|]$.
  Second, we replace the epoch-based parameter $\gamma_m$ with a round-dependent learning rate $\gamma_t$ that adapts to the causal constraints of each context, enabling dynamic action pruning.
  Third, by leveraging causal bounds instead of relying solely on data-driven confidence sets, we tighten the regret guarantee to
  $\mathcal{O}\bigl(\sqrt{T\,\log(\delta^{-1}\log T)}\bigr)$,
  removing the $\log T$ factor in their bound of $\mathcal{O}(\sqrt{T\,\log(\delta^{-1}T^2)}\,\log T)$.

  Previous efforts on transfer learning in general contextual bandits leverage instrumental variables to obtain causal bound \citep{boundingCE_continuous_IV}.
  While powerful, this approach leads to regret that scales as $\mathcal{O}(\sqrt{|\Pi|})$, rather than the more desirable $\mathcal{O}(\sqrt{\log|\Pi|})$, and hinges on the often-intractable task of finding valid instruments. Moreover, by treating each basis policy in $\Pi$ as an independent ``arm,'' it ignores the fact that similar policies share considerable overlap---pulling one policy yields information about its neighbors.
  This lack of structure forces the $\sqrt{|\Pi|}$ dependence, whereas by exploiting causal bounds to prune both the function class and action sets, our method achieves the improved $\mathcal{O}(\sqrt{\log|\Pi|})$ rate.
\end{remark}

\begin{remark}[Infinite function classes]
  We note that \cref{alg: TL-function approximation} and \cref{thm: upper bound for TL with function approximation} naturally extend to infinite function classes $\mathcal{F}$.
  In such cases, the dependence on $\log |\mathcal{F}^*|$ in the algorithm's parameters and regret bounds is replaced by standard learning-theoretic complexity measures, such as metric entropy.
  Suppose $\mathcal{F}$ is equipped with a maximum norm $\norm{\cdot}_{\infty}$.
  Let $(\mathcal{F},\norm{\cdot}_{\infty})$ be a normed space. A set $\{f_1,\cdots,f_N\}$ is an $\epsilon$-covering of $\mathcal{F}$ if, for every
  $f \in \mathcal{F}$, there exists an $i$ such that $\norm{f-f_i}_{\infty}\leq \epsilon$.
  The covering number $N(\mathcal{F},\norm{\cdot}_{\infty},\epsilon )$ is defined as the minimal cardinality $N$ over all $\epsilon$-coverings of $\mathcal{F}$.
  Consider an $\epsilon$-covering $\mathcal{F}^*_{\epsilon}$ of $\mathcal{F}^*$ so that for any reward function $f^*$,
  there exists a function $f_{\epsilon}^*\in \mathcal{F}^*_{\epsilon}$ such that
  $
  \norm{ f_{\epsilon}^* - f^*  }_{\infty} \leq \epsilon.
  $
  Since $|\mathcal{F}^*_{\epsilon}| $ is finite, we can replace $\mathcal{F}^*$ with $\mathcal{F}^*_{\epsilon} $
  without altering any algorithmic procedures.
  Hence, the regret can be bounded by
  $
  \mathrm{Reg}(T) \leq 8 \sqrt{ \mathbb{E}_W[ \mathcal{A}(W) ] T  \log (2\delta^{-1}|\mathcal{F}^*_{\epsilon}|\log T  ) } + \epsilon T.
  $
  We then set $\epsilon=1/T$.
\end{remark}

We now demonstrate that the upper bound in \cref{thm: upper bound for TL with function approximation} matches the minimax lower bound for transfer learning.
Define the set of contextual bandit instances compliant with the causal bounds as
$$
\mathfrak{M} = \Bigl\{ \text{contextual bandit instances with } l(a,w) \leq f^*(a,w) \leq h(a,w),  \forall (a,w)\in\mathcal{A} \times\mathcal{W} \Bigr\}.
$$

\begin{theorem}
  \label{thm: lower bound for TL with function approximation}
  Consider a contextual bandit problem with $|\mathcal{A}|<\infty$ and $|\mathcal{F}|<\infty$ under \cref{asp: realizability}.
  Then for any algorithm $\mathsf{A}$ with access to the function space $\mathcal{F}$, we have
  $$
  \sup_{  \mathfrak{M}  }  \limsup_{T\to\infty}  \frac{  \mathrm{Reg}(T) }{ \sqrt{T}  } \geq  \sqrt{\mathbb{E}_W [ | {\mathcal{A}^*}(W)|  ] \log |\mathcal{F}^*| }.
  $$
\end{theorem}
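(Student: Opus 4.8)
The plan is to prove the bound by exhibiting, inside the instance family $\mathfrak{M}$, an explicit finite sub-family of hard problems on which no algorithm can avoid regret of order $\sqrt{\mathbb{E}_W[|\mathcal{A}^*(W)|]\log|\mathcal{F}^*|\cdot T}$, and then invoking the trivial inequality that the supremum over $\mathfrak{M}$ dominates the supremum over any sub-family. Because every instance I construct will satisfy the causal constraints $l(a,w)\le f(a,w)\le h(a,w)$ by design, the pruned objects $\mathcal{F}^*$ and $\mathcal{A}^*(w)$ are exactly the relevant complexity measures: under \cref{asp: realizability} the true reward function lies in $\mathcal{F}^*$, and the only actions that can be optimal in context $w$ lie in $\mathcal{A}^*(w)$. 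Throughout I fix a gap parameter $\Delta>0$, to be tuned as $\Delta\asymp\sqrt{\mathbb{E}_W[|\mathcal{A}^*(W)|]\log|\mathcal{F}^*|/T}$, and take Bernoulli rewards centered so that all means stay inside the causal bands $[l(a,w),h(a,w)]$.

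First I would build the hard sub-family with a \emph{product-over-contexts} structure. I partition the support of the context distribution into blocks on which the active sets $\mathcal{A}^*(w)$ have essentially constant size, and within each block I embed an independent ``needle in a haystack'' multi-armed problem over $\mathcal{A}^*(w)$: one distinguished action carries mean reward elevated by $\Delta$ while the others sit at a common baseline. Indexing the choice of distinguished action across blocks produces a family of reward functions; choosing the configuration set so that the logarithm of the number of distinguishable patterns is $\Theta(\log|\mathcal{F}^*|)$ is what injects the $\log|\mathcal{F}^*|$ factor, while the per-block action count injects $\mathbb{E}_W[|\mathcal{A}^*(W)|]$ after averaging against the context law. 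Here I must verify realizability, i.e.\ that each configuration is an element of $\mathcal{F}^*$, and feasibility, i.e.\ that all elevated and baseline values respect the causal bands, so that the sub-family genuinely lies in $\mathfrak{M}$.

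Next I would run the information-theoretic argument. Conditioning on the realized context sequence, a learner's observations factorize across blocks, so I can apply Assouad's lemma block-by-block, or Fano's inequality across the full configuration set. The quantity to control is the KL divergence between the observation laws under two configurations differing in a single block, which is bounded by the expected number of pulls of that block's actions times $\Delta^2$, up to constants. Pairing the per-block testing error against the regret incurred for a wrong guess, and using that each context $w$ is seen $\approx T\,\mathbb{P}(W=w)$ times, yields per-block regret of order $\sqrt{|\mathcal{A}^*(w)|\,T\,\mathbb{P}(\mathrm{block})}$; aggregating over blocks and optimizing $\Delta$ gives, for each horizon $T$, an instance in $\mathfrak{M}$ with regret $\ge (c-o(1))\sqrt{T}$ where $c=\sqrt{\mathbb{E}_W[|\mathcal{A}^*(W)|]\log|\mathcal{F}^*|}$. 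Taking the supremum over $\mathfrak{M}$, dividing by $\sqrt{T}$, and passing to the limit then produces the displayed $\limsup$ statement (the extremal instance being tuned to $T$ through $\Delta$).

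The main obstacle is obtaining the \emph{sharp} product of the two complexity factors simultaneously. A naive block construction distinguishes only $|\mathcal{A}^*|^{B}$ patterns with $B$ blocks, which couples $B$ to $\log|\mathcal{F}^*|/\log|\mathcal{A}^*|$ and threatens to lose a $\sqrt{\log|\mathcal{A}^*(W)|}$ factor relative to the target $\sqrt{\mathbb{E}_W[|\mathcal{A}^*(W)|]\log|\mathcal{F}^*|}$. Removing this loss requires a careful packing or coding choice of the configuration set, so that its log-cardinality is $\Theta(\log|\mathcal{F}^*|)$ while the per-block sub-problems remain maximally hard, together with a tight accounting of the heterogeneous action-set sizes under the context distribution (a Cauchy--Schwarz or Jensen step that converts the block sum into $\mathbb{E}_W[|\mathcal{A}^*(W)|]$). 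Verifying that this optimized construction still respects the causal bands, and hence stays inside $\mathfrak{M}$, is the delicate bookkeeping that ties the lower bound to the pruned complexity appearing in the matching upper bound of \cref{thm: upper bound for TL with function approximation}.
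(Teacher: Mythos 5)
Your overall strategy (explicit hard sub-family inside $\mathfrak{M}$ plus Fano/Assouad) is a legitimate route, but your sketch has a genuine gap exactly at the point you flag as the ``main obstacle,'' and the scaffolding you propose cannot close it. In a product-over-contexts construction with an independent needle-in-haystack problem in each block, the per-block testing argument yields only $\sqrt{|\mathcal{A}^*(w)|\,T_w}$ (the haystack bound carries no logarithmic factor), and the log-cardinality of the configuration set is \emph{additive} across blocks, so the best aggregate you can extract is $\sum_{w}\sqrt{|\mathcal{A}^*(w)|\,T_w\,\log N_w}$ with $\sum_w \log N_w=\log|\mathcal{F}^*|$. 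The Cauchy--Schwarz/Jensen step you invoke to convert this into the target runs in the wrong direction:
\[
\sum_{w}\sqrt{|\mathcal{A}^*(w)|\,T_w\,\log N_w}\;\le\;\sqrt{\Bigl(\sum_{w}|\mathcal{A}^*(w)|\,T_w\Bigr)\Bigl(\sum_{w}\log N_w\Bigr)},
\]
i.e., the block sum is bounded \emph{above}, not below, by $\sqrt{\mathbb{E}_W[|\mathcal{A}^*(W)|]\,T\,\log|\mathcal{F}^*|}$, and it can be strictly smaller whenever the log mass and the action mass are spread across different contexts. The root cause is that independence across blocks makes information about distinct bits disjoint; to have \emph{every} context carry the full $\log|\mathcal{F}^*|$ bits one needs a shared (``diagonal'') indexing in which the same unknown function index is identifiable from any single context. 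You correctly anticipate a $\sqrt{\log|\mathcal{A}^*(W)|}$-type loss, but the ``careful packing or coding choice'' you defer is precisely the missing mathematical content of the theorem, not bookkeeping; as written, the proposal does not constitute a proof.

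The paper takes a shorter route that sidesteps the construction entirely: for each context $w$, the algorithm $\mathsf{A}$ induces an algorithm $\mathsf{A}_w$ on the sub-problem with action set $\mathcal{A}^*(w)$ and function class $\mathcal{F}_w=\{f(\cdot,w)\mid f\in\mathcal{F}^*\}$, and the minimax lower bound of \citet{CBwithPredictableRewards} is invoked per context to give regret at least $\sqrt{|\mathcal{A}^*(w)|\,T_w\,\log|\mathcal{F}^*|}$ simultaneously for all $w$ (the single shared function class is what makes each context carry the full log-complexity). Aggregation then uses only the elementary subadditivity $\sum_w\sqrt{x_w}\ge\sqrt{\sum_w x_w}$ together with $T_w/T\to\mathbb{P}(W=w)$, and the case $|\mathcal{W}|=\infty$ is handled via Glivenko--Cantelli and dominated convergence---a regime your sketch does not address at all. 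If you want your self-contained version to work, you should replace the independent-blocks design with a shared-index class (each $f\in\mathcal{F}^*$ determined by, and distinguishable from, its restriction to any single context, with the elevated arm in context $w$ ranging over $\mathcal{A}^*(w)$), verify band-feasibility as you planned, and only then run Fano; that is essentially a re-proof of the cited theorem, which is what the paper's argument delegates.
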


\subsection{Computing \texorpdfstring{$\log |\mathcal{F}^*|$}{Metric Entropy} and \texorpdfstring{$\mathcal{A}^*(w)$}{Pruned Action Set}.}\label{sec: calculating F and A}

The two quantifies $\gamma_t$ and $\widehat{f}_m $ in \cref{alg: TL-function approximation} rely on the function class $\mathcal{F}^*$.
The estimated function $\widehat{f}_m$ can be obtained by solving a least-squares problem over the convex hull $\operatorname{conv}(\mathcal{F}^*)=\operatorname{conv}(\mathcal{F})\cap \bigl\{ f: l(a,w)\le f(a,w)\le h(a,w), \forall a,w\bigr\}$.
For $\gamma_t$, we need to compute $\log |\mathcal{F}^*|$ and $\mathcal{A}^*(w)$.
A straightforward searching approach of calculating $\mathcal{F}^*$ and $\mathcal{A}^*(w)$ has a time complexity of $\mathcal{O}(|\mathcal{F}|)$, inefficient for large or infinite $|\mathcal{F}|$.

\subsubsection{Computing \texorpdfstring{$\log |\mathcal{F}^*|$}{Metric Entropy}.}

Denote $N(\mathcal{F},\norm{\cdot}_{\infty},\epsilon )$ as the covering number of the function space $\mathcal{F}$ with respect to the maximum norm $\norm{\cdot}_{\infty}$.
Note that $N(\mathcal{F}^*,\norm{\cdot}_{\infty},\epsilon ) \leq N(\mathcal{F},\norm{\cdot}_{\infty},\epsilon ) $ since $\mathcal{F}^*\subset \mathcal{F}$.
The covering number explicitly shows how causal bounds help improve the algorithm's performance by reducing the search space.
For linear function spaces $\mathcal{F}=\{ \bm{\theta}^\top \bm{\phi}(a,w) \mid  \norm{\bm{\theta}}_{\infty}\leq 1  \}$ with known features $\bm{\phi}(a,w)\in \mathbb{R}^d$,
imposing causal constraints $l(i,w)\le \bm{\theta}^\top\bm{\phi}(i,w)\le h(i,w)$ reduce the searching space in $\bm{\theta}$-space.
Obtaining a precise covering number for such function spaces is generally intractable.
However, the volume analysis in \cite{High_dim_probability} shows that the covering number of the function space $\mathcal{F}^*$ is bounded by
$
(3\operatorname{diam}(\mathcal{F}) / \epsilon)^d.
$
For $\operatorname{diam}(\mathcal{F}^*)$, we may use the maximum of the bounds $h(a,w)-l(a,w)$ over all $(a,w)\in\mathcal{A}\times\mathcal{W}$.
A more refined method is to solve the following optimization problem
\begin{equation*}
  \max_{  \norm{\bm{\theta}}_{\infty} \leq 1 }  |\bm{\theta}^\top \bm{\phi}(a,w)| , \quad s.t. \ \bm{\theta}^\top \bm{\phi}(a,w)\leq h(a,w) - l(a,w), \  \forall (a,w) \in \mathcal{A}\times\mathcal{W}
\end{equation*}
to obtain the upper bound of the diameter of $\mathcal{F}^*$.

\subsubsection{Computing \texorpdfstring{$\mathcal{A}^*(w)$}{Pruned Action Set}.}
We now derive a tractable characterization of $\mathcal{A}^*(w)$.
\begin{proposition}
  \label{prop: condition to compute Aw}
  Suppose that $\mathcal{F}$ is a compact set in the topology of pointwise convergence.
  Then $a\in \mathcal{A}^*(w)$ if and only if $\max_{f \in \mathcal{F}^*} \left( f(a, w) - \max_{i \neq a} f(i, w) \right) \geq 0.$
\end{proposition}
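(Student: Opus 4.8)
The plan is to introduce the gap functional $g_a : \mathcal{F}^* \to \mathbb{R}$ defined for the fixed context $w$ by $g_a(f) = f(a,w) - \max_{i \neq a} f(i,w)$, and to observe that membership $a \in \mathcal{A}^*(w)$ is, after unwinding the definition of the argmax, exactly the existence of some $f \in \mathcal{F}^*$ with $g_a(f) \ge 0$. Indeed, $a = \argmax_{i} f(i,w)$ (under the convention that $a$ need only \emph{attain} the maximum) means precisely $f(a,w) \ge f(i,w)$ for every $i \neq a$, i.e.\ $g_a(f) \ge 0$. The proposition then reduces to showing that $\sup_{f\in\mathcal{F}^*} g_a(f) \ge 0$ is a genuine maximum attained inside $\mathcal{F}^*$, so the two implications collapse into a single attainment statement.

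For the forward direction, if $a \in \mathcal{A}^*(w)$ then by definition there is $f \in \mathcal{F}^*$ with $g_a(f) \ge 0$, whence $\max_{f\in\mathcal{F}^*} g_a(f) \ge 0$ once the maximum is shown to exist. For the reverse direction, I would take a maximizer $f^\dagger$ of $g_a$ over $\mathcal{F}^*$; the hypothesis $g_a(f^\dagger) \ge 0$ gives $f^\dagger(a,w) \ge f^\dagger(i,w)$ for all $i \neq a$, so $a$ attains the maximum of $f^\dagger(\cdot,w)$ and therefore $a \in \mathcal{A}^*(w)$ with witness $f = f^\dagger$.

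The crux, and the only nontrivial step, is establishing that $g_a$ attains its maximum on $\mathcal{F}^*$, which I would obtain from a compactness-plus-continuity argument. First I would argue that $\mathcal{F}^*$ is compact in the topology of pointwise convergence: since $\mathcal{F}$ is compact by hypothesis and $\mathcal{F}^* = \mathcal{F} \cap \bigcap_{(a,w)} \{ f : l(a,w) \le f(a,w) \le h(a,w)\}$, each evaluation map $f \mapsto f(a,w)$ is continuous in the pointwise topology, so each band is closed; thus $\mathcal{F}^*$ is a closed subset of the compact set $\mathcal{F}$ and hence itself compact, and it is nonempty because realizability places $f^*$ in it. Next, because the action set is finite, $f \mapsto \max_{i\neq a} f(i,w)$ is a maximum of finitely many continuous evaluation functionals and is therefore continuous, making $g_a$ continuous. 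A continuous real function on a nonempty compact set attains its maximum, which supplies the maximizer $f^\dagger$ and legitimizes writing $\max$ rather than $\sup$ in the statement.

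The main obstacle I anticipate is care rather than depth: pinning down the tie-breaking convention in the definition of $\argmax$ so that the equivalence $a = \argmax_i f(i,w) \iff g_a(f)\ge 0$ is exact (allowing $a$ to merely attain, not strictly win, the maximum), and verifying that $\mathcal{F}^*$ is genuinely closed in the pointwise topology so that compactness transfers from $\mathcal{F}$. I would flag explicitly that finiteness of $\mathcal{A}$ is what makes $g_a$ continuous, since an infinite action set could break continuity of the inner maximum and thereby threaten attainment of the supremum.
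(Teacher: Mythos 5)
Your proof is correct and follows essentially the same route as the paper's: both establish that $\mathcal{F}^*$ is a nonempty (via realizability) closed subset of the compact $\mathcal{F}$ under pointwise convergence, using continuity of the evaluation maps to close the constraint bands, and then run the two implications through attainment of the maximum. Your explicit verification that $g_a$ is continuous because $\mathcal{A}$ is finite is a small point the paper leaves implicit, but it is a refinement of the same argument rather than a different approach.
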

\cref{prop: condition to compute Aw} allows us to test each candidate $a$ by solving an optimization problem.
Importantly, the objective function is concave in $f$, and the feasible set $\mathcal{F}^* \subseteq \mathcal{F}$ is convex whenever $\mathcal{F}$ is.
Hence each test reduces to a convex program. Equivalently, one can form the Lagrangian dual
\[
  \inf_{\substack{\lambda_i \geq 0 \\ \sum_{i \neq a} \lambda_i = 1}} \ \sup_{f \in \mathcal{F}^*} \biggl[ f(a, w) - \sum_{i \neq a} \lambda_i f(i, w) \biggr].
\]
By weak duality, the dual objective bounds the primal from above, so checking whether this dual optimum is non-negative yields a conservative superset of $\mathcal{A}^*(w)$.
Thus, checking the non-negativity of the dual objective yields a conservative superset of $\mathcal{A}^*(w)$.
When strong duality holds, e.g. under Slater's condition, the dual and primal optima coincide, and we recover $\mathcal{A}^*(w)$.
We give two examples of function spaces $\mathcal{F}$ for which this dual problem can be solved efficiently.
\begin{itemize}
  \item \textbf{Unrestricted reward class:}  $\mathcal{F}$ is all real-valued functions $f:\mathcal{A}\times\mathcal{W}\to\mathbb{R}$.
    Then $\mathcal{F}^* = \bigl\{ f: l(i,w)\le f(i,w)\le h(i,w), \forall i,w\bigr\}$     is a convex box in function-space, and Slater's condition holds.  Because the box constraints decouple across arms, the inner maximization becomes
    \[
      \max_{f\in\mathcal{F}^*}\biggl[f(a,w)- \sum_{i\neq a}\lambda_i f(i,w)\biggr]
      = h(a,w) - \sum_{i\neq a}\lambda_i l(i,w),
    \]
    and the dual problem simplifies to
    \[
      h(a,w) - \max_{\substack{\lambda_i \geq 0 \\ \sum_{i \neq a} \lambda_i = 1}} \sum_{i \neq a} \lambda_i l(i,w) = h(a,w) - \max_{i \neq a} l(i,w),
    \]
    where the equality holds because the maximum over the simplex is attained at a vertex.
    Thus $a\in\mathcal{A}^*(w)$ exactly when $h(a,w)\ge\max_{i}l(i,w)$, i.e., a contextual version of the pruning rule \eqref{eq:prune_mab_action}.

  \item \textbf{Linear predictors:} $\mathcal{F}=\{ \bm{\theta}^\top \bm{\phi}(a,w) \mid  \norm{\bm{\theta}}_{\infty}\leq 1  \}$ for known features $\bm{\phi}(a,w)$.  Imposing causal constraints $l(i,w)\le \bm{\theta}^\top\bm{\phi}(i,w)\le h(i,w)$ yields a convex polytope in $\bm{\theta}$-space.
    Then action $a\in\mathcal{A}^*(w)$, if the optimal value $s^* \geq 0$ for the following linear program
    \[\max_{\bm{\theta},s} \; s \quad \text{s.t.} \quad
      \bm{\theta}^\top \phi(a,w) \ge \bm{\theta}^{\top} \phi(i,w)^\top  + s,\;\forall i\neq a, \quad l(i,w) \le \bm{\theta}^{\top}\phi(i,w) \le h(i,w),\;\forall i.
    \]
\end{itemize}

\section{Numerical Experiments}
\label{sec: numerical}
\subsection{Causal Bounds}
\subsubsection{Tighter Causal Bounds.}
We compare our causal-bound sampler (\cref{alg:sampling with optimization oracle}) against the nonlinear-programming approach of \citet{CEbound}, where all variables are \textit{binary}.
Using randomly generated true marginals $\rho(a, y, w)$ and $\rho(u)$ (\cref{tab: observational distribution} in \cref{sec in appendix: Benchmark Sampling Method}),
we assume these distributions are known exactly, as in \citet{CEbound}.
The key distinction lies in how the feasible region is defined. \citet{CEbound} enforce only the Fr\'{e}chet-Hoeffding bounds:
$
\max\left\{0, \rho(a_i,y_j,w_k) + \rho(u_l) - 1\right\} \leq p_{ijkl} \leq \min\left\{ \rho(a_i,y_j,w_k), \rho(u_l) \right\}
$
for fixed $(i,j)$ and each tuple $(k,l)$ (where $\rho(a_i,y_j,w_k)$, $\rho(u_l)$, and $p_{ijkl}$ are defined in \cref{subsec in appendix: discrete sample space}).
Critically, they omit the global marginal-consistency constraints ensuring $(p_{ijkl})$ forms a valid joint distribution.
By contrast, our formulation enforces all marginalization constraints simultaneously;
thus every point in our feasible set corresponds to a proper SCM.
Consequently, \citet{CEbound}'s feasible set may include invalid ``models" violating joint consistency,
leading to strictly looser causal bounds (see \cref{app: benchmark causal bound computation}).
To solve the two distinct nonlinear optimization problems, we combine \cref{alg:sampling with optimization oracle} with SciPy's \texttt{optimize.minimize} solver,
seeding it with multiple starting points from Algorithm \ref{alg: hit-run for density} with initialization $p_{ijkl} = \rho(a_i,y_j,w_k)\rho(u_l)$; see \cref{subsec in appendix: discrete sample space} for details.

Our approach produces noticeably tighter intervals than the nonlinear program of \citet{CEbound}.
In particular, whereas \citet{CEbound} obtain $\mathbb{E}[Y\mid\mathrm{do}(A=0)]\in[0.283,0.505]$ and $
\mathbb{E}[Y\mid\mathrm{do}(A=1)]\in[0.240,0.807],$
we shrinks these to $\mathbb{E}[Y\mid\mathrm{do}(A=0)]\in[0.352,0.471]$ and $\mathbb{E}[Y\mid\mathrm{do}(A=1)]\in[0.265,0.768],$
demonstrating that our joint-distribution constraints yield strictly tighter causal bounds.

\subsubsection{Estimation Error.}
To assess the impact of estimation error, we inject a uniform perturbation $\epsilon \in [0,0.1]$ into all estimated probabilities and then solve the resulting causal-bound programs using SciPy's \texttt{optimize.minimize}.
Starting from $\epsilon = 0.1$ and gradually reducing it toward zero, we record the estimated lower and upper bounds at each step. As Figure \ref{fig: estimation for causal bounds} shows, both estimates converge to the true causal bounds as $\epsilon$ approaches zero, confirming \cref{thm: convergence of empirical bounds}.

\begin{figure}[hbtp]
  \centering
  \begin{tikzpicture}
    \begin{axis}[
        height=0.25\textwidth,
        width=0.5\textwidth,
        xlabel=Error $\epsilon$ (log scale),
        ylabel=Causal Bounds,
        xmin=0.005, xmax=0.2,
        ymin=0.0, ymax=0.9,
        xmode=log,
        log basis x=10,
        xtick={0.01,0.02,0.05,0.1,0.2},
        xticklabels={0.01,0.02,0.05,0.1,0.2},
        ytick={0,0.2,0.4,0.6,0.8},
        extra x ticks={0.005,0.03,0.04,0.06,0.08,0.12},
        extra x tick labels={},
        extra x tick style={grid style={gray!10, very thin}},
        axis lines=left,
        line width=0.7pt,
        tick style={line width=0.7pt},
        tick label style={font=\small},
        label style={font=\small},
        legend style={
          at={(1.03,0.66)},
          anchor=north east,
          cells={anchor=west},
          font=\footnotesize,
          fill=white,
          fill opacity=0.7,
          text opacity=1,
          draw=none
        },
        title style={font=\small, yshift=-1ex}
      ]

      \addplot [
        color=red!85!black,
        mark=*,
        mark size=2pt,
        line width=1.4pt,
        densely dashed
      ] table [
        col sep=comma,
        x=epsilon,
      y=ha,] {data/CEUpperBounds.csv};

      \addplot [
        color=blue!85!black,
        mark=square*,
        mark size=2pt,
        line width=1.4pt,
        solid
      ] table [x index=1, y index=2, col sep=comma] {data/CEUpperBounds.csv};

      \legend{
        Upper causal bound,
        Lower causal bound
      }

    \end{axis}
  \end{tikzpicture}

  \caption{Estimation for Causal Bounds $\mathbb{E}[Y|\mathrm{do}(A=0)]$ with Estimation Errors of Different Levels}

  \label{fig: estimation for causal bounds}
\end{figure}

\subsubsection{Computation Complexity.}

We compared the efficiency of Algorithm~\ref{alg: hit-run for density} and a naive Algorithm~\ref{alg: MC-causal model} (Appendix~\ref{sec in appendix: Benchmark Sampling Method}) by generating $10^3$ samples with each.
Algorithm \ref{alg: MC-causal model} constructs feasible points by cycling through each coordinate, and progressively adding sampled coordinates to feasibility constraints and solving linear programs to ensure validity.
This approach, while simple to implement, suffers from computational burdens in high dimensions.
We set $n \triangleq n_{\mathcal{A}}=n_{\mathcal{Y}}=n_{\mathcal{W}}=n_{\mathcal{U}}$ (see \cref{assu: discrete}), so that the problem dimension grows as $\mathcal{O}(n^4)$.
Numbers of valid samples generated per second are summarized in \cref{tb: samples per second}.
\begin{table}[ht]
  \small
  \centering
  \begin{tabular}{c|ccccccccc}
    \toprule
    $n$ & 2    & 3    & 4    & 5    & 6       & 7       & 8       & 9       & 10      \\ \midrule
    \cref{alg: MC-causal model}          & 17.5   & 1.2    & 0.1    & $\approx 0$ & $\approx 0$ & $\approx 0$ & $\approx 0$ & $\approx 0$ & $\approx 0$ \\
    \cref{alg: hit-run for density} & 18271.77 & 9330.17 & 3368.91 & 1576.36 & 694.61 & 326.70 & 172.70 & 91.08 & 48.79  \\  \bottomrule
  \end{tabular}
  \caption{Comparison of Samples per Second for \cref{alg: MC-causal model} and \cref{alg: hit-run for density}}
  \label{tb: samples per second}
\end{table}

\subsection{Transfer Learning in Multi-Armed Bandits}

We evaluate \cref{alg: TL-MAB} and \cref{alg: MAB with noisy causal bounds} on a synthetic $6$-arm stochastic bandit.
In all experiments we fix the confidence level at $\delta = 0.1$ and corrupt each reward with zero-mean Gaussian noise of standard deviation $0.1$.
Table \ref{tab:arm configuration in MAB} lists the six arms used in our experiments, along with their true means and causal bounds.
For simplicity, we use identical values for the exact and noisy bounds such that $\mu_a \in [l(a), h(a)]$ for every arm.
The noisy causal bound comes with $\epsilon_a(\delta) = 0.1$ estimation error.

\begin{table}[ht]
  \small
  \centering
  \begin{tabular}{ccccl}
    \toprule
    Arm $a$ & $\mu_a$ & $[l(a),h(a)]$ & $[\widehat{l}(a),\widehat{h}(a)]$ & Note  \\ \midrule
    0 & 0.3 & $[0.25,0.50]$ & $[0.25,0.50]$ & Worst arm \\
    1 & 0.4 & $[0.35,0.60]$ & $[0.35,0.60]$ & Clearly suboptimal $a \notin \mathcal{A}^*$\\
    2 & 0.5 & $[0.45,0.70]$ & $[0.45,0.70]$ & Clearly suboptimal, $a \notin \mathcal{A}^*$\\
    3 & 0.7 & $[0.55,0.78]$ & $[0.55,0.78]$ & Ambiguous, $h(a)<\mu^*$ \\
    4 & 0.7 & $[0.65,0.85]$ & $[0.65,0.85]$ & Ambiguous, $h(a)>\mu^*$ \\
    5 & 0.8 & $[0.75,0.90]$ & $[0.75,0.90]$ & Optimal arm \\
    \bottomrule
  \end{tabular}
  \caption{True Mean Rewards, Causal Bounds, and Designations for Arm Elimination}
  \label{tab:arm configuration in MAB}
\end{table}

These parameters are chosen to highlight how causal bounds shape each algorithm's exploration.
Arms 3 and 4, both with true mean 0.7, are crafted to isolate the impact of the causal upper bound: Arm 3 has $h_3 < \mu_5$ so its upper bound lies below the optimum, and the offline data is sufficient to eliminate it entirely; on the contrary, Arm 4 has $h_4 > \mu_5$, forcing the algorithm to explore it online.
We run both algorithms for $T=10^4$ rounds, repeating each configuration over 50 independent trials.

The results in Tables \ref{tab: Summary statistics of suboptimal arm pulls and final regret} clearly demonstrate the power of causal bounds in pruning suboptimal arms.
In particular, \cref{alg: TL-MAB} never pulls Arms 0-3, exactly as predicted: Arms 0-2 have upper bounds below the pruning threshold, and Arm 3's bound falls just under the optimal mean.
In contrast, Plain UCB and \cref{alg: MAB with noisy causal bounds} with the noisy bounds continue to sample these same arms, with \cref{alg: MAB with noisy causal bounds} substantially reducing unnecessary exploration.
Arm 4 with causal upper bounds exceeding $\mu^*$ cannot be pruned beforehand, and indeed all algorithms sample Arm 4 at similar rates.
This confirms that causal bounds help most when they definitively rule out an arm; when the bounds are inconclusive, online exploration is essential.

\begin{table}[hbtp]
  \centering
  \small
  \setlength{\tabcolsep}{4pt}
  \begin{tabular}{@{}l *{5}{c} c@{}}
    \toprule
    &
    \multicolumn{5}{c}{Suboptimal Arm Pulls (Mean$\pm$SD)} &
    Final Regret \\
    \cmidrule(lr){2-6}
    & Arm 0 & Arm 1 & Arm 2 & Arm 3 & Arm 4 & (Mean$\pm$SD) \\
    \midrule
    Plain UCB & 21.84 $\pm$ 1.92 & 32.50 $\pm$ 2.76 & 54.70 $\pm$ 5.08 & 368.48 $\pm$ 29.54 & 372.64 $\pm$ 29.52 & 114.44 $\pm$ 4.54 \\
    \cref{alg: MAB with noisy causal bounds} & 0.00 $\pm$ 0.00 & 8.56 $\pm$ 0.95 & 54.36 $\pm$ 4.09 & 366.98 $\pm$ 28.05 & 365.80 $\pm$ 30.24 & 93.01 $\pm$ 4.87 \\
    \cref{alg: TL-MAB} & 0.00 $\pm$ 0.00 & 0.00 $\pm$ 0.00 & 0.00 $\pm$ 0.00 & 0.00 $\pm$ 0.00 & 301.10 $\pm$ 27.62 & 30.11 $\pm$ 2.76 \\
    \midrule
    $H_a$ & 9.0  & 4.0 & 1.0 & 0.04 & 0.0 &  \\
    \bottomrule
  \end{tabular}
  \caption{Summary Statistics of Suboptimal Arm Pulls and Final Regret}
  \label{tab: Summary statistics of suboptimal arm pulls and final regret}
\end{table}

Interestingly, when the causal bounds for Arm 3 have a relatively large $\epsilon_3(\delta) = 0.1$, \cref{alg: MAB with noisy causal bounds} selects it almost as often as plain UCB,
indicating that a negligible $H_3 = 0.04$ is not enough to influence exploration.
To assess the impact of estimation error, we vary $\epsilon_3(\delta)$ from $0.03$ down to $0.015$ (so $H_3$ varies from $0.44$ to $1.78$),
while keeping all other bounds with $0.1$ errors.
We run the algorithm for $T = 10^4$ rounds.

\begin{figure}[htbp]
  \centering
  \begin{subfigure}{0.48\textwidth}
    \centering
    \begin{tikzpicture}
      \definecolor{barcolor}{RGB}{31,119,180}
      \definecolor{linecolor}{RGB}{214,39,40}
      \definecolor{errorcolor}{RGB}{100,100,100}

      \begin{axis}[
          width=0.92\textwidth,
          height=0.6\textwidth,
          axis y line*=left,
          xlabel={Estimation Error ($\epsilon_3(\delta)$)},
          ylabel={Selection Count},
          ymin=0, ymax=410,
          ytick={0,100,200,300,400},
          yticklabel style={font=\footnotesize},
          ylabel style={font=\small},
          xmin=0.013, xmax=0.032,
          xtick={0.015,0.018,0.02,0.022,0.025,0.03},
          xticklabel style={font=\footnotesize},
          xlabel style={font=\small},
          legend style={at={(0.5,1.25)}, anchor=north, font=\scriptsize, legend columns=-1},
          bar width=0.0016,
          error bars/y dir=both,
          error bars/y explicit,
          error bars/error bar style={color=errorcolor, thick}
        ]
        \addplot+[
          ybar,
          fill=barcolor!40,
          draw=barcolor,
        ]
        coordinates {
          (0.015, 0.0)    +- (0,0.0)
          (0.018, 20.7)   +- (0,24.611)
          (0.02, 130.16)  +- (0,21.692)
          (0.022, 213.98) +- (0,25.781)
          (0.025, 299.58) +- (0,27.370)
          (0.03, 358.62)  +- (0,25.509)
        };
        \addlegendentry{Selection Count}

      \end{axis}

      \begin{axis}[
          width=0.92\textwidth,
          height=0.6\textwidth,
          axis y line*=right,
          axis x line=none,
          ytick={0,25,50,75,100,125},
          yticklabel style={font=\footnotesize},
          ymin=0, ymax=125,
          xmin=0.013, xmax=0.032,
          ylabel style={font=\small},
          legend style={at={(0.5,1.25)}, anchor=north, font=\small, legend columns=-1},
        ]
        \addplot+[
          color=linecolor,
          mark=*,
          mark options={fill=white, scale=1.2},
          line width=1.2pt,
          error bars/.cd,
          y dir=both,
          y explicit,
          error bar style={color=errorcolor, thick}
        ]
        coordinates {
          (0.015, 57.248) +- (0,3.481)
          (0.018, 59.708) +- (0,4.713)
          (0.02, 70.132)  +- (0,4.075)
          (0.022, 78.686) +- (0,5.294)
          (0.025, 87.146) +- (0,4.653)
          (0.03, 92.872)  +- (0,3.428)
        };

      \end{axis}
    \end{tikzpicture}
    \caption{Impact on Arm 3}
    \label{fig:impact_arm3}
  \end{subfigure}
  \hfill
  \begin{subfigure}{0.48\textwidth}
    \centering
    \begin{tikzpicture}
      \definecolor{barcolor}{RGB}{31,119,180}
      \definecolor{linecolor}{RGB}{214,39,40}
      \definecolor{errorcolor}{RGB}{100,100,100}

      \begin{axis}[
          width=0.92\textwidth,
          height=0.6\textwidth,
          axis y line*=left,
          xlabel={Estimation Error ($\epsilon_4(\delta)$)},
          ymin=0, ymax=410,
          yticklabel style={font=\footnotesize},
          ylabel style={font=\small},
          xmin=0.013, xmax=0.032,
          xtick={0.015,0.018,0.02,0.022,0.025,0.03},
          xticklabel style={font=\footnotesize},
          xlabel style={font=\small},
          legend style={at={(0.5,1.25)}, anchor=north, font=\small, legend columns=-1},
          bar width=0.0016,
          error bars/y dir=both,
          error bars/y explicit,
          error bars/error bar style={color=errorcolor, thick}
        ]
        \addplot+[
          ybar,
          fill=barcolor!40,
          draw=barcolor,
        ]
        coordinates {
          (0.015, 374.66) +- (0,30.335)
          (0.018, 368.7)  +- (0,29.131)
          (0.02, 366.72)  +- (0,32.515)
          (0.022, 368.38) +- (0,33.180)
          (0.025, 371.6)  +- (0,26.967)
          (0.03, 373.46)  +- (0,31.629)
        };
      \end{axis}

      \begin{axis}[
          width=0.92\textwidth,
          height=0.6\textwidth,
          axis y line*=right,
          axis x line=none,
          ylabel={Cumulative Regret},
          yticklabel style={font=\footnotesize},
          xmin=0.013, xmax=0.032,
          ymin=0, ymax=125,
          ytick={0,25,50,75,100,125},
          ylabel style={font=\small},
          legend style={at={(0.5,1.25)}, anchor=north, font=\scriptsize, legend columns=-1},
        ]
        \addplot+[
          color=linecolor,
          mark=*,
          mark options={fill=white, scale=1.2},
          line width=1.2pt,
          error bars/.cd,
          y dir=both,
          y explicit,
          error bar style={color=errorcolor, thick}
        ]
        coordinates {
          (0.015, 93.846) +- (0,4.356)
          (0.018, 93.984) +- (0,4.780)
          (0.02, 93.346)  +- (0,4.335)
          (0.022, 92.244) +- (0,4.385)
          (0.025, 94.218) +- (0,4.618)
          (0.03, 94.824)  +- (0,4.726)
        };
        \addlegendentry{Cumulative Regret}
      \end{axis}
    \end{tikzpicture}
    \caption{Impact on Arm 4}
    \label{fig:impact_arm4}
  \end{subfigure}

  \caption{Impact of Estimation Error ($\epsilon$) on Transfer Learning Algorithm Performance\\
    \textit{Notes.} Bars represent the average selection count (left axis), while lines show
  the cumulative regret (right axis). Error bars indicate $\pm1$ standard deviation.}
  \label{fig: Impact of exploration rate}
\end{figure}

The results are plotted in \cref{fig:impact_arm3}.
For the same study of Arm 4, see \cref{fig:impact_arm4}.
Though Arms 3 and 4 share the same true mean, their causal bounds drive very different outcomes.
For Arm 3 with $h_3 < \mu^*$, increasing its estimation accuracy sharply reduces how often it's chosen, which in turn drives down the final regret.
By contrast, Arm 4's bound $h_4 > \mu^*$ offers no discriminatory power, so neither its selection frequency nor the accumulated regret changes as $\epsilon_4(\delta)$ decreases.
In other words, when causal bounds cannot rule out an arm in nature, its estimation accuracy does not help.
These empirical patterns validates Theorem \ref{thm: regret upper bound of MAB with noisy causal bounds},
which predicts that only the term $(\mu^* - h(a))_+$ governs the value of estimation accuracy.

\subsection{Transfer Learning in Contextual Bandits}

We evaluate \cref{alg: TL-function approximation} on a synthetic linear contextual bandit with five arms $\mathcal{A} = \{a_1, \cdots, a_5\}$ and eleven discrete contexts $\mathcal{W} = \{w_1, \dots, w_{11}\}$.
The experiment proceeds for $T = 10^4$.
At each round $t$, the context $w_t$ is drawn uniformly from $\mathcal{W}$, and feature vectors $\bm{\phi}(a,w_t) \in \mathbb{R}^2$ for each arm $a$ are taken from \cref{tb:feature_vectors}.
\cref{alg: TL-function approximation} is implemented with confidence parameter $\delta = 0.1$ and tuning parameter $\eta = 1.0$.
Prior knowledge are encoded in causal bounds $l(a,w)$ and $h(a,w)$ in \Cref{tb:causal bounds in numerical experiment}, which we use to compute the exact candidate set $\mathcal{A}^*(w)$ by solving the LPs described in Section \ref{sec: calculating F and A}.
To highlight the impact of the size of the action set, we also consider two lightweight supersets $\mathcal{A}$ and $\mathcal{A}_0(w) \triangleq \{a \mid h(a,w) \geq \max_i l(i,w)\}$ of the optimal action sets, satisfying $\mathcal{A}^*(w) \subset \mathcal{A}_0(w) \subset \mathcal{A}$; see Table \ref{tb: Comparison of Candidate Sets}.

Note that if we disable both function-class pruning ($\mathcal{F}^* = \mathcal{F}$) and action-set elimination ($\mathcal{A}^*(w) = \mathcal{A}$), our algorithm reduces to FALCON \citep{fasterCB}, an instantiation of inverse-gap weighting without leveraging any causal knowledge.
To assess the benefit of each component, we compare variants that selectively use causal bounds on $\mathcal{F}$ and/or $\mathcal{A}$.
Since $\mathcal{F}$ is infinite, we replace $|\mathcal{F}|$ and $|\mathcal{F}^*|$ by their covering-number proxies $(3T \operatorname{diam}(\mathcal{F}))^d$ and $(3T \operatorname{diam}(\mathcal{F}^*))^d$.
Since both $\mathcal{F}$ and $\mathcal{F}^*$ are simplex, we can compute their diameters as $\operatorname{diam}(\mathcal{F}) = 2\sqrt{2}$ and $\operatorname{diam}(\mathcal{F}^*) = \sqrt{2}/20$.

We sample the 2-dimensional ground truth parameter $\bm{\theta^*}$ randomly from the feasible region compatible with the causal bounds.
Then, for each chosen arm-context pair $(a,w)$, the reward is drawn as $Y = \bm{\phi}(a,w)^\top \bm{\theta}^* + \mathcal{N}(0, 0.1)$.
We repeat this procedure for 50 independent trials.
The full suite of results appears in \Cref{tb:regret_stats}.

\begin{table}[hbtp]
  \centering
  \small
  \begin{tabular}{lcrrrrr}
    \toprule
    Algorithm & Action set & Mean & Std & Median & Min & Max \\
    \midrule
    FALCON    & $\mathcal{A}$ & 1240.69 & 29.33 & 1236.02 & 1190.62 & 1316.09 \\
    \cref{alg: TL-function approximation}  &  $\mathcal{A}$  & 1100.09 & 22.13 & 1102.42 & 1046.82 & 1157.81 \\
    \cref{alg: TL-function approximation} &  $\mathcal{A}_0(w)$ & 515.04 & 14.32 & 511.95 & 487.77 & 551.68 \\
    \cref{alg: TL-function approximation}  & $\mathcal{A}^*(w)$ & 111.47 & 3.45 & 111.46 & 102.78 & 118.96 \\
    \bottomrule
  \end{tabular}
  \caption{Summary Statistics of Cumulative Regrets for Different Parameter Configurations}
  \label{tb:regret_stats}
\end{table}
We observe that the performance boost mirrors the shrinkage in candidate-action sets: on average $|\mathcal{A}^*(w)| \approx 2.0$ and $|\mathcal{A}_0(w)| \approx 2.9$, while $|\mathcal{A}| = 5$.
Notably, smaller action sets also yield much smaller standard deviations of cumulative regret, showing robustness of a parsimonious model derived from causal-bound elimination.
When comparing FALCON with \cref{alg: TL-function approximation} with action set $\mathcal{A}$,
we observed that pruning the function class alone yields marginal regret improvements of $11\%$, while most of the gains come from eliminating suboptimal actions up front.

\section{Conclusions}

In this paper, we tackle a practical transfer learning scenario in which unobserved confounders, or distribution shifts render causal effects non-identifiable.
Rather than debiasing point estimates, we derive tight causal-effect bounds by solving non-convex programs over the space of joint density functions constrained by prior offline knowledge.

To compute these bounds efficiently, we develop a hit-and-run sampler that asymptotically draws uniform samples from the polytope of compatible structural causal models.
Each sample is then refined via a local optimization oracle, yielding causal-bound estimates that converge almost surely.
By relaxing our constraints to account for estimation error---and leveraging the Lipschitz continuity of causal effects---we prove that our relaxed feasible sets converge in the Hausdorff metric and thus their extrema converge to the true bounds.

Embedding these bounds into online learning yields immediate gains: in multi-armed bandits, suboptimal arms are safely eliminated; in contextual bandits with continuous contexts, we prune both policy classes and action sets, reducing regret dependence from $\sqrt{|\Pi|}$ to $\sqrt{\log|\Pi|}$ without assuming linear rewards.
Under limited or noisy data, our sensitivity model extension preserves guarantees and ensures a smooth transition back to the exact-bounds regime as estimation improves.

There are several future research directions worth exploring.
Beyond linear moment-type constraints, a natural next step is to allow our sampler to enforce nonlinear ambiguity sets, such as those arising from quadratic or kernel-based constraints.
Concretely, one could replace each linear functional with a nonlinear requirement such as $\int \alpha(\rho(\bm{x})) \diff \nu(\bm{x}) = \beta$, where $\alpha$ is a nonlinear function of the joint density $\rho$,  and then extend hit-and-run to explore these curved regions.
On the bandit side, adapting our IGW strategy to continuous action spaces will require new tools for controlling exploration complexity, such as metric entropy or Rademacher complexity of the action-value function class.
One promising avenue is to combine IGW with function approximation over action space to achieve regret guarantees in large or uncountable action settings.

\bibliography{ref}
\bibliographystyle{plainnat}

\clearpage

\appendix
\part*{Appendix}
\addcontentsline{toc}{part}{Appendix}
\etocsetnexttocdepth{subsubsection}

\localtableofcontents

\clearpage

\section{Proofs for Section \ref{sec: sampling general}}

\subsection{Proof of Proposition \ref{prop: valid samples in P}}

\begin{proof}
  We prove this by mathematical induction on $t$.
  The base case is guaranteed by algorithm input specification, $\rho_0 \in \mathcal{P}$.
  Assuming $\rho_{t-1} \in \mathcal{P}$, we need to show $\rho_t = \rho_{t-1} + \lambda_t d_t \in \mathcal{P}$.

  \begin{enumerate}
    \item \textbf{Non-negativity:}
      To ensure $\rho_t(\bm{x}) \geq 0$ for all $\bm{x} \in \Omega$, we analyze the pointwise behavior of the update $\rho_t(\bm{x}) = \rho_{t-1}(\bm{x}) + \lambda_t d_t(\bm{x})$.
      We consider three cases based on the sign of the direction function $d_t$ at each point.
      \begin{itemize}
        \item Case 1 ($d_t(\bm{x}) < 0$):
          From the definition of $\lambda_t^{\max}$ and its component for $d_t(\bm{x}) < 0$:
          $$
          \lambda_t \leq \lambda_t^{\max} \leq -\dfrac{\rho_{t-1}(\bm{x})}{d_t(\bm{x})}
          $$
          Multiplying by $d_t(\bm{x}) < 0$ (reversing inequality):
          $$
          \lambda_t d_t(\bm{x}) \geq \rho_{t-1}(\bm{x})
          $$
          Thus $\rho_t(\bm{x}) = \rho_{t-1}(\bm{x}) + \lambda_t d_t(\bm{x}) \geq 0$.

        \item Case 2 ($d_t(\bm{x}) > 0$):
          From the definition of $\lambda_t^{\min}$ and its component for $d_t(\bm{x}) > 0$:
          $$
          \lambda_t \geq \lambda_t^{\min} \geq -\dfrac{\rho_{t-1}(\bm{x})}{d_t(\bm{x})}
          $$
          Multiplying by $d_t(\bm{x}) > 0$:
          $$
          \lambda_t d_t(\bm{x}) \geq -\rho_{t-1}(\bm{x})
          $$
          Thus $\rho_t(\bm{x}) = \rho_{t-1}(\bm{x}) + \lambda_t d_t(\bm{x}) \geq 0$.

        \item Case 3 ($d_t(\bm{x}) = 0$):
          $\rho_t(\bm{x}) = \rho_{t-1}(\bm{x}) \geq 0$.
      \end{itemize}
      Thus $\rho_t(\bm{x}) \geq 0$ for all $\bm{x} \in \Omega$.

    \item \textbf{Equality constraints:}
      Since $d_t = \mathscr{P}(G_t)$ and $\mathscr{P} = \mathscr{I} - \mathscr{A}^\top (\mathscr{A} \mathscr{A}^\top)^{-1} \mathscr{A}$,
      we have $\mathscr{A}(d_t) = 0$. Therefore:
      $$
      \mathscr{A}(\rho_t) = \mathscr{A}(\rho_{t-1} + \lambda_t d_t) = \mathscr{A}(\rho_{t-1}) + \lambda_t \mathscr{A}(d_t) = \bm{\beta} + \lambda_t \cdot \mathbf{0} = \bm{\beta},
      $$
      where $\bm{\beta} = (\beta_1,\dots,\beta_m)^\top$.

    \item \textbf{Inequality constraints:}
      For each $j \in [m'] \backslash [m]$, we need to show $\int \alpha_i(\bm{x}) \rho_t(\bm{x}) \mathrm{d}\nu(\bm{x}) \leq \beta_i$. Let $v_{t-1,i} = \int \alpha_i(\bm{x}) \rho_{t-1}(\bm{x}) \mathrm{d}\nu(\bm{x}) \leq \beta_i$ (by induction hypothesis) and
      $c_{t,i} = \int \alpha_i(\bm{x}) d_t(\bm{x}) \mathrm{d}\nu(\bm{x})$. Then:
      $$
      \int \alpha_i(\bm{x}) \rho_t(\bm{x}) \mathrm{d}\nu(\bm{x}) = v_{t-1,i} + \lambda_t c_{t,i}
      $$
      We consider three cases based on $c_{t,i}$:
      \begin{itemize}
        \item Case 1 ($c_{t,i} > 0$):
          From the definition of $\lambda_t^{\max}$ and its inequality component:
          $$
          \lambda_t \leq \lambda_t^{\max} \leq \dfrac{\beta_i - v_{t-1,i}}{c_{t,i}}
          $$
          Multiplying by $c_{t,i} > 0$:
          $$
          \lambda_t c_{t,i} \leq \beta_i - v_{t-1,i}
          $$
          Thus $v_{t-1,i} + \lambda_t c_{t,i} \leq \beta_i$.

        \item Case 2 ($c_{t,i} < 0$):
          From the definition of $\lambda_t^{\min}$ and its inequality component:
          $$
          \lambda_t \geq \lambda_t^{\min} \geq \dfrac{\beta_i - v_{t-1,i}}{c_{t,i}}
          $$
          Multiplying by $c_{t,i} < 0$ (reversing inequality):
          $$
          \lambda_t c_{t,i} \leq \beta_i - v_{t-1,i}
          $$
          Thus $v_{t-1,i} + \lambda_t c_{t,i} \leq \beta_i$.

        \item Case 3 ($c_{t,i} = 0$):
          $\int \alpha_i(\bm{x}) \rho_t(\bm{x}) \mathrm{d}\nu(\bm{x}) = v_{t-1,i} \leq \beta_i$ (by induction hypothesis).
      \end{itemize}
  \end{enumerate}

  By induction, $\rho_t $ satisfies all constraints in $ \mathcal{P}$ for all $t \ge 0$.
\end{proof}

\subsection{Proof of Proposition \ref{prop: hit-and-run}}

\begin{proposition}
  \label{prop: compactness of feasible set}
  Under \cref{asp: regularity} and \cref{asp: finite-dimension}, the set $\mathcal{P}$ defined in \eqref{eq: feasible set} is compact in $L^2(\Omega,\nu)$.
\end{proposition}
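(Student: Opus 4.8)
The plan is to reduce the claim to the Heine--Borel theorem by exploiting the finite-dimensionality guaranteed by Assumption~\ref{asp: finite-dimension}. The key conceptual point is that in the infinite-dimensional space $L^2(\Omega,\nu)$ closedness and boundedness alone do \emph{not} imply compactness (the closed unit ball is the standard counterexample); the whole force of the argument therefore rests on confining $\mathcal{P}$ to a finite-dimensional subspace, where the two notions coincide.

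First I would verify that $\mathcal{P}$ is closed in the $L^2(\Omega,\nu)$ topology. Each constraint functional $\rho \mapsto \int_\Omega \alpha_i\,\rho\,\mathrm{d}\nu = \langle \alpha_i,\rho\rangle$ is a bounded linear functional, since $\alpha_i \in L^2(\Omega,\nu)$ and Cauchy--Schwarz applies; hence each equality set $\{\rho : \langle\alpha_i,\rho\rangle = \beta_i\}$ is closed as the preimage of a point, and each inequality set $\{\rho : \langle\alpha_j,\rho\rangle \le \beta_j\}$ is a closed half-space. For the nonnegativity constraint, if $\rho_k \to \rho$ in $L^2$ with $\rho_k \ge 0$ $\nu$-a.e., then a subsequence converges $\nu$-a.e.\ to $\rho$, forcing $\rho \ge 0$ $\nu$-a.e.; thus the positive cone is closed. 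Being an intersection of closed sets, $\mathcal{P}$ is closed.

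Next I would combine closedness with the structural hypotheses. Assumption~\ref{asp: regularity} already provides boundedness of $\mathcal{P}$ in the $L^2$ norm, and Assumption~\ref{asp: finite-dimension} places $\mathcal{P}$ inside an $n$-dimensional subspace $V \subset L^2(\Omega,\nu)$. Since finite-dimensional subspaces of a normed space are closed and all norms on $V$ are equivalent, $(V,\|\cdot\|_{L^2})$ is linearly homeomorphic to $\mathbb{R}^n$. The set $\mathcal{P}$, being closed in $L^2$ and contained in $V$, is closed in the subspace topology of $V$, and it remains bounded there. Applying the Heine--Borel theorem in $V \cong \mathbb{R}^n$ then yields compactness of $\mathcal{P}$.

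The only step requiring care is the interplay between the ambient infinite-dimensional topology and the finite-dimensional reduction: one must confirm that ``closed in $L^2$'' transfers to ``closed in $V$'' (immediate from the subspace topology) and that $L^2$-norm boundedness is genuine boundedness in the equivalent Euclidean norm on $V$. I expect the closedness of the positive cone (via $\nu$-a.e.\ convergent subsequences) to be the most technical micro-step, but it is standard; the substantive insight is simply that Assumption~\ref{asp: finite-dimension} is exactly what rescues Heine--Borel.
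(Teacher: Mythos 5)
Your proof is correct and follows essentially the same route as the paper: closedness of $\mathcal{P}$ via the continuity (Cauchy--Schwarz) of the constraint functionals together with $\nu$-a.e.\ convergence of an $L^2$-convergent subsequence for the positive cone, then Heine--Borel applied inside the finite-dimensional subspace guaranteed by Assumption~\ref{asp: finite-dimension}. Your explicit remark that closedness plus boundedness fails to give compactness in infinite dimensions is a worthwhile clarification the paper leaves implicit, but the argument itself is identical.
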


\begin{proof}
  Since $\mathcal{P}$ is finite-dimensional,
  there exists a finite-dimensional subspace $V \subset L^2(\Omega,\nu)$ such that $\mathcal{P} \subseteq V$.
  By assumption, $\mathcal{P}$ is bounded in the $L^2(\Omega,\nu)$ norm.
  By Heine-Borel theorem, it suffices to show that $\mathcal{P}$ is closed in $L^2(\Omega,\nu)$.

  To prove that $\mathcal{P}$ is closed, consider a sequence $\{\rho_n\} \subset \mathcal{P}$ converging to some $\rho$ in $L^2(\Omega,\nu)$, i.e., $\|\rho_n - \rho\|_{L^2(\Omega,\nu)} \to 0$ as $n \to \infty$. We must show that $\rho \in \mathcal{P}$, meaning $\rho$ satisfies all constraints defining $\mathcal{P}$.

  Define the linear functionals $\phi_i(\sigma) = \int \alpha_i \sigma  \mathrm{d}\nu$ for $i = 1, \dots, m'$. Since $\alpha_i \in L^2(\Omega,\nu)$, each $\phi_i$ is continuous on $L^2(\Omega,\nu)$ by the Cauchy-Schwarz inequality:
  $$
  |\phi_i(\sigma)| \leq \|\alpha_i\|_{L^2(\Omega,\nu)} \|\sigma\|_{L^2(\Omega,\nu)} < \infty.
  $$
  Continuity implies that if $\rho_n \to \rho$ in $L^2(\Omega,\nu)$, then $\phi_i(\rho_n) \to \phi_i(\rho)$ for each $i$.

  \begin{itemize}
    \item \textbf{Equality constraints ($i = 1, \dots, m$):} For each $i$, $\phi_i(\rho_n) = \beta_i$ for all $n$. Since $\phi_i(\rho_n) \to \phi_i(\rho)$, we have $\phi_i(\rho) = \beta_i$.
    \item \textbf{Inequality constraints ($i = m+1, \dots, m'$):} For each $i$, $\phi_i(\rho_n) \leq \beta_i$ for all $n$. Since $\phi_i(\rho_n) \to \phi_i(\rho)$, we have $\phi_i(\rho) \leq \beta_i$.
    \item \textbf{Non-negativity constraint:} Each $\rho_n \geq 0$ almost everywhere. Since $\rho_n \to \rho$ in $L^2(\Omega,\nu)$, there exists a subsequence $\{\rho_{n_k}\}$ that converges to $\rho$ almost everywhere (by the fact that $L^2$ convergence implies a.e. convergence of a subsequence). Since $\rho_{n_k} \geq 0$ a.e. for all $k$, it follows that $\rho \geq 0$ a.e.
  \end{itemize}

  Thus, $\rho$ satisfies all constraints, so $\rho \in \mathcal{P}$. Therefore, $\mathcal{P}$ is closed in $L^2(\Omega,\nu)$.

  Since $\mathcal{P} \subseteq V$ and $\mathcal{P}$ is closed in $L^2(\Omega,\nu)$, it is also closed in $V$ under the subspace topology (because $V$ is closed in $L^2(\Omega,\nu)$ as a finite-dimensional subspace). As $\mathcal{P}$ is bounded and closed in the finite-dimensional space $V$, it is compact in $V$ by the Heine-Borel theorem. Since the topology on $V$ is induced by the $L^2(\Omega,\nu)$ norm, $\mathcal{P}$ is compact in $L^2(\Omega,\nu)$.
\end{proof}

Given the compactness of $\mathcal{P}$, we can now prove that the hit-and-run sampler generates a Markov chain with uniform stationary distribution over $\mathcal{P}$.

\begin{proof}
  From \cref{prop: valid samples in P}, $\rho_t $ satisfies all constraints in $ \mathcal{P}$.
  Given $\rho_{t-1}$, the next state $\rho_{t}$ is obtained by sampling a direction ${d}_t$ and a step length $\lambda_t$,
  both independent of the past. Hence $\rho_t = \rho_{t-1} + \lambda_t {d}_t$
  depends only on $\rho_{t-1}$, so $\{\rho_t\}_{t=1}^T$ is a Markov chain on a finite-dimensional and compact state space. Consequently, a stationary distribution always exists.
  Furthermore, by construction, the Markov chain is irreducible and Harris recurrent.

  To identify its stationary distribution $\pi$, we verify detailed balance:
  $$
  \pi(\rho) \mathbb{P}(\rho\to\rho')
  = \pi(\rho') \mathbb{P}(\rho'\to\rho), \quad \forall \rho,\rho'\in\mathcal{P}.
  $$
  Since the symmetric directions are drawn with equal probability densities in a zero-mean Gaussian process, and step lengths uniformly over the feasible interval, the transition probabilities are symmetric, i.e.,
  $\mathbb{P}(\rho\to\rho') = \mathbb{P}(\rho'\to \rho)$.
  Hence detailed balance holds exactly when $\pi(\rho)=\pi(\rho')$ for all pairs, implying that the unique stationary distribution is uniform on $\mathcal{P}$.
\end{proof}

\subsection{Proof of Continuity for Causal Effects}
\begin{proposition}
  \label{prop: continuity of E[Y|do(a)]}
  Consider the map $V(\rho): \rho \mapsto \mathbb{E}_{\rho}[Y\mid \mathrm{do}(a)]$.
  Suppose that the sample space $\Omega$ is compact and $\nu(\Omega)<+\infty$.
  Moreover, $\rho$ has a uniform lower and upper bound $\kappa_1, \kappa_2 > 0$ (i.e., $\kappa_1 \leq \rho\leq \kappa_2$).
  Then, for each fixed $a$, there exists a constant $L_V > 0$ (depending on $a$, the compact support, the uniform bound $\kappa_1,\kappa_2$,
  and the reference measure $\nu$) such that for any two densities $\rho_1, \rho_2$ satisfying the conditions,
  $$
  |V(\rho_1) - V(\rho_2)| \leq L_V \|\rho_1 - \rho_2\|_{L^2(\Omega,\nu)}.
  $$
\end{proposition}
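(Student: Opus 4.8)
The plan is to write $V_a(\rho)$ explicitly through the back-door adjustment and then exploit the fact that the only non-linear operation involved is a division by a quantity that the uniform lower bound $\kappa_1$ keeps bounded away from zero. Fixing the target action $a$ and writing $\bm x=(a,y,w,u)$, the do-calculus formula expands to
\[
  V_a(\rho) = \int_{\mathcal W}\int_{\mathcal U} \frac{N_\rho(w,u)}{D_\rho(w,u)}\, M_\rho(w,u)\,\diff u\,\diff w,
\]
where $N_\rho(w,u) = \int_{\mathcal Y} y\,\rho(a,y,w,u)\,\diff y$, $D_\rho(w,u) = \int_{\mathcal Y}\rho(a,y,w,u)\,\diff y$ is the marginal $\rho(a,w,u)$, and $M_\rho(w,u) = \int_{\mathcal A}\int_{\mathcal Y}\rho(a',y',w,u)\,\diff y'\,\diff a'$ is the marginal $\rho(w,u)$. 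Each of $N_\rho, D_\rho, M_\rho$ is a bounded \emph{linear} functional of $\rho$, obtained by integrating $\rho$ against a bounded kernel over a subset of the coordinates.

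First I would record uniform bounds. Since $\Omega$ is compact, $|y|\le M_Y$ on $\mathcal Y$; and since $\kappa_1\le\rho\le\kappa_2$ with each factor measure finite (as $\nu(\Omega)<\infty$), one gets the crucial positive lower bound $D_\rho(w,u)\ge\kappa_1\,\nu_{\mathcal Y}(\mathcal Y)>0$ together with the upper bounds $|N_\rho|\le M_Y\kappa_2\nu_{\mathcal Y}(\mathcal Y)$, $D_\rho\le\kappa_2\nu_{\mathcal Y}(\mathcal Y)$, and $M_\rho\le\kappa_2\nu_{\mathcal A}(\mathcal A)\nu_{\mathcal Y}(\mathcal Y)$, all valid for both $\rho_1$ and $\rho_2$. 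The strictly positive lower bound on the denominator is the key enabler.

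Next I would bound the integrand difference by the add-and-subtract decomposition
\[
  \frac{N_1}{D_1}M_1 - \frac{N_2}{D_2}M_2
  = \frac{M_1}{D_1}(N_1 - N_2) + \frac{N_2 M_1}{D_1 D_2}(D_2 - D_1) + \frac{N_2}{D_2}(M_1 - M_2),
\]
where subscript $i$ abbreviates evaluation at $\rho_i$ (so $N_i=N_{\rho_i}$, etc.). By the bounds above, each prefactor is dominated by a finite constant built from $\kappa_1,\kappa_2,M_Y$ and the factor measures, so integrating over $(w,u)$ gives $|V_a(\rho_1)-V_a(\rho_2)|\le C_1\int_{\mathcal W\times\mathcal U}|N_1-N_2| + C_2\int_{\mathcal W\times\mathcal U}|D_1-D_2| + C_3\int_{\mathcal W\times\mathcal U}|M_1-M_2|$. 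Since $N,D,M$ are integrals of $\rho$ against bounded kernels, Fubini controls each term by $\|\rho_1-\rho_2\|_{L^1(\Omega,\nu)}$; for example $\int_{\mathcal W\times\mathcal U}|N_1-N_2|\le M_Y\,\|\rho_1-\rho_2\|_{L^1(\Omega,\nu)}$. Collecting terms yields $|V_a(\rho_1)-V_a(\rho_2)|\le C\,\|\rho_1-\rho_2\|_{L^1(\Omega,\nu)}$.

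Finally, since $\nu(\Omega)<\infty$, Cauchy--Schwarz converts the $L^1$ estimate into the desired $L^2$ bound, $\|\rho_1-\rho_2\|_{L^1(\Omega,\nu)}\le\sqrt{\nu(\Omega)}\,\|\rho_1-\rho_2\|_{L^2(\Omega,\nu)}$, so $L_V=C\sqrt{\nu(\Omega)}$ works. The main obstacle is the division inherent in the conditional expectation $\mathbb{E}[Y\mid a,w,u]=N_\rho/D_\rho$: ratios are not globally Lipschitz, and the entire estimate hinges on the uniform lower bound $\kappa_1$ forcing $D_\rho$ to stay bounded away from zero, since otherwise the prefactors $M_1/D_1$ and $N_2M_1/(D_1D_2)$ could blow up. A secondary bookkeeping point is the Fubini/measurability step when $A$ is discrete (counting measure), where ``fixing $A=a$'' isolates a single slice whose contribution is dominated by the full $L^1$ norm.
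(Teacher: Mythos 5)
Your proposal is correct and follows essentially the same route as the paper's proof: both expand $V_a(\rho)$ via the back-door formula, telescope the difference of the ratio-times-marginal expression, use the uniform lower bound $\kappa_1$ to keep the conditional denominator above $\kappa_1\,\nu(\mathcal{Y})>0$, and exploit compactness of $\Omega$ together with $\nu(\Omega)<\infty$ and Cauchy--Schwarz to arrive at a bound in $\|\rho_1-\rho_2\|_{L^2(\Omega,\nu)}$. Your flat three-term telescoping with a single $L^1$-to-$L^2$ conversion at the end is only a mild reorganization of the paper's two-stage decomposition ($T_1+T_2$ followed by a quotient-rule bound on $f_1-f_2$), and the $A=a$ slice subtlety you flag is treated at the same level of rigor in the paper itself.
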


\begin{proof}
  Given the value $a$, define the auxiliary space $\mathcal{Z} = \mathcal{W} \times \mathcal{U}$ and the variable $z = (w, u)$. For the densities $\rho_i$ ($i = 1, 2$), define their marginal densities and conditional expectation functions:
  \begin{align*}
    \rho_{Z,i}(z) &:= \int_{\mathcal{A} \times \mathcal{Y}} \rho_i(a', y', z) \mathrm{d}\nu(a', y'), \\
    f_i(z) &:= \mathbb{E}_{\rho_i}[Y \mid a, z] = \frac{\int_{\mathcal{Y}} y \rho_i(a, y, z) \mathrm{d}\nu(y)}{\int_{\mathcal{Y}} \rho_i(a, y, z) \mathrm{d}\nu(y)}.
  \end{align*}
  According to the theorem conditions, $\Omega$ is compact and $|Y| \leq M$, so $|f_i(z)| \leq M$. The causal effect difference can be decomposed as:
  $$
  V(\rho_1) - V(\rho_2) = \underbrace{\int_{\mathcal{Z}} f_1(z) (\rho_{Z,1}(z) - \rho_{Z,2}(z)) \mathrm{d}\nu(z)}_{T_1} + \underbrace{\int_{\mathcal{Z}} (f_1(z) - f_2(z)) \rho_{Z,2}(z) \mathrm{d}\nu(z)}_{T_2}.
  $$
  We will control $|T_1|$ and $|T_2|$ separately.

  First, by $|f_1(z)| \leq M$ and the Cauchy-Schwarz inequality:
  $$
  |T_1| \leq M \int_{\mathcal{Z}} |\rho_{Z,1} - \rho_{Z,2}| \mathrm{d}\nu(z) \leq M \sqrt{\nu(\mathcal{Z})} \|\rho_{Z,1} - \rho_{Z,2}\|_{L^2(\nu_z)}.
  $$
  Further, for each $z$, by Cauchy-Schwarz and Fubini's theorem:
  \begin{align*}
    |\rho_{Z,1}(z) - \rho_{Z,2}(z)|
    &\leq \int_{\mathcal{A} \times \mathcal{Y}} |\rho_1 - \rho_2| \mathrm{d}\nu(a', y') \\
    &\leq \sqrt{\nu(\mathcal{A} \times \mathcal{Y})} \left( \int_{\mathcal{A} \times \mathcal{Y}} |\rho_1 - \rho_2|^2 \mathrm{d}\nu(a', y') \right)^{1/2}.
  \end{align*}
  Taking the $L^2(\nu_z)$ norm on both sides gives:
  $$
  \|\rho_{Z,1} - \rho_{Z,2}\|_{L^2(\nu_z)}^2 \leq \nu(\mathcal{A} \times \mathcal{Y}) \|\rho_1 - \rho_2\|_{L^2(\Omega, \nu)}^2,
  $$
  that is, $\|\rho_{Z,1} - \rho_{Z,2}\|_{L^2(\nu_z)} \leq \sqrt{\nu(\mathcal{A} \times \mathcal{Y})} \|\rho_1 - \rho_2\|_{L^2(\Omega, \nu)}$. Substituting into the previous formula:
  $$
  |T_1| \leq M \sqrt{\nu(\mathcal{Z}) \cdot \nu(\mathcal{A} \times \mathcal{Y})} \|\rho_1 - \rho_2\|_{L^2(\Omega, \nu)}.
  $$

  To bound the term,
  the key is to estimate $|f_1(z) - f_2(z)|$. Let:
  $$
  g_i(z) := \int_{\mathcal{Y}} \rho_i(a, y, z) \mathrm{d}\nu(y), \quad h_i(z) := \int_{\mathcal{Y}} y \rho_i(a, y, z) \mathrm{d}\nu(y).
  $$
  From the density bounds $\kappa_1 \leq \rho_i \leq \kappa_2$, we have $g_i(z) \geq \kappa_1 \nu(\mathcal{Y})$ and $|h_i(z)| \leq M g_i(z)$. Calculating:
  $$
  |f_1 - f_2| = \left| \frac{h_1}{g_1} - \frac{h_2}{g_2} \right| \leq \frac{|h_1| |g_2 - g_1|}{g_1 g_2} + \frac{|h_1 - h_2|}{g_2} \leq M \frac{|g_1 - g_2|}{g_2} + \frac{|h_1 - h_2|}{g_2}.
  $$
  Using $g_2 \geq \kappa_1 \nu(\mathcal{Y})$ and $|h_1 - h_2| \leq M \int_{\mathcal{Y}} |\rho_1 - \rho_2| \mathrm{d}\nu(y)$:
  \begin{align*}
    |f_1(z) - f_2(z)|
    &\leq \frac{M}{\kappa_1 \nu(\mathcal{Y})} \int_{\mathcal{Y}} |\rho_1 - \rho_2| \mathrm{d}\nu(y) + \frac{M}{\kappa_1 \nu(\mathcal{Y})} \int_{\mathcal{Y}} |\rho_1 - \rho_2| \mathrm{d}\nu(y) \\
    &= \frac{2M}{\kappa_1 \nu(\mathcal{Y})} \int_{\mathcal{Y}} |\rho_1(a, y, z) - \rho_2(a, y, z)| \mathrm{d}\nu(y).
  \end{align*}
  Substituting into $T_2$ and applying Cauchy-Schwarz:
  \begin{align*}
    |T_2|
    &\leq \frac{2M}{\kappa_1 \nu(\mathcal{Y})} \int_{\mathcal{Z}} \rho_{Z,2}(z) \left( \int_{\mathcal{Y}} |\rho_1 - \rho_2| \mathrm{d}\nu(y) \right) \mathrm{d}\nu(z) \\
    &\leq \frac{2M}{\kappa_1 \nu(\mathcal{Y})} \left( \int_{\mathcal{Z} \times \mathcal{Y}} \rho_{Z,2}(z)^2 \mathrm{d}\nu(y)\mathrm{d}\nu(z) \right)^{1/2} \left( \int_{\mathcal{Z} \times \mathcal{Y}} |\rho_1 - \rho_2|^2 \mathrm{d}\nu(y)\mathrm{d}\nu(z) \right)^{1/2}.
  \end{align*}
  From $\rho_{Z,2}(z) \leq \kappa_2 \nu(\mathcal{A} \times \mathcal{Y})$ and the definition of $\|\rho_1 - \rho_2\|_{L^2(\Omega,\nu)}$:
  \begin{align*}
    \int_{\mathcal{Z} \times \mathcal{Y}} \rho_{Z,2}(z)^2 \mathrm{d}\nu(y)\mathrm{d}\nu(z)
    &\leq \nu(\mathcal{Y}) \left( \kappa_2 \nu(\mathcal{A} \times \mathcal{Y}) \right)^2 \nu(\mathcal{Z}), \\
    \int_{\mathcal{Z} \times \mathcal{Y}} |\rho_1 - \rho_2|^2 \mathrm{d}\nu(y)\mathrm{d}\nu(z)
    &\leq \|\rho_1 - \rho_2\|_{L^2(\Omega, \nu)}^2.
  \end{align*}
  Let $B = \kappa_2 \nu(\mathcal{A} \times \mathcal{Y}) \sqrt{\nu(\mathcal{Z})}$, then:
  $$
  |T_2| \leq \frac{2M}{\kappa_1 \nu(\mathcal{Y})} \cdot \sqrt{\nu(\mathcal{Y})} B \|\rho_1 - \rho_2\|_{L^2(\Omega, \nu)} = \frac{2M B}{\kappa_1 \sqrt{\nu(\mathcal{Y})}} \|\rho_1 - \rho_2\|_{L^2(\Omega, \nu)}.
  $$
  Substituting $B$ and simplifying:
  $$
  |T_2| \leq \frac{2M \kappa_2 \nu(\mathcal{A} \times \mathcal{Y}) \sqrt{\nu(\mathcal{Z}) \nu(\mathcal{Y})} }{\kappa_1} \|\rho_1 - \rho_2\|_{L^2(\Omega, \nu)}.
  $$

  Summarizing the results of the two terms:
  $$
  |V(\rho_1) - V(\rho_2)| \leq |T_1| + |T_2| \leq C_1 \|\rho_1 - \rho_2\|_{L^2(\Omega,\nu)} + C_2 \|\rho_1 - \rho_2\|_{L^2(\Omega,\nu)},
  $$
  where:
  $$
  C_1 = M \sqrt{\nu(\mathcal{Z}) \nu(\mathcal{A} \times \mathcal{Y})}, \quad C_2 = \frac{2M \kappa_2 \nu(\mathcal{A} \times \mathcal{Y}) \sqrt{\nu(\mathcal{Z}) \nu(\mathcal{Y})} }{\kappa_1}.
  $$
  Taking $L_V = C_1 + C_2$, that is:
  $$
  L_V = M \sqrt{\nu(\mathcal{Z}) \nu(\mathcal{A} \times \mathcal{Y})} + \frac{2M \kappa_2 \sqrt{\nu(\mathcal{Z}) \nu(\mathcal{Y})} \nu(\mathcal{A} \times \mathcal{Y}) }{\kappa_1}.
  $$
  Since $\Omega$ is compact, $\nu(\mathcal{Z})$, $\nu(\mathcal{A} \times \mathcal{Y})$, $\nu(\mathcal{Y})$ are finite, and $\kappa_1, \kappa_2 > 0$, so $L_V > 0$ is a constant depending on $a$, the compact support, $\kappa_1, \kappa_2$, and $\nu$. Thus, we obtain:
  $$
  |V(\rho_1) - V(\rho_2)| \leq L_V \|\rho_1 - \rho_2\|_{L^2(\Omega, \nu)}.
  $$
\end{proof}

\begin{corollary}
  Suppose $A$ is binary.
  Under the same conditions as \cref{prop: continuity of E[Y|do(a)]},
  the average treatment effect $V(\rho) = \mathbb{E}_{\rho}[Y\mid \mathrm{do}(A=1)] - \mathbb{E}_{\rho}[Y\mid \mathrm{do}(A=0)]$ is continuous in the $L^2(\Omega,\nu)$ norm.
\end{corollary}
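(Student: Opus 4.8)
The plan is to reduce the claim to two applications of Proposition~\ref{prop: continuity of E[Y|do(a)]} and a triangle inequality. Since $A$ is binary, its domain is $\mathcal{A}=\{0,1\}$, so the average treatment effect decomposes as a difference of two single-arm causal effects. I would introduce the shorthand $V_1(\rho) = \mathbb{E}_{\rho}[Y\mid\mathrm{do}(A=1)]$ and $V_0(\rho) = \mathbb{E}_{\rho}[Y\mid\mathrm{do}(A=0)]$, so that $V(\rho) = V_1(\rho) - V_0(\rho)$.

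First I would invoke Proposition~\ref{prop: continuity of E[Y|do(a)]} separately at the two fixed action values $a=1$ and $a=0$. The hypotheses of the corollary are exactly those of the proposition (compact $\Omega$ with $\nu(\Omega)<\infty$, uniform density bounds $\kappa_1\le\rho\le\kappa_2$), and these conditions do not depend on the particular arm chosen. Hence there exist finite Lipschitz constants $L_{V,1}>0$ and $L_{V,0}>0$ such that, for all admissible $\rho_1,\rho_2$,
\[
  |V_1(\rho_1)-V_1(\rho_2)| \le L_{V,1}\,\|\rho_1-\rho_2\|_{L^2(\Omega,\nu)},
  \qquad
  |V_0(\rho_1)-V_0(\rho_2)| \le L_{V,0}\,\|\rho_1-\rho_2\|_{L^2(\Omega,\nu)}.
\]

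Next I would apply the triangle inequality to the difference:
\[
  |V(\rho_1)-V(\rho_2)|
  \le |V_1(\rho_1)-V_1(\rho_2)| + |V_0(\rho_1)-V_0(\rho_2)|
  \le (L_{V,1}+L_{V,0})\,\|\rho_1-\rho_2\|_{L^2(\Omega,\nu)}.
\]
Setting $L_V = L_{V,1}+L_{V,0}$ shows $V$ is in fact Lipschitz continuous (hence continuous) in the $L^2(\Omega,\nu)$ norm, which is the desired conclusion.

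There is no genuine obstacle here: the content is entirely carried by Proposition~\ref{prop: continuity of E[Y|do(a)]}, and the corollary is a routine consequence of linearity of the ATE in the two interventional means together with the triangle inequality. The only point worth stating explicitly is that the per-arm Lipschitz constants from the proposition depend on the fixed value of $a$, so one takes their sum over the two binary values; this keeps the resulting constant finite and uniform over the admissible density class.
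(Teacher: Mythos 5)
Your proposal is correct and follows exactly the argument the paper intends: the corollary is an immediate consequence of applying Proposition~\ref{prop: continuity of E[Y|do(a)]} at each of the two fixed values $a=0,1$ and combining via the triangle inequality, yielding Lipschitz (hence continuous) dependence with constant $L_{V,0}+L_{V,1}$. Your remark that the per-arm constants depend on the fixed $a$ but remain finite and uniform over the admissible density class is the only subtlety, and you handle it correctly.
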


\begin{proposition}
  \label{prop: continuity of E[Y|do(a),w]}
  Consider the map $V(\rho): \rho \mapsto \mathbb{E}_{\rho}[Y\mid \mathrm{do}(a), w]$.
  Suppose that the sample space $\Omega$ is compact and $\nu(\Omega)<+\infty$.
  Moreover, $\rho$ has a uniform lower and upper bound $\kappa_1, \kappa_2 > 0$ (i.e., $\kappa_1 \leq \rho\leq \kappa_2$).
  Then, for each fixed $a$, there exists a constant $L_V > 0$ (depending on $a$, the compact support, the uniform bound $\kappa_1,\kappa_2$,
  and the reference measure $\nu$) such that for any two densities $\rho_1, \rho_2$ satisfying the conditions,
  $$
  |V(\rho_1) - V(\rho_2)| \leq L_V \|\rho_1 - \rho_2\|_{L^2(\Omega,\nu)}.
  $$
\end{proposition}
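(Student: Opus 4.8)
The plan is to mirror the decomposition used in the proof of \cref{prop: continuity of E[Y|do(a)]}, now accounting for the extra conditioning on $W=w$. Fix $a$ and $w$, and reduce the quantity to a single integral over $\mathcal{U}$ by writing
\[
V(\rho_i) = \int_{\mathcal{U}} f_i(u)\,\rho_i(u\mid w)\,\mathrm{d}\nu(u), \qquad f_i(u) := \frac{\int_{\mathcal{Y}} y\,\rho_i(a,y,w,u)\,\mathrm{d}\nu(y)}{\int_{\mathcal{Y}} \rho_i(a,y,w,u)\,\mathrm{d}\nu(y)},
\]
where $\rho_i(u\mid w)=\rho_i(w,u)/\rho_i(w)$ with $\rho_i(w,u)=\int_{\mathcal{A}\times\mathcal{Y}}\rho_i\,\mathrm{d}\nu$ and $\rho_i(w)=\int_{\mathcal{U}}\rho_i(w,u')\,\mathrm{d}\nu(u')$. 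I would then split
\[
V(\rho_1)-V(\rho_2) = \underbrace{\int_{\mathcal{U}} f_1(u)\bigl(\rho_1(u\mid w)-\rho_2(u\mid w)\bigr)\,\mathrm{d}\nu(u)}_{T_1} + \underbrace{\int_{\mathcal{U}} \bigl(f_1(u)-f_2(u)\bigr)\rho_2(u\mid w)\,\mathrm{d}\nu(u)}_{T_2},
\]
and bound $|T_1|$ and $|T_2|$ separately, each by a constant multiple of $\|\rho_1-\rho_2\|_{L^2(\Omega,\nu)}$.

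The term $T_2$ can be controlled by reusing the estimate on $|f_1(u)-f_2(u)|$ already established in \cref{prop: continuity of E[Y|do(a)]}, since $f_i(u)$ is exactly the conditional-mean function there evaluated at the slice $z=(w,u)$. The uniform bounds $\kappa_1\le\rho_i\le\kappa_2$ give $\int_{\mathcal{Y}}\rho_i(a,y,w,u)\,\mathrm{d}\nu(y)\ge\kappa_1\nu(\mathcal{Y})$, so the quotient-difference trick yields $|f_1(u)-f_2(u)|\le \tfrac{2M}{\kappa_1\nu(\mathcal{Y})}\int_{\mathcal{Y}}|\rho_1-\rho_2|\,\mathrm{d}\nu(y)$. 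Combining this with the pointwise bound $\rho_2(u\mid w)\le \kappa_2/(\kappa_1\nu(\mathcal{U}))$ and a Cauchy--Schwarz step bounds $|T_2|$ by a constant times $\|\rho_1-\rho_2\|_{L^2(\Omega,\nu)}$, exactly as in the unconditional case.

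The genuinely new ingredient, and the main obstacle, is controlling the \emph{conditional} density appearing in $T_1$: unlike the unconditional marginal, $\rho_i(u\mid w)$ is a ratio, so I would expand
\[
\rho_1(u\mid w)-\rho_2(u\mid w) = \frac{\rho_1(w,u)-\rho_2(w,u)}{\rho_1(w)} + \rho_2(w,u)\,\frac{\rho_2(w)-\rho_1(w)}{\rho_1(w)\rho_2(w)},
\]
and exploit the uniform lower bound to keep the denominators away from zero: $\rho_i(w)\ge\kappa_1\,\nu(\mathcal{A}\times\mathcal{Y}\times\mathcal{U})>0$ and $\rho_i(w,u)\le\kappa_2\,\nu(\mathcal{A}\times\mathcal{Y})$. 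Both $|\rho_1(w,u)-\rho_2(w,u)|$ and $|\rho_1(w)-\rho_2(w)|$ are integrals of $|\rho_1-\rho_2|$ over $\mathcal{A}\times\mathcal{Y}$ and over $\mathcal{A}\times\mathcal{Y}\times\mathcal{U}$ respectively, each bounded by $\|\rho_1-\rho_2\|_{L^2(\Omega,\nu)}$ up to finite measure factors via Cauchy--Schwarz. Using $|f_1|\le M$ and integrating over $\mathcal{U}$ then gives $|T_1|\le L_1\|\rho_1-\rho_2\|_{L^2(\Omega,\nu)}$ with $L_1$ depending only on $M,\kappa_1,\kappa_2$ and the finite measures of the coordinate spaces; setting $L_V=L_1+L_2$ finishes the proof. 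The delicate point throughout is that the strict positivity $\kappa_1>0$ is exactly what makes $\rho_i(w)$ bounded below, upgrading the ratio map from merely continuous to Lipschitz.
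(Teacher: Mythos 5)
Your proposal is correct and takes essentially the same approach as the paper's proof: both exploit the uniform positivity $\kappa_1>0$ to bound the conditional-density denominators below, reuse the quotient-difference estimate for $|f_1-f_2|$ from \cref{prop: continuity of E[Y|do(a)]}, and close each term with Cauchy--Schwarz against the finite coordinate measures. The only difference is a cosmetic refactoring -- you perform the $T_1+T_2$ split first and apply the quotient rule inside $T_1$ to $\rho_i(u\mid w)$, whereas the paper writes $V(\rho_i)=N_i(w)/D_i(w)$ and applies the quotient decomposition at the top level -- which produces the same Lipschitz bound with slightly different constants.
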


\begin{proof}
  Fix the intervention level $a$ and covariate $w$. The space $\mathcal{U}$ corresponds to unobserved variables. For densities $\rho_i$ ($i = 1, 2$), define:
  \begin{align*}
    \rho_{U|W,i}(u|w) &:= \frac{\int_{\mathcal{A} \times \mathcal{Y}} \rho_i(a', y', w, u) \, \mathrm{d}\nu(a', y')}{\int_{\mathcal{U}} \left( \int_{\mathcal{A} \times \mathcal{Y}} \rho_i(a', y', w, u') \, \mathrm{d}\nu(a', y') \right) \mathrm{d}\nu(u')}, \\
    f_i(u,w) &:= \mathbb{E}_{\rho_i}[Y \mid a, w, u] = \frac{\int_{\mathcal{Y}} y \rho_i(a, y, w, u) \, \mathrm{d}\nu(y)}{\int_{\mathcal{Y}} \rho_i(a, y, w, u) \, \mathrm{d}\nu(y)}.
  \end{align*}
  Then $V(\rho_i) = \int_{\mathcal{U}} f_i(u,w) \rho_{U|W,i}(u|w) \, \mathrm{d}\nu(u)$. Let:
  \begin{align*}
    D_i(w) &:= \int_{\mathcal{U}} \int_{\mathcal{A} \times \mathcal{Y}} \rho_i(a', y', w, u) \, \mathrm{d}\nu(a', y') \mathrm{d}\nu(u), \\
    N_i(w) &:= \int_{\mathcal{U}} f_i(u,w) \left( \int_{\mathcal{A} \times \mathcal{Y}} \rho_i(a', y', w, u) \, \mathrm{d}\nu(a', y') \right) \mathrm{d}\nu(u),
  \end{align*}
  so $V(\rho_i) = N_i(w)/D_i(w)$. By $\kappa_1 \leq \rho_i \leq \kappa_2$ and $|Y| \leq M$ (with $\Omega$ compact):
  $$
  D_i(w) \geq \kappa_1 \nu(\mathcal{A} \times \mathcal{Y} \times \mathcal{U}) =: c_1 > 0, \quad |N_i(w)| \leq M \kappa_2 \nu(\mathcal{A} \times \mathcal{Y} \times \mathcal{U}) =: c_2.
  $$

  We then decompose the difference and control the denominator.
  $$
  V(\rho_1) - V(\rho_2) = \frac{N_1 - N_2}{D_1} + \frac{N_2}{D_1 D_2} (D_2 - D_1).
  $$
  Using $D_i \geq c_1$ and $|N_2| \leq c_2$:
  $$
  |V(\rho_1) - V(\rho_2)| \leq \frac{|N_1 - N_2|}{c_1} + \frac{c_2}{c_1^2} |D_1 - D_2|.
  $$
  Control $|D_1 - D_2|$:
  \begin{align*}
    |D_1(w) - D_2(w)|
    &\leq \int_{\mathcal{U}} \int_{\mathcal{A} \times \mathcal{Y}} |\rho_1 - \rho_2| \, \mathrm{d}\nu(a', y') \mathrm{d}\nu(u) \\
    &\leq \sqrt{\nu(\mathcal{A} \times \mathcal{Y} \times \mathcal{U})} \cdot \|\rho_1 - \rho_2\|_{L^2(\Omega,\nu)}.
  \end{align*}

  Next, control the numerator difference $|N_1 - N_2|$.
  Define \[\rho_{A,Y,U,i}(w,u) := \int_{\mathcal{A} \times \mathcal{Y}} \rho_i(a', y', w, u) \, \mathrm{d}\nu(a', y'),\] 
  then:
  $$
  N_i(w) = \int_{\mathcal{U}} f_i(u,w) \rho_{A,Y,U,i}(w,u) \, \mathrm{d}\nu(u).
  $$
  Decompose:
  \begin{align*}
    |N_1 - N_2|
    &\leq \int_{\mathcal{U}} |f_1 - f_2| \rho_{A,Y,U,1} \, \mathrm{d}\nu(u) + M \int_{\mathcal{U}} |\rho_{A,Y,U,1} - \rho_{A,Y,U,2}| \, \mathrm{d}\nu(u).
  \end{align*}

  Let $g_i(u,w) := \int_{\mathcal{Y}} \rho_i(a, y, w, u) \, \mathrm{d}\nu(y)$, then:
  \begin{align*}
    |f_1 - f_2|
    &\leq \frac{2M}{\kappa_1 \nu(\mathcal{Y})} \int_{\mathcal{Y}} |\rho_1(a,y,w,u) - \rho_2(a,y,w,u)| \, \mathrm{d}\nu(y).
  \end{align*}
  Substitute and apply Cauchy-Schwarz:
  \begin{align*}
    &\int_{\mathcal{U}} |f_1 - f_2| \rho_{A,Y,U,1} \, \mathrm{d}\nu(u) \\
    &\leq \frac{2M}{\kappa_1 \nu(\mathcal{Y})} \sqrt{ \nu(\mathcal{Y}) \left( \kappa_2 \nu(\mathcal{A} \times \mathcal{Y}) \right)^2 \nu(\mathcal{U}) } \cdot \| \rho_1 - \rho_2 \|_{L^2(\Omega,\nu)} \\
    &= \frac{2M \kappa_2 \nu(\mathcal{A} \times \mathcal{Y}) \sqrt{\nu(\mathcal{U}) \nu(\mathcal{Y})} }{\kappa_1} \|\rho_1 - \rho_2\|_{L^2(\Omega,\nu)}.
  \end{align*}

  Since
  \begin{align*}
    M \int_{\mathcal{U}} |\rho_{A,Y,U,1} - \rho_{A,Y,U,2}| \, \mathrm{d}\nu(u)
    &\leq M \sqrt{\nu(\mathcal{U}) \nu(\mathcal{A} \times \mathcal{Y})} \|\rho_1 - \rho_2\|_{L^2(\Omega,\nu)}
  \end{align*}
  and
  $$
  |N_1 - N_2| \leq M \left( \frac{2 \kappa_2 \nu(\mathcal{A} \times \mathcal{Y}) \sqrt{\nu(\mathcal{U}) \nu(\mathcal{Y})} }{\kappa_1} + \sqrt{\nu(\mathcal{U}) \nu(\mathcal{A} \times \mathcal{Y})} \right) \|\rho_1 - \rho_2\|_{L^2(\Omega,\nu)},
  $$
  we have
  \begin{align*}
    & |V(\rho_1) - V(\rho_2)| \\
    & \quad \leq \left[ \frac{M}{c_1} \left( \frac{2 \kappa_2 \nu_{\mathcal{A}\mathcal{Y}} \sqrt{\nu({\mathcal{U}}) \nu({\mathcal{Y}})} }{\kappa_1} + \sqrt{\nu({\mathcal{U}})} \nu_{\mathcal{A}\mathcal{Y}} \right) + \frac{c_2}{c_1^2} \sqrt{\nu(\mathcal{A} \times \mathcal{Y} \times \mathcal{U})} \right] \|\rho_1 - \rho_2\|_{L^2(\Omega,\nu)}.
  \end{align*}
  Substitute $c_1 $ and $c_2 $ to get:
  $$
  L_V = \frac{M}{\kappa_1} \left( \frac{2 \kappa_2 \sqrt{\nu({\mathcal{U}}) \nu({\mathcal{Y}})} }{\kappa_1 \sqrt{\nu(\mathcal{A} \times \mathcal{Y} \times \mathcal{U})}} + \frac{1}{\sqrt{\nu({\mathcal{U}})}} \right) + \frac{M \kappa_2 \sqrt{\nu(\mathcal{A} \times \mathcal{Y} \times \mathcal{U})}}{\kappa_1^2 \nu(\mathcal{A} \times \mathcal{Y} \times \mathcal{U})}.
  $$
  Since $\Omega$ is compact, $\nu(\mathcal{A} \times \mathcal{Y} \times \mathcal{U}) < \infty$ and $L_V > 0$, the theorem is proved:
  $$
  |V(\rho_1) - V(\rho_2)| \leq L_V \|\rho_1 - \rho_2\|_{L^2(\Omega,\nu)}.
  $$
\end{proof}

\begin{corollary}
  Suppose $A$ is binary.
  Under the same conditions as Proposition~\ref{prop: continuity of E[Y|do(a),w]},
  the conditional average treatment effect $V(\rho) = \mathbb{E}_{\rho}[Y\mid \mathrm{do}(A=1),w] - \mathbb{E}_{\rho}[Y\mid \mathrm{do}(A=0),w]$ is continuous in the $L^2(\Omega,\nu)$ norm.
\end{corollary}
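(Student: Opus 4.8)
The plan is to observe that, for binary $A$, the conditional average treatment effect decomposes as a difference of exactly two conditional causal effects,
\[
  V(\rho) = V_1(\rho) - V_0(\rho),
  \qquad
  V_a(\rho) \triangleq \mathbb{E}_{\rho}[Y\mid \mathrm{do}(A=a),w],
\]
and then to leverage the Lipschitz estimate already established in Proposition~\ref{prop: continuity of E[Y|do(a),w]} for each fixed intervention level separately. Since $A$ binary merely means $\mathcal{A}=\{0,1\}$, the hypotheses of Proposition~\ref{prop: continuity of E[Y|do(a),w]} (compact $\Omega$ with $\nu(\Omega)<\infty$ and uniform density bounds $\kappa_1\le\rho\le\kappa_2$) transfer verbatim, so the proposition applies to both $a=1$ and $a=0$.

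First I would invoke Proposition~\ref{prop: continuity of E[Y|do(a),w]} twice: once with $a=1$ to obtain a constant $L_V^{(1)}>0$ with $|V_1(\rho_1)-V_1(\rho_2)|\le L_V^{(1)}\|\rho_1-\rho_2\|_{L^2(\Omega,\nu)}$, and once with $a=0$ to obtain the analogous $L_V^{(0)}>0$. Both constants depend only on the intervention level, the compact support, $\kappa_1,\kappa_2$, and $\nu$, exactly as in the proposition.

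Next I would combine the two estimates by the triangle inequality. Writing
\[
  V(\rho_1)-V(\rho_2)
  = \bigl(V_1(\rho_1)-V_1(\rho_2)\bigr) - \bigl(V_0(\rho_1)-V_0(\rho_2)\bigr),
\]
I bound $|V(\rho_1)-V(\rho_2)|\le |V_1(\rho_1)-V_1(\rho_2)| + |V_0(\rho_1)-V_0(\rho_2)| \le (L_V^{(1)}+L_V^{(0)})\,\|\rho_1-\rho_2\|_{L^2(\Omega,\nu)}$. Setting $L_V = L_V^{(1)}+L_V^{(0)}$ yields Lipschitz continuity of $V$, which in particular gives continuity in the $L^2(\Omega,\nu)$ norm as claimed.

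There is no real obstacle here: the corollary is an immediate consequence of the preceding proposition, and the only bookkeeping is to note that the binary structure restricts the intervention set to two values so that the difference of conditional causal effects involves precisely two Lipschitz terms. The sole point worth verifying is that the uniform bounds and compactness assumptions are identical for both $a=0$ and $a=1$, which they are, so the two Lipschitz constants are finite and the sum $L_V^{(1)}+L_V^{(0)}$ is a valid global Lipschitz constant.
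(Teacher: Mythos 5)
Your proposal is correct and matches the paper's (implicit) reasoning: the paper states this corollary without proof as an immediate consequence of Proposition~\ref{prop: continuity of E[Y|do(a),w]}, and your argument---applying the proposition at $a=0$ and $a=1$ and combining the two Lipschitz bounds via the triangle inequality with $L_V = L_V^{(0)}+L_V^{(1)}$---is exactly the intended one-line justification.
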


\subsection{Proof of Theorem \ref{thm: convergence in probability in Markov}}

\begin{proof}
  Under \cref{asp: regularity} and \cref{assu: continuous}, $V(\rho)$ is continuous and the feasible set $\mathcal P$ is compact, so there exists $\rho_{\min}\in\mathcal P$ such that $V(\rho_{\min}) = V_{\min}$. Let $\{\rho_t\}_{t\ge1}$ be the Markov chain generated by \cref{alg: hit-run for density}.
  Denote $\mathcal B(\rho, \delta) =\{\rho': \norm{\rho-\rho'}_{L^2(\Omega,\nu)}\le\delta\}$.
  By continuity of $V$, for any $\epsilon>0$ there exists $\delta>0$ such that
  $$
  V(\rho) < V_{\min} + \epsilon
  \quad\text{for all}\quad
  \rho \in \mathcal B(\rho_{\min}, \delta).
  $$

  Denote by $\mu_k(\cdot\mid\rho_0)$ the law of the chain after $k$ transitions starting from $\rho_0$.
  From Theorem 1.1 of \cite{HR_mixing_time}, $\mu_k(\cdot\mid  \rho_0)$ converges in total variation to the uniform distribution $\pi$ on $\mathcal P$.
  Hence, there exists a mixing time $\tau$ independent of initial state, such that, for all $k>\tau$,
  $$
  \bigl\|\mu_k(\cdot\mid\rho_0) - \pi\bigr\|_{TV}
  \le \tfrac12 \pi\bigl(\mathcal{B}(\rho_{\min}, \delta)\bigr).
  $$
  It follows that for any $t>\tau$,
  \begin{align*}
    \Prob{V(\rho_t) \geq  V_{\min} + \epsilon }
    & \leq  \mu_t\left(  \rho_t \notin  \mathcal{B}(\rho_{\min},\delta) \mid \rho_0  \right) \\
    & \leq  1- \pi(   \mathcal{B}(\rho_{\min},\delta)   ) + \frac{1}{2}\pi(   \mathcal{B}(\rho_{\min},\delta)   ) \\
    & = 1- \frac{1}{2}\pi(   \mathcal{B}(\rho_{\min},\delta)   ) .
  \end{align*}
  Partition the first $T$ steps into $k = \bigl\lfloor T/\tau\bigr\rfloor$ blocks of length $\tau$, and define
  $$
  A_j = \bigcap_{t=(j-1)\tau+1}^{j\tau} \{V(\rho_t)\ge V_{\min}+\epsilon\}.
  $$
  Then for any $j \ge 1$, we have
  $$
  \mathbb{P}\bigl(A_j \mid \rho_{(j-1)\tau}\bigr)
  \le \Prob{V(\rho_{j\tau}) \geq  V_{\min} + \epsilon }
  \le 1 - \tfrac12 \pi\bigl(\mathcal B(\rho_{\min}, \delta)\bigr),
  $$
  and hence by the Markov property,
  \begin{align*}
    \mathbb{P}\Bigl(\min_{1\le t\le T}V(\rho_t) < V_{\min}+\epsilon \mid  \rho_0\Bigr)
    & = 1 - \mathbb{P}\Bigl(\bigcap_{j=1}^k A_j\Bigr) \\
    & = 1 - \prod_{j=1}^{k} \mathbb{P}\Bigl( A_j   \mid A_{j-1},\cdots,A_1 , \rho_0 \Bigr) \\
    & = 1 - \prod_{j=1}^{k} \mathbb{P}\Bigl( A_j   \mid \rho_{(j-1)\tau} \Bigr) \\
    & \ge 1 - \Bigl(1 - \tfrac12 \pi\bigl(\mathcal B(\rho_{\min}, \delta)\bigr)\Bigr)^k \\
    & \to 1
    \quad\text{as }T\to\infty.
  \end{align*}
  Since $V_{\min}$ is feasible, this shows $\min_{1\le t\le T}V(\rho_t)\to V_{\min}$ in probability.
  An analogous argument applies to the maximum, yielding $\max_{1\le t\le T}V(\rho_t)\to V_{\max}$ in probability.
\end{proof}

\subsection{Proof of Theorem \ref{thm: a.s. convergence in Markov}}

\begin{proof}
  We prove the claim for $\widehat V_{\min}$; the argument for $\widehat V_{\max}$ is identical.
  By continuity of $V$ on compact $\mathcal P$, there exists $\rho_*\in\mathcal P$ with $V(\rho_*)=V_{\min}$.  This $\rho_*$ is also a local minimizer.
  Assumption \ref{asp: OPT} guarantees a radius $\delta>0$ such that
  $$
  \norm{\rho-\rho_*}_{L^2(\Omega,\nu)}\le\delta
  \quad\Longrightarrow\quad
  \texttt{OPT}_{\min}(\rho)   =\rho_*.
  $$
  Denote $B=\{\rho: \norm{\rho-\rho_*}_{L^2(\Omega,\nu)}\le\delta\}$.  Since $B\subset\mathcal P$ has positive volume, the uniform stationary distribution $\pi$ of the hit-and-run chain satisfies $\pi(B)>0$.
  Hence, the ergodic property of hit-and-run implies that
  $\rho_t\in B$ infinitely often with probability one.
  Whenever $\rho_t\in B$, we have $\rho_{min,t}=\texttt{OPT}_{\min}(\rho_t)=\rho_*$ and so
  $
  V(\rho_{min,t})=V(\rho_*)=V_{\min}.
  $
  By the Borel-Cantelli lemma , almost surely there is some finite $T_0$ such that for all $T\ge T_0$,
  $\widehat V_{\min}(T)=V_{\min}$.  Hence $\widehat V_{\min}(T) \xrightarrow{\text{a.s.}} V_{\min}$.
\end{proof}

\subsection{Proof of Proposition \ref{prop: hausdorff convergence for extended feasible set}}

\begin{proof}
  We work in the space of probability densities with the $L^2(\Omega,\nu)$ norm $\norm{\rho_1 - \rho_2}_{L^2(\Omega,\nu)} = \left( \int_\Omega |\rho_1 - \rho_2|^2 \mathrm{d}\nu \right)^{1/2}$.
  To prove Hausdorff convergence, it suffices to show:
  $$
  \limsup_{N\to\infty}\mathcal{P}^{(N)}_{\epsilon_N} \subseteq \mathcal{P}
  \quad\text{and}\quad
  \mathcal{P} \subseteq \liminf_{N\to\infty}\mathcal{P}^{(N)}_{\epsilon_N}.
  $$

  \paragraph{Upper semicontinuity.}
  Let $\rho_N \in \mathcal{P}^{(N)}_{\epsilon_N}$ and $\rho_N \to \rho$ in $L^2(\Omega,\nu)$.

  \textit{Step 1: Show $\rho$ is a probability density.}
  Since $\nu(\Omega) < \infty$, $L^2$ convergence implies $L^1$ convergence:
  $$
  \norm{\rho_N - \rho}_{L^1(\Omega,\nu)} \leq \sqrt{\nu(\Omega)} \cdot \norm{\rho_N - \rho}_{L^2(\Omega,\nu)} \to 0.
  $$
  Thus $\rho \geq 0$ $\nu$-a.e. (by a.e. convergence of a subsequence), and
  $$
  \left| \int_\Omega \rho(\bm{x})  \mathrm{d}\nu(\bm{x}) - 1 \right| = \left| \int_\Omega (\rho(\bm{x}) - \rho_N(\bm{x})) \mathrm{d}\nu(\bm{x}) \right| \leq \int_\Omega |\rho(\bm{x}) - \rho_N(\bm{x})| \mathrm{d}\nu(\bm{x}) = \norm{\rho - \rho_N}_{L^1(\Omega,\nu)} \to 0,
  $$
  so $\int_\Omega \rho(\bm{x})  \mathrm{d}\nu(\bm{x}) = 1$.

  \textit{Step 2: Equality constraints ($i \in [m]$).}
  By the Cauchy-Schwarz inequality and $\alpha_i \in L^2$:
  \begin{align*}
    \left| \int_\Omega \alpha_i(\bm{x}) \rho(\bm{x})  \mathrm{d}\nu(\bm{x}) - \beta_i \right|
    &\leq  \left| \int_\Omega \alpha_i(\bm{x}) (\rho(\bm{x}) - \rho_N(\bm{x}))  \mathrm{d}\nu(\bm{x}) \right| \\
    & \qquad + \left| \int_\Omega \alpha_i(\bm{x}) \rho_N(\bm{x})  \mathrm{d}\nu(\bm{x}) - \widehat{\beta}^{(N)}_i \right|
    + \left| \widehat{\beta}^{(N)}_i - \beta_i \right| \\
    &\leq  \norm{\alpha_i}_{L^2(\Omega,\nu)} \norm{\rho - \rho_N}_{L^2(\Omega,\nu)}  + \epsilon_N + \epsilon_N \to 0.
  \end{align*}

  \textit{Step 3: Inequality constraints ($j \in [m'] \backslash [m]$).}
  Similarly:
  \begin{align*}
    \int_\Omega \alpha_j(\bm{x}) \rho(\bm{x})  \mathrm{d}\nu(\bm{x})
    &= \int_\Omega \alpha_j(\bm{x}) \rho_N(\bm{x})  \mathrm{d}\nu(\bm{x}) + \int_\Omega \alpha_j(\bm{x}) (\rho(\bm{x}) - \rho_N(\bm{x}))  \mathrm{d}\nu(\bm{x}) \\
    &\leq (\widehat{\beta}^{(N)}_j + \epsilon_N) +  \left| \int_\Omega \alpha_j(\bm{x}) (\rho(\bm{x}) - \rho_N(\bm{x}))  \mathrm{d}\nu(\bm{x}) \right| \\
    &\leq \widehat{\beta}^{(N)}_j + \epsilon_N +  \norm{\alpha_j}_{L^2(\Omega,\nu)} \norm{\rho - \rho_N}_{L^2(\Omega,\nu)} \\
    &\leq \beta_j + 2\epsilon_N +  \norm{\alpha_j}_{L^2(\Omega,\nu)} \norm{\rho - \rho_N}_{L^2(\Omega,\nu)} \to \beta_j.
  \end{align*}
  Thus $\rho \in \mathcal{P}$, proving $\limsup_{N\to\infty}\mathcal{P}^{(N)}_{\epsilon_N} \subseteq \mathcal{P}$.

  \paragraph{Lower semicontinuity.}
  Fix any $\rho \in \mathcal{P}$. Set $\rho_N = \rho$ for all $N$. Then:
  \begin{align*}
    & \norm{\rho_N - \rho}_{L^2(\Omega,\nu)} = 0, \\
    & \left| \int \alpha_i(\bm{x}) \rho_N(\bm{x}) \mathrm{d}\nu(\bm{x}) - \widehat{\beta}^{(N)}_i \right| \leq \epsilon_N, \quad i \in [m], \\
    & \int \alpha_j(\bm{x}) \rho_N(\bm{x}) \mathrm{d}\nu(\bm{x}) \leq \widehat{\beta}^{(N)}_j + \epsilon_N,\quad  j \in [m']\backslash [m].
  \end{align*}
  So $\rho_N \in \mathcal{P}^{(N)}_{\epsilon_N}$, and thus $\rho \in \liminf_{N\to\infty}\mathcal{P}^{(N)}_{\epsilon_N}$.

  Combining both inclusions gives the result.
\end{proof}

\subsection{Proof of Proposition~\ref{prop: hausdorff-rate}}
Define constants:
\begin{align*}
  &A_{\max} = \max_{i \in [m']} \|\alpha_i\|_{L^2(\Omega, \nu)}, \\
  &\gamma = \min_{j \in [m'] \backslash [m]} \gamma_j, \\
  &G_{ij} = \int_{\Omega} \alpha_i \alpha_j \mathrm{d}\nu \quad \text{for} \quad \forall i,j \in [m], \\
  &K_0 = \sqrt{\opnorm{G^{-1}}}, \\
  &C_1 = \frac{2(A_{\max} K_0 \sqrt{m} + 1)}{\gamma}, \\
  &C_2 = \frac{2\sqrt{\nu(\Omega)} K_0 \sqrt{m}}{\delta}, \\
  &M = \max_{\rho \in \mathcal{K}} \norm{\rho}_{L^2(\Omega, \nu)}, \\
  &L_H = 2K_0 \sqrt{m} + 2M(\max(C_1, C_2) + 1).
\end{align*}

\begin{proof}
  We prove the Hausdorff distance bound by establishing two components.

  \paragraph*{Part 1: Upper bound ($\sup_{\sigma_N \in \cPeps{N}} \operatorname{dist}(\sigma_N, \mathcal{P}) \leq K \epsilon_N$).}
  Fix $\sigma_N \in \cPeps{N}$. By definition, $\sigma_N$ satisfies:
  \begin{align}
    &\int_{\Omega} \sigma_N(\bm{x}) \mathrm{d}\nu(\bm{x}) = 1 \label{eq:norm-const} \\
    &\abs{\int_{\Omega} \alpha_i(\bm{x}) \sigma_N(\bm{x}) \mathrm{d}\nu(\bm{x}) - \widehat{\beta}^{(N)}_i} \leq \epsilon_N, \quad i = 2,\dots,m \label{eq:eq-const} \\
    &\int_{\Omega} \alpha_j(\bm{x}) \sigma_N(\bm{x}) \mathrm{d}\nu(\bm{x}) \leq \widehat{\beta}^{(N)}_j + \epsilon_N, \quad j \in [m'] \backslash [m] \label{eq:ineq-const}
  \end{align}
  Using $\abs{\beta_i - \widehat{\beta}^{(N)}_i} \leq \epsilon_N$, the constraint violation $\Delta(\sigma_N)$ is bounded as:
  \begin{align*}
    \Delta(\sigma_N) &= \max \left( \max_{2 \leq i \leq m} \abs{\int \alpha_i(\bm{x}) \sigma_N(\bm{x}) \mathrm{d}\nu(\bm{x}) - \beta_i}, \max_{j \in [m'] \backslash [m]} \max\left(0, \int \alpha_j(\bm{x}) \sigma_N(\bm{x}) \mathrm{d}\nu(\bm{x}) - \beta_j\right) \right) \\
    &\leq \max \left( \max_{2 \leq i \leq m} (\epsilon_N + \epsilon_N), \max_{j \in [m'] \backslash [m]} (\epsilon_N + \epsilon_N) \right) \\
    &= 2\epsilon_N.
  \end{align*}

  We now construct $\rho \in \mathcal{P}$ such that $\|\sigma_N - \rho\|_{L^2(\Omega,\nu)} \leq K \epsilon_N$.

  \subparagraph*{Step 1: Correct equality constraints (excluding normalization).}
  Define the affine subspace for the non-normalization equality constraints:
  $$
  \mathcal{S} = \left\{ \eta \in L^2(\Omega, \nu) : \int_{\Omega} \eta(\bm{x}) \mathrm{d}\nu(\bm{x}) = 1, \int_{\Omega} \alpha_i(\bm{x}) \eta(\bm{x}) \mathrm{d}\nu(\bm{x}) = \beta_i, \quad i=2,\dots,m \right\}.
  $$
  Let $V=\mathrm{span}\{\alpha_1,\alpha_2,\cdots,\alpha_m\}$.
  Since the Gram matrix $G$ for $\{\alpha_1, \alpha_2, \dots, \alpha_m\}$ is invertible by (ii),
  $\mathscr{A}_{V}$ (the restriction of $\mathscr{A}$ in $V$) is one-to-one.
  Define $\bm{d} \in \mathbb{R}^{m}$:
  $$
  d_i = \beta_{i} - \int_{\Omega} \alpha_{i}(\bm{x}) \sigma_N(\bm{x}) \mathrm{d}\nu(\bm{x}), \quad i \in [m].
  $$
  Note $\|\bm{d}\|_\infty \leq \Delta(\sigma_N) \leq 2\epsilon_N$.
  From open mapping theorem, the map $\mathscr{A}_{V}$ has a continuous inverse on its image, and thus the preimage $\mathscr{A}_{V}^{-1}(\bm{d})$ is well-defined.
  Moreover, $\mathscr{A}_{V}^{-1}$ is linear so its operator norm is bounded: $\norm{\mathscr{A}_{V}^{-1}}\leq K_0$.
  Let $w = \mathscr{A}_{V}^{-1}(\bm{d})$ be the $L^2$-norm solution to:
  $$
  \int_{\Omega} \alpha_i(\bm{x}) w(\bm{x}) \mathrm{d}\nu(\bm{x}) = d_i, \quad i \in [m].
  $$
  By properties of Gram matrices, $\|w\|_{L^2(\Omega,\nu)} \leq K_0 \|\bm{d}\|_2 \leq K_0 \sqrt{m} \cdot 2\epsilon_N$. Define:
  $$
  \sigma_{\mathcal{S}} = \sigma_N + w.
  $$
  This satisfies $\sigma_{\mathcal{S}} \in \mathcal{S}$ (including normalization, as $\int w(\bm{x}) \mathrm{d}\nu(\bm{x}) = 0$ by linear independence) and:
  \begin{equation}\label{eq:sigma_distance}
    \|\sigma_N - \sigma_{\mathcal{S}}\|_{L^2(\Omega,\nu)} = \|w\|_{L^2(\Omega,\nu)} \leq 2K_0 \sqrt{m} \epsilon_N.
  \end{equation}

  \subparagraph*{Step 2: Convex combination with Slater point.}
  Define $\rho_\lambda = (1-\lambda) \sigma_{\mathcal{S}} + \lambda \rho^*$ for $\lambda \in [0,1]$. Since $\sigma_{\mathcal{S}}, \rho^* \in \mathcal{S}$, we have $\rho_\lambda \in \mathcal{S}$ for all $\lambda$. We choose $\lambda$ to control constraints and ensure $\lambda \to 0$ as $\epsilon_N \to 0$.

  \textit{Control inequality constraints:}
  For $j \in [m'] \backslash [m]$:
  $$
  \int_{\Omega} \alpha_j(\bm{x}) \rho_\lambda(\bm{x}) \mathrm{d}\nu(\bm{x}) = (1-\lambda) \int_{\Omega} \alpha_j(\bm{x}) \sigma_{\mathcal{S}}(\bm{x}) \mathrm{d}\nu(\bm{x}) + \lambda \int_{\Omega} \alpha_j(\bm{x}) \rho^*(\bm{x}) \mathrm{d}\nu(\bm{x}).
  $$
  Using the triangle inequality and Cauchy-Schwarz:
  \begin{align*}
    \abs{\int_{\Omega} \alpha_j(\bm{x}) \sigma_{\mathcal{S}}(\bm{x}) \mathrm{d}\nu(\bm{x}) - \beta_j} &\leq \abs{\int_{\Omega} \alpha_j(\bm{x}) \sigma_{\mathcal{S}}(\bm{x}) \mathrm{d}\nu(\bm{x}) - \int_{\Omega} \alpha_j(\bm{x}) \sigma_N(\bm{x}) \mathrm{d}\nu(\bm{x})} \\
    & \qquad + \abs{\int_{\Omega} \alpha_j(\bm{x}) \sigma_N(\bm{x}) \mathrm{d}\nu(\bm{x}) - \beta_j} \\
    &\leq \|a_j\|_{L^2(\Omega,\nu)} \|\sigma_{\mathcal{S}} - \sigma_N\|_{L^2(\Omega,\nu)} + \Delta(\sigma_N) \\
    &\leq A_{\max} \cdot 2K_0 \sqrt{m} \epsilon_N + 2\epsilon_N \\
    &= 2(A_{\max} K_0 \sqrt{m} + 1) \epsilon_N.
  \end{align*}
  Thus:
  $$
  \int_{\Omega} \alpha_j(\bm{x}) \rho_\lambda(\bm{x}) \mathrm{d}\nu(\bm{x}) \leq (1-\lambda) [\beta_j + 2(A_{\max} K_0 \sqrt{m} + 1) \epsilon_N] + \lambda (\beta_j - \gamma).
  $$
  Set $\lambda_1 = \min\left(1, \frac{2(A_{\max} K_0 \sqrt{m} + 1) \epsilon_N}{\gamma}\right)$. Then:
  $$
  \int_{\Omega} \alpha_j(\bm{x}) \rho_\lambda(\bm{x}) \mathrm{d}\nu(\bm{x}) \leq \beta_j.
  $$

  \textit{Ensure non-negativity:}
  Since $\rho^* \geq \delta > 0$ $\nu$-a.e., we control the negative part of $\sigma_{\mathcal{S}}$. By (\ref{eq:sigma_distance}):
  $$
  \|(\sigma_{\mathcal{S}})^{-}\|_{L^1(\Omega,\nu)} \leq \|w^{-}\|_{L^1(\Omega,\nu)} \leq \|w\|_{L^1(\Omega,\nu)} \leq \sqrt{\nu(\Omega)} \|w\|_{L^2(\Omega,\nu)} \leq 2\sqrt{\nu(\Omega)} K_0 \sqrt{m} \epsilon_N.
  $$
  Set $\lambda_2 = \min\left(1, \frac{2\sqrt{\nu(\Omega)} K_0 \sqrt{m} \epsilon_N}{\delta}\right)$. Then $\rho_\lambda \geq 0$.

  Take $\lambda = \max(\lambda_1, \lambda_2)$. Since $\epsilon_N \to 0$, for large $N$, $\lambda \leq C \epsilon_N$ where:
  $$
  C = \max\left( \frac{2(A_{\max} K_0 \sqrt{m} + 1)}{\gamma}, \frac{2\sqrt{\nu(\Omega)} K_0 \sqrt{m}}{\delta} \right).
  $$

  \subparagraph*{Step 3: Distance bound.}
  Take $\rho = \rho_\lambda \in \mathcal{P}$. Then:
  $$
  \|\sigma_N - \rho\|_{L^2(\Omega,\nu)} \leq \|\sigma_N - \sigma_{\mathcal{S}}\|_{L^2(\Omega,\nu)} + \|\sigma_{\mathcal{S}} - \rho\|_{L^2(\Omega,\nu)} \leq 2K_0 \sqrt{m} \epsilon_N + \lambda \|\sigma_{\mathcal{S}} - \rho^*\|_{L^2(\Omega,\nu)}.
  $$
  From previous construction, we know
  \begin{align*}
    \|\sigma_{\mathcal{S}} - \rho^*\|_{L^2(\Omega,\nu)} & \leq \|\sigma_{\mathcal{S}}\|_{L^2(\Omega,\nu)} + \|\rho^*\|_{L^2(\Omega,\nu)} \\
    & \leq \|\sigma_N\|_{L^2(\Omega,\nu)} + \|w\|_{L^2(\Omega,\nu)} + M \\
    & \leq M + 2K_0 \sqrt{m} \epsilon_N + M \leq 2M + 2K_0 \sqrt{m}.
  \end{align*}
  Thus:
  $$
  \lambda \|\sigma_{\mathcal{S}} - \rho^*\|_{L^2(\Omega,\nu)} \leq C \epsilon_N (2M + 2K_0 \sqrt{m}) = 2C(M + K_0 \sqrt{m}) \epsilon_N.
  $$
  Combining:
  $$
  \|\sigma_N - \rho\|_{L^2(\Omega,\nu)} \leq 2K_0 \sqrt{m} \epsilon_N + 2C(M + K_0 \sqrt{m}) \epsilon_N \leq L_H \epsilon_N,
  $$
  where $L_H = 2K_0 \sqrt{m} + 2C(M + K_0 \sqrt{m})$. Taking supremum:
  \begin{equation}
    \sup_{\sigma_N \in \cPeps{N}} \inf_{\rho \in \mathcal{P}} \norm{\sigma_N - \rho}_{L^2(\Omega,\nu)} \leq L_H \epsilon_N.
    \label{eq:hausdorff-1}
  \end{equation}

  \paragraph*{Part 2: Lower bound ($\sup_{\rho \in \mathcal{P}} \operatorname{dist}(\rho, \cPeps{N}) = 0$).}
  Fix $\rho \in \mathcal{P}$. Set $\sigma_N = \rho$. Then:
  \begin{itemize}
    \item $\int_{\Omega} \sigma_N(\bm{x}) \mathrm{d}\nu(\bm{x}) = 1$.
    \item For $i = 2,\dots,m$: $\abs{\int \alpha_i(\bm{x}) \sigma_N(\bm{x}) \mathrm{d}\nu(\bm{x}) - \widehat{\beta}^{(N)}_i} = \abs{\beta_i - \widehat{\beta}^{(N)}_i} \leq \epsilon_N$.
    \item For $j \in [m'] \backslash [m]$: $\int \alpha_j(\bm{x}) \sigma_N(\bm{x}) \mathrm{d}\nu(\bm{x}) \leq \beta_j \leq \widehat{\beta}^{(N)}_j + \epsilon_N$.
    \item $\sigma_N(\bm{x}) \geq 0$.
  \end{itemize}
  Thus $\sigma_N \in \cPeps{N}$, and:
  $$
  \inf_{\sigma_N \in \cPeps{N}} \norm{\rho - \sigma_N}_{L^2(\Omega,\nu)} = 0.
  $$
  Taking supremum:
  \begin{equation}\label{eq:hausdorff-2}
    \sup_{\rho \in \mathcal{P}} \inf_{\sigma_N \in \cPeps{N}} \norm{\rho - \sigma_N}_{L^2(\Omega,\nu)} = 0.
  \end{equation}

  Combining \eqref{eq:hausdorff-1} and \eqref{eq:hausdorff-2}, we have:
  $$
  d_H(\cPeps{N}, \mathcal{P}) \leq L_H \epsilon_N.
  $$
\end{proof}

\subsection{Proof of Theorem \ref{thm: convergence of empirical bounds}}

\begin{proof}
  Since $V$ is continuous on the compact set $\mathcal{K}$, it attains its minimum and maximum on each $\mathcal{P}_{\epsilon_N}^{(N)}$ and on $\mathcal{P}$.
  We prove the convergence of the maxima; the argument for the minima is analogous.

  \paragraph{Upper bound: $\limsup V_{\max}^{(N)} \le V_{\max}$.}
  Fix $\varepsilon>0$.  By uniform continuity of $V$ on $\mathcal{K}$, there exists $\delta>0$ such that
  $$
  \|\rho - \rho'\|_{L^2(\Omega,\nu)} < \delta
  \quad\Longrightarrow\quad
  |V(\rho) - V(\rho')| < \varepsilon.
  $$
  Since $d_H(\mathcal{P}_{\epsilon_N}^{(N)},\mathcal{P})\to0$, for all sufficiently large $N$, every maximizer $\rho_N\in\mathcal{P}_{\epsilon_N}^{(N)}$ (so $V(\rho_N)=V_{\max}^{(N)}$) admits some $\rho\in\mathcal{P}$ with $\|\rho_N - \rho\|_{L^2(\Omega,\nu)}<\delta$.  Hence
  $$
  V_{\max}^{(N)}
  = V(\rho_N)
  \le V(\rho) + \varepsilon
  \le V_{\max} + \varepsilon,
  $$
  and taking $\limsup_{n\to\infty}$ gives
  $\limsup V_{\max}^{(N)} \le V_{\max} + \varepsilon$.  Since $\varepsilon$ is arbitrary,
  $\limsup V_{\max}^{(N)} \le V_{\max}.$

  \paragraph{Lower bound: $\liminf V_{\max}^{(N)} \ge V_{\max}$.}
  Let $\rho^*\in\mathcal{P}$ satisfy $V(\rho^*)=V_{\max}$.  By Hausdorff convergence, there exist points $\rho_N\in\mathcal{P}_{\epsilon_N}^{(N)}$ with $\rho_N\to \rho^*$.  Continuity of $V$ then implies $V(\rho_N)\to V(\rho^*)$.  Therefore for large $N$,
  $$
  V_{\max}^{(N)}
  \ge V(\rho_N)
  > V_{\max} - \varepsilon,
  $$
  so $\liminf V_{\max}^{(N)} \ge V_{\max} - \varepsilon$.  Letting $\varepsilon\to0$ yields
  $\liminf V_{\max}^{(N)} \ge V_{\max}.$

  Combining the two bounds gives $\lim_{N\to\infty}V_{\max}^{(N)}=V_{\max}$.  An identical argument, using a minimizer of $V$ on $\mathcal{P}$, shows
  $\lim_{N\to\infty}V_{\min}^{(N)}=V_{\min}$.
\end{proof}

\subsection{Proof of Theorem \ref{thm: difference of empirical bounds and true bounds}}

\begin{proof}
  Since $\mathcal{K}$ is compact and $V$ is Lipschitz continuous (hence continuous) on $\mathcal{K}$, it attains its minimum and maximum on both $\mathcal{P}$ and $\mathcal{P}_{\epsilon_N}^{(N)}$. We prove the bound for the maxima; the argument for the minima is analogous.

  \paragraph{Upper bound: $V_{\max}^{(N)} \leq V_{\max} + L_V L_H \epsilon_N$.}
  By the Hausdorff distance condition $d_H(\mathcal{P}_{\epsilon_N}^{(N)}, \mathcal{P}) \leq L_H \epsilon_N$, for any $\rho_N \in \mathcal{P}_{\epsilon_N}^{(N)}$, there exists $\rho \in \mathcal{P}$ such that
  $$
  \|\rho_N - \rho\|_{L^2(\Omega,\nu)} \leq L_H \epsilon_N.
  $$
  Let $\rho_{\max}^{(N)} \in \mathcal{P}_{\epsilon_N}^{(N)}$ be a maximizer satisfying $V(\rho_{\max}^{(N)}) = V_{\max}^{(N)}$. Then there exists $\rho' \in \mathcal{P}$ with
  $$
  \|\rho_{\max}^{(N)} - \rho'\|_{L^2(\Omega,\nu)} \leq L_H \epsilon_N.
  $$
  By Lipschitz continuity of $V$:
  $$
  |V(\rho_{\max}^{(N)}) - V(\rho')| \leq L_V \|\rho_{\max}^{(N)} - \rho'\|_{L^2(\Omega,\nu)} \leq L_V L_H \epsilon_N.
  $$
  Thus,
  $$
  V_{\max}^{(N)} = V(\rho_{\max}^{(N)}) \leq V(\rho') + L_V L_H \epsilon_N \leq V_{\max} + L_V L_H \epsilon_N,
  $$
  since $V(\rho') \leq V_{\max}$.

  \paragraph{Lower bound: $V_{\max}^{(N)} \geq V_{\max} - L_V L_H \epsilon_N$.}
  Let $\rho_{\max} \in \mathcal{P}$ be a maximizer satisfying $V(\rho_{\max}) = V_{\max}$. By Hausdorff distance, there exists $\rho_N^* \in \mathcal{P}_{\epsilon_N}^{(N)}$ such that
  $$
  \|\rho_{\max} - \rho_N^*\|_{L^2(\Omega,\nu)} \leq L_H \epsilon_N.
  $$
  By Lipschitz continuity:
  $$
  |V(\rho_N^*) - V(\rho_{\max})| \leq L_V \|\rho_N^* - \rho_{\max}\|_{L^2(\Omega,\nu)} \leq L_V L_H \epsilon_N.
  $$
  Thus,
  $$
  V(\rho_N^*) \geq V_{\max} - L_V L_H \epsilon_N.
  $$
  Since $V_{\max}^{(N)}$ is the maximum over $\mathcal{P}_{\epsilon_N}^{(N)}$:
  $$
  V_{\max}^{(N)} \geq V(\rho_N^*) \geq V_{\max} - L_V L_H \epsilon_N.
  $$

  Combining both bounds:
  $$
  |V_{\max}^{(N)} - V_{\max}| \leq L_V L_H \epsilon_N.
  $$
  An identical argument, using a minimizer of $V$ on $\mathcal{P}$, shows
  $|V_{\min}^{(N)} - V_{\min}| \leq L_V L_H \epsilon_N$.
\end{proof}

\section{Proofs for Section \ref{sec: transfer MAB valid bounds}}

\subsection{Proof of Theorem \ref{thm: number of pulling in MAB}}

\begin{proof}
  We discuss the three cases of $h(a)$ in the statement of the theorem.
  \begin{itemize}
    \item \textbf{Case 1}: If $h(a) < \max_{i\in\mathcal{A}} l(i)$, then arm $a$ is eliminated up-front, and thus $\mean{n_a(T)} = 0$.
    \item \textbf{Case 2}:  $\max_{i\in\mathcal{A}} l(i) \leq h(a) < \mu^*$. Recall that $a^* =  \argmax_{a\in\mathcal{A}} \mean{Y \mid \mathrm{do}(A=a)}$ denote the optimal action. We define the following event
      $$
      \mathcal{E}(t) = \left\{  \widehat{\mu}_{a} \in \left[  {\mu}_{a} - \frac{2\sigma_a^2\log (2t/\delta)}{n_{a}(t)} ,  {\mu}_{a} +\frac{2\sigma_a^2\log (2t/\delta)}{n_{a}(t)}  \right]  , \forall a\in\mathcal{A}   \right\},
      $$
      then the Bernstein's inequality yields
      $$
      \Prob{ \overline{\mathcal{E}(t) }  } \leq \sum_{a\in\mathcal{A}} \exp\biggl(  - n_{a}(t) \times \frac{2 \sigma_a^2\log (2t/\delta) }{2\sigma_a^2 n_{a}(t)}   \biggr) \leq \frac{|\mathcal{A}|\delta}{t}.
      $$
      By the design of the algorithm, the event $\{A_t=a\}$ implies that
      $$h(a)  \geq  \widehat{U}_{a}(t) > \widehat{U}_{a^*}(t).$$
      However, if $\mathcal{E}(t)$ holds, then $\mu^* >  h(a)  \geq  \widehat{U}_{a}(t) $ and $\widehat{U}_{a^*}(t) \geq \mu^*$, which leads to a contradiction.
      Therefore, if $\mathcal{E}(t)$ holds, then $A_t \neq a$, hence
      \begin{align*}
        \mean{n_a(T)} = \sum_{t=1}^{T} \Prob{ A_t = a } &= \sum_{t=1}^{T} \Prob{ A_t = a \mid \mathcal{E}(t)} \Prob{ \mathcal{E}(t) } + \mathbb{P} \Bigl( A_t = a \,\Bigm|\, \overline{\mathcal{E}(t)} \Bigr) \Prob{ \overline{\mathcal{E}(t)}}  \\
        & \leq \sum_{t=1}^{T}   \Prob{ \overline{\mathcal{E}(t) }}  \leq \sum_{t=1}^{T} \frac{|\mathcal{A}|\delta}{t} \leq  |\mathcal{A}| .
      \end{align*}
    \item \textbf{Case 3}: Fix a suboptimal arm $a\neq a^*$ with $h(a)\ge\mu^*$, and let
      $$
      \mathcal{E}'(t) = \biggl\{\forall a\in\mathcal A:\;
        \bigl|\widehat\mu_a(t)-\mu_a\bigr|\le
      \sqrt{\frac{2\sigma_a^2\log (2t/\delta)}{n_a(t)}}\biggr\}.
      $$

      By Bernstein's inequality and a union bound,
      $$
      \mathbb{P}\bigl(\overline{\mathcal{E}'(t)}\bigr)
      \le
      \sum_{a\in\mathcal{A}} \exp\biggl(  - n_{a}(t) \times \frac{2 \sigma_a^2\log (2t/\delta) }{2\sigma_a^2 n_{a}(t)}   \biggr) \leq \frac{|\mathcal{A}|\delta}{t},
      $$
      so $\sum_{t=1}^T\mathbb{P}(\overline{\mathcal{E}'(t)})\leq |\mathcal{A}|$.

      Condition on the event $\mathcal{E}'(t) $,
      if $n_{a}(t) \geq 8\sigma_a^2\log T / \Delta_{a}^2$,
      then
      $$
      \widehat{U}_{a}(t) \leq U_{a}(t) = \mu_{a} + \sqrt{ \frac{2\sigma_a^2\log (2t/\delta) }{ n_{a}(t) }  } \leq \mu_{a} + \frac{1}{2}\Delta_{a} = \mu^* \leq \widehat{U}_{a^*}(t),
      $$
      so the algorithm will not choose the action $a$ at the round $t$.
      Finally, write
      \begin{align*}
        \mathbb{E}[n_a(T)] =\sum_{t=1}^T\mathbb{P}(a_t=a)
        & \le \sum_{t=1}^T\mathbb{P}\bigl(\overline{\mathcal{E}'(t)}\bigr) + \sum_{t=1}^T\mathbb{P}\bigl(\mathcal{E}'(t),\,a_t=a\bigr)  \le |\mathcal{A}| + \frac{8\sigma_a^2\log T}{\Delta_{a}^2}.
      \end{align*}
  \end{itemize}
  This completes the proof.
\end{proof}

\subsection{Proof of Theorem \ref{thm:instance_LB_UB_MAB_with_causal_bounds}}

\begin{proof}{Proof of upper bound in \cref{thm:instance_LB_UB_MAB_with_causal_bounds}.}
  Note that
  $$
  \mathbb{E}[\mathrm{Reg}(T)]
  = \sum_{a:\,\Delta_a>0} \mathbb{E}[n_a(T)]\,\Delta_a
  = \sum_{\substack{a\in\mathcal{A}^*\\\Delta_a>0}} \mathbb{E}[n_a(T)]\,\Delta_a,
  $$
  where Theorem~\ref{thm: number of pulling in MAB} gives $\mathbb{E}[n_a(T)]=0$ for $a\notin\mathcal{A}^*$.
  Moreover, that same theorem implies
  $$
  \mathbb{E}[n_a(T)] =
  \begin{cases}
    |\mathcal{A}|, & h(a)<\mu^*,\\
    \displaystyle\frac{8\sigma_a^2 \log T}{\Delta_a^2}, & h(a)\ge\mu^*.
  \end{cases}
  $$
  Hence
  $$
  \mathbb{E}[\mathrm{Reg}(T)]
  = \sum_{\substack{a\in\mathcal{A}^*\\h(a)<\mu^*}} |\mathcal{A}|\,\Delta_a
  + \sum_{\substack{a\in\mathcal{A}^*\\h(a)\ge\mu^*}} \frac{8\sigma_a^2\log T}{\Delta_a^2}\,\Delta_a
  = \mathcal{O}\Bigl(\sum_{a\in\widetilde{\mathcal{A}}^*:\,\Delta_a>0}\frac{\log T}{\Delta_a}\Bigr),
  $$
  as claimed. $\hfill \square$
\end{proof}

\begin{proof}{Proof of lower bound in \cref{thm:instance_LB_UB_MAB_with_causal_bounds}.}
  Fix any suboptimal arm $a\in\widetilde{\mathcal{A}}^*$ with gap $\Delta_a=\mu^*-\mu_a>0$.  Define two bandit instances $P$ and $Q$ by
  $$
  \mu_i^P=\mu_i^Q=\mu_i\quad(i\neq a),\qquad
  \mu_a^P=\mu_a,\quad \mu_a^Q=\mu^*.
  $$
  Let $n_a(T)$ be the number of pulls of $a$ up to time $T$, and set $E=\{n_a(T)\le T/2\}$.  Under $P$, each pull of $a$ incurs regret $\Delta_a$, so
  $$
  \mathrm{Reg}_P(T) \ge \Delta_a\,n_a(T).
  $$
  Under $Q$, each non-pull of $a$ incurs $\Delta_a$, hence
  $$
  \mathrm{Reg}_Q(T) \ge \Delta_a\,(T-n_a(T))
  \ge \frac{T\Delta_a}{2}\,\ind{E^c}.
  $$
  Adding gives
  $$
  \mathrm{Reg}_P(T)+\mathrm{Reg}_Q(T)
  \ge
  \frac{T\Delta_a}{2}\bigl(\ind{E}+\ind{E^c}\bigr)
  =\frac{T\Delta_a}{2}.
  $$
  Taking expectations and invoking the Bretagnolle-Huber inequality yields
  $$
  \mathbb{E}_P[\mathrm{Reg}_P(T)]+\mathbb{E}_Q[\mathrm{Reg}_Q(T)]
  \ge
  \frac{T\Delta_a}{2}\bigl(P(E)+Q(E^c)\bigr)
  \ge
  \frac{T\Delta_a}{4}
  \exp\bigl(-\mathrm{KL}(\mathbb{P}_P\|\mathbb{P}_Q)\bigr).
  $$
  From the assumed consistency condition, $\mathbb{E}_Q[\mathrm{Reg}_Q(T)]\le cT^p$ for some $c>0$ and $0<p<1$.
  It follows that
  $$
  \mathbb{E}_P[\mathrm{Reg}_P(T)]
  \ge
  \frac{T\Delta_a}{4}
  \exp\bigl(-\mathrm{KL}(\mathbb{P}_P\|\mathbb{P}_Q)\bigr).
  $$
  By the chain rule for KL divergences,
  $$
  \mathrm{KL}(\mathbb{P}_P\|\mathbb{P}_Q)
  =\mathbb{E}_P\bigl[n_a(T)\bigr]\;\mathrm{KL}\bigl(P_a\|Q_a\bigr),
  $$
  and for Bernoulli arms one has $\mathrm{KL}(P_a\|Q_a)=\Theta(\Delta_a^2)$.  Rearranging gives
  $$
  \mathbb{E}_P[n_a(T)]
  \ge
  \Omega\Bigl(\frac{\ln T}{\Delta_a^2}\Bigr),
  $$
  so
  $$
  \mathbb{E}_P[\mathrm{Reg}_P(T)]
  \ge
  \Delta_a\,\mathbb{E}_P[n_a(T)]
  \ge
  \Omega\Bigl(\frac{\ln T}{\Delta_a}\Bigr).
  $$
  Summing over all $a\in\widetilde{\mathcal{A}}^*$ completes the proof. $\hfill \square$
\end{proof}

\subsection{Proof of Theorem \ref{thm:minimax_UB_MAB_with_causal_bounds}}

\begin{proof}
  Since all actions outside $\mathcal{A}^*$ cannot be optimal,
  the classical UCB analysis yields the first term.
  For certain action $a$, its confidence width is at most $w$.
  Hence, summing over all rounds yields the second term.
\end{proof}

\subsection{Proof of Theorem \ref{thm:minimax_LB_MAB_with_causal_bounds}}
\begin{proof}
  For notaton brevity, denote $M = |\mathcal{A}^*| $.
  Since all arms in $\mathcal{A}^*$ can be optimal,
  we only construct worst-case instances where arms in $\mathcal{A}^*$ have means in $[l(a), h(a)]$
  and arms outside $\mathcal{A}^*$ are clear suboptimal by setting $\mu_a = l(a)$.
  We further assume that $\mu_0>\frac{1}{2}$.
  If $\mu_0\leq \frac{1}{2}$, we can replace the following $\mu_0$ with $\frac{1}{2}$.

  \textbf{Case 1 (Weak Prior): $w \geq \kappa^{-1}\sqrt{M/T}$}

  Set $\Delta = \frac{1}{2}\sqrt{\frac{M}{T}} $. Construct two instances for arms in $\mathcal{A}^*$:
  \begin{itemize}
    \item \textbf{Instance $P$:}
      \begin{align*}
        &\text{Arm } 1: \mu_1 = \mu_0 + \Delta \\
        &\text{Arms } a = 2,\dots,M: \mu_a = \mu_0
      \end{align*}
    \item \textbf{Instance $Q$:}
      \begin{align*}
        &\text{Arm } 1: \mu_1 = \mu_0 + \Delta \\
        &\text{Arm } 2: \mu_{2} = \mu_0 + 2\Delta \\
        &\text{Other arms } a \notin \{1, 2\}: \mu_a = \mu_0
      \end{align*}
  \end{itemize}
  \textbf{Prior Compliance:} For $a \in \mathcal{A}^*$, $\mu_a \in [\mu_0, \mu_0 + \kappa w]$.
  Since $2\Delta \leq \kappa w$ (as $ \kappa w \geq \sqrt{M/T}$), all $\mu_a$ satisfies
  $$
  \mu_a \in [\mu_0, \mu_0 + \kappa w] \subset [l(a), h(a)].
  $$

  By pigeonhole principle, $\exists a'$ with $\mathbb{E}_{P}[N_{a'}(T)] \leq T/(M-1)$. Define $E = \{ N_T(1) < T/2 \}$. For Bernoulli rewards:
  $$
  KL(P_{a'} \| Q_{a'}) = KL\left(\mathrm{Bern}\left(\mu_0\right) \,\|\, \mathrm{Bern}\left(\mu_0 + 2\Delta\right)\right) \leq \frac{(2\Delta)^2}{\mu_0 \cdot \mu_0} = \frac{4}{\mu_0^2}\Delta^2.
  $$
  The chain rule gives:
  $$
  KL(P_\pi \| Q_\pi) \leq \mathbb{E}_{P}[N_{a'}(T)] \cdot \frac{4}{\mu_0^2}\Delta^2 \leq \frac{T}{M-1} \cdot \mu_0^2\Delta^2 = \frac{4T}{\mu_0^2(M-1)} \cdot \frac{M}{4T} = \frac{M}{\mu_0^2(M-1)} \leq 2\mu_0^{-2}.
  $$
  By Bretagnolle-Huber inequality, we have
  $$
  \mathbb{P}_{P}(E) + \mathbb{P}_{P}(E^c) \geq \tfrac{1}{2} e^{-2\mu_0^{-2}} = C_1.
  $$
  Regret decomposition yields
  \begin{align*}
    \mathbb{E}_{P}[\mathrm{Reg}(T)] &\geq \mathbb{P}_{P}(E) \cdot \Delta \cdot (T/2) \\
    \mathbb{E}_{P}[\mathrm{Reg}(T)] &\geq \mathbb{P}_{P}(E^c) \cdot \Delta \cdot (T/2)
  \end{align*}
  since under the instance $P$, the event $E$ implies the suboptimal pull is larger than $T/2$ with gap $\geq \Delta$,
  and under $Q$, $E^c$ implies arm 1 (gap $\Delta$) is pulled $\geq T/2$ times.
  We combine these inequalities:
  $$
  \mathbb{E}_{P}[\mathrm{Reg}(T)] + \mathbb{E}_{P}[\mathrm{Reg}(T)] \geq \tfrac{\Delta T}{2} C_1 = \tfrac{C_1}{4} \sqrt{MT}.
  $$
  Thus, $\max\left\{ \mathbb{E}_{P}[\mathrm{Reg}(T)], \mathbb{E}_{P}[\mathrm{Reg}(T)] \right\} \geq \Omega\left( \sqrt{MT} \right)$.

  \textbf{Case 2 (Strong Prior): $w < \kappa^{-1}\sqrt{M/T}$}

  Set $\Delta = \kappa w$. Construct:
  \begin{itemize}
    \item \textbf{Instance $P$:}
      \begin{align*}
        &\text{Arm } 1: \mu_1 = \mu_0 + \Delta \\
        &\text{Arms } a = 2,\dots,M: \mu_a = \mu_0
      \end{align*}
    \item \textbf{Instance $Q$:}
      \begin{align*}
        &\text{Arm } 1: \mu_1 = \mu_0 \\
        &\text{Arm } 2: \mu_{2} = \mu_0 + \Delta \\
        &\text{Other arms } a \in \mathcal{A}^* \setminus \{1, 2\}: \mu_a = \mu_0
      \end{align*}
  \end{itemize}
  \textbf{Prior Compliance:} Similar to Case 1, $\mu_a \in [\mu_0, \mu_0 + \Delta]$ for $a \in \mathcal{A}^*$ implies the satisfication.

  Choose $a'$ with $\mathbb{E}_{P}[N_{a'}(T)] \leq T/(M-1)$. Define $E = \{ N_T(1) \geq T/2 \}$. For Bernoulli rewards:
  \begin{align*}
    KL(P_1 \| Q_1) &= KL\left(\mathrm{Bern}\left(\mu_0 + \Delta\right) \|\, \mathrm{Bern}\left(\mu_0\right)\right) \leq \frac{4\Delta^2}{\mu_0 \cdot \mu_0} = \frac{2\kappa^2}{\mu_0^2}w^2 \\
    KL(P_{a'} \|\, Q_{a'}) &= KL\left(\mathrm{Bern}\left(\mu_0\right) \| \mathrm{Bern}\left(\mu_0 + \Delta\right)\right) \leq \frac{2\kappa^2}{\mu_0^2}w^2.
  \end{align*}
  The chain rule gives:
  $$
  KL(P_\pi \| Q_\pi) \leq \mathbb{E}_{P}\left[N_T(1) \cdot \frac{2\kappa^2}{\mu_0^2}w^2 + N_{a'}(T) \cdot \frac{2\kappa^2}{\mu_0^2}w^2\right] \leq \frac{2\kappa^2}{\mu_0^2} w^2 \left( T + \frac{T}{M-1} \right) \leq \frac{2M^2}{\mu_0^2(M-1)}.
  $$
  Bretagnolle-Huber yields:
  $$
  \mathbb{P}_{P}(E) + \mathbb{P}_{P}(E^c) \geq \tfrac{1}{2} e^{-\frac{2M^2}{\mu_0^2(M-1)}} = C_2.
  $$
  Similar reasoning as Case 1 gives:
  \begin{align*}
    \text{Under } P: \quad & \text{If } E^c \text{ (i.e., } N_T(1) < T/2\text{)}, \mathrm{Reg}(T)  \geq \Delta \cdot (T/2) \\
    \text{Under } Q: \quad & \text{If } E \text{ (i.e., } N_T(1) \geq T/2\text{)},  \mathrm{Reg}(T) \geq \Delta \cdot (T/2)
  \end{align*}
  since arm 1 has gap $\Delta$ in both cases. Therefore, we have
  $$
  \mathbb{E}_{P}[\mathrm{Reg}(T)] + \mathbb{E}_{P}[\mathrm{Reg}(T)] \geq \tfrac{\kappa C_2 w T}{2}.
  $$
  Hence, $\max\left\{ \mathbb{E}_{P}[\mathrm{Reg}(T)], \mathbb{E}_{P}[\mathrm{Reg}(T)] \right\} \geq \Omega\left( wT \right)$.
\end{proof}

\section{Proofs for Section \ref{sec: transfer noisy}}\label{app: proofs for experimental data}



\subsection{Proof of Lemma \ref{lem: warm-start UCB event}}

\begin{proof}
  Given the number of pulls $n_a(t)$, the rewards for the arm $a$ is conditionally independent.
  Now, note that
  $$
  \mathbb{E}[\widehat{\mu}^{\epsilon}_a(t)] = \frac{n_a(t) \mu_a +  \epsilon^{-2}_a(\delta) \widehat{h}(a)}{n_a(t) +  \epsilon^{-2}_a(\delta)}.
  $$
  Note that ${\sigma}_a^2 = \max_{\mu \in [ l(a), h(a)]}  \mu(1-\mu) $ is the true maximum variance and $\prob{\widehat{\sigma}_a^2\geq {\sigma}_a^2}\geq 1-\delta$.
  Applying Bernstein's inequality, we have
  \begin{align*}
    \bigg|\widehat{\mu}^{\epsilon}_a(t) - \frac{n_a(t) \mu_a +  \epsilon^{-2}_a(\delta) \widehat{h}(a)}{n_a(t) +  \epsilon^{-2}_a(\delta)}\bigg|
    & \le \frac{n_{a}(t)}{n_{a}(t)+ \epsilon^{-2}_{a}(\delta)}\sqrt{\frac{ 2{\sigma}_a^2 \log\bigl(2/\delta\bigr)}{ n_{a}(t) }}\\
    & \le \frac{n_{a}(t)}{n_{a}(t)+ \epsilon^{-2}_{a}(\delta)}\sqrt{\frac{ 2\widehat{\sigma}_a^2 \log\bigl(2/\delta\bigr)}{ n_{a}(t) }}
  \end{align*}
  with probability at least $1 - 2\delta$.

  Now we apply the union bound for all $n_a(t) \le t$, we have
  $$
  \mathbb{P}\left(
    \bigg|\widehat{\mu}^{\epsilon}_a(t) - \frac{n_a(t) \mu_a +  \epsilon^{-2}_a(\delta) \widehat{h}(a)}{n_a(t) +  \epsilon^{-2}_a(\delta)}\bigg|
    \le  \frac{n_{a}(t)}{n_{a}(t)+ \epsilon^{-2}_{a}(\delta)}\sqrt{\frac{ 2\widehat{\sigma}_a^2 \log\bigl(2t/\delta\bigr)}{ n_{a}(t) }}
  \right)
  \geq 1 - 2\delta.
  $$
  Since $\mu_a \leq h(a)$, the inequalities above implies
  \begin{align*}
    & |\widehat{\mu}^{\epsilon}_a(t) - \mu_a   | \\
    & \leq \left|\widehat{\mu}^{\epsilon}_a(t) - \frac{ n_{a}(t) \mu_a + \epsilon^{-2}_a(\delta) \widehat{h}(a) }{n_{a}(t) + \epsilon^{-2}_a(\delta) }  \right| +  \left|\frac{  \epsilon^{-2}_a(\delta) ( \widehat{h}(a) - h(a) ) }{n_{a}(t) + \epsilon^{-2}_a(\delta) }\right| + \left|\frac{  \epsilon^{-2}_a(\delta) h(a)+ \mu_a n_{a}(t)}{n_{a}(t) + \epsilon^{-2}_a(\delta) }- \mu_a \right| \\
    & \leq  \frac{n_{a}(t)}{n_{a}(t)+ \epsilon^{-2}_{a}(\delta)}\sqrt{\frac{ 2\widehat{\sigma}_a^2 \log\bigl(2t/\delta\bigr)}{ n_{a}(t) }} + \frac{ \epsilon^{-1}_a(\delta) }{n_{a}(t)+ \epsilon^{-2}_{a}(\delta)} + \frac{  \epsilon^{-2}_a(\delta) (h(a)-\mu_a) }{n_{a}(t) + \epsilon^{-2}_a(\delta) } \\
    & \leq \sqrt{\frac{ 2\widehat{\sigma}_a^2\log\bigl(2t/\delta\bigr)  + 1  }{ n_{a}(t)+ \epsilon^{-2}_{a}(\delta) }}+ \frac{  \epsilon^{-2}_a(\delta) (h(a)-\mu_a) }{n_{a}(t) + \epsilon^{-2}_a(\delta) }
  \end{align*}
  with probability at least $1 - 2\delta$.

  Consequently, we have the probablity bound
  \begin{align*}
    & \mathbb{P}\left( \bigl|\widehat{\mu}_{a}(t)-\mu_a\bigr|
      \le
      \sqrt{\frac{ 2\widehat{\sigma}_a^2 \log\bigl(2t/\delta\bigr)}{ n_{a}(t)}}, \quad
      |\widehat{\mu}^{\epsilon}_a(t) - \mu_a   |
      \le
      \sqrt{\frac{ 2\widehat{\sigma}_a^2\log\bigl(2t/\delta\bigr)  + 1  }{ n_{a}(t)+ \epsilon^{-2}_{a}(\delta) }}+ \frac{  \epsilon^{-2}_a(\delta) (h(a)-\mu_a) }{n_{a}(t) + \epsilon^{-2}_a(\delta) }
    \right) \\
    &\geq 1 - 2| \widehat{\mathcal{A}}|\delta.
  \end{align*}
\end{proof}

\subsection{Proof of Theorem \ref{thm: regret upper bound of MAB with noisy causal bounds}}

The proof of \cref{thm: regret upper bound of MAB with noisy causal bounds} relies on the following lemma, which provides an upper bound on the number of suboptimal pulls.
\begin{lemma}
  \label{lem: number of suboptimal pulls in noisy MAB}
  Let $a$ be a sub-optimal arm.
  Conditioning on the event $\mathcal{E}$, if the number of pulls
  \begin{equation}\label{eq:LB suboptimal pulls}
    n_a(t)> \frac{8 L_t - H_a}{\Delta_a^2}, \quad  L_t \triangleq 2\widehat{\sigma}_a^2 \log\bigl(2 t/\delta\bigr),
  \end{equation}
  then we have $a_t\neq a$.
\end{lemma}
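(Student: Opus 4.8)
The plan is to prove the contrapositive-free statement directly: on the event $\mathcal{E}$ (together with the good event of \cref{asp: causal bounds with noisy estimates}, under which the optimal arm $a^\ast=\argmax_a\mu_a$ is retained in $\widehat{\mathcal A}$), I will sandwich the selection scores as
$U_a(t)\wedge U^\epsilon_a(t)<\mu^\ast\le U_{a^\ast}(t)\wedge U^\epsilon_{a^\ast}(t)$.
Since the algorithm pulls $a_t=\argmax_{b\in\widehat{\mathcal A}}\{U_b(t)\wedge U^\epsilon_b(t)\}$ and $a^\ast\in\widehat{\mathcal A}$, this forces $a$ out of the argmax, giving $a_t\neq a$. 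The membership $a^\ast\in\widehat{\mathcal A}$ follows from \eqref{eq: active action set} because $\max_i[\widehat l(i)-\epsilon_i(\delta)]\le\mu^\ast\le\widehat h(a^\ast)+\epsilon_{a^\ast}(\delta)$ on the good event.

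For the optimal arm, $U_{a^\ast}(t)\ge\mu^\ast$ is immediate from the first inequality in $\mathcal{E}$. The delicate bound $U^\epsilon_{a^\ast}(t)\ge\mu^\ast$ I would obtain by re-deriving a one-sided estimate rather than invoking the two-sided bound in $\mathcal{E}$: writing $m=n_{a^\ast}(t)$ and $k=\epsilon_{a^\ast}^{-2}(\delta)$, the event $\mathcal{E}$ gives $\widehat\mu_{a^\ast}\ge\mu^\ast-\sqrt{L_t/m}$ and the good event gives $\widehat h(a^\ast)\ge h(a^\ast)-\epsilon_{a^\ast}(\delta)\ge\mu^\ast-k^{-1/2}$, so $\widehat\mu^\epsilon_{a^\ast}\ge\mu^\ast-(\sqrt{L_t m}+\sqrt k)/(m+k)$. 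Adding the radius $\sqrt{(L_t+1)/(m+k)}$ and squaring reduces $U^\epsilon_{a^\ast}(t)\ge\mu^\ast$ to the inequality $L_t k+m\ge 2\sqrt{L_t m k}$, which is AM--GM; this is precisely where the additive $+1$ inside the warm-start radius is used.

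For the suboptimal arm I would split on the size of $n_a(t)\Delta_a^2$. On $\mathcal{E}$ we have $U_a(t)\le\mu_a+2\sqrt{L_t/n_a(t)}$, so if $n_a(t)\Delta_a^2>4L_t$ then $U_a(t)<\mu_a+\Delta_a=\mu^\ast$ and the conventional index suffices. Otherwise $n_a(t)\Delta_a^2\le 4L_t$, and the hypothesis $n_a(t)\Delta_a^2+H_a>8L_t$ forces $H_a=kg^2>4L_t$, where $g=(\mu^\ast-h(a))_+$. In this regime I turn to the warm-start index: $\mathcal{E}$ yields $U^\epsilon_a(t)\le\mu_a+2\sqrt{(L_t+1)/(n_a+k)}+k(h(a)-\mu_a)/(n_a+k)$, and since $h(a)-\mu_a=\Delta_a-g$ one checks $(n_a+k)\Delta_a-k(h(a)-\mu_a)=n_a\Delta_a+kg$, so $U^\epsilon_a(t)<\mu^\ast$ is equivalent to $n_a\Delta_a+kg>2\sqrt{(L_t+1)(n_a+k)}$. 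After squaring it suffices to show $(n_a\Delta_a+kg)^2>4(L_t+1)(n_a+k)$, which I would verify termwise using $k^2g^2=k\,(kg^2)>4L_t k$ for the $k$-term and the structural fact $g\le\Delta_a$ (immediate from the validity $\mu_a\le h(a)$) to get $2n_a k\Delta_a g\ge 2n_a kg^2>8L_t n_a$ for the $n_a$-term.

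The main obstacle is this last quadratic estimate: it is what genuinely couples the conventional and warm-start indices and explains the trade-off $(8L_t-H_a)/\Delta_a^2$, and its key nonobvious ingredient is the ordering $g=\mu^\ast-h(a)\le\mu^\ast-\mu_a=\Delta_a$, valid only because the causal upper bound is correct. A secondary nuisance is the additive $+1$ inside the radius, which leaves a gap of order $k$ when matching $4(L_t+1)k$ against $4L_t k$; I would absorb it either by carrying the harmless refinement $kg^2>4(L_t+1)$ through the case split (changing $4L_t$ to $4(L_t+1)$ alters only constants) or by invoking $L_t\ge 1$, so that the surplus $4n_a(L_t-1)+n_a^2\Delta_a^2$ in the $n_a$-term covers the residual. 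Combining the two halves gives $U_a(t)\wedge U^\epsilon_a(t)<\mu^\ast\le U_{a^\ast}(t)\wedge U^\epsilon_{a^\ast}(t)\le\max_{b\in\widehat{\mathcal A}}\{U_b(t)\wedge U^\epsilon_b(t)\}$, so $a$ cannot be the maximizer and $a_t\neq a$.
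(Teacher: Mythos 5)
Your plan is correct in strategy and takes a genuinely different---and in places more complete---route than the paper. The paper's proof splits on the arm-level quantity $H_a$: when $H_a \le 8L_t$ it invokes only the conventional index, silently strengthening the hypothesis to $n_a(t) > 8L_t/\Delta_a^2$ (the stated threshold $(8L_t - H_a)/\Delta_a^2$ is strictly weaker whenever $H_a>0$), and when $H_a > 8L_t$ it invokes only the warm-start index via $\epsilon_a^{-2}(\delta) > 8L_t/(\mu^*-h(a))^2$, never using $n_a(t)$ at all. The two indices are thus never coupled, and the intermediate regime $0 < H_a \le 8L_t$ with $(8L_t-H_a)/\Delta_a^2 < n_a(t) \le 8L_t/\Delta_a^2$---exactly the regime that justifies the interpolated threshold---is not actually covered by the paper's argument. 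Your split on $n_a(t)\Delta_a^2$ versus $4L_t$, combined with the quadratic estimate $(n_a\Delta_a + kg)^2 > 4(L_t+1)(n_a+k)$ resting on the ordering $g \le \Delta_a$ (valid by $\mu_a \le h(a)$), is what genuinely proves the trade-off. Similarly, your one-sided AM--GM derivation of $U^{\epsilon}_{a^*}(t)\ge\mu^*$ fills a real gap: the two-sided bound in \cref{lem: warm-start UCB event} has its bias term pointing the unhelpful way for the optimal arm (since $h(a^*)\ge\mu^*$), so the paper's bare assertion of this inequality requires precisely your argument, and you correctly identify that the $+1$ in the warm-start radius exists to absorb the cross term via $L_t k + m \ge 2\sqrt{L_t k m}$.

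One caveat on the residual constant gap you flag: it is real, but your second patch does not close it as stated---the surplus $4n_a(L_t-1)+n_a^2\Delta_a^2$ lives in the $n_a$-term and cannot cover the $4k$ residual in the $k$-term when $k \gg n_a$ (take $g\to0$ with $k=\epsilon_a^{-2}(\delta)\to\infty$ and $kg^2$ just above $4L_t$; then the needed inequality degenerates to $kg^2 > 4(L_t+1)$ while the hypothesis supplies only $kg^2 > 4L_t$). In that sliver neither index closes at the exact stated constants, and the same corner defeats the paper's own proof, which moreover drops the $+1$ from the warm-start radius in its Case 2 algebra. The clean resolution is a variant of your first suggestion: with the threshold $8L_t$ replaced by $8(L_t+1)$ throughout (split Case A/B at $n_a\Delta_a^2 \gtrless 4(L_t+1)$), your termwise verification goes through verbatim---Case B then yields $kg^2 > 4(L_t+1)$ directly, and $n_a\Delta_a^2 + 2kg^2 > 8(L_t+1) > 4(L_t+1)$ handles the $n_a$-term. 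Since \cref{thm: regret upper bound of MAB with noisy causal bounds} is stated up to constants, this adjustment is immaterial downstream; at the level of rigor the paper itself adopts, your proof is sound and in fact repairs two lacunae in the original.
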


\begin{proof}{Proof of \cref{thm: regret upper bound of MAB with noisy causal bounds}.}
  Recall that $\widehat{\mathcal{E}}$ denote the ``good'' event that $\widehat{\mathcal{A}}$ in \eqref{eq: active action set} retains the best arm.
  Set $\delta = T^{-1}$ in \cref{asp: causal bounds with noisy estimates} and \cref{lem: warm-start UCB event} and apply \cref{lem: number of suboptimal pulls in noisy MAB}, we have
  \begin{align*}
    \mathbb{E}[\mathrm{Reg}(T)]
    & = \sum_{a: \Delta_a > 0} \mathbb{E}\left[ \sum_{t=1}^{T} \ind{a_t = a} \cdot \left( \mu^* - \mu_a \right) \right]  \\
    & \leq \sum_{a \in \widehat{\mathcal{A}}: \Delta_a > 0} \mathbb{E}\left[ \sum_{t=1}^{T} \ind{a_t = a} \cdot \Delta_a \mid \mathcal{E}\cap \widehat{\mathcal{E}} \right] + \max_a \Delta_a T (\mathbb{P}(\overline{\mathcal{E}}) +\mathbb{P}(\overline{\widehat{\mathcal{E}}}) ) \\
    & = \sum_{a \in \widehat{\mathcal{A}}: \Delta_a > 0} \mathbb{E}\left[ n_a(T) \Delta_a   \mid \mathcal{E}\cap \widehat{\mathcal{E}}  \right] + 4 |\mathcal{A}| \max_a \Delta_a\\
    & \leq \sum_{a \in \widehat{\mathcal{A}}: \Delta_a > 0} \frac{(8 L_T - H_a)_+}{\Delta_a} + 4 |\mathcal{A}| \max_a \Delta_a, \\
    & = \mathcal{O}\left(\sum_{a \in \widehat{\mathcal{A}}: \Delta_a > 0} \frac{\left(16 \widehat{\sigma}_a^2\log\bigl( 2| T^2\bigr) - H_a\right)_+}{\Delta_a}\right).
  \end{align*}
  $\hfill \square$
\end{proof}

\begin{proof}{Proof of \cref{lem: number of suboptimal pulls in noisy MAB}.}
  We discuss two cases based on the magnitude of $H_a$.
  We condition throughout on the ``good'' event $\mathcal{E}$ from Lemma~\ref{lem: warm-start UCB event}.

  \paragraph{Case 1: $ H_a \le 8L_t$.}
  Then the lower bound \eqref{eq:LB suboptimal pulls} reduces to
  $
  n_a(t) > 8L_t/\Delta_a^{2}.
  $
  On $\mathcal{E}$,
  $$
  U_a(t)
  =\widehat\mu_a(t)+\sqrt{\frac{L_t}{n_a(t)}}
  \le \mu_a+2\sqrt{\frac{L_t}{n_a(t)}}
  < \mu_a+\frac{\Delta_a}{2}<\mu^{*}.
  $$
  For the optimal arm $a^{*}$ the same event gives
  $U_{a^{*}}(t)\ge\mu^{*}$ and $U^{\epsilon}_{a^{*}}(t)\ge\mu^{*}$.
  Hence
  $
  \min\{U_a(t),U^{\epsilon}_a(t)\}\le U_a(t)<\mu^{*}<
  \min\{U_{a^{*}}(t),U^{\epsilon}_{a^{*}}(t)\},
  $
  so $a_t\neq a$.

  \paragraph{Case 2:} $H_{a} > 8 L_t$.  In this case, the definition of $H_a$ implies
  $$
  n_{a}(t) + \epsilon_a^{-2} \ge \epsilon_a^{-2}  >  \frac{8 L_t}{(\mu^{*}-h(a))^{2}}.
  $$
  On the event $\mathcal{E}$, we have
  \begin{align*}
    \widehat\mu^{\epsilon}_{a}(t)
    = \frac{ n_{a}(t) \widehat\mu_{a}(t) + \epsilon_a^{-2} \widehat h(a) }{ n_{a}(t)+\epsilon_a^{-2} }
    & \le  \sqrt{\frac{ L_t  }{ n_{a}(t)+ \epsilon^{-2}_{a}(\delta) }}+ \frac{  \epsilon^{-2}_a(\delta) h(a)  + n_{a}(t)\mu_a }{n_{a}(t) + \epsilon^{-2}_a(\delta) }.
  \end{align*}
  Since $n_{a}(t) + \epsilon_a^{-2} > 8 L_t/(\mu^{*}-h(a))^{2}$, we get
  $$
  \sqrt{\frac{ L_t}{n_{a}(t)+\epsilon_a^{-2} }}
  < \frac{\mu^{*}-h(a)}{2}.
  $$
  Finally, recall from the definition \eqref{eq:warm-start UCB}, we have
  \begin{align*}
    U^{\epsilon}_{a}(t)
    & = \widehat{\mu}^{\epsilon}_a(t) + \sqrt{ \frac{L_t}{n_{a}(t) + \epsilon^{-2}_a(\delta)} } \\
    & <  \frac{  \epsilon^{-2}_a(\delta) h(a)  + n_{a}(t)\mu_a }{n_{a}(t) + \epsilon^{-2}_a(\delta) } + 2 \sqrt{ \frac{L_t}{n_{a}(t) + \epsilon^{-2}_a(\delta)} } \\
    & < \frac{  \epsilon^{-2}_a(\delta) h(a)  + n_{a}(t)\mu_a }{n_{a}(t) + \epsilon^{-2}_a(\delta) } + \mu^{*}-h(a) \\
    & = \mu^{*} - \frac{ n_{a}(t) }{ n_{a}(t)+\epsilon_a^{-2} } (h(a) - \mu_a) \leq \mu^*,
  \end{align*}
  where we used the assumption that $h(a) \geq \mu_a$. Hence
  $$
  \min\{ U_{a}(t), U^{\epsilon}_{a}(t)\}  \le  U^{\epsilon}_{a}(t)  < \mu^{*}
  < \min\{ U_{a^{*}}(t), U^{\epsilon}_{a^{*}}(t)\},
  $$
  which shows $a_{t}\neq a$.

  In both cases the stated threshold on $n_a(t)$ prevents arm $a$ from being selected, completing the proof.
  $\hfill \square$
\end{proof}

\subsection{Proof of Theorem \ref{thm: worst-case regret bound of MAB with noisy causal bounds}}

We also establish the worst-case regret bound for the \cref{alg: MAB with noisy causal bounds}.

\begin{theorem}\label{thm: worst-case regret bound of MAB with noisy causal bounds}
  With probability at least $1 - \delta$, $\mathrm{Reg}(T)$ is upper bounded by
  $$
  \mathcal{O}\left(
    \min\Biggl\{
      \max_{a\in \widehat{\mathcal{A}}} \widehat{\sigma}_a \sqrt{ |\widehat{\mathcal{A}}|T \log\left(4|\widehat{\mathcal{A}}|T/ \delta\right)} ,
      \biggl[
        T \sqrt{\frac{ \log(4|\widehat{\mathcal{A}}|T/\delta)}{\tau^*}} +  \sum_{a\in \widehat{\mathcal{A}}} \frac{h(a)-l(a)}{\epsilon^{2}_{a}(\delta) }  \log\Bigl(\kappa_a(T) \Bigr)
      \biggr]
    \Biggr\}
  \right),
  $$
  where $
  \kappa_a(T) \triangleq \frac{(h(a)-l(a))\left( T + \sum_{a'\in \widehat{\mathcal{A}}}  \epsilon^{-2}_{a'}(\delta) \right)}{\sum_{a'\in \widehat{\mathcal{A}}} (h(a')-l(a'))  \epsilon^{-2}_{a'}(\delta)} =\mathcal{O}(T)$
  and $\tau^*$ is the solution to $\sum_{a\in \widehat{\mathcal{A}}} (\widehat{\sigma}_a^2 \tau -  \epsilon^{-2}_{a}(\delta))_+  = T.$
\end{theorem}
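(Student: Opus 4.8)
The plan is to carry out the analysis on a single high-probability ``good'' event and then decompose the per-round regret through whichever of the two confidence indices is smaller, which produces the two competing bounds inside the minimum. First I would intersect the event $\mathcal{E}$ of Lemma~\ref{lem: warm-start UCB event} with the elimination event $\widehat{\mathcal{E}}$ that $a^*\in\widehat{\mathcal{A}}$; together they hold with probability at least $1-\mathcal{O}(|\widehat{\mathcal{A}}|\delta)$ after a union bound over $t\le T$ and $a\in\widehat{\mathcal{A}}$, and rescaling $\delta$ recovers the stated $1-\delta$. On this event I would establish the optimism property $U_{a^*}(t)\wedge U^{\epsilon}_{a^*}(t)\ge\mu^*$: the conventional part $U_{a^*}(t)\ge\mu^*$ is immediate from validity of the UCB, while $U^{\epsilon}_{a^*}(t)\ge\mu^*$ uses that $h(a^*)\ge\mu^*$ biases $\widehat{\mu}^{\epsilon}_{a^*}$ upward, so the extra ``$+1$'' inside the warm-start width exactly absorbs the residual $\widehat h$-estimation error (this is the role of the $+1$ term in Lemma~\ref{lem: warm-start UCB event}). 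Since $a_t$ maximizes $U_a(t)\wedge U^{\epsilon}_a(t)$, this yields the instantaneous-regret sandwich
$$\mu^*-\mu_{a_t}\ \le\ \bigl(U_{a_t}(t)\wedge U^{\epsilon}_{a_t}(t)\bigr)-\mu_{a_t}\ \le\ \min\bigl\{U_{a_t}(t)-\mu_{a_t},\,U^{\epsilon}_{a_t}(t)-\mu_{a_t}\bigr\}.$$

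For the first term of the bound I would follow the standard UCB route, exactly as in the proof of Theorem~\ref{thm:minimax_UB_MAB_with_causal_bounds}: bound $U_{a_t}(t)-\mu_{a_t}\le 2\sqrt{2\widehat{\sigma}_{a_t}^2\log(2t/\delta)/n_{a_t}(t)}$, regroup the cumulative sum by arm so that $\sum_{s=1}^{n_a(T)} s^{-1/2}\le 2\sqrt{n_a(T)}$, and apply Cauchy--Schwarz over the $|\widehat{\mathcal{A}}|$ surviving arms subject to $\sum_a n_a(T)=T$. This gives $\max_{a\in\widehat{\mathcal{A}}}\widehat{\sigma}_a\sqrt{|\widehat{\mathcal{A}}|T\log(4|\widehat{\mathcal{A}}|T/\delta)}$.

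For the second (causal) term I would instead use $U^{\epsilon}_{a_t}(t)-\mu_{a_t}\le 2\sqrt{L_t/(n_{a_t}(t)+\epsilon_{a_t}^{-2})}+\epsilon_{a_t}^{-2}(h(a_t)-\mu_{a_t})/(n_{a_t}(t)+\epsilon_{a_t}^{-2})$ with $L_t=2\widehat{\sigma}_{a_t}^2\log(2t/\delta)+1$, and split the cumulative regret into a variance part and a bias part. The bias part telescopes per arm via $\sum_{s=1}^{n_a(T)}\epsilon_a^{-2}(h(a)-\mu_a)/(s+\epsilon_a^{-2})\le \epsilon_a^{-2}(h(a)-l(a))\log\bigl((n_a(T)+\epsilon_a^{-2})/\epsilon_a^{-2}\bigr)$, and substituting the worst-case allocation of $n_a(T)$ yields the stated $\sum_a \tfrac{h(a)-l(a)}{\epsilon_a^2}\log\kappa_a(T)$. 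The variance part is the crux: regrouping by arm and using $\sum_{s=1}^{n_a}(s+\epsilon_a^{-2})^{-1/2}\le 2(\sqrt{n_a+\epsilon_a^{-2}}-\epsilon_a^{-1})$, the worst case is the allocation $\{n_a\}$ maximizing $\sum_a\sqrt{L_a}\,\sqrt{n_a+\epsilon_a^{-2}}$ subject to $\sum_a n_a=T$. By concavity the Lagrangian optimum equalizes marginal widths, forcing $n_a+\epsilon_a^{-2}\propto\widehat{\sigma}_a^2\tau^*$ with $\tau^*$ precisely the water level solving $\sum_a(\widehat{\sigma}_a^2\tau-\epsilon_a^{-2})_+=T$; at this allocation the per-round width collapses to the constant $\sqrt{2\log(4|\widehat{\mathcal{A}}|T/\delta)/\tau^*}$, so summing over $T$ rounds gives $\mathcal{O}\bigl(T\sqrt{\log(4|\widehat{\mathcal{A}}|T/\delta)/\tau^*}\bigr)$.

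Because both derivations hold simultaneously on the same event, the regret is bounded by their minimum, which is the claimed expression. I expect the main obstacle to be the water-filling step: rigorously justifying that the adversarial pull-allocation is the one equalizing marginal confidence widths, verifying that the induced uniform per-round width is $\Theta(\sqrt{\log/\tau^*})$, and controlling the $t$-dependence of $L_t$ (replacing $\log(2t/\delta)$ by $\log(2T/\delta)$ uniformly) without loosening the constants. A secondary technical point is the careful bookkeeping showing that the ``$+1$'' in the warm-start width absorbs the $\widehat h$-estimation bias in the optimism check at $a^*$.
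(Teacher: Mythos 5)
Your proposal is correct and follows essentially the same route as the paper's proof: the same good-event intersection $\mathcal{E}\cap\widehat{\mathcal{E}}$, the same optimism-based reduction of instantaneous regret to $2\min\{\mathrm{rad}_{a_t}(t),\mathrm{rad}^{\epsilon}_{a_t}(t)\}$, the standard UCB argument for the first term, and for the second term the same per-arm regrouping followed by a split into bias and variance parts, each bounded over worst-case pull allocations exactly as the paper does in Propositions~\ref{prop: optimization for bias term} and~\ref{prop: optimization for variance term} (the bias term via the logarithmic integral bound and a Lagrangian allocation yielding $\kappa_a(T)$, the variance term via the KKT water-filling solution $n_a^*=(\widehat{\sigma}_a^2\tau^*-\epsilon_a^{-2})_+$ giving $T/\sqrt{\tau^*}$). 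The ``main obstacle'' you flag --- rigorously justifying the water-filling allocation and the uniform replacement of $\log(2t/\delta)$ by $\log(2T/\delta)$ --- is handled in the paper precisely as you sketch, via the continuous relaxation and KKT conditions.
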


\begin{proof}{Proof of \cref{thm: worst-case regret bound of MAB with noisy causal bounds}.}
  We analyze the regret upper bound given the event $\mathcal{E}$ and $\widehat{\mathcal{E}}$, so that the optimal arm is in the best arm candidate set $\widehat{\mathcal{A}}$ and the UCB holds.
  Let $a^*$ denote the arm with the highest expected reward, i.e., $a^* = \argmax_{a \in \mathcal{A}} \mathbb{E}[Y \mid \mathrm{do}(A = a)]$.
  Definte the confidence radii
  $$
  \mathrm{rad}_{a}(t) = \sqrt{\frac{ 2\widehat{\sigma}_a^2 \log(2 t/\delta)}{ n_{a}(t)}} \quad \text{and} \quad
  \mathrm{rad}_{a}^{\epsilon}(t) = \sqrt{ \frac{ 2\widehat{\sigma}_a^2\log(2 t/\delta)}{ n_{a}(t) + \epsilon^{-2}_a(\delta) } }+ \frac{  \epsilon^{-2}_a(\delta) (h(a)-\mu_a) }{n_{a}(t) + \epsilon^{-2}_a(\delta) }.
  $$
  Hence, under events $\mathcal{E}$ and $\widehat{\mathcal{E}}$, we have
  \begin{align*}
    \mathrm{Reg}(T)
    & =  \sum_{t=1}^{T} \bigl( \mathbb{E}[Y \mid \mathrm{do}(A = a^*)] - \mathbb{E}[Y \mid \mathrm{do}(A = a_t)] \bigr) \\
    & \leq \sum_{t=1}^{T} \left[ \min\{U_{a^*}(t), U_{a^*}^{\epsilon}(t)\} - \mathbb{E}[Y \mid \mathrm{do}(A = a_t)] \right] \\
    & \leq \sum_{t=1}^{T}\left[ \min\{U_{a_t}(t), U_{a_t}^{\epsilon}(t)\} - \mathbb{E}[Y \mid \mathrm{do}(A = a_t)]\right] \\
    & \leq \sum_{t=1}^{T} 2\left[\min\left\{ \mathrm{rad}_{a_t}(t), \mathrm{rad}_{a_t}^{\epsilon}(t) \right\} \right],
  \end{align*}
  where the first and last inequality follows from the validity of the UCBs, and the second inequality uses the fact that $a_t$ is chosen to maximize the minimum of the two UCBs and that event $\widehat{\mathcal{E}}$ holds.

  The conventional analysis for the classical UCB $\mathrm{rad}_{a_t}(t)$ shows that
  \begin{equation}\label{eq:conventional UCB regret bound}
    \sum_{t=1}^{T} \mathrm{rad}_{a_t}(t) = \mathcal{O}\left( \max_{a\in \widehat{\mathcal{A}}} \widehat{\sigma}_a \sqrt{  |\widehat{\mathcal{A}}| T \log( T / \delta)} \right).
  \end{equation}
  Hence, we only need to analyze the upper bound related to the warm-start UCB radius $\mathrm{rad}_{a_t}^{\epsilon}(t)$.

  By the definition of $\mathrm{rad}_{a_t}^{\epsilon}(t)$,
  \begin{align*}
    \sum_{t=1}^{T} \mathrm{rad}_{a_t}^{\epsilon}(t)
    & = \sum_{t=1}^{T} \left( \sqrt{\frac{ 2\widehat{\sigma}_a^2 \log(2t/\delta)}{ n_i(t) + \epsilon_i^{-2} }}+ \frac{  \epsilon^{-2}_a(\delta) (h(a)-\mu_a) }{n_{a} + \epsilon^{-2}_a(\delta) } \right) \\
    & \leq  \sum_{a \in \widehat{\mathcal{A}}} \sum_{n_a=1}^{n_a(T)} \left( \sqrt{ \frac{2\widehat{\sigma}_a^2 \log(2T/\delta)}{ n_a +  \epsilon^{-2}_a(\delta) }}+ \frac{  \epsilon^{-2}_a(\delta) (h(a)-\mu_a) }{n_{a} + \epsilon^{-2}_a(\delta) } \right) \\
    & \leq  \sum_{a \in \widehat{\mathcal{A}}} \sum_{n_a=1}^{n_a(T)} \left( \sqrt{\frac{ 2\widehat{\sigma}_a^2\log(2T/\delta)}{ n_a +  \epsilon^{-2}_a(\delta) }}+ \frac{  \epsilon^{-2}_a(\delta) (h(a)-l(a)) }{n_{a} + \epsilon^{-2}_a(\delta) } \right) ,
  \end{align*}
  where by the design of our algorithm $\sum_{a \in \widehat{\mathcal{A}}} n_a(T) = T$.
  Note that the last term no longer depends on specific problem instance, but only on the causal bounds $h(a)$ and $l(a)$, which are assumed to be known.

  In \cref{prop: optimization for variance term} and \cref{prop: optimization for bias term}, we establish worse case (with respect to all possible allocation of samples) upper bound on the two terms in the above summation.
  In particular, with $K = |\widehat{\mathcal{A}}|$, $b_i=\epsilon_i^{-2}(\delta)$ and $c_i=h(i)-l(i)$, we have
  \begin{align}
    & \sum_{a \in \widehat{\mathcal{A}}} \sum_{n_a=1}^{n_a(T)} \left( \sqrt{\frac{ 2\widehat{\sigma}_a^2\log(2T/\delta)}{ n_a +  \epsilon^{-2}_a(\delta) }}+ \frac{  \epsilon^{-2}_a(\delta) (h(a)-\mu_a) }{n_{a} + \epsilon^{-2}_a(\delta) } \right) \notag \\
    \leq & T  \sqrt{\frac{ \log(2T/\delta)}{\tau^*}}+  \sum_{a \in \widehat{\mathcal{A}}} (h(a)-l(a)) \epsilon_a^{-2}(\delta) \log\left( \frac{(h(a)-l(a)) \left( T + \sum_{j \in \widehat{\mathcal{A}}} \epsilon_j^{-2}(\delta) \right)}{\sum_{j \in \widehat{\mathcal{A}}} (h(j)-l(j)) \epsilon_j^{-2}(\delta)} \right). \label{eq:warm-start UCB regret bound}
  \end{align}
  Combining \eqref{eq:conventional UCB regret bound} and \eqref{eq:warm-start UCB regret bound}, we obtain the desired regret bound.
\end{proof}

The rest of this section is devoted to proving the two propositions used in the proof of \cref{thm: worst-case regret bound of MAB with noisy causal bounds}.

\begin{proposition}
  \label{prop: optimization for bias term}
  Given the positive constants $b_a$ and $c_a$,
  consider the following optimization problem:
  $$
  \max_{\bm{n}}\,\,\sum_{i=1}^K \sum_{t=1}^{n_i}   \frac{c_i }{ t + b_i}    \quad \text{subject to} \quad \sum_{i=1}^K n_i = T,\quad n_i \in \mathbb{N}.
  $$
  Then its optimal value is upper bounded by      $
  \sum_{i=1}^K b_ic_i \log\left( \frac{c_i \left( T + \sum_{j=1}^K b_j \right)}{\sum_{j=1}^K b_jc_j} \right).
  $

\end{proposition}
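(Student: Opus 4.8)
The plan is to fix a feasible integer allocation $\bm n=(n_1,\dots,n_K)$ with $\sum_i n_i=T$, bound the objective allocation-by-allocation with a smooth surrogate, and then maximize that surrogate over the continuous relaxation. For the inner sum, since $t\mapsto 1/(t+b_i)$ is positive and strictly decreasing, I would compare it to the integral $\sum_{t=1}^{n_i}\frac{1}{t+b_i}\le\int_0^{n_i}\frac{\mathrm{d}s}{s+b_i}=\ln\frac{n_i+b_i}{b_i}$. Multiplying by $c_i>0$ and summing over $i$ gives, for every feasible $\bm n$,
\[
  \sum_{i=1}^K\sum_{t=1}^{n_i}\frac{c_i}{t+b_i}\;\le\;\sum_{i=1}^K c_i\,\ln\frac{n_i+b_i}{b_i}.
\]

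The single nontrivial point is reconciling the coefficient $c_i$ produced here with the coefficient $b_ic_i$ in the claimed bound: these two expressions genuinely differ, so I would treat this step explicitly rather than absorb it. Because $n_i\ge 0$ forces $\ln\frac{n_i+b_i}{b_i}\ge 0$, whenever $b_i\ge 1$ one may enlarge each coefficient from $c_i$ to $b_ic_i$ without decreasing the sum, yielding $\sum_i c_i\ln\frac{n_i+b_i}{b_i}\le\sum_i b_ic_i\ln\frac{n_i+b_i}{b_i}$. This is exactly where the stated objective (numerator $c_i$) meets the target expression (numerator $b_ic_i$), and $b_i\ge 1$ is not an extra restriction in our application: there $b_i=\epsilon_i^{-2}(\delta)$ and $c_i=h(i)-l(i)$, and since $\epsilon_i(\delta)\le 1$ (it bounds differences of quantities in $[0,1]$) we have $b_i\ge 1$. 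I would flag that for $b_i<1$ the integral bound already shows the inequality can fail, so this condition is essential and must be stated, not hidden.

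It then remains to maximize the concave surrogate $\Phi(\bm m)=\sum_i b_ic_i\ln\frac{m_i+b_i}{b_i}$ over the affine constraint $\sum_i m_i=T$, dropping the sign constraints (this only enlarges the feasible set, so the resulting value stays a valid upper bound on the integer optimum). Each summand $b_ic_i\ln(m_i+b_i)$ is concave in $m_i$, so the unique stationary point of the Lagrangian is the global maximizer over the hyperplane. Setting $\partial_{m_i}\Phi=\lambda$ gives $\frac{b_ic_i}{m_i+b_i}=\lambda$, i.e. $m_i+b_i=b_ic_i/\lambda$; summing and using $\sum_i(m_i+b_i)=T+\sum_j b_j$ yields $\lambda=\frac{\sum_j b_jc_j}{T+\sum_j b_j}$, hence $\frac{m_i+b_i}{b_i}=\frac{c_i\,(T+\sum_j b_j)}{\sum_j b_jc_j}$. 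Substituting back produces exactly $\sum_i b_ic_i\ln\frac{c_i(T+\sum_j b_j)}{\sum_j b_jc_j}$, the claimed bound.

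Chaining the three displays — the integral bound, the coefficient enlargement under $b_i\ge 1$, and the closed-form maximum of $\Phi$ — closes the argument. The main obstacle, and the step I would emphasize, is the coefficient passage from $c_i$ to $b_ic_i$: it is the only place the two formulas are bridged, it is valid precisely because $b_i\ge 1$, and I would justify that hypothesis from the estimation-radius interpretation $b_i=\epsilon_i^{-2}(\delta)$ rather than rescaling the objective. The concluding concave maximization is then routine, and feasibility of the log (positivity of $m_i+b_i$ at the optimum) is automatic since all $b_i,c_i>0$.
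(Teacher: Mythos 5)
Your argument follows the paper's own proof in its two main steps: the integral comparison $\sum_{t=1}^{n_i}\frac{1}{t+b_i}\le\int_0^{n_i}\frac{\mathrm{d}s}{s+b_i}=\log\frac{n_i+b_i}{b_i}$, and then the concave (Lagrangian) maximization of the surrogate over the relaxed hyperplane $\sum_i m_i=T$ with the identical stationarity computation $\lambda=\frac{\sum_j b_jc_j}{T+\sum_j b_j}$ and the same closed form; the paper even drops the sign constraints via the same footnoted relaxation you use. The genuine difference is exactly the step you isolate. The paper's proof passes from the integral bound directly to the surrogate $\sum_i b_ic_i\log\frac{n_i+b_i}{b_i}$ with no justification --- from the objective as stated (numerator $c_i$) the integral comparison only yields coefficients $c_i$, so the paper's displayed surrogate does not follow as written. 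Inspecting how the proposition is invoked in the proof of \cref{thm: worst-case regret bound of MAB with noisy causal bounds} resolves this: the summand actually being bounded there is $\frac{\epsilon_a^{-2}(\delta)\,(h(a)-l(a))}{n_a+\epsilon_a^{-2}(\delta)}$, i.e.\ with $b_i=\epsilon_i^{-2}(\delta)$ and $c_i=h(i)-l(i)$ the intended objective has numerator $b_ic_i$, under which the integral bound produces the coefficients $b_ic_i$ immediately and no side condition is needed; the statement of the proposition carries a typo. Your alternative repair --- keep the numerator $c_i$ and enlarge $c_i\le b_ic_i$ using $\log\frac{n_i+b_i}{b_i}\ge 0$ at the (nonnegative, pre-relaxation) integer allocation, under the added hypothesis $b_i\ge 1$ --- is valid, and your observation that the literal statement fails for small $b_i$ is correct: for $K=1$ and $b_1\to 0$ the claimed bound $b_1c_1\log\frac{c_1(T+b_1)}{b_1c_1}\to 0$ while the objective stays of order $c_1\log T$. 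Two caveats on your write-up: first, the upper (integral) bound alone cannot exhibit failure; to make the counterexample rigorous you need the matching lower bound $\sum_{t=1}^{T}\frac{1}{t+b}\ge\log\frac{T+1+b}{1+b}$. Second, $b_i\ge 1$, i.e.\ $\epsilon_i(\delta)\le 1$, is plausible on the $[0,1]$-reward interpretation but is nowhere imposed in \cref{asp: causal bounds with noisy estimates}, so on your route it must be added to the proposition's hypotheses; correcting the objective's numerator to $b_ic_i$ instead matches the downstream application exactly and requires nothing extra.
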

\begin{proof}
  To derive an upper bound for this optimization problem,
  we relax the non-negative integer variables $n_i$ to real variables and analyze the problem using the Lagrangian multiplier method.

  For each item $i$, since $f(t) = \frac{1}{t + b_i}$ is monotonically decreasing, we use the inequality relationship between summation and integration:
  $$
  \sum_{t=1}^{n_i} \frac{1}{t + b_i} \leq \int_{0}^{n_i} \frac{1}{t + b_i} dt = \log\left( \frac{n_i + b_i}{b_i} \right).
  $$
  Thus, the original objective function has an upper bound:
  \begin{equation}\label{eq: upper bound for objective function}
    \sum_{i=1}^K b_ic_i \log\left( \frac{n_i + b_i}{b_i} \right).
  \end{equation}

  Relax $n_i$ to (possibly negative)\footnote{We allow negative values to obtain a simpler and yet slightly looser upper bound.} real numbers and construct the Lagrangian:
  $$
  \mu_0 = \sum_{i=1}^K b_ic_i \log\left( \frac{n_i + b_i}{b_i} \right) - \lambda \left( \sum_{i=1}^K n_i - T \right).
  $$
  Taking the derivative with respect to $n_i$ and setting it to zero:
  $$
  \frac{\partial \mu_0}{\partial n_i} = \frac{b_ic_i}{n_i + b_i} - \lambda = 0 \quad \Rightarrow \quad n_i + b_i = \frac{b_ic_i}{\lambda}.
  $$
  Solving for $n_i$:
  \begin{equation}\label{eq: optimal n}
    n_i = \frac{b_ic_i}{\lambda} - b_i,
  \end{equation}
  which yields a solution
  \begin{equation}\label{eq: optimal lambda}
    \lambda = \frac{\sum_{i=1}^K b_ic_i}{T + \sum_{i=1}^K b_i}.
  \end{equation}

  Substitute \eqref{eq: optimal lambda} and \eqref{eq: optimal n} back into \eqref{eq: upper bound for objective function}, we obtain:
  $$
  \sum_{i=1}^K b_ic_i \log\left( \frac{c_i \left( T + \sum_{j=1}^K b_j \right)}{\sum_{j=1}^K b_jc_j} \right).
  $$
  This gives us the desired upper bound.
\end{proof}

\begin{proposition}
  \label{prop: optimization for variance term}
  Given the positive constants $b_i$ and $c_i$ for $i = 1, \ldots, K$,
  consider the following optimization problem:
  $$
  \max_{ \bm{n} } \,\, \sum_{i=1}^K \sum_{t=1}^{n_i}  \sqrt{ \frac{c_i}{ t+ b_i} }   \quad \text{subject to} \quad \sum_{i=1}^K n_i = T,\quad n_i \in \mathbb{N}.
  $$

  Then its optimal value is upper bounded by  $ T / \sqrt{\tau^*} $,
  where $\tau^*$ is the solution to
  $$
  \sum_{i=1}^K (c_i\tau - b_i)_+ = T.
  $$
\end{proposition}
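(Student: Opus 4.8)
The plan is to read the objective as a greedy water-filling problem and to pin down the optimal allocation through its marginal gains. Write $S_i(n_i)=\sum_{t=1}^{n_i}\sqrt{c_i/(t+b_i)}$, so that the marginal gain of the $t$-th pull of arm $i$ is $g_i(t)=\sqrt{c_i/(t+b_i)}$, which is strictly decreasing in $t$; hence each $S_i$ is concave and the separable maximization of $\sum_i S_i(n_i)$ under the single budget $\sum_i n_i=T$ is solved exactly by selecting the $T$ largest marginal gains from the pooled collection $\{g_i(t): i\in[K],\, t\ge 1\}$. First I would make this rigorous by an exchange argument: if a candidate allocation is not of this ``top-$T$'' form, swapping a selected increment for a strictly larger unselected one increases the objective while preserving the budget, so any optimizer is the top-$T$ selection, described by a cutoff level $\theta$ and selected set $\mathcal{S}=\{(i,t):g_i(t)\ge\theta\}$.

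Next I would identify the cutoff. For a level $\theta>0$, the number of increments of arm $i$ above it is $\#\{t\ge 1: c_i/(t+b_i)\ge\theta^2\}=\big(c_i/\theta^2-b_i\big)_+$ up to integer rounding. Requiring the pooled count to meet the budget, $\sum_i \big(c_i/\theta^2-b_i\big)_+=T$, and substituting $\tau=\theta^{-2}$ reproduces verbatim the defining relation $\sum_i (c_i\tau-b_i)_+=T$ of $\tau^*$; thus the water-filling cutoff is $\theta^*=1/\sqrt{\tau^*}$. Monotonicity of $\theta\mapsto\sum_i\big(c_i/\theta^2-b_i\big)_+$ guarantees $\theta^*$ is well defined and unique, and the relaxation from integer $n_i$ to reals only increases the optimum, so it suffices to control the relaxed value.

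Finally I would bound the optimal value by charging each of the $T$ budget units to the cutoff level $\theta^*=1/\sqrt{\tau^*}$, targeting $\sum_{(i,t)\in\mathcal{S}} g_i(t)\le T\theta^*=T/\sqrt{\tau^*}$. The tool I would use is the layer-cake identity
\[
\sum_{(i,t)\in\mathcal{S}} g_i(t)=\int_0^\infty \#\{(i,t)\in\mathcal{S}: g_i(t)>s\}\,ds,
\]
split at $s=\theta^*$, which isolates a clean $T\theta^*$ contribution from the band $s\le\theta^*$ and an ``over-threshold'' tail from $s>\theta^*$. \textbf{The hard part will be this last aggregation step.} Every selected increment satisfies $g_i(t)\ge\theta^*$, so a per-increment comparison runs the wrong way and a crude continuous majorant of the tail only yields a bound of order $T/\sqrt{\tau^*}$ with a looser constant; the crux is therefore to exploit the precise $(t+b_i)^{-1/2}$ profile together with the integer spacing of the increments to show the super-threshold tail is absorbed without inflating the bound beyond $T/\sqrt{\tau^*}$. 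Reconciling the summed marginal gains with the clean constant—rather than its coarser integral surrogate—is the step I expect to demand the most care.
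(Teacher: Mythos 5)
Your setup is sound as far as it goes: the exchange argument correctly characterizes the integer optimum as the top-$T$ marginal gains, and your identification of the cutoff $\theta^* = 1/\sqrt{\tau^*}$ matches the water-filling level that emerges from the paper's KKT analysis. But the step you flagged as ``the hard part'' is not merely hard --- the inequality you are targeting, $\sum_{(i,t)\in\mathcal{S}} g_i(t) \le T\theta^*$, is false. By construction of the top-$T$ set, every selected increment satisfies $g_i(t)\ge\theta^*$, so the selected sum is bounded \emph{below} by $T\theta^*$, with equality only if all selected gains sit exactly at the cutoff, which the strict decrease of $t\mapsto g_i(t)$ rules out. No exploitation of the $(t+b_i)^{-1/2}$ profile or integer spacing can rescue this: take $K=1$, $c_1=1$, $b_1=1$. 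Then $\tau^*=T+1$ and the claimed bound is $T/\sqrt{T+1}\approx\sqrt{T}$, while the unique (hence optimal) allocation gives $\sum_{t=1}^{T}(t+1)^{-1/2}\approx 2\sqrt{T}$. The super-threshold tail in your layer-cake decomposition genuinely contributes another $\Theta(T/\sqrt{\tau^*})$; it is not a lower-order remainder to be ``absorbed.''

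The paper's proof proceeds in the opposite order, and that is what makes it work up to a constant: it first majorizes each discrete sum by its integral, $\sum_{t=1}^{n_i}\sqrt{c_i/(t+b_i)} \le \int_0^{n_i}\sqrt{c_i/(t+b_i)}\,dt = 2\sqrt{c_i}\bigl(\sqrt{n_i+b_i}-\sqrt{b_i}\bigr)$, then solves the resulting smooth concave program by Lagrangian/KKT to get the water-filling solution $n_i^*=(c_i\tau^*-b_i)_+$, and finally uses $n_i^*+b_i\ge c_i\tau^*$ to bound $\sqrt{c_i}\,n_i^*/\sqrt{n_i^*+b_i}\le n_i^*/\sqrt{\tau^*}$, which sums to $T/\sqrt{\tau^*}$. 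Note, however, that the paper applies this chain after silently dropping the factor $2$ produced by the integral comparison; carried through honestly, the argument yields $2T/\sqrt{\tau^*}$, consistent with the $K=1$ computation above, so the stated constant is off by a factor of $2$ (immaterial where the proposition is used, inside the $\mathcal{O}(\cdot)$ of \cref{thm: worst-case regret bound of MAB with noisy causal bounds}). The repair for your approach is therefore not a cleverer aggregation of the selected gains, but adopting the sum-to-integral majorization \emph{before} optimizing; your water-filling picture then coincides with the paper's KKT solution and delivers the provable $2T/\sqrt{\tau^*}$ bound.
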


\begin{proof}
  Note that
  $$
  \sum_{t=1}^{n_{i}} \sqrt{ \frac{c_i}{ t+ b_i} }
  \le
  \int_{0}^{n_{i}} \sqrt{ \frac{c_i}{ t+ b_i} } dx
  =
  2\sqrt{c_i}\Bigl(\sqrt{b_i + n_{i}} - \sqrt{b_i}\Bigr).
  $$
  Hence it suffices to consider the continuous relaxation of the following optimization problem:
  $$
  \max_{\bm{n}} \,\, \sum_{i=1}^K \sqrt{c_i}\left( \sqrt{n_i + b_i} - \sqrt{b_i} \right) \quad \text{subject to} \quad \sum_{i=1}^K n_i = T,\quad n_i \geq 0.
  $$
  Clearly, the optimal value of the relaxed optimization problem provides a valid upper bound for the original problem.

  To solve this problem, we introduce Lagrange multipliers: a scalar $\lambda \in \mathbb{R}$ for the equality constraint and non-negative multipliers $\mu_i \geq 0$ for the inequality constraints. The Lagrangian is given by:
  $$
  \mathcal{\mu_0}(n, \lambda, \mu) = \sum_{i=1}^K \sqrt{c_i} \left( \sqrt{n_i + b_i} - \sqrt{b_i} \right) - \lambda \left( \sum_{i=1}^K n_i - T \right) + \sum_{i=1}^K \mu_i n_i
  $$
  The Karush-Kuhn-Tucker (KKT) conditions for optimality are:
  $$
  \begin{cases}
    \frac{\partial \mathcal{\mu_0}}{\partial n_i} = \dfrac{\sqrt{c_i}}{2\sqrt{n_i^* + b_i}} - \lambda + \mu_i = 0 & \text{(Stationarity)} \\
    \mu_i n_i^* = 0 & \text{(Complementary Slackness)} \\
    \sum_{i=1}^K n_i^* = T,\quad n_i^* \geq 0 & \text{(Primal Feasibility)} \\
    \mu_i \geq 0 & \text{(Dual Feasibility)}
  \end{cases}
  $$

  We analyze two cases to characterize the optimal solution:

  \textit{Case 1: $n_i^* > 0$}. From complementary slackness, $\mu_i = 0$. Substituting into the stationarity condition:
  $$
  \frac{\sqrt{c_i}}{2\sqrt{n_i^* + b_i}} = \lambda \quad \Rightarrow \quad n_i^* = \frac{c_i}{4\lambda^2} - b_i
  $$

  \textit{Case 2: $n_i^* = 0$}. Substituting into the stationarity condition:
  $$
  \frac{\sqrt{c_i}}{2\sqrt{b_i}} - \lambda + \mu_i = 0 \quad \text{with} \quad \mu_i \geq 0 \Rightarrow \lambda \geq \sqrt{\frac{ {c_i}}{4 {b_i}}}
  $$

  Combining both cases, the optimal solution can be written in closed form as:
  $$
  n_i^* = \left(\frac{c_i}{4\lambda^2} - b_i\right)_+,
  $$
  and $\lambda$ is chosen to satisfy the constraint:
  \begin{align}
    \label{eq: equality for optimal tau}
    \sum_{i=1}^K \left(\frac{c_i}{4\lambda^2} - b_i\right)_+ = T.
  \end{align}
  For notational brevity, define  $\tau^* = \frac{1}{4\lambda^2} $
  for such $\lambda$ satisfying \eqref{eq: equality for optimal tau}.

  The optimal value of the objective function is then bounded by:
  \begin{align*}
    \sum_{i=1}^K \sqrt{c_i}\left(\sqrt{n_i^* + b_i} - \sqrt{b_i}\right)
    & \leq \sum_{i=1}^K  \frac{ \sqrt{c_i} n_i^*  }{  \sqrt{n_i^* + b_i} + \sqrt{b_i} }    \\
    & \leq \sum_{i=1}^K  \frac{ \sqrt{c_i} n_i^*  }{  \sqrt{n_i^* + b_i}  } \quad \text{ ($n_i^* + b_i \geq  {c_i} \tau^* $) } \\
    & \leq \sum_{i=1}^K  \frac{ n_i^*  }{  \sqrt{\tau^*}  } = \frac{ T  }{  \sqrt{\tau^*}  }.
  \end{align*}

\end{proof}

\section{Proofs for Section \ref{sec: transfer learning to POCB}}

\subsection{Proof of Theorem \ref{thm: upper bound for TL with function approximation}}

The framework presented in \cite{fasterCB,instanceCB_RL} provides a method to analyze contextual bandit algorithms in the universal policy space $\Psi$.
We mainly focus on a subspace of $\Psi$ shaped by causal bounds.
We demonstrate that the action distribution $p_m$ selected in \cref{alg: TL-function approximation} possesses desirable properties that contribute to achieving low regrets.

For each epoch $m$ and any round $t$ in epoch $m$, for any possible realization of $\gamma_t$, $\widehat{f}_m$,
we define the universal policy space of $\Psi$:
$$
\Psi = \prod_{w\in\mathcal{W}} \mathcal{A}^*(w).
$$
With abuse of notations, we define
$$
\mathcal{R}(\pi) = \mathbb{E}_W[f^*(\pi(W),W)]\text{ and } \mathrm{Reg}(\pi) = \mathcal{R}(\pi_{f^*}) -\mathcal{R}(\pi).
$$
The above quantities do not depend on specific values of $W$.
The following empirical version of above quantities are defined as
$$
\widehat{\mathcal{R}}_t(\pi) =  \widehat{f}_{m(t)}(\pi(w),w) \text{ and }    \widehat{\mathrm{Reg}}_t(\pi) =  \mathbb{E}_W\left[\widehat{\mathcal{R}}_t(\pi_{\widehat{f}_{m(t)}}) - \widehat{\mathcal{R}}_t(\pi)\right],
$$
where $m(t)$ is the epoch of the round $t$.

Let $Q_m(\cdot)$ be the equivalent policy distribution for $p_m(\cdot | \cdot)$, i.e.,
$$
Q_m(\pi) = \prod_{w\in\mathcal{W}}p_m(\pi(w)|w), \forall \pi\in\Psi.
$$
The existence and uniqueness of such measure $Q_m(\cdot)$ is a corollary of Kolmogorov's extension theorem.
Note that both $\Psi$ and $Q_m(\cdot)$ are $\mathcal{H}_{\tau_{m-1}}$-measurable,
where $\mathcal{H}_t$ is the filtration up to the time $t$.
We refer to Section 3.2 of \cite{fasterCB} for more detailed intuition for $Q_m(\cdot)$ and proof of existence.
By Lemma 4 of \cite{fasterCB}, we know that for all epoch $m$ and all rounds $t$ in epoch $m$,
we can rewrite the expected regret in terms of our notations as
$$
\mean{\mathrm{Reg}(T)} = \sum_{ \pi \in \Psi} Q_m(\pi) \mathrm{Reg}(\pi).
$$
For simplicity, we define an epoch-dependent quantities
$$
\rho_1 = 1, \rho_m = \sqrt{   \frac{\eta  \tau_{m-1}}{ \log( 2\delta^{-1} |\mathcal{F}^*| \log T )  }   }, m\geq 2,
$$
so $\gamma_t = \sqrt{|\mathcal{A}^*(w_t)|}\rho_{m(t)}   $ for $m(t)\geq 2$.

\begin{lemma}[Implicit Optimization Problem]
  \label{lem: IOP}
  For all epoch $m$ and all rounds $t$ in epoch $m$, $Q_m$ is a feasible solution to the following
  implicit optimization problem:
  \begin{align}
    & \sum_{\pi\in \Psi}  Q_m(\pi) \widehat{\mathrm{Reg}}_t(\pi) \leq   \mathbb{E}_W[\sqrt{ |\mathcal{A}^*(W)| }] /\rho_m     \\
    & \mathbb{E}_W\left[ \frac{1}{p_m(\pi(W)|W)} \right]  \leq \mathbb{E}_W[\mathcal{A}^*(W)]   + \mathbb{E}_W[  \sqrt{ |\mathcal{A}^*(W)| }] \rho_m  \widehat{\mathrm{Reg}}_t(\pi), \forall \pi \in \Psi.
  \end{align}

\end{lemma}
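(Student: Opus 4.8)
The plan is to reduce both global inequalities to elementary per-context facts about the inverse-gap-weighting (IGW) distribution, then integrate over the context. Write $K_w = |\mathcal{A}^*(w)|$, let $\widehat{a}_w = \argmax_{a\in\mathcal{A}^*(w)}\widehat{f}_m(a,w)$ be the greedy action under the current least-squares estimate, and set the nonnegative empirical gap $g_w(a) = \widehat{f}_m(\widehat{a}_w,w) - \widehat{f}_m(a,w) \ge 0$. With this notation the sampling rule of \cref{alg: TL-function approximation} reads $p_m(a\mid w) = (K_w + \gamma_w g_w(a))^{-1}$ for $a\neq\widehat{a}_w$ and $p_m(\widehat{a}_w\mid w) = 1 - \sum_{a\neq\widehat{a}_w}p_m(a\mid w)$, where $\gamma_w = \sqrt{K_w}\,\rho_m$. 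Because $Q_m$ is the product measure $\prod_w p_m(\cdot\mid w)$ (its existence and $\mathcal{H}_{\tau_{m-1}}$-measurability furnished by Kolmogorov's extension theorem, as in \cite{fasterCB}), the $W$-marginal of $\pi(W)$ under $Q_m$ is exactly $p_m(\cdot\mid W)$; hence for any map depending on $\pi$ only through $\pi(W)$ I may replace the sum over $\pi\in\Psi$ by the sum over $a\in\mathcal{A}^*(W)$. In particular $\widehat{\mathrm{Reg}}_t(\pi) = \mathbb{E}_W[g_W(\pi(W))]$ and, by Tonelli's theorem applied to the nonnegative integrand, $\sum_{\pi}Q_m(\pi)\widehat{\mathrm{Reg}}_t(\pi) = \mathbb{E}_W\big[\sum_{a}p_m(a\mid W)\,g_W(a)\big]$.

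For the first (objective) constraint I would bound the inner per-context sum directly. Since $g_w(\widehat{a}_w)=0$, only non-greedy actions contribute, and each term satisfies $\frac{g_w(a)}{K_w + \gamma_w g_w(a)} \le \frac{1}{\gamma_w}$ (using $x/(c+\gamma x)\le 1/\gamma$ for $x\ge 0$, $c,\gamma>0$). Summing over the at most $K_w-1$ non-greedy actions gives $\sum_a p_m(a\mid w)g_w(a) \le (K_w-1)/\gamma_w \le K_w/\gamma_w = \sqrt{K_w}/\rho_m$ after substituting $\gamma_w=\sqrt{K_w}\rho_m$. Taking $\mathbb{E}_W$ yields $\sum_\pi Q_m(\pi)\widehat{\mathrm{Reg}}_t(\pi)\le \mathbb{E}_W[\sqrt{|\mathcal{A}^*(W)|}]/\rho_m$.

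For the second (variance) constraint I would establish the pointwise inequality $1/p_m(a\mid w)\le K_w + \gamma_w g_w(a)$ for every $a\in\mathcal{A}^*(w)$. For $a\neq\widehat{a}_w$ this holds with equality by definition. For the greedy action, $\sum_{a\neq\widehat{a}_w}p_m(a\mid w)=\sum_{a\neq\widehat{a}_w}(K_w+\gamma_w g_w(a))^{-1}\le (K_w-1)/K_w$, so $p_m(\widehat{a}_w\mid w)\ge 1/K_w$ and therefore $1/p_m(\widehat{a}_w\mid w)\le K_w = K_w+\gamma_w g_w(\widehat{a}_w)$ since $g_w(\widehat{a}_w)=0$. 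Evaluating at $a=\pi(W)$ and taking $\mathbb{E}_W$ gives $\mathbb{E}_W[1/p_m(\pi(W)\mid W)] \le \mathbb{E}_W[|\mathcal{A}^*(W)|] + \rho_m\,\mathbb{E}_W[\sqrt{|\mathcal{A}^*(W)|}\,g_W(\pi(W))]$, which is the claimed feasibility inequality, with the final expectation read as the $\sqrt{|\mathcal{A}^*(W)|}$-weighted per-context predicted regret.

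The main obstacle — and the only place the context-dependent learning rate really bites — is the bookkeeping around $\gamma_t = \sqrt{|\mathcal{A}^*(w_t)|}\,\rho_m$ in the variance constraint: unlike the fixed-rate FALCON analysis of \cite{fasterCB}, the factor $\sqrt{|\mathcal{A}^*(W)|}$ stays coupled to $g_W(\pi(W))$ inside the expectation, so one must keep it under the integral rather than factor out $\mathbb{E}_W[\sqrt{|\mathcal{A}^*(W)|}]$ as a standalone scalar. A secondary point is the first epoch, where $\gamma_1=1$ rather than $\sqrt{K_w}\rho_1$; there the two inequalities hold only in the looser forms $\sum_a p_1(a\mid w) g_w(a) \le K_w$ and $1/p_1(a\mid w)\le K_w + g_w(a)$, and I would absorb the resulting $\mathcal{O}(\tau_1)=\mathcal{O}(1)$ contribution separately when this lemma feeds into \cref{thm: upper bound for TL with function approximation}. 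Verifying the elementary IGW identities and the measurability and existence of $Q_m$ are otherwise routine.
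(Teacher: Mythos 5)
Your proof is correct and follows essentially the same route as the paper's: reduce $\sum_{\pi}Q_m(\pi)\widehat{\mathrm{Reg}}_t(\pi)$ to the per-context IGW sum via the product structure of $Q_m$, bound each non-greedy term by $1/\gamma_t$ so the per-context sum is at most $(|\mathcal{A}^*(w)|-1)/\gamma_t \le \sqrt{|\mathcal{A}^*(w)|}/\rho_m$, and, for the second constraint, combine the exact identity $1/p_m(a\mid w) = |\mathcal{A}^*(w)|+\gamma_t g_w(a)$ for non-greedy actions with $p_m(\widehat{a}_w\mid w)\ge 1/|\mathcal{A}^*(w)|$ for the greedy one, then integrate over $W$. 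Your caveat that the factor $\sqrt{|\mathcal{A}^*(W)|}$ stays coupled to $g_W(\pi(W))$ inside the expectation is well taken: the paper's own proof likewise only establishes the pointwise bound and then asserts the result ``follows immediately by taking expectation,'' which yields the coupled form $\mathbb{E}_W\bigl[\sqrt{|\mathcal{A}^*(W)|}\,g_W(\pi(W))\bigr]$ rather than the decoupled product $\mathbb{E}_W[\sqrt{|\mathcal{A}^*(W)|}]\,\widehat{\mathrm{Reg}}_t(\pi)$ appearing in the lemma statement, so your reading of the second inequality is the one the paper's argument actually supports.
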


\begin{proof}
  Let $m$ and $t$ in epoch $m$ be fixed. Denote $\mathcal{P}(\mathcal{W})$ as the context distribution.
  We have
  \begin{align*}
    \sum_{\pi\in \Psi}  Q_m(\pi)    \widehat{\mathrm{Reg}}_t(\pi)
    & = \sum_{\pi\in \Psi}  Q_m(\pi) \mathbb{E}_{w_t} \left[  (  \widehat{f}_{m}(\pi_{\widehat{f}_{m}}(w_t),w_t) -  \widehat{f}_{m}(\pi(w_t),w_t)   )  \right]   \\
    & = \mathbb{E}_{w_t \sim \mathcal{P}(\mathcal{W})} \left[ \sum_{a\in\mathcal{A}^*(w_t)}    \sum_{\pi\in \Psi}  \ind{\pi(w_t)=a}   Q_m(\pi) (  \widehat{f}_{m}(\pi_{\widehat{f}_{m}}(w_t),w_t) -  \widehat{f}_{m}(a,w_t)   ) \right]      \\
    & = \mathbb{E}_{w_t\sim \mathcal{P}(\mathcal{W})} \left[   \sum_{a\in\mathcal{A}^*(w_t)}   p_m(a|w_t) (  \widehat{f}_{m}(\pi_{\widehat{f}_{m}}(w_t),w_t) -  \widehat{f}_{m}(a,w_t)   )\right].
  \end{align*}
  The first and second equalities follows from the definitions of $\widehat{\mathrm{Reg}}_t(\pi)$ and $Q_m(\pi)$, respectively.

  Now for the context $w_t$, we have
  \begin{align*}
    &   \sum_{a\in\mathcal{A}^*(w_t)}  p_m(a|w) (  \widehat{f}_{m}(\pi_{\widehat{f}_{m}}(w_t),w_t) -  \widehat{f}_{m}(a,w_t)   )       \\
    & =    \sum_{a\in\mathcal{A}^*(w_t)- \{ \pi_{\widehat{f}_{m}}(w_t)  \}   }   \frac{  \widehat{f}_{m}(\pi_{\widehat{f}_{m}}(w_t),w_t) -  \widehat{f}_{m}(a,w_t) }{  |\mathcal{A}^*(w_t)| + \gamma_t( \widehat{f}_{m}(\pi_{\widehat{f}_{m}}(w_t),w_t) -  \widehat{f}_{m}(a,w_t)   )        }     \\
    & \leq [  |\mathcal{A}^*(w_t)| -1   ]  / \gamma_t \\
    & \leq \sqrt{ |\mathcal{A}^*(w_t)| }/\rho_m.
  \end{align*}
  We plug in the above term and apply the i.d.d. assumption on $w_t$ to conclude the proof of the first inequality.
  For the second inequality, we first observe that for any policy $\pi \in \Psi$, given any context $w \in \mathcal{W}$,
  $$
  \frac{1}{p_m(\pi(w)|w)} = | \mathcal{A}^*(w) | + \gamma_t( \widehat{f}_{m}(\pi_{\widehat{f}_{m}}(w),w) -  \widehat{f}_{m}(a,w)   ) ,
  $$
  if $a \neq \pi_{\widehat{f}_{m}}(w)$, and
  $$
  \frac{1}{p_m(\pi(w)|w)}  \leq \frac{1}{1/ | \mathcal{A}^*(w) |} =  | \mathcal{A}^*(w) | + \gamma_t( \widehat{f}_{m}(\pi_{\widehat{f}_{m}}(w),w) -  \widehat{f}_{m}(a,w)   ) ,
  $$
  if $a = \pi_{\widehat{f}_{m}}(w)$.
  The result follows immediately by taking expectation over $w$.
\end{proof}

Compared with IOP in \cite{fasterCB},
the key different part is that $\mathbb{E}_W[  |\mathcal{A}^*(W)|]$ is replaced by the cardinality $|\mathcal{A}|$ of the whole action set.
Another different part is the universal policy space $\Psi$.
We define $\Psi$ as
$
\prod_{w\in\mathcal{W}} \mathcal{A}^*(w)
$
rather than $ \prod_{w\in\mathcal{W}} \mathcal{A}$.
These two points highlight the adaptivity to contexts and show how causal bound affects the action selection.

Define the following high-probability event
$$
\Gamma = \left\{ \forall m\geq 2 , \frac{1}{\tau_{m-1}} \sum_{t=1}^{\tau_{m-1}} \mathbb{E}_{a_t,w_t}[ ( \widehat{f}_{m(t)}(a_t,w_t) -  f^*(a_t,w_t)   )^2| \mathcal{H}_{t-1}    ]  \leq   \frac{1}{\rho_m^2}       \right\}.
$$
The high-probability event and its variants have been proved in literatures \cite{CBwithOracle,fasterCB,instanceCB_RL}.
Our result is slightly different from them as the whole function space is eliminated to $\mathcal{F}^*$.
Since these results share the same form, it is straightforward to show $\Gamma$ holds with probability at least $1-\delta/2$.
This is the result of the union bound and the property of the \textbf{Least Square Oracle} that is independent of algorithm design.

Our setting do not change the proof procedure of the following lemma \cite{fasterCB},
because this lemma does not explicitly involve the number of action set.
This lemma bounds the prediction error between the true reward and the estimated reward.
\begin{lemma}
  \label{lem: prediction error}
  Assume $\Gamma$ holds. For all epochs $m>1$, all rounds $t$ in epoch $m$, and all policies $\pi \in \Psi$, then
  $$
  \left|   \widehat{\mathcal{R}}_t(\pi) - \mathcal{R}_t(\pi) \right| \leq \frac{1}{2\rho_m} \sqrt{ \max_{1\leq m'\leq m-1}    \mathbb{E}_W \Bigg[  \frac{  1   }{p_{m'}(\pi(W)|W)}  \Bigg]     }.
  $$
\end{lemma}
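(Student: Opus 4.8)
The plan is to reproduce the prediction-error argument of \cite{fasterCB}, since the pruning of the action sets enters this bound only through the universal policy space $\Psi=\prod_{w}\mathcal{A}^*(w)$ and not through the estimation mechanism itself. Fix an epoch $m>1$, a round $t$ in epoch $m$ (so $m=m(t)$), and a policy $\pi\in\Psi$. Unwinding the definitions of $\widehat{\mathcal{R}}_t$ and $\mathcal{R}_t$, the quantity to control is
\[
  \widehat{\mathcal{R}}_t(\pi)-\mathcal{R}_t(\pi)
  = \mathbb{E}_W\bigl[\widehat{f}_m(\pi(W),W)-f^*(\pi(W),W)\bigr],
\]
so it suffices to bound the expected pointwise error of the least-squares estimate $\widehat{f}_m$ evaluated along $\pi$, where realizability (\cref{asp: realizability}, with $f^*\in\mathcal{F}^*$ as the causal bounds do not exclude the truth) underlies the regression guarantee.

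First I would introduce an importance weight. For any previous epoch $m'\le m-1$, insert the factors $p_{m'}(\pi(W)\mid W)^{\pm 1/2}$ and apply Cauchy--Schwarz in $\mathbb{E}_W$:
\[
  \bigl|\mathbb{E}_W[\widehat{f}_m-f^*]\bigr|
  \le
  \sqrt{\mathbb{E}_W\!\Bigl[\tfrac{1}{p_{m'}(\pi(W)\mid W)}\Bigr]}\;
  \sqrt{\mathbb{E}_W\!\bigl[p_{m'}(\pi(W)\mid W)\,(\widehat{f}_m-f^*)^2\bigr]}.
\]
The first factor is exactly the importance-weight quantity in the statement (before the maximum). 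For the second factor I would discard the indicator structure by nonnegativity, using $p_{m'}(\pi(W)\mid W)(\widehat{f}_m(\pi(W),W)-f^*(\pi(W),W))^2\le \mathbb{E}_{a\sim p_{m'}(\cdot\mid W)}[(\widehat{f}_m(a,W)-f^*(a,W))^2]$, so that after integrating over $W$ the second factor is at most the mean-squared prediction error of $\widehat{f}_m$ under the epoch-$m'$ sampling distribution.

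It remains to control this mean-squared error by the event $\Gamma$, on which the round-averaged conditional squared error of the online predictions up to time $\tau_{m-1}$ is at most $1/\rho_m^2$; the constant $\eta$ in the definition of $\rho_m$ is calibrated so that the resulting per-measure error is at most $1/(4\rho_m^2)$, whose square root yields the factor $1/(2\rho_m)$. Thus for \emph{each} $m'\le m-1$ one obtains $|\widehat{\mathcal{R}}_t(\pi)-\mathcal{R}_t(\pi)|\le \tfrac{1}{2\rho_m}\sqrt{\mathbb{E}_W[1/p_{m'}(\pi(W)\mid W)]}$; since a single valid bound is dominated by its largest version, this gives in particular the stated inequality with $\max_{1\le m'\le m-1}$ inside the square root (using monotonicity of $\sqrt{\cdot}$ to move the maximum inside).

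The main obstacle is precisely the change-of-measure bookkeeping in the last step: the event $\Gamma$ controls a \emph{cumulative, round-weighted} squared error of the active predictors $\widehat{f}_{m(t)}$, whereas the Cauchy--Schwarz step requires the error of the \emph{current} estimator $\widehat{f}_m$ under each individual historical distribution $p_{m'}$. Reconciling these — accounting for the doubling schedule $\tau_{m'}-\tau_{m'-1}=\tau_{m'-1}$, and invoking the auxiliary decoupling lemma of \cite{fasterCB} that transfers realized squared error to its expectation and back to each $p_{m'}$ — is the delicate part; the Cauchy--Schwarz and nonnegativity reductions are otherwise routine and, as claimed, entirely insensitive to the pruning of $\mathcal{A}$ to $\mathcal{A}^*(w)$.
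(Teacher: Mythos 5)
Your overall route is the right one, and it coincides with the paper's treatment: the paper supplies no argument of its own for \cref{lem: prediction error}, asserting only that the proof in \citet{fasterCB} carries over verbatim because the pruning enters solely through the universal policy space $\Psi=\prod_{w}\mathcal{A}^*(w)$ --- exactly the observation you make. Your importance-weighted Cauchy--Schwarz step and the nonnegativity step (bounding the on-policy weighted squared error by the full expectation under $p_{m'}(\cdot\mid W)$) are precisely the mechanics of that proof, and you are right that restricting $\pi\in\Psi$ is what keeps $1/p_{m'}(\pi(W)\mid W)$ finite. You also correctly flag the mismatch between $\Gamma$ as this paper writes it (round-by-round errors of the \emph{historical} predictors $\widehat{f}_{m(t)}$) and what the lemma needs (error of the \emph{current} $\widehat{f}_m$ under each historical design); that is a defect of the paper's own statement of $\Gamma$, which in the offline-oracle formulation of \citet{fasterCB} controls the cumulative error of $\widehat{f}_m$ over rounds $1,\dots,\tau_{m-1}$.

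The genuine gap is in how you produce the maximum over $m'$. You claim that for \emph{each} $m'\le m-1$ one obtains $|\widehat{\mathcal{R}}_t(\pi)-\mathcal{R}_t(\pi)|\le \tfrac{1}{2\rho_m}\sqrt{\mathbb{E}_W[1/p_{m'}(\pi(W)\mid W)]}$ and then pass to the max. But $\Gamma$ only controls the \emph{length-weighted average} of the per-epoch mean-squared errors, $\sum_{m'\le m-1}\tfrac{\tau_{m'}-\tau_{m'-1}}{\tau_{m-1}}\,\mathrm{MSE}_{m'}\lesssim \rho_m^{-2}$; it does not bound each $\mathrm{MSE}_{m'}$ individually. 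For an early, short epoch one only gets $\mathrm{MSE}_{m'}\lesssim \tfrac{\tau_{m-1}}{\tau_{m'}-\tau_{m'-1}}\,\rho_m^{-2}$, which for $m'=1$ is of constant order rather than $O(\rho_m^{-2})$, so your per-$m'$ claim fails as stated (and the trivial bound $\mathrm{MSE}_{m'}\le 1$ does not rescue it, since $\rho_m^{-2}\to 0$). The argument in \citet{fasterCB} never isolates a single $m'$: it writes $\mathbb{E}_W\bigl|\widehat{f}_m-f^*\bigr|$ as the convex combination over epochs with weights $w_{m'}=(\tau_{m'}-\tau_{m'-1})/\tau_{m-1}$, applies Cauchy--Schwarz within each epoch, pulls out $\max_{m'}\mathbb{E}_W[1/p_{m'}(\pi(W)\mid W)]$, and recombines via Jensen, $\sum_{m'}w_{m'}\sqrt{\mathrm{MSE}_{m'}}\le\bigl(\sum_{m'}w_{m'}\,\mathrm{MSE}_{m'}\bigr)^{1/2}$, which is exactly the quantity $\Gamma$ controls --- this is where the $\max_{m'}$ in the statement actually comes from. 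Your conclusion is salvageable by a cheaper repair: take only $m'=m-1$, whose weight is $1/2$ under the doubling schedule, which yields the bound with $\mathbb{E}_W[1/p_{m-1}(\pi(W)\mid W)]$ (hence with the max, a fortiori) up to a constant factor. As written, however, the ``for each $m'$'' step is false, and the genuinely delicate bookkeeping you defer to the decoupling machinery of \citet{fasterCB} is the entire content that the paper, too, leaves to citation.
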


The third step is to show that the one-step regret $\mathrm{Reg}_t(\pi)$ is close to the one-step estimated regret $\widehat{\mathrm{Reg}}_t(\pi)$.
The following lemma states the result.
\begin{lemma}
  \label{lem: prediction error of regrets}
  Assume $\Gamma$ holds. Let $c_0=5.15$.
  For all epochs $m$ and all rounds $t$ in epoch $m$, and all policies $\pi \in \Psi$,
  \begin{align}
    &   \mathrm{Reg}(\pi)\leq 2  \widehat{\mathrm{Reg}}_t(\pi)   + c_0  \sqrt{  \mathbb{E}_W  [   \mathcal{A}^*(W)  ] } /\rho_m , \label{eq: one-step regret bound 1} \\
    &   \widehat{\mathrm{Reg}}_t(\pi)  \leq 2    \mathrm{Reg}(\pi) + c_0 \sqrt{ \mathbb{E}_W  [  \mathcal{A}^*(W)  ]  } /\rho_m. \label{eq: one-step regret bound 2}
  \end{align}
\end{lemma}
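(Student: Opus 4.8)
The plan is to prove both inequalities simultaneously by induction on the epoch index $m$, mirroring the inverse-gap-weighting analysis of \cite{fasterCB,instanceCB_RL} but carrying the pruned action set $\mathcal{A}^*(W)$ in place of the full $\mathcal{A}$ throughout. We condition on the event $\Gamma$ everywhere. The base case $m=1$ is immediate: since $\rho_1=1$ and rewards lie in $[0,1]$, all regrets are bounded by $1$, so both inequalities hold trivially for $c_0=5.15$.

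For the inductive step, I would first decompose the gap between the true and estimated one-step regret. Writing
$$\mathrm{Reg}(\pi) - \widehat{\mathrm{Reg}}_t(\pi) = \bigl[\mathcal{R}(\pi_{f^*}) - \widehat{\mathcal{R}}_t(\pi_{\widehat{f}_{m}})\bigr] - \bigl[\mathcal{R}(\pi) - \widehat{\mathcal{R}}_t(\pi)\bigr],$$
I would exploit the greedy optimality $\widehat{\mathcal{R}}_t(\pi_{\widehat{f}_m}) \ge \widehat{\mathcal{R}}_t(\pi_{f^*})$ and $\mathcal{R}(\pi_{f^*}) \ge \mathcal{R}(\pi_{\widehat f_m})$ to sandwich the first bracket between $\mathcal{R}(\pi_{\widehat f_m})-\widehat{\mathcal R}_t(\pi_{\widehat f_m})$ and $\mathcal{R}(\pi_{f^*})-\widehat{\mathcal R}_t(\pi_{f^*})$. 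Each of these, along with the second bracket, is a pure prediction error to which Lemma~\ref{lem: prediction error} applies, bounding it by $\tfrac{1}{2\rho_m}\sqrt{\max_{m'<m}\mathbb{E}_W[1/p_{m'}(\pi(W)\mid W)]}$ for the relevant policy.

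Next I would control each inverse-probability term through the second inequality of the implicit optimization problem (Lemma~\ref{lem: IOP}), which gives $\mathbb{E}_W[1/p_{m'}(\pi(W)\mid W)] \le \mathbb{E}_W[|\mathcal{A}^*(W)|] + \mathbb{E}_W[\sqrt{|\mathcal{A}^*(W)|}]\,\rho_{m'}\widehat{\mathrm{Reg}}_{t'}(\pi)$ at each earlier epoch $m'$. Using $\sqrt{a+b}\le\sqrt a+\sqrt b$ splits this into a leading $\sqrt{\mathbb{E}_W[|\mathcal{A}^*(W)|]}$ term and a cross term proportional to $\sqrt{\rho_{m'}\widehat{\mathrm{Reg}}_{t'}(\pi)}$. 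Invoking the induction hypothesis to replace $\widehat{\mathrm{Reg}}_{t'}(\pi)$ by $\mathrm{Reg}(\pi)$ plus lower-order terms, and using the geometric relation $\rho_{m'} = \rho_m\,2^{-(m-m')/2}$ forced by the doubling schedule $\tau_m=2^m$, the cross contributions form a convergent geometric series with an explicitly computable coefficient. Collecting terms yields a self-referential inequality of the form $\mathrm{Reg}(\pi) \le \widehat{\mathrm{Reg}}_t(\pi) + \tfrac12\mathrm{Reg}(\pi) + c\,\sqrt{\mathbb{E}_W[|\mathcal{A}^*(W)|]}/\rho_m$ (and the symmetric version with the roles of the two regrets swapped); solving for $\mathrm{Reg}(\pi)$ produces exactly the factor $2$ and the additive term, and tracking the accumulated geometric constants pins the coefficient at $c_0=5.15$.

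The main obstacle will be the bookkeeping in the inductive step: the prediction error at epoch $m$ draws on inverse-probabilities from all earlier epochs $m'<m$, each of which the IOP couples back to $\widehat{\mathrm{Reg}}_{t'}(\pi)$, so the induction hypothesis must be strong enough to absorb this backward coupling without the constant blowing up. Ensuring that the factors $\rho_{m'}/\rho_m = 2^{-(m-m')/2}$ render the sum of cross terms summable, and that the $\tfrac12$ prefactor inherited from Lemma~\ref{lem: prediction error} combined with the $\sqrt{\cdot}$ manipulations leaves a contraction coefficient strictly below $1$ on $\mathrm{Reg}(\pi)$, is precisely what forces the explicit value $c_0=5.15$; verifying this constant numerically is the delicate part.
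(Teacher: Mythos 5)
Your proposal follows the same skeleton as the paper's proof: a simultaneous induction on the epoch index with base case $\gamma_1=1$; the decomposition of $\mathrm{Reg}(\pi)-\widehat{\mathrm{Reg}}_t(\pi)$ via the greedy optimality $\widehat{\mathcal{R}}_t(\pi_{\widehat{f}_m})\ge \widehat{\mathcal{R}}_t(\pi_{f^*})$ into two prediction errors; control of those errors by \cref{lem: prediction error}; and conversion of the inverse-probability terms through \cref{lem: IOP} plus the induction hypothesis, closed by a contraction argument. Where you diverge is the bookkeeping for the cross term, and here you have a substantive misreading: \cref{lem: prediction error} involves a \emph{maximum} over earlier epochs $m'\le m-1$, not a sum, so no geometric series over epochs ever arises. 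The paper simply takes the maximizing epoch $i$, applies the IOP and the induction hypothesis once at that single epoch, and disposes of the epoch mismatch by the monotonicity $\rho_i\le\rho_m$; in the term $\sqrt{\mathbb{E}_W[\mathcal{A}^*(W)]}\,\rho_i\cdot\bigl(c_0\sqrt{\mathbb{E}_W[\mathcal{A}^*(W)]}/\rho_i\bigr)$ the $\rho_i$'s cancel outright, so the relation $\rho_{m'}=\rho_m 2^{-(m-m')/2}$ and the summability analysis you flag as the delicate part are never needed. Relatedly, instead of your sub-additive split $\sqrt{a+b}\le\sqrt{a}+\sqrt{b}$, which leaves a $\sqrt{\widehat{\mathrm{Reg}}_{t'}(\pi)}$ cross term requiring a second AM--GM pass, the paper linearizes up front via AM--GM,
\[
\frac{1}{\rho_m}\sqrt{X}\;\le\;\frac{X}{5\rho_m\sqrt{\mathbb{E}_W[\mathcal{A}^*(W)]}}+\frac{5\sqrt{\mathbb{E}_W[\mathcal{A}^*(W)]}}{8\rho_m},
\]
so the induction hypothesis enters linearly and produces a contraction coefficient of $\tfrac{2}{5}$ per prediction-error term ($\tfrac{4}{5}$ in total), not the $\tfrac{1}{2}$ you posit. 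The constant then only needs to satisfy $\tfrac{2(1+c_0)}{5}+\tfrac{5}{8}\le c_0$, i.e. $c_0\ge\tfrac{5}{3}\cdot\tfrac{41}{40}\approx 1.71$, so $c_0=5.15$ (inherited from the FALCON analysis) is comfortably sufficient rather than pinned down by a series computation. Your route could be repaired into a valid proof—a max is strictly easier to handle than the sum you feared—but as written, the geometric-series step addresses a difficulty that is not present, and your asserted contraction constant does not match what the argument actually yields.
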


\begin{proof}
  We prove this lemma via induction on $m$.
  It is easy to check
  $$
  \mathrm{Reg}(\pi) \leq 1, \widehat{\mathrm{Reg}}_t(\pi) \leq 1,
  $$
  as $\gamma_1 = 1$ and $c_0\mathbb{E}_W \big[   \mathcal{A}^*(W) \big] \geq 1$. Hence, the base case holds.

  For the inductive step, fix some epoch $m>1$ and assume that for all epochs $m'<m$, all rounds $t'$ in epoch $m'$, and all $\pi\in\Psi$,
  the inequalities \eqref{eq: one-step regret bound 1} and \eqref{eq: one-step regret bound 2} hold.
  We first show that for all rounds $t$ in epoch $m$ and all $\pi\in\Psi$,
  $$
  \mathrm{Reg}(\pi) \leq 2  \widehat{\mathrm{Reg}}_t(\pi)   +c_0  \sqrt{  \mathbb{E}_W  [   \mathcal{A}^*(W)  ] } /\rho_m  .
  $$
  We have
  \begin{align*}
    & \mathrm{Reg}(\pi) -   \widehat{\mathrm{Reg}}_t(\pi) \\
    =  &   [\mathcal{R}(\pi_{f^*}  ) -\mathcal{R}(\pi )]  -    [ \widehat{ \mathcal{R}  }_t (\pi_{\widehat{f}_m}  ) - \widehat{  \mathcal{R}}_t(\pi )   ] \\
    \leq   &  [\mathcal{R}(\pi_{f^*}  ) -\mathcal{R}(\pi )]  -   [ \widehat{ \mathcal{R}  }_t (\pi_{f^*}  ) - \widehat{  \mathcal{R}}_t(\pi )   ] \\
    \leq  & | \mathcal{R}(\pi_{f^*}  )  - \widehat{ \mathcal{R}  }_t (\pi_{f^*}  )| +| \mathcal{R}(\pi ) -  \widehat{  \mathcal{R}}_t(\pi )| \\
    \leq  &   \frac{1}{\rho_m} \sqrt{  \max_{1\leq m'\leq m-1}   \mathbb{E}_W \Bigg[   \frac{  1   }{p_{m'}(\pi_{f^*}(W)|W)}  \Bigg]     }  + \frac{1}{\rho_m} \sqrt{  \max_{1\leq m'\leq m-1}   \mathbb{E}_W \Bigg[ \frac{  1   }{p_{m'}(\pi(W)|W)}  \Bigg]     }\\
    \leq  &   \frac{  \max\limits_{1\leq m'\leq m-1}\mathbb{E}_W \Bigg[ \frac{  1   }{p_{m'}(\pi_{f^*}(W)|W)}  \Bigg]     }{5\rho_m  \sqrt{  \mathbb{E}_W  [   \mathcal{A}^*(W)  ] }   }          + \frac{ \max\limits_{1\leq m'\leq m-1} \mathbb{E}_W \Bigg[ \frac{  1   }{p_{m'}(\pi(W)|W)}  \Bigg] }{5\rho_m \sqrt{  \mathbb{E}_W  [   \mathcal{A}^*(W)  ] } }      +   \frac{5 \sqrt{  \mathbb{E}_W  [   \mathcal{A}^*(W)  ] }}{8 \rho_m  }.
  \end{align*}
  The last inequality is by the AM-GM inequality.
  There exists an epoch $i$ such that
  $$
  \max_{1\leq m'\leq m-1} \mathbb{E}_W \Bigg[ \frac{  1   }{p_{m'}(\pi(W)|W)}  \Bigg] =   \mathbb{E}_W \Bigg[ \frac{  1   }{p_{i}(\pi(W)|W)}  \Bigg].
  $$
  From \cref{lem: IOP} we know that
  $$
  \mathbb{E}_W \Bigg[ \frac{  1   }{p_{i}(\pi(W)|W)}  \Bigg] \leq \mathbb{E}_W[\mathcal{A}^*(W)]   + \mathbb{E}_W[  \sqrt{ |\mathcal{A}^*(W)| }] \rho_i  \widehat{\mathrm{Reg}}_t(\pi),
  $$
  holds for all $ \pi \in \Psi$, for all epoch $1\leq i\leq m-1$ and for all rounds $t$ in corresponding epochs.

  Hence, for epoch $i$ and all rounds $t$ in this epoch, we have
  \begin{align*}
    &   \frac{ \max\limits_{1\leq m'\leq m-1} \mathbb{E}_W \big[ (p_{m'}(\pi(W)|W))^{-1}  \big] }{ 5\rho_m  \sqrt{  \mathbb{E}_W  [   \mathcal{A}^*(W)  ] }     } \\
    =  &  \frac{ \mathbb{E}_W \big[ (p_i(\pi_{f^*}(W)|W))^{-1}  \big]     }{5\rho_m  \sqrt{  \mathbb{E}_W  [   \mathcal{A}^*(W)  ] }   } ,  \text{  (\cref{lem: IOP})}           \\
    \leq  &  \frac{ \mathbb{E}_W[\mathcal{A}^*(W)]   + \mathbb{E}_W[  \sqrt{ |\mathcal{A}^*(W)| }] \rho_i  \widehat{\mathrm{Reg}}_t(\pi) }{5 \sqrt{ \mathbb{E}_W[\mathcal{A}^*(W)]}    \rho_m}, \text{  (inductive assumption) }     \\
    \leq  &  \frac{ \mathbb{E}_W[\mathcal{A}^*(W)]   + \mathbb{E}_W[  \sqrt{ |\mathcal{A}^*(W)| }] \rho_i  [ 2 \mathrm{Reg}(\pi) + c_0 \sqrt{ \mathbb{E}_W  [  \mathcal{A}^*(W)  ]  } /\rho_i      ]       }{5 \sqrt{ \mathbb{E}_W[\mathcal{A}^*(W)]}    \rho_m} ,  \text{  (Jensen's inequality) }   \\
    \leq  &  \frac{ \mathbb{E}_W[\mathcal{A}^*(W)]   +  \sqrt{\mathbb{E}_W[  |\mathcal{A}^*(W)|  ]} \rho_i  [ 2 \mathrm{Reg}(\pi) + c_0 \sqrt{ \mathbb{E}_W  [  \mathcal{A}^*(W)  ]  } /\rho_i      ]       }{5 \sqrt{ \mathbb{E}_W[\mathcal{A}^*(W)]}    \rho_m},  \text{  ($\rho_i\leq\rho_m$ for $i\leq m$) }    \\
    \leq & \frac{2}{5} \mathrm{Reg}(\pi) + \frac{1+c_0}{5\rho_m}  \sqrt{\mathbb{E}_W[  |\mathcal{A}^*(W)|  ]} .
  \end{align*}
  We can bound $\frac{ \max_{1\leq m'\leq m-1} \mathbb{E}_W \big[ (p_{m'}(\pi(W)|W))^{-1}  \big] }{5\rho_m \sqrt{  \mathbb{E}_W  [   \mathcal{A}^*(W)  ] } }  $ in the same way.

  Combing all above inequalities yields
  \begin{align*}
    \mathrm{Reg}(\pi) -   \widehat{\mathrm{Reg}}_t(\pi) \leq &  \frac{ 2 (1+c_0) \sqrt{\mathbb{E}_W[\mathcal{A}^*(W)]}  }{5\rho_m} + \frac{4}{5} \widehat{\mathrm{Reg}}_t(\pi) +    \frac{5 \sqrt{  \mathbb{E}_W  [   \mathcal{A}^*(W)  ] }}{8 \rho_m  } \\
    \leq & \widehat{\mathrm{Reg}}_t(\pi) + (\frac{ 2 (1+c_0)}{5} + \frac{5}{8})  \frac{ \sqrt{  \mathbb{E}_W  [   \mathcal{A}^*(W)  ] }}{ \rho_m  } \\
    \leq & \widehat{\mathrm{Reg}}_t(\pi) + c_0  \frac{ \sqrt{  \mathbb{E}_W  [   \mathcal{A}^*(W)  ] }}{ \rho_m  }.
  \end{align*}

  Similarly, we have
  \begin{align*}
    &  \widehat{\mathrm{Reg}}_t(\pi) -\mathrm{Reg}(\pi) \\
    =  &     [ \widehat{ \mathcal{R}  }_t (\pi_{\widehat{f}_m}  ) - \widehat{  \mathcal{R}}_t(\pi )   ] -  [\mathcal{R}(\pi_{f^*}  ) -\mathcal{R}(\pi )] \\
    \leq   &     [ \widehat{ \mathcal{R}  }_t (\pi_{\widehat{f}_m}  ) - \widehat{  \mathcal{R}}_t(\pi )   ] - [\mathcal{R}(\pi_{\widehat{f}_m}  ) -\mathcal{R}(\pi )]\\
    \leq  & | \mathcal{R}(\pi_{\widehat{f}_m}  ) - \widehat{ \mathcal{R}  }_t (\pi_{\widehat{f}_m}  )| +| \mathcal{R}(\pi ) -  \widehat{  \mathcal{R}}_t(\pi )|.
  \end{align*}
  We can bound the above terms in the same steps.
\end{proof}

We are now ready to prove the main theorem of this section, which provides an upper bound for the cumulative regret of the algorithm \cref{alg: TL-function approximation}.
\begin{proof}{Proof of \cref{thm: upper bound for TL with function approximation}.}
  Our regret analysis builds on the framework in \cite{fasterCB}.

  \textbf{Step 1:} proving an implicit optimization problem for $Q_m$ in \cref{lem: IOP}.

  \textbf{Step 2:} bounding the prediction error between $ \widehat{\mathcal{R}}_t(\pi)$ and $\mathcal{R}_t(\pi)$ in \cref{lem: prediction error}.
  Then we can show that the one-step regrets $\widehat{\mathrm{Reg}}_t(\pi)$ and $ {Reg}(\pi) $ are close to each other.

  \textbf{Step 3:} bounding the cumulative regret $\mathrm{Reg}(T)$.

  By Lemma 4 of \cite{fasterCB},
  $$
  \mean{\mathrm{Reg}(T)} = \sum_{t=1}^T \sum_{ \pi \in \Psi} Q_{m(t)}(\pi) \mathrm{Reg}(\pi).
  $$
  From \cref{lem: prediction error of regrets}, we know
  $$
  \mathrm{Reg}(\pi)\leq 2  \widehat{\mathrm{Reg}}_t(\pi)   + c_0  \sqrt{  \mathbb{E}_W  [   \mathcal{A}^*(W)  ] } /\rho_m
  $$
  so
  \begin{align*}
    \mean{\mathrm{Reg}(T)} = & \sum_{t=1}^T \sum_{ \pi \in \Psi} Q_{m(t)}(\pi) \mathrm{Reg}(\pi) \\
    \leq & 2\sum_{t=1}^T \sum_{ \pi \in \Psi} Q_{m(t)}(\pi) \widehat{\mathrm{Reg}}_t(\pi) + \sum_{t=1}^T c_0  \sqrt{  \mathbb{E}_W  [   \mathcal{A}^*(W)  ] } /\rho_{m(t)} \\
    \leq & (2+c_0)\sqrt{  \mathbb{E}_W  [   \mathcal{A}^*(W)  ] } \sum_{t=1}^T \frac{1}{ \rho_{m(t)} }\\
    \leq & (2+c_0)\sqrt{  \mathbb{E}_W  [   \mathcal{A}^*(W)  ] }  \sum_{m=1}^{ \left\lceil \log T\right\rceil  }  \sqrt{  \log(2\delta^{-1} |\mathcal{F}^*| \log T  ) \tau_{m-1} /\eta } \\
    \leq & (2+c_0)\sqrt{  \mathbb{E}_W  [   \mathcal{A}^*(W)  ] }  \sum_{m=1}^{ \left\lceil \log T\right\rceil  }  \sqrt{  \log(2\delta^{-1} |\mathcal{F}^*| \log T  ) \tau_{m-1} /\eta  }  \\
    \leq & (2+c_0)    \sqrt{  \mathbb{E}_W  [   \mathcal{A}^*(W)  ] \log(2\delta^{-1} |\mathcal{F}^*| \log T  ) \sum_{m=1}^{ \left\lceil \log T\right\rceil  } \tau_{m-1} /\eta  }  \\
    \leq & (2+c_0)    \sqrt{  \mathbb{E}_W  [   \mathcal{A}^*(W)  ] \log(2\delta^{-1} |\mathcal{F}^*| \log T  ) T /\eta  }.
  \end{align*}
\end{proof}

\subsection{Proof of Theorem \ref{thm: lower bound for TL with function approximation}}

\begin{proof}
  We first consider $|\mathcal{W}|<\infty$.
  Since the agent have knowledge about causal bound,
  any function in $\mathcal{F}-\mathcal{F}^*$ can not be the true reward function.
  For any given context $w$, the set that the optimal arm will be in is $\mathcal{A}^*(w)$.
  For any algorithm $\mathsf{A} $, let $\mathsf{A}_w $ be the induced algorithm of $\mathsf{A} $ when $w$ occurs.
  Namely, the agent has access to a function space $\mathcal{F}_w = \{ f(w,\cdot)| \forall f \in \mathcal{F}^*  \}$ and an action set $\mathcal{A}^*(w)$.

  From the minimax theorem 5.1 in \cite{CBwithPredictableRewards}, we know that
  there exists a contextual bandit instance such that the regret of $\mathsf{A}_w $ is at least
  $
  \sqrt{ {\mathcal{A}^*}(w)T_w \log |\mathcal{F}_w|}  =  \sqrt{ {\mathcal{A}^*}(w)T_w \log |\mathcal{F}^*|}  ,
  $
  where $T_w$ is the number of occurrence of $w$. Hence,
  \begin{align*}
    \mathrm{Reg}(T) & \geq \sum_{w\in\mathcal{W}}   \sqrt{| {\mathcal{A}^*}(w)|T_w \log |\mathcal{F}^*|}  \geq         \sqrt{  \sum_{w\in\mathcal{W}} | {\mathcal{A}^*}(w)|T_w \log |\mathcal{F}^*|}.
  \end{align*}
  and thus
  \begin{align*}
    \limsup_{T\to\infty}   \frac{\mathrm{Reg}(T)}{\sqrt{T}} & \geq   \sqrt{  \sum_{w\in\mathcal{W}} | {\mathcal{A}^*}(w)|\log |\mathcal{F}^*| \cdot \limsup_{T\to\infty}   \frac{T_w}{T}} \\
    & =     \sqrt{  \sum_{w\in\mathcal{W}} | {\mathcal{A}^*}(w)|\log |\mathcal{F}^*| \Prob{W=w} }   \\
    & =    \sqrt{  \mathbb{E}_W[ | {\mathcal{A}^*}(W)| ] \log |\mathcal{F}^*|}  .
  \end{align*}

  Now assume $|\mathcal{W}|=\infty$. Thanks to Glivenko-Cantelli theorem,
  the empirical distribution converges uniformly to the true reward distribution.
  We conclude the proof by applying the dominated convergence theorem and the Fubini's theorem,
  because ${\mathcal{A}^*}(w)$ is uniformly bounded by $|\mathcal{A}|$.
\end{proof}

\subsection{Proof of Proposition \ref{prop: condition to compute Aw}}
\begin{proof}
  Due to \cref{asp: realizability}, the function $f^*$ exists in $\mathcal{F}^*$ so $\mathcal{F}^*$ is not empty.

  We first show that $\mathcal{F}^*$ is a closed subset of $\mathcal{F}$.
  For each fixed $(a, w) \in \mathcal{A} \times \mathcal{W}$, define the set:
  $$
  C_{a,w} = \{ f \in \mathcal{F} \mid l(a, w) \leq f(a, w) \leq h(a, w) \}.
  $$

  Since the interval $[l(a, w), h(a, w)] \subset \mathbb{R}$ is closed and the evaluation map:
  $$
  \operatorname{ev}_{a,w}: \mathcal{F} \to \mathbb{R}, \quad f \mapsto f(a, w)
  $$
  is continuous (under pointwise convergence), the preimage $\operatorname{ev}_{a,w}^{-1}([l(a, w), h(a, w)]) = C_{a,w}$ is closed in $\mathcal{F}$.
  Therefore, $\mathcal{F}^* = \bigcap_{a \in \mathcal{A}} \bigcap_{w \in \mathcal{W}} C_{a,w} $ is closed.
  As a closed subset of the compact set $\mathcal{F}$, $\mathcal{F}^*$ is compact.

  Then we prove the equivalence of the two conditions.

  Assume $a \in \mathcal{A}^*(w)$, so there exists a function $f_0 \in \mathcal{F}^*$ such that
  $
  a = \argmax_{i \in \mathcal{A}} f_0(i, w).
  $
  This implies:
  $
  f_0(a, w) \geq f_0(i, w), \forall i \in \mathcal{A},
  $
  which can be rewritten as:
  $$
  f_0(a, w) \geq \max_{i \neq a} f_0(i, w).
  $$
  Since $f_0 \in \mathcal{F}^*$, the maximum over $\mathcal{F}^*$ satisfies:
  $$
  \max_{f \in \mathcal{F}^*} \left( f(a, w) - \max_{i \neq a} f(i, w) \right) \geq f_0(a, w) - \max_{i \neq a} f_0(i, w) \geq 0.
  $$
  Thus, the condition holds.

  Assume the condition holds:
  $$
  \max_{f \in \mathcal{F}^*} \left( f(a, w) - \max_{i \neq a} f(i, w) \right) \geq 0.
  $$
  By compactness of $\mathcal{F}^*$, this maximum is attained. Thus, there exists a function $f_0 \in \mathcal{F}^*$ such that:
  $$
  f_0(a, w) - \max_{i \neq a} f_0(i, w) \geq 0,
  $$
  which implies:
  $
  f_0(a, w) \geq \max_{i \neq a} f_0(i, w).
  $
  Thus, $a$ is a maximizer of $f_0(\cdot, w)$ and
  $
  a \in \mathcal{A}^*(w).
  $
\end{proof}

\section{Discussions}
\label{sec in appendix: discussion}

\subsection{Discrete sample space}
\label{subsec in appendix: discrete sample space}
We assume that $A$, $Y$, $W$, and $U$ are discrete random variables with finite supports, under which we can represent the set of compatible causal models $\mathfrak{C}$ as a convex polytope in the joint distribution space of the endogenous variables $(A,Y,W,U)$.
\begin{assumption}\label{assu: discrete}
  The variables $A$, $Y$, $W$, and $U$ each take values in finite sets of cardinalities $n_{\mathcal{A}}$, $n_{\mathcal{Y}}$, $n_{\mathcal{W}}$, and $n_{\mathcal{U}}$, respectively.
\end{assumption}
For discrete random variables, the reference measure $\nu$ is the counting measure.

We now explore the implications of this assumption on the sampling process.
Suppose that the feasible density is constrained by marginal density of $(A,Y,W)$ and $U$.
When $\Omega$ is finite, such constrains are subspace constraints, i.e.,
$$
\ind{\bm x \in  \{a_i\} \times \{ y_j\} \times \{ w_k\} \times \mathcal{U}} \text{ for } i=1,\cdots,n_{\mathcal{A}}; j=1,\cdots,n_{\mathcal{Y}}; k=1,\cdots,n_{\mathcal{W}}
$$
with coefficients $\beta_{ijk} = \mathbb{P}(A=a_i,Y=y_j,W=w_k)$.
For mariginal density of $U$, the constraints are analogous.
Without loss of generality, we can assume that the consistent condition $\sum_l \beta_l =  \sum_{i,j,k} \beta_{ijk}=1$.

Let $\bm p = \mathbb{P}(A=a_i,Y=y_j,W=w_k,U=u_l)$ denote the vector of probability mass functions.
We consider the causal effect $\mathbb{E}[Y|\mathrm{do}(a)]$.
To ensure the continuity of this quantity, we require that the vector $\bm p$ has a unform positive lower bound, i.e.,
$
p_{ijkl} \geq \kappa > 0.
$
Note that the number of variables in the feasible set is $n = n_{\mathcal{A}} n_{\mathcal{Y}} n_{\mathcal{W}} n_{\mathcal{U}}$.
To avoid empty sets, we also require that $n\kappa<1$.

Therefore, the optimization problem for the causal upper bound of $\mathbb{E}[Y|\mathrm{do}(a_i)]$ can be formulated as follows:
\begin{align*}
  \max_{\bm p} & \quad \sum_{j,k,l}\frac{ y_j p_{ijkl}\sum_{i',j'} p_{i'j'kl}}{  \sum_{j'} p_{ij'kl}} \\
  \text{s.t.} & \quad \sum_{i,j,k} p_{ijkl} = \beta_{l}, \quad l=1,\cdots,n_{\mathcal{U}} \\
  & \quad \sum_{l} p_{ijkl} = \beta_{i,j,k}, \quad i=1,\cdots,n_{\mathcal{A}}, \quad j=1,\cdots,n_{\mathcal{Y}},\quad k=1,\cdots,n_{\mathcal{W}} \\
  & \quad p_{ijkl} \geq \kappa.
\end{align*}

\subsection{Accelerating the sampling process for discrete random variables}
\label{app: problem-specific acceleration}
\subsubsection{More parallelization tricks}

Suppose $\bm d =(d_{ijkl})$ be a random direction in the null space of $\mathscr{A}$, i.e., $\mathscr{A} \bm d = 0$.
Then the range for step size $\lambda$ is determined by the inequality constraints:
$$
\lambda^{\min} = \max\limits_{d_{ijkl}>0} \dfrac{\kappa - p_{ijkl}}{d_{ijkl}}, \quad
\lambda^{\max} = \min\limits_{d_{ijkl}<0} \dfrac{\kappa - p_{ijkl}}{d_{ijkl}}
$$
Computing the feasible range for $\lambda$ requires $\mathcal{O}(n)$ work serially but can be reduced to $\mathcal{O}(\log n)$ using parallel prefix scans over the $n$ coordinates with $\mathcal{O}(n)$ processors \citep{find_max}.

\subsubsection{Dimensionality reduction.}
Since the mixing time of the Markov chain scales as $\mathcal{O}(n^2)$,
it is desirable to reduce the number of variables and constraints involved to accelerate the sampling process.

The projection matrix need only be calculated once and the main computation burden is applying the projection.
Direct projection of directions into the null space of $ \mathscr{A} $ can be further optimized by exploiting the special structure of $ \mathscr{A}$.
Define the $n\times (n-1)$ matrix $ H_n $ as:
$$
H_n = \left[
  \begin{array}{cccc}
    \frac{1}{\sqrt{2}} &  \frac{1}{\sqrt{6}} & \cdots  & \frac{1}{\sqrt{n(n-1)}} \\
    -\frac{1}{\sqrt{2}} &  \frac{1}{\sqrt{6}} &  \cdots  &  \frac{1}{\sqrt{n(n-1)}} \\
    0 &  -\frac{2}{\sqrt{6}} &  \cdots &  \frac{1}{\sqrt{n(n-1)}}\\
    \vdots &  \vdots &   &  \vdots\\
    0 &  0  & \cdots  &   \frac{-(n-1)}{\sqrt{n(n-1)}}\\
  \end{array}
\right].
$$
The column vectors of $ H_n $ form an orthogonal set of unit vectors of $\mathbb{R}^n$.

\begin{proposition}
  \label{prop: matrix A}
  Suppose the unknown variable $ p_{ijkl} $ is vectorized into a column vector following the index order $ i, j, k, l $.
  Then the coefficient matrix $ \mathscr{A}$ corresponding to the constraints satisfies the following properties:
  \begin{enumerate}
    \item The matrix $ \mathscr{A}$ can be obtained by deleting any row of the following matrix:
      $$
      \begin{pmatrix} I_{n_{\mathcal{A}} n_{\mathcal{Y}} n_{\mathcal{W}}} \otimes \bm{1}_{n_{\mathcal{U}}  }^\top \\ \bm{1}_{n_{\mathcal{A}} n_{\mathcal{Y}} n_{\mathcal{W}}}^\top \otimes I_{n_{\mathcal{U}} }
      \end{pmatrix}
      $$
    \item $dim(\ker (\mathscr{A})) = (n_{\mathcal{A}} n_{\mathcal{Y}} n_{\mathcal{W}}-1)(n_{\mathcal{U}} -1)$
    \item The column vectors of $Q$:
      $$
      Q = H_{ n_{\mathcal{A}} n_{\mathcal{Y}} n_{\mathcal{W}} } \otimes H_{ n_{\mathcal{U}}  }.
      $$
      form an orthonormal basis of $\ker (\mathscr{A})$.
      That is, for each column $\bm{q}_r$ of $Q$, we have:
      $\mathscr{A} q_r=0$ for $r=1,\cdots, (n_{\mathcal{A}} n_{\mathcal{Y}} n_{\mathcal{W}}-1)(n_{\mathcal{U}} -1)$.
    \item Let $z_r,r=1,\cdots, (n_{\mathcal{A}} n_{\mathcal{Y}} n_{\mathcal{W}}-1)(n_{\mathcal{U}} -1)$ be i.i.d. standard Gaussian variables.
      Then the random vector
      $$
      \bm{d} = \sum_{r=1}^{ (n_{\mathcal{A}} n_{\mathcal{Y}} n_{\mathcal{W}}-1)(n_{\mathcal{U}} -1) } z_r  \bm{q}_r,
      $$
      is isotropic in the null space of $\mathscr{A}$, i.e., it is rotationally invariant with covariance equal to the projection matrix onto ${\ker }(\mathscr{A})$.
  \end{enumerate}
\end{proposition}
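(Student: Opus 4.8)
The plan is to reduce every claim to the algebra of Kronecker products, leaning on the two defining identities of the Helmert matrix $H_n$: its columns are orthonormal, so $H_n^\top H_n = I_{n-1}$, and each column sums to zero, so $\bm{1}_n^\top H_n = \bm{0}^\top$. Abbreviate $M = n_{\mathcal{A}} n_{\mathcal{Y}} n_{\mathcal{W}}$ and $N = n_{\mathcal{U}}$, and regard the vectorized unknown $\bm{p}\in\mathbb{R}^{MN}$ as $M$ consecutive blocks of length $N$, with outer index the composite $(i,j,k)$ and inner index $l$. For Part 1 I would read each constraint family off directly: the $(A,Y,W)$-marginal constraint $\sum_l p_{ijkl}=\beta_{ijk}$ sums one whole block, so its row is $\bm{e}_{(i,j,k)}^\top\otimes\bm{1}_N^\top$, and stacking over blocks gives $I_M\otimes\bm{1}_N^\top$; the $U$-marginal constraint $\sum_{ijk}p_{ijkl}=\beta_l$ picks the $l$-th coordinate of every block, so its row is $\bm{1}_M^\top\otimes\bm{e}_l^\top$, and stacking over $l$ gives $\bm{1}_M^\top\otimes I_N$. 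This reproduces the displayed $(M+N)\times MN$ matrix.

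To justify that $\mathscr{A}$ is this matrix with any single row deleted, I would pin its rank at exactly $M+N-1$. Writing a general row combination $\sum_m a_m(\bm{e}_m^\top\otimes\bm{1}_N^\top)+\sum_l b_l(\bm{1}_M^\top\otimes\bm{e}_l^\top)$, its $(m,l)$-entry is $a_m+b_l$; setting the combination to zero forces $a_m+b_l=0$ for all $m,l$, so every $a_m$ equals a common constant $c$ and every $b_l=-c$. Hence the dependency space is one-dimensional and the unique (up to scale) dependency uses all rows with nonzero weight, so the rank is $M+N-1$ and removing any one row leaves $M+N-1$ independent rows. Part 2 is then immediate from rank–nullity: $\dim\ker(\mathscr{A}) = MN-(M+N-1) = (M-1)(N-1) = (n_{\mathcal{A}} n_{\mathcal{Y}} n_{\mathcal{W}}-1)(n_{\mathcal{U}}-1)$.

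For Part 3, set $Q=H_M\otimes H_N$, of size $MN\times(M-1)(N-1)$. Orthonormality follows from the mixed-product rule, $Q^\top Q = (H_M^\top H_M)\otimes(H_N^\top H_N)=I_{M-1}\otimes I_{N-1}$. To see $\mathscr{A}Q=0$ it suffices to annihilate both blocks of the full matrix: $(I_M\otimes\bm{1}_N^\top)(H_M\otimes H_N)=H_M\otimes(\bm{1}_N^\top H_N)=H_M\otimes\bm{0}^\top=0$, and $(\bm{1}_M^\top\otimes I_N)(H_M\otimes H_N)=(\bm{1}_M^\top H_M)\otimes H_N=\bm{0}^\top\otimes H_N=0$. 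Thus the $(M-1)(N-1)$ orthonormal columns of $Q$ all lie in $\ker(\mathscr{A})$, and by the dimension count of Part 2 they form an orthonormal basis. Finally, for Part 4 I would write $\bm{d}=Q\bm{z}$ with $\bm{z}\sim\mathcal{N}(0,I_{(M-1)(N-1)})$, so that $\bm{d}$ is centered Gaussian with covariance $QQ^\top$; since $Q$ has orthonormal columns spanning $\ker(\mathscr{A})$, $QQ^\top$ is exactly the orthogonal projector onto $\ker(\mathscr{A})$, and in the basis $\{\bm{q}_r\}$ the covariance is the identity, giving rotational invariance within the null space.

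The only step needing genuine care, rather than mechanical Kronecker bookkeeping, is confirming the rank is exactly $M+N-1$, i.e., that the inter-block dependency among the rows is unique; the $(m,l)$-entry argument above settles this cleanly, after which everything reduces to the two Helmert identities together with the mixed-product property $(A\otimes B)(C\otimes D)=(AC)\otimes(BD)$.
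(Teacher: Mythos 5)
Your proof is correct and follows essentially the same route as the paper's: reading off the two constraint families as $I_M\otimes\bm{1}_N^\top$ and $\bm{1}_M^\top\otimes I_N$, rank--nullity for Part 2, the mixed-product rule combined with the Helmert identities $H_n^\top H_n = I_{n-1}$ and $\bm{1}_n^\top H_n = \bm{0}^\top$ for Part 3, and the covariance computation $QQ^\top$ equals the orthogonal projector for Part 4. If anything, your $(m,l)$-entry argument showing the row-dependency space is exactly one-dimensional (with all weights nonzero) is tighter than the paper's, which only exhibits the single redundancy without proving uniqueness and leaves the ``delete any row'' claim of Part 1 implicit.
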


Based on the construction of $\mathscr{A}$ in \cref{prop: matrix A},
the corresponding parameter vector $\bm{\beta}$ satisfying
$$
\beta_n =
\begin{cases}
  \beta_{i,j,k}, & \text{if } 1 \leq n \leq n_{\mathcal{A}} \times n_{\mathcal{Y}} \times n_{\mathcal{W}}, \\
  \beta_{l}, & \text{if } n > n_{\mathcal{A}} \times n_{\mathcal{Y}} \times n_{\mathcal{W}},
\end{cases}
$$
where the indices $i, j, k$ or $l$ are uniquely determined from $n$ using the following formula:
$$
i = \left\lfloor \frac{n - 1}{n_{\mathcal{Y}} \times n_{\mathcal{W}}} \right\rfloor + 1, \;
j = \left\lfloor \frac{(n - 1) \bmod (n_{\mathcal{Y}} \times n_{\mathcal{W}})}{n_{\mathcal{W}}} \right\rfloor + 1, \;
k = ((n - 1) \bmod (n_{\mathcal{Y}} \times n_{\mathcal{W}})) \bmod n_{\mathcal{W}} + 1,
$$
for $1 \leq n \leq n_{\mathcal{A}} \times n_{\mathcal{Y}} \times n_{\mathcal{W}}$,
and
$$
l = n - n_{\mathcal{A}} \times n_{\mathcal{Y}} \times n_{\mathcal{W}}.
$$
for $ n_{\mathcal{A}} \times n_{\mathcal{Y}} \times n_{\mathcal{W}} + 1\leq n\leq n_{\mathcal{A}} \times n_{\mathcal{Y}} \times n_{\mathcal{W}} + n_{\mathcal{U}}.$

\begin{proof}
  Firstly, we consider the construction of $\mathscr{A}$ corresponding the following constraints:
  \begin{enumerate}
    \item For all $ i, j, k, l $, $\sum_{l'} p_{ijkl'} = \beta_{ijk}$.
    \item For all $ l $, $\sum_{i',j',k'} p_{i'j'k'l} = \beta_l$.
  \end{enumerate}

  For each fixed $ i, j, k $, the corresponding equation is $\sum_{l'} p_{ijkl'} = \beta_{ijk}$.
  In matrix $ \mathscr{A}$, each such equation corresponds to a row where all column positions corresponding to $ p_{ijkl'} $ (i.e., fixed $ i, j, k $ and varying $ l' $) are set to 1, and the rest are 0.
  This part consists of $ n_{\mathcal{A}} \times n_{\mathcal{Y}} \times n_{\mathcal{W}} $ rows, each with $ n_{\mathcal{U}} $ entries of 1.
  For each fixed $ l $, the corresponding equation is $\sum_{i',j',k'} p_{i'j'k'l} = \beta_l$.
  In matrix $ \mathscr{A}$, each such equation corresponds to a row where all column positions corresponding to $ p_{i'j'k'l} $ (i.e., fixed $ l $ and varying $ i', j', k' $) are set to 1, and the rest are 0.
  This part consists of $ n_{\mathcal{U}} $ rows, each with $ n_{\mathcal{A}} \times n_{\mathcal{Y}} \times n_{\mathcal{W}}  $ entries of 1.

  Matrix $ \mathscr{A}$ is formed by vertically stacking two parts: the upper part handles the first constraint, and the lower part handles the second constraint.
  The upper part is an $ (n_{\mathcal{A}} n_{\mathcal{Y}} n_{\mathcal{W}}) \times (n_{\mathcal{A}}   n_{\mathcal{Y}}   n_{\mathcal{W}}   n_{\mathcal{U}}) $ matrix, where each row corresponds to fixed $ i, j, k $ and all $ l' $.
  The lower part is an $ n_{\mathcal{U}} \times  (n_{\mathcal{A}} n_{\mathcal{Y}} n_{\mathcal{W}}  n_{\mathcal{U}}) $ matrix, where each row corresponds to fixed $ l $ and all $ i', j', k' $.

  Using the Kronecker product and unit vectors for formal representation, the coefficient matrix can be expressed as:
  $$
  \begin{pmatrix}
    \bigoplus_{i,j,k} (e_i^\top \otimes e_j^\top \otimes e_k^\top \otimes \bm{1}_{ n_{\mathcal{U}}}^\top) \\
    \bigoplus_{l} (\bm{1}_{n_{\mathcal{A}} }^\top \otimes \bm{1}_{n_{\mathcal{Y}} }^\top \otimes \bm{1}_{n_{\mathcal{W}} }^\top \otimes e_l^\top)
  \end{pmatrix}
  $$
  where $ e_i $ is the $ i $-th standard unit vector; $ \bm{1} $ is the all-ones row vector; $ \otimes $ denotes the Kronecker product; $ \bigoplus $ denotes row-wise stacking.
  Hence, we can rewrite the coefficient matrix as
  $$
  \begin{pmatrix} I_{n_{\mathcal{A}} n_{\mathcal{Y}} n_{\mathcal{W}}} \otimes \bm{1}_{n_{\mathcal{U}}  }^\top \\ \bm{1}_{n_{\mathcal{A}} n_{\mathcal{Y}} n_{\mathcal{W}}}^\top \otimes I_{n_{\mathcal{U}} }
  \end{pmatrix}
  $$

  Secondly, we compute the dimension of $\ker (\mathscr{A})$.
  The variable $ p_{ijkl} $ has a total number of components $n_{\mathcal{A}} n_{\mathcal{Y}} n_{\mathcal{W}} n_{\mathcal{U}} $.
  The constraints for $U$ leads to $n_{\mathcal{A}} n_{\mathcal{Y}} n_{\mathcal{W}}  $ equations,
  and constraints for $A,Y,W$ results in $ n_{\mathcal{U}} $ equations.

  If we sum all first-type constraints, the result is
  $\sum_{i,j,k,l} p_{ijkl} = 1$.
  Similarly, summing all second-type constraints gives the same result.
  Therefore, there is 1 redundant equation among all constraints, leading to
  $ n_{\mathcal{A}} n_{\mathcal{Y}} n_{\mathcal{W}}+ n_{\mathcal{U}}-1$
  independent constraints.
  Hence, the rank of $\mathscr{A}$ is $ n_{\mathcal{A}} n_{\mathcal{Y}} n_{\mathcal{W}}+ n_{\mathcal{U}}-1$ .

  By the rank-nullity theorem:
  $$
  \dim(\ker (\mathscr{A})) =  n_{\mathcal{A}} n_{\mathcal{Y}} n_{\mathcal{W}} n_{\mathcal{U}} - \text{rank}(\mathscr{A}),
  $$
  which completes the proof.

  Thirdly, we construct the orthogonal vectors in $\ker (\mathscr{A})$.
  We directly compute
  \begin{align*}
    Q^\top Q =& ( H_{ n_{\mathcal{A}} n_{\mathcal{Y}} n_{\mathcal{W}} } \otimes H_{ n_{\mathcal{U}}  }  )^\top (H_{ n_{\mathcal{A}} n_{\mathcal{Y}} n_{\mathcal{W}} } \otimes H_{ n_{\mathcal{U}}  } ) \\
    =& ( H_{ n_{\mathcal{A}} n_{\mathcal{Y}} n_{\mathcal{W}} }^\top \otimes H_{ n_{\mathcal{U}}  }^\top  ) (H_{ n_{\mathcal{A}} n_{\mathcal{Y}} n_{\mathcal{W}} } \otimes H_{ n_{\mathcal{U}}  } ) \\
    =& (H_{ n_{\mathcal{A}} n_{\mathcal{Y}} n_{\mathcal{W}} }^\top  H_{ n_{\mathcal{A}} n_{\mathcal{Y}} n_{\mathcal{W}} }   ) \otimes ( H_{ n_{\mathcal{U}}  }^\top  H_{ n_{\mathcal{U}}  }  ) \\
    =& I_{ n_{\mathcal{A}} n_{\mathcal{Y}} n_{\mathcal{W}} -1}  \otimes   I_{ n_{\mathcal{U}} -1 }   \\
    =& I_{ (n_{\mathcal{A}} n_{\mathcal{Y}} n_{\mathcal{W}}-1)( n_{\mathcal{U}} -1) } .
  \end{align*}
  This indicates that the column vectors consist of unit orthogonal vectors.

  It is easy to check the column vector of $H_n$ is sum up to $0$.
  Since $Q$ is equal to the tensor product of such two types of matrices,
  then the column vector of $Q$ also satisfies the required property.
  In other words, $\mathscr{A}\bm{q}_r=0$.

  Fourthly, we express the feasible directions in $\ker (\mathscr{A})$ in terms of the previous basis vector $\bm{q}_r$.

  Since $\bm{d} $ is a linear combination of the basis vectors $ \{ \bm{q}_r \} $ with i.i.d. Gaussian coefficients,
  we can verify isotropy by directly computing its covariance matrix.

  Substituting $ \bm{d} = \sum_r z_r \bm{q}_r $, we get:
  $$
  \Sigma = \mathbb{E}\left[ \left( \sum_r z_r \bm{q}_r \right) \left( \sum_s z_s \bm{q}_s^T \right) \right] = \sum_{r, s} \mathbb{E}[z_r z_s] \bm{q}_r \bm{q}_s^\top = \sum_r \bm{q}_r \bm{q}_r^\top,
  $$
  as $ z_r \sim \mathcal{N}(0,1) $ are independent.
  Since $ \{ \bm{q}_r \} $ form an orthonormal basis of $ \ker (\mathscr{A}) $, we denote
  $$
  \sum_r \bm{q}_r \bm{q}_r^\top = P.
  $$
  Hence, $ P $ is the orthogonal projection matrix onto $ \ker (\mathscr{A}) $.

  Next, we need to verify rotational invariance.
  Let $ R $ be any orthogonal matrix acting on $ \ker (\mathscr{A}) $ (i.e., $ R P = P R = R $).
  Then the rotated vector is:
  $
  R \bm{d} = R \left( \sum_r z_r \bm{q}_r \right) = \sum_r z_r (R \bm{q}_r)
  $
  Since $ \{ R \bm{q}_r \} $ still form an orthonormal basis of $ \ker (\mathscr{A}) $, and $ \{ z_r \} $ are i.i.d. standard Gaussians,
  the distribution of $ R \bm{d} $ is identical to that of $ \bm{d} $.
\end{proof}

Based on the previous property, we explicitly construct an orthonormal basis $ \{ \bm{q}_r \} $ for the null space $ \ker (\mathscr{A}) $. Using this basis, a Gaussian random vector $ \bm{d} \in \ker (\mathscr{A}) $ can be expressed as
$$
\bm{d} = \sum_{r=1}^{ (n_{\mathcal{A}} n_{\mathcal{Y}} n_{\mathcal{W}} - 1)(n_{\mathcal{U}} - 1) } z_r \bm{q}_r,
$$
where each $ z_r \sim \mathcal{N}(0, 1) $ is an independent standard Gaussian random variable.

This representation significantly reduces the computational complexity of generating $ \bm{d} $: instead of performing $ (n_{\mathcal{A}} n_{\mathcal{Y}} n_{\mathcal{W}} n_{\mathcal{U}})^2 $ multiplications required for a full projection, we only need $ (n_{\mathcal{A}} n_{\mathcal{Y}} n_{\mathcal{W}} - 1)^2 (n_{\mathcal{U}} - 1)^2 $ multiplications when using the basis form.
Moreover, the resulting vector $ \bm{d} $ is isotropic in the null space and requires only $ (n_{\mathcal{A}} n_{\mathcal{Y}} n_{\mathcal{W}} - 1)(n_{\mathcal{U}} - 1) $ independent random variables.

\subsubsection{Further dimensionality reduction specific to special objectives.}

Though our method can deal with general objectives,
targeting at the special structures of optimization problems can simplify the solving and sampling process.
We consider the target
$$
V(\mathcal{\mathcal{M}}) = \mathbb{P}(Y=y \mid \mathrm{do}(A = a), W= w).
$$
Suppose we know the information $\rho(a,y)$ and $\rho(w,u)$.

From do-calculus, we have
$$
\mathbb{P}(Y=y \mid \mathrm{do}(A = a), W=w) = \sum_u \rho(y|a,w,u)\rho(u|w) = \sum_u \frac{  \rho(a,y,w,u)\rho(u|w) } {\rho(a,w,u)} .
$$
We now fix $\rho(a,w,u)$. The linear constraints for $\rho(a,y,w,u)$ are
\begin{align*}
  \sum_{w,u} \rho(a,y,w,u) = \rho(a,y), \\
  \sum_{a,y} \rho(a,y,w,u) = \rho(w,u),\\
  \sum_y \rho(a,y,w,u) = \rho(a,w,u), \\
  0\leq \rho(a,y,w,u)\leq 1.
\end{align*}
For a fix $\rho(a,w,u)$, this is a linear program for $\rho(a,y,w,u)$, which can be solved efficiently.

Therefore, we only need to deal with the sample of $\rho(a,w,u)$, which follows the constraints
$$
\sum_{w,u} \rho(a,w,u) = \rho(a), \sum_{a} \rho(a,w,u) = \rho(w,u), 0\leq  \rho(a,w,u) \leq 1.
$$

\subsection{Approximation of infinite-dimensional funtion space}
\label{sec: approximation of infinite function space}
Though our sampling method is designed for finite-dimensional function spaces,
it can be extended to general function spaces with finite-dimensional approximation.
\subsubsection{Parametric function space}
We consider an infinite-dimensional function space with basis expansions:
$$
\mathcal{P} = \left\{ \rho \in L^2(\Omega, \nu) \colon \rho \geq 0 \text{ a.e.},  \abs{\int_{\Omega} \rho(\bm{x}) e_k(\bm{x}) \mathrm{d}\nu(\bm{x}) }\leq \frac{1}{k^2} \text{ for all } k \geq 1 \right\}.
$$
We make the following assumptions on the basis functions $e_k$:
\begin{itemize}
  \item The basis functions $e_k$ are orthonormal, i.e., $\int_{\Omega} e_i(\bm{x}) e_j(\bm{x}) \mathrm{d}\nu(\bm{x}) = \delta_{ij}$.
  \item The basis functions $e_k$ are uniformly bounded, i.e., $\sup_{k} \|e_k\|_\infty \leq M$ with $1\leq M <\frac{1}{\pi^2/6-1}$.
  \item The normalization constraint is satisfied, i.e., $\int_{\Omega} e_1(\bm{x}) \rho(\bm{x}) \mathrm{d}\nu(\bm{x}) = 1$ for $e_1=1$.
\end{itemize}
Under these assumptions, the function space $\mathcal{P}$ is compact and convex.
For each element $\rho \in \mathcal{P}$, we can express it as a linear combination of the basis functions:
$$
\rho = 1 + \sum_{k=2}^{\infty} c_k e_k.
$$
Since
$$
\Biggl|\sum_{k=2}^{\infty} c_k e_k\Biggr| \leq \sum_{k=2}^{\infty} \frac{M}{k^2} = M(\frac{\pi^2}{6} - 1)<1,
$$
the function $\rho$ is lower bounded by $1-M(\frac{\pi^2}{6} - 1) >0$.

Consider the finite-dimensional approximation of $\mathcal{P}$:
$$
\mathcal{P}_n = \left\{ \rho \in \mathcal{P} \colon  \int_{\Omega} \rho(\bm{x}) e_k(\bm{x}) \mathrm{d}\nu(\bm{x}) =0 \text{ for all } k \geq n+1 \right\}.
$$
Since the nonnegativity is satisfied in $\mathcal{P}$, we do not need to deal with this constraint explicitly.

The convergence justifies the use of $\mathcal{P}_n$ for sampling, as $\mathcal{P}_n$ approximates $\mathcal{P}$ well for large $n$.
We can select the campact set $\mathcal{K} = \mathcal{P}$ as required by \cref{assu: continuous}.
One then apply \cref{alg: hit-run for density} to sample from $\mathcal{P}_n$ to solve the non-convex optimization problem in an infinite-dimensional space.

\begin{proposition}
  The Hausdorff distance between $\mathcal{P}$ and $\mathcal{P}_n$ in $L^2(\Omega,\nu)$ is given by:
  $$
  d_H(\mathcal{P}, \mathcal{P}_n) = \left( \sum_{k=n+1}^\infty \frac{1}{k^4} \right)^{1/2}.
  $$
  Moreover, this distance satisfies the asymptotic bounds:
  $$
  \frac{1}{\sqrt{3} (n+1)^{3/2}} \leq d_H(\mathcal{P}, \mathcal{P}_n) \leq \frac{1}{\sqrt{3} n^{3/2}} \quad \text{for all } n \geq 1,
  $$
  and thus $d_H(\mathcal{P}, \mathcal{P}_n) \to 0$ as $n \to \infty$ with rate $\mathcal{O}(n^{-3/2})$.

\end{proposition}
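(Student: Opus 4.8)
The plan is to pass to the coefficient representation afforded by the orthonormal basis. Writing any $\rho \in \mathcal{P}$ as $\rho = 1 + \sum_{k=2}^{\infty} c_k e_k$ with $c_k = \int_{\Omega} \rho e_k \, \mathrm{d}\nu$ and $|c_k| \le k^{-2}$, the set $\mathcal{P}_n$ is exactly the collection of such densities obtained by truncation $c_k \mapsto c_k \mathbb{I}\{k \le n\}$ over feasible sequences. Since $\mathcal{P}_n \subseteq \mathcal{P}$ by construction, one of the two suprema in the Hausdorff distance vanishes identically, so it remains only to control $\sup_{\rho \in \mathcal{P}} \inf_{\sigma \in \mathcal{P}_n} \|\rho - \sigma\|_{L^2(\Omega,\nu)}$.

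For the upper bound I would use the truncated density $\rho_n := 1 + \sum_{k=2}^{n} c_k e_k$ as an explicit competitor. By Parseval's identity, $\|\rho - \rho_n\|_{L^2(\Omega,\nu)}^2 = \sum_{k=n+1}^{\infty} c_k^2 \le \sum_{k=n+1}^{\infty} k^{-4}$. The one point requiring care—and the crux of the argument—is verifying $\rho_n \in \mathcal{P}_n$, i.e.\ that truncation preserves nonnegativity; the coefficient constraints are inherited automatically. Here I would invoke the uniform boundedness of the basis, estimating $\bigl|\sum_{k=2}^{n} c_k e_k\bigr| \le \sum_{k \ge 2} M k^{-2} = M(\pi^2/6 - 1) < 1$, so that $\rho_n \ge 1 - M(\pi^2/6 - 1) > 0$. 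The hypothesis $M < (\pi^2/6 - 1)^{-1}$ is precisely what keeps the truncated density feasible, and this yields $\operatorname{dist}(\rho, \mathcal{P}_n) \le \bigl(\sum_{k>n} k^{-4}\bigr)^{1/2}$ uniformly in $\rho$.

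For sharpness I would exhibit the extremal density $\rho^{\star} := 1 + \sum_{k=2}^{\infty} k^{-2} e_k$, which lies in $\mathcal{P}$ by the same nonnegativity estimate. Any $\sigma = 1 + \sum_{k=2}^{n} c'_k e_k \in \mathcal{P}_n$ satisfies, again by Parseval, $\|\rho^{\star} - \sigma\|_{L^2(\Omega,\nu)}^2 = \sum_{k=2}^{n} (k^{-2} - c'_k)^2 + \sum_{k=n+1}^{\infty} k^{-4}$, whose infimum over the feasible box is attained by matching $c'_k = k^{-2}$ (which is admissible and keeps $\sigma$ nonnegative). Hence $\operatorname{dist}(\rho^{\star}, \mathcal{P}_n) = \bigl(\sum_{k>n} k^{-4}\bigr)^{1/2}$, forcing the supremum to equal this value and establishing $d_H(\mathcal{P}, \mathcal{P}_n) = \bigl(\sum_{k=n+1}^{\infty} k^{-4}\bigr)^{1/2}$.

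Finally, the two-sided asymptotic estimate follows from a monotone integral comparison: since $x \mapsto x^{-4}$ is decreasing, $\int_{n+1}^{\infty} x^{-4}\,\mathrm{d}x \le \sum_{k=n+1}^{\infty} k^{-4} \le \int_{n}^{\infty} x^{-4}\,\mathrm{d}x$, and evaluating $\int_a^{\infty} x^{-4}\,\mathrm{d}x = (3a^3)^{-1}$ gives $\frac{1}{3(n+1)^3} \le \sum_{k>n} k^{-4} \le \frac{1}{3n^3}$. Taking square roots produces the claimed bounds $\frac{1}{\sqrt{3}(n+1)^{3/2}} \le d_H(\mathcal{P}, \mathcal{P}_n) \le \frac{1}{\sqrt{3}\, n^{3/2}}$ and the $\mathcal{O}(n^{-3/2})$ rate. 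I expect the feasibility check in the upper bound (nonnegativity of the truncation, resting on the bound on $M$) to be the only genuinely nontrivial step; everything else is bookkeeping with Parseval and a standard integral test.
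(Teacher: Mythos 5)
Your proposal is correct and follows essentially the same route as the paper's proof: the inclusion $\mathcal{P}_n \subseteq \mathcal{P}$ to kill one supremum, the truncated expansion as the competitor for the upper bound, the extremal density $\rho^{\star} = 1 + \sum_{k\ge 2} k^{-2} e_k$ for sharpness, and the integral test for the rate. If anything, you are slightly more careful than the paper in making explicit that nonnegativity of the truncation rests on the hypothesis $M < (\pi^2/6 - 1)^{-1}$, a step the paper passes over with a one-line remark.
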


\begin{proof}
  Since $\mathcal{P}_n \subseteq \mathcal{P}$, the second term vanishes:
  $$
  \sup_{\rho_n \in \mathcal{P}_n} \inf_{\rho \in \mathcal{P}} \|\rho_n - \rho\|_{L^2(\Omega,\nu)} = 0,
  $$
  because for any $\rho_n \in \mathcal{P}_n$, we can choose $\rho = \rho_n \in \mathcal{P}$.

  For any $\rho \in \mathcal{P}$, expand it in the orthonormal basis:
  $$
  \rho = 1 + \sum_{k=2}^\infty c_k e_k, \quad \text{where} \quad c_k = \int_\Omega \rho(\bm{x}) e_k(\bm{x})  \mathrm{d}\nu(\bm{x}).
  $$
  By definition of $\mathcal{P}$, $|c_k| \leq k^{-2}$ for all $k \geq 2$. Define the truncation:
  $$
  \tilde{\rho}_n = 1 + \sum_{k=2}^n c_k e_k.
  $$
  Since $\mathcal{P}_n$ is a subset of $\mathcal{P}$ with truncated basis expansions,
  we have $\tilde{\rho}_n \in \mathcal{P}_n$.

  The approximation error is:
  $$
  \rho - \tilde{\rho}_n = \sum_{k=n+1}^\infty c_k e_k.
  $$
  By orthonormality, the $L^2(\Omega,\nu)$-norm is
  $
  \|\rho - \tilde{\rho}_n\|_{L^2(\Omega,\nu)}^2 =  \sum_{k=n+1}^\infty |c_k|^2,
  $
  As $|c_k| \leq k^{-2}$,
  $
  \|\rho - \tilde{\rho}_n\|_{L^2(\Omega,\nu)}^2 \leq \sum_{k=n+1}^\infty (k^{-2})^2 = \sum_{k=n+1}^\infty k^{-4},
  $
  and thus
  $$
  \inf_{\rho_n \in \mathcal{P}_n} \|\rho - \rho_n\|_{L^2(\Omega,\nu)} \leq \|\rho - \tilde{\rho}_n\|_{L^2(\Omega,\nu)} \leq \left( \sum_{k=n+1}^\infty k^{-4} \right)^{1/2}.
  $$
  This holds for all $\rho \in \mathcal{P}$, so:
  $$
  \sup_{\rho \in \mathcal{P}} \inf_{\rho_n \in \mathcal{P}_n} \|\rho - \rho_n\|_{L^2(\Omega,\nu)} \leq \left( \sum_{k=n+1}^\infty k^{-4} \right)^{1/2}.
  $$

  Consider the specific element $\rho^* \in \mathcal{P}$ defined by:
  $$
  \rho^* = 1 + \sum_{k=2}^\infty k^{-2} e_k.
  $$
  For any $\rho_n \in \mathcal{P}_n$, write $\rho_n = 1 + \sum_{k=2}^n d_k e_k$ with $|d_k| \leq k^{-2}$. Then:
  $$
  \rho^* - \rho_n = \sum_{k=2}^n (k^{-2} - d_k) e_k + \sum_{k=n+1}^\infty k^{-2} e_k.
  $$
  Therefore, we have
  $$
  \|\rho^* - \rho_n\|_{L^2(\Omega,\nu)}^2 = \sum_{k=2}^n |k^{-2} - d_k|^2 + \sum_{k=n+1}^\infty |k^{-2}|^2 \geq \sum_{k=n+1}^\infty k^{-4},
  $$
  since $|k^{-2} - d_k|^2 \geq 0$. Equality holds when $d_k = k^{-2}$ for $k = 2, \dots, n$, which is achievable because $\rho_n = 1 + \sum_{k=2}^n k^{-2} e_k \in \mathcal{P}_n$. Thus,
  $$
  \inf_{\rho_n \in \mathcal{P}_n} \|\rho^* - \rho_n\|_{L^2(\Omega,\nu)} = \left( \sum_{k=n+1}^\infty k^{-4} \right)^{1/2}.
  $$
  Therefore,
  $$
  \sup_{\rho \in \mathcal{P}} \inf_{\rho_n \in \mathcal{P}_n} \|\rho - \rho_n\|_{L^2(\Omega,\nu)} \geq \inf_{\rho_n \in \mathcal{P}_n} \|\rho^* - \rho_n\|_{L^2(\Omega,\nu)} = \left( \sum_{k=n+1}^\infty k^{-4} \right)^{1/2}.
  $$

  Combining both parts yields
  $$
  d_H(\mathcal{P}, \mathcal{P}_n) = \left( \sum_{k=n+1}^\infty k^{-4} \right)^{1/2}.
  $$

  To bound the series, use integrals:
  $$
  \int_{n+1}^\infty x^{-4}  dx \leq \sum_{k=n+1}^\infty k^{-4} \leq \int_n^\infty x^{-4}  dx.
  $$
  Therefore,
  $$
  \frac{1}{3(n+1)^3} \leq \sum_{k=n+1}^\infty k^{-4} \leq \frac{1}{3n^3},
  $$
  and taking square roots:
  $$
  \frac{1}{\sqrt{3} (n+1)^{3/2}} \leq d_H(\mathcal{P}, \mathcal{P}_n) \leq \frac{1}{\sqrt{3} n^{3/2}}.
  $$
  This implies $d_H(\mathcal{P}, \mathcal{P}_n) = \mathcal{O}(n^{-3/2})$ as $n \to \infty$.
\end{proof}

\subsubsection{Nonparametric function space}
Consider the nonparametric function space defined by the following constraints:
$$
\mathcal{P}= \left\{ \rho \in C(\Omega) \colon \rho \geq \kappa > 0, \int_{\Omega} \rho(\bm{x}) \mathrm{d}\nu(\bm{x}) = 1, \, \abs{\rho(\bm x) - \rho(\bm y)}  \leq L \norm{\bm x - \bm y }_{\infty}, \forall \bm x,\bm y \in \Omega  \right\}.
$$
To briefly illustrate our idea, we assume $\Omega = [0,1]$ and the reference measure is the Lebesgue measure on $[0,1]$.
The function space is rewritten as:
$$
\mathcal{P} = \left\{ \rho \in C([0,1]) : \rho \geq \kappa > 0, \int_{0}^{1} \rho(x)  dx = 1, |\rho(x) - \rho(y)| \leq L |x - y|, \forall x,y \in [0,1] \right\}.
$$
We aim to construct sets $\mathcal{P}_n$ of piecewise linear functions such that the Hausdorff distance $d_H(\mathcal{P}_n, \mathcal{P})$ under the uniform norm converges to 0 as $n \to \infty$, where:
$$
\|f - g\|_{\infty} = \sup_{x \in [0,1]} |f(x) - g(x)|, \quad
d_H(A, B) = \max\left\{ \sup_{a \in A} \inf_{b \in B} \|a - b\|_{\infty}, \sup_{b \in B} \inf_{a \in A} \|b - a\|_{\infty} \right\}.
$$

For each $n \in \mathbb{N}$, partition $[0,1]$ into $n$ equal subintervals with nodes $x_i = i/n$ for $i = 0, 1, \dots, n$. Define $\mathcal{P}_n$ as the set of continuous piecewise linear functions $g$ satisfying:
$$
\mathcal{P}_n = \left\{ \rho \in C([0,1]) : \rho \geq \kappa_n, \int_{0}^{1} \rho(x)  dx = 1, |\rho(x) - \rho(y)| \leq L |x - y|, \forall x,y \in [0,1] \right\}.
$$
The Lipschitz condition implies that the slope in each subinterval $[x_i, x_{i+1}]$ is bounded by $L$.

As required by \cref{assu: continuous}, we define a set $\mathcal{K}$ as follows:
$$
\mathcal{K} = \left\{ \rho \in C([0,1]) : \rho \geq \kappa_{n_0} > 0, \int_{0}^{1} \rho(x)  dx = 1, |\rho(x) - \rho(y)| \leq L |x - y|, \forall x,y \in [0,1] \right\}
$$
for sufficiently large and fixed $n_0$.
The compactness of $\mathcal{K}$ follows from the Arzelà-Ascoli theorem.

\begin{proposition}
  Let $\mathcal{P}$ be defined as above, and $\mathcal{P}_n$ constructed with $\kappa_n = \kappa - 2L/n$. Then:
  $$
  d_H(\mathcal{P}_n, \mathcal{P}) \leq   \frac{C}{n},
  $$
  where the constant $C > 0$ depending only on $\kappa, L$.
\end{proposition}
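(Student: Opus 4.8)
The plan is to bound each of the two one-sided distances in $d_H(\mathcal{P}_n,\mathcal{P})$ separately, each by a term of order $1/n$. Throughout I write $h=1/n$ for the mesh width and let $\mathbf{1}$ denote the constant density $\rho\equiv 1$, which lies in $\mathcal{P}$ since $\kappa\le 1$ (forced by $\int_0^1\rho=1$ together with $\rho\ge\kappa$) and has zero slope. The two devices are (i) nodal piecewise-linear interpolation, to move from a general Lipschitz density into the piecewise-linear class, and (ii) convex combination with $\mathbf{1}$, to repair the relaxed lower bound $\kappa_n=\kappa-2L/n$ when moving back.

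First I would handle $\sup_{\rho\in\mathcal{P}}\inf_{g\in\mathcal{P}_n}\|\rho-g\|_{\infty}$. Given $\rho\in\mathcal{P}$, let $g$ be its continuous piecewise-linear interpolant on the nodes $x_i=i/n$, i.e.\ $g(x_i)=\rho(x_i)$ with $g$ affine on each $[x_i,x_{i+1}]$. Since $\rho$ is $L$-Lipschitz, every slope of $g$ is bounded by $L$, so $g$ is $L$-Lipschitz, and the standard estimate $|\rho(x)-g(x)|\le 2L\min(x-x_i,x_{i+1}-x)\le L/n$ on each subinterval gives $\|\rho-g\|_{\infty}\le L/n$. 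The only constraint $g$ can violate is normalization: $\int_0^1 g$ is the composite trapezoidal approximation of $\int_0^1\rho=1$, whose error I would bound by $\int_0^1|\rho-g|\le \tfrac{L}{2n}$ using the same per-subinterval profile. I then correct by the constant shift $\tilde g=g+c$ with $c=1-\int_0^1 g$, so $|c|\le L/(2n)$; this keeps $\tilde g$ piecewise linear and $L$-Lipschitz, enforces $\int_0^1\tilde g=1$ exactly, and preserves the lower bound because $\tilde g\ge\kappa-L/(2n)\ge\kappa-2L/n=\kappa_n$. Hence $\tilde g\in\mathcal{P}_n$ and $\|\rho-\tilde g\|_{\infty}\le L/n+L/(2n)=3L/(2n)$.

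Next I would bound $\sup_{g\in\mathcal{P}_n}\inf_{\rho\in\mathcal{P}}\|g-\rho\|_{\infty}$. Here $g$ is already $L$-Lipschitz with unit integral, and the only way it can fail to lie in $\mathcal{P}$ is by dipping below $\kappa$ (down to $\kappa_n$). I would push it up by mixing with the uniform density: set $\rho_t=(1-t)g+t\mathbf{1}$, which always has unit integral and Lipschitz constant $(1-t)L\le L$. Imposing $\rho_t\ge\kappa$ and using $g\ge\kappa_n=\kappa-2L/n$ yields the requirement $t\ge \tfrac{2L/n}{1-\kappa+2L/n}$, so choosing $t_n\le \tfrac{2L}{(1-\kappa)n}$ places $\rho_{t_n}$ in $\mathcal{P}$. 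Finally, since $\int_0^1 g=1$ and $g$ is $L$-Lipschitz on $[0,1]$, the intermediate value theorem supplies a point where $g=1$, whence $\|g-\mathbf{1}\|_{\infty}\le L$; therefore $\|g-\rho_{t_n}\|_{\infty}=t_n\|g-\mathbf{1}\|_{\infty}\le \tfrac{2L^2}{(1-\kappa)n}$.

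Combining the two estimates gives $d_H(\mathcal{P}_n,\mathcal{P})\le \max\{3L/(2n),\,2L^2/((1-\kappa)n)\}\le C/n$ with $C=\max\{3L/2,\,2L^2/(1-\kappa)\}$, depending only on $\kappa$ and $L$. I expect the main obstacle to be the simultaneous handling of the equality constraint $\int_0^1\rho=1$ and the pointwise inequality constraint $\rho\ge\kappa$: interpolation or truncation alone each break one of them, and the slack $\kappa_n=\kappa-2L/n$ is built into $\mathcal{P}_n$ precisely so that the integral-restoring shift (one direction) and the feasibility-restoring mixing (the other direction) can each be absorbed at cost $O(1/n)$. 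The delicate bookkeeping is verifying that these two corrections keep the perturbed functions inside the respective feasible sets, rather than estimating the size of the corrections themselves.
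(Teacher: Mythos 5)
Your proof is correct. Its first half coincides with the paper's argument: both of you take the nodal piecewise-linear interpolant of $\rho$ (error $\le L/n$ by the Lipschitz bound) and then add a constant of size $O(1/n)$ to restore $\int_0^1\rho\,dx=1$, with the built-in slack $\kappa_n=\kappa-2L/n$ absorbing both the interpolation error and the shift; your trapezoidal estimate $|1-\int_0^1 g\,dx|\le L/(2n)$ is just a sharper version of the paper's cruder $\le L/n$. Your second half, however, takes a genuinely different and cleaner route. The paper repairs a $g\in\mathcal{P}_n$ by a four-stage chain --- truncate at $\kappa$, renormalize, shift up by $\kappa-\min\rho_2$, renormalize again --- and along the way asserts that $S=\{x: g(x)<\kappa\}$ has measure at most $2/n$; that bound is not justified as stated (even a single V-shaped dip from $\kappa$ down to $\kappa_n$ and back occupies width $4/n$, and $g$ may hover below $\kappa$ on a set of order-one measure; only the cruder estimate $\delta_n\le (2L/n)\,\nu(S)\le 2L/n$ is safe, which still salvages the $O(1/n)$ conclusion). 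Your convex mixing $\rho_t=(1-t)g+t\mathbf{1}$ sidesteps this entirely: the unit integral and the Lipschitz constant are preserved for free, a single explicit choice $t_n\le 2L/\bigl((1-\kappa)n\bigr)$ restores the pointwise bound $\rho_t\ge\kappa$, and the observation that $g$ attains its mean value $1$ somewhere gives $\|g-\mathbf{1}\|_{\infty}\le L$, hence a uniform distance $t_n\|g-\mathbf{1}\|_{\infty}\le 2L^2/\bigl((1-\kappa)n\bigr)$ with a fully transparent constant. The one caveat is that your mixing step requires $\kappa<1$ strictly, whereas the paper's construction does not make this visible; but the restriction is implicit in the proposition itself: when $\kappa=1$ one has $\mathcal{P}=\{\mathbf{1}\}$ while $\mathcal{P}_n$ contains triangular bumps over the floor $1-2L/n$ at uniform distance of order $n^{-1/2}$ from $\mathbf{1}$, so the $O(1/n)$ rate fails there and any admissible constant must blow up as $\kappa\to1$ --- exactly the behavior of your $C=\max\{3L/2,\,2L^2/(1-\kappa)\}$.
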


\begin{proof}
  We prove both directions with explicit constructions.

  \textbf{Part 1: Approximation of $\mathcal{P}$ by $\mathcal{P}_n$.} For any $\rho \in \mathcal{P}$, define $\rho_n^0$ as the piecewise linear interpolant of $\rho$ at nodes $\{x_i\}_{i=0}^n$. Since $\rho$ is $L$-Lipschitz:
  $$
  |\rho(x) - \rho_n^0(x)| \leq L \cdot \frac{1}{n}, \quad \forall x \in [0,1].
  $$
  Let $I_n = \int_0^1 \rho_n^0(x) dx$. Then:
  $$
  |I_n - 1| \leq \|\rho - \rho_n^0\|_{\infty} \leq \frac{L}{n}.
  $$
  Define the adjusted function:
  $$
  \rho_n(x) = \rho_n^0(x) + (1 - I_n).
  $$
  This satisfies $\int_0^1 \rho_n dx = 1$. For the lower bound:
  $$
  \rho_n(x) \geq \kappa - \frac{L}{n} - |1 - I_n| \geq \kappa - \frac{2L}{n} = \kappa_n.
  $$
  The Lipschitz constant is preserved since we add a constant. Thus $\rho_n \in \mathcal{P}_n$. The error is:
  $$
  \|\rho - \rho_n\|_{\infty} \leq \|\rho - \rho_n^0\|_{\infty} + |1 - I_n| \leq \frac{2L}{n}.
  $$

  \textbf{Part 2: Approximation of $\mathcal{P}_n$ by $\mathcal{P}$.} For any $g \in \mathcal{P}_n$, define:
  $
  \rho_1(x) = \max\{ g(x), \kappa \}.
  $
  Without loss of generality, we assume $g(x) < \kappa $ on some interval;
  otherwise, we can take $g \in \mathcal{P}$ directly.
  Then $\rho_1 \geq \kappa$ and $\rho_1$ is $L$-Lipschitz. Let $\delta_n = \int_0^1 (\rho_1 - g) dx > 0$, which satisfies:
  $$
  \delta_n \leq (\kappa - \kappa_n) \cdot \nu(S), \quad S = \{x : g(x) < \kappa\}.
  $$
  Since $g$ is $L$-Lipschitz and $g \geq \kappa_n$, the set $S$ has measure $\nu(S) \leq 2/n$ (because $g$ must rise from below $\kappa$ to $\kappa$ over an interval of length at least $(\kappa - \kappa_n)/L = 2/n$). Thus:
  $$
  \delta_n \leq (\kappa - \kappa_n) \cdot \frac{2}{n} = \frac{4L}{n^2}.
  $$
  Define $\rho_2(x) = \rho_1(x)/(1 + \delta_n)$. This satisfies $\int_0^1 \rho_2 dx = 1$ and:
  $$
  \rho_2(x) \geq \frac{\kappa}{1 + \delta_n} \geq \kappa \left(1 - \frac{4L}{n^2}\right),
  $$
  with Lipschitz constant at most $L$.
  Finally, define $f(x) = \rho_2(x) + \kappa - \min_{y \in [0,1]} \rho_2(y) \geq \rho_2(x)$.
  Moreover $f \geq \kappa$ and its Lipschitz constant is no more than $ L$. Let $I_f = \int_0^1 f dx \geq 1$ and define:
  $$
  \rho(x) = \frac{f(x)}{I_f}.
  $$
  Then $\rho \in \mathcal{P}$. The error accumulates as:
  \begin{align*}
    \|g - \rho\|_{\infty} &\leq \|g - \rho_1\|_{\infty} + \|\rho_1 - \rho_2\|_{\infty} + \|\rho_2 - f\|_{\infty} + \|f - \rho\|_{\infty} \\
    &\leq \frac{2L}{n} + \delta_n \|\rho_1\|_{\infty} + \left(\kappa - \min \rho_2\right) + |I_f - 1| \|f\|_{\infty}.
  \end{align*}
  Bounding each term by $\mathcal{O}(1/n)$, we obtain $\|g - \rho\|_{\infty} \leq C/n$ for some $C > 0$.

  Combining both parts yields the result.
\end{proof}

As a conclusion, we formulate the optimization problem for sampling from the function space $\mathcal{P}_n$ as follows:
$$
\begin{array}{ll}
  \mathop{\min/\max}\limits_{\rho_0,\rho_1,\dots,\rho_n} & V(\rho_0,\rho_1,\dots,\rho_n) \\[10pt]
  \text{s.t.} &
  \frac{1}{2n}(\rho_0 + 2\sum\limits_{i=1}^{n-1}\rho_i + \rho_n) = 1 \\[10pt]
  & |\rho_{i+1} - \rho_i| \leq \dfrac{L}{n}, \quad \forall i = 0,1,\dots,n-1 \\[10pt]
  & \rho_i \geq \kappa_n, \quad \forall i = 0,1,\dots,n.
\end{array}
$$

\begin{remark}
  For higher-dimensional domains like the unit cube,
  the core approximation idea remains similar but requires triangulation.
  Piecewise linear functions are then defined by linearly interpolating values at grid vertices within each triangle,
  and the Lipschitz condition is maintained by bounding function slopes on triangles.
  The approximation quality improves as the grid refines, and this approach preserves the convergence result observed in one dimension.
\end{remark}

\section{More Numerical Experimental Results}
\subsection{Causal Bound Computation}\label{app: benchmark causal bound computation}

\subsubsection*{Numerical Setup}
We present the observational distribution $\mathbb{P}(A, Y, W)$ for the POCB dataset in \cref{tab: observational distribution}.
The variable $U$ is binary with parameter 0.1.
\begin{table}[hbtp]
  \centering
  \small
  \begin{tabular}{c|ccccccccc}
    \toprule
    $(A,Y,W)$      & $(0,0,0)$ & $(0,0,1)$ & $(0,1,0)$ & $(0,1,1)$ & $(1,0,0)$ & $(1,0,1)$ & $(1,1,0)$ & $(1,1,1)$ \\
    \midrule
    $\rho({a,y,w})$ & 0.2328 & 0.1784 & 0.1351 & 0.1467 & 0.0304 & 0.1183 & 0.0149 & 0.1433 \\
    \bottomrule
  \end{tabular}
  \caption{Observational distribution $\mathbb{P}(A, Y, W)$ for POCB.}\label{tab: observational distribution}
\end{table}

\subsubsection*{Benchmark Sampling Methods}
\label{sec in appendix: Benchmark Sampling Method}
To show the efficiency of \cref{alg: hit-run for density},
we brefly describe several benchmark sampling methods and provide additional numerical results to support our claims.
Corresponding notations have been introduced in \cref{subsec in appendix: discrete sample space}.
To make fair comparisons, we use the feasible region defined by the constraints in the following optimization problem:
\begin{align}
  \label{eq: nonlinear programming for causal bounds}
  \begin{split}
    \max / \min \  & \sum_{j,k,l}   \frac{y_j p_{ijkl} \sum_{i',j'} p_{i'j'kl} }{  \sum_{j'} p_{ij'kl} }, \\
    \text{s.t.} \  & \sum_{l} p_{ijkl} = \beta_{ijk}, \quad \forall i, j, k, \\
    & \sum_{i,j,k} p_{ijkl} = \beta_{l}, \quad \forall l, \\
    & 0 \leq p_{ijkl} \leq 1, \quad \forall i, j, k, l.
  \end{split}
\end{align}
Note that previous works have considered similar optimization problems but often yield non-tight causal bounds.
For instance, the solutions from the nonlinear optimization problem in \citep{CEbound} may not correspond to any valid causal model, resulting in non-tight bounds.
We observe that focusing solely on constraints for the specific value $a$ of interest generally leads to looser bounds.
Therefore, to obtain tight bounds, it is essential to incorporate constraints for \textit{all} $a'\in\mathcal{A}$,
rather than just the value $a$ relevant to the intervention $\mathrm{do}(A=a)$.

The main idea is to randomly draw samples from the feasible region of the optimization problem \eqref{eq: nonlinear programming for causal bounds}, which is a polytope defined by $n_{\mathcal{A}} n_{\mathcal{Y}} n_{\mathcal{W}} + n_{\mathcal{U}} - 1$ linearly independent constraints. Given that there are $n_{\mathcal{A}} n_{\mathcal{Y}} n_{\mathcal{W}}n_{\mathcal{U}} $ unknown variables, this setup requires determining the values of $n_{\mathcal{A}} n_{\mathcal{Y}} n_{\mathcal{W}}n_{\mathcal{U}} - n_{\mathcal{A}} n_{\mathcal{Y}} n_{\mathcal{W}} - n_{\mathcal{U}} + 1$ unknowns.

A naive approach would be to sample each $p_{ijkl}$ independently from a uniform distribution supported on $[0,1]$, rejecting any samples that do not meet the constraints.
This approach can be highly sample-inefficient.

To improve efficiency, we can incorporate inequality constraints to narrow the sampling range and increase the likelihood of generating valid samples.
Given that we are essentially considering all possible joint distributions with fixed marginals, \citet{CEbound} used the Fr\'{e}chet inequalities to reduce the search space for $p_{ijkl}$, as follows:
\begin{equation}\label{eq: frechet inequalities}
  \max\left\{0, \beta_{ijk}+\beta_l-1\right\} \leq p_{ijkl} \leq \min\left\{ \beta_{ijk},\beta_l \right\}.
\end{equation}
However, the solutions generated by \citet{CEbound}, which involve sampling each variable from the reduced interval in \eqref{eq: frechet inequalities}, may not satisfy all the constraints in \eqref{eq: nonlinear programming for causal bounds}, leading to a lack of tightness.

One may further improve sample efficiency by solving the following \textit{linear programming problem} to find tight bounds on each $p_{ijkl}$:
\begin{align}
  \label{eq: linear programming to find support}
  \begin{split}
    \max / \min \  & p_{ijkl}, \\
    \text{s.t.} \  & \text{linear constrains in \eqref{eq: nonlinear programming for causal bounds}.}
  \end{split}
\end{align}
While these bounds are tight for each \textit{individual} $p_{ijkl}$, the Cartesian product of these bounds may not be tight for the entire vector of $p_{ijkl}$ values.

To support this claim, we report the proportion of valid samples (i.e., joint distributions that satisfy the constraints) obtained using different sample spaces in \cref{tb: valid sample proportion with different sample spaces} for the example discussed in \cref{sec: numerical}.
We observed that even with the bounds derived from the individual LPs in \eqref{eq: linear programming to find support}, only $0.3\%$ of the samples were valid, leading to a significant loss in sample efficiency.
\begin{table}[hbtp]
  \centering
  \small
  \begin{tabular}{c|c}
    \toprule
    sample space for $p_{ijkl}$           &  proportion of valid samples \\
    \midrule
    $[0,1]$ &       $\approx 0$         \\
    support given by \eqref{eq: frechet inequalities} &    $<10^{-4}$             \\
    support given by \eqref{eq: linear programming to find support} &        $0.3\%$         \\
    \midrule
    \cref{alg: MC-causal model} & $100\%$           \\
    \bottomrule
  \end{tabular}
  \caption{Proportion of valid samples obtained with different sample spaces for the example in \cref{sec: numerical}.}
  \label{tb: valid sample proportion with different sample spaces}
\end{table}
Additionally, this sampling method offers both an intuitive explanation and numerical evidence for why the approach by \citet{CEbound} may fail to yield tight bounds, and how our approach improves upon it.

We further introduce a sampling algorithm based on sequential linear programming to generate valid samples from the feasible region.
Let $S$ denote a set of free variables for the linear equations in \eqref{eq: nonlinear programming for causal bounds}.
The procedure begins by selecting such a set $S$ with cardinality $n_{\mathcal{A}} n_{\mathcal{Y}} n_{\mathcal{W}}n_{\mathcal{U}} - n_{\mathcal{A}} n_{\mathcal{Y}} n_{\mathcal{W}} - n_{\mathcal{U}} + 1$.
We then iteratively sample each variable $p_{n_r}$ with $p_{n_r}\in S$.
For the first variable $p_{n_1}$,
we solve \eqref{eq: linear programming to find support} to determine its support interval $[l_{n_1},h_{n_1}]$,
and then sample a value $\widehat{p}_{n_1}$ from a user-specified distribution truncated to $[l_{n_1},h_{n_1}]$.
At iteration $r$, with the values of $p_{n_1},\cdots,p_{n_{r-1}}$ already sampled, we add constraints to ensure that each of these variables is fixed to its sampled value. Specifically, we find the support $[l_{n_r},h_{n_r}]$ for $p_{n_r}$ by solving
\begin{align}
  \label{eq: sequential linear programming to find support}
  \begin{split}
    \max / \min \  & p_{n_r}, \\
    \text{s.t.} \  & \text{linear constrains in \eqref{eq: nonlinear programming for causal bounds},} \\
    & p_{n_s} = \widehat{p}_{n_s}, \quad \forall s=1,2,\cdots,r-1,
  \end{split}
\end{align}
We then sample $\widehat{p}_{n_r}$ is from the support $[l_{n_r},h_{n_r}]$.
After completing all $\nu(S)$ steps, the remaining $p_{ijkl} \notin S$, can be uniquely determined by solving the equality constraints of \eqref{eq: nonlinear programming for causal bounds}.

Each sample represents a possible joint distribution consistent with the observed marginals.
By sequentially solving linear programs (LPs), this algorithm ensures that each sample respects the imposed constraints, thereby avoiding invalid distributions and significantly improving sample efficiency; see Table \ref{tb: valid sample proportion with different sample spaces} for a comparison with existing methods.

\begin{algorithm}[hbtp]
  \renewcommand{\algorithmicrequire}{\textbf{Input:}}
  \renewcommand{\algorithmicensure}{\textbf{Output:}}
  \caption{Monte-Carlo sampling for compatible causal models using sequential LP}
  \label{alg: MC-causal model}
  \begin{algorithmic}[1]
    \Require Observational distribution $F(a,y,w)$ and $F(u)$ and sampling distribution ${F}_s$
    \State Select a set of free variables $S$ with cardinality $n_{\mathcal{A}} n_{\mathcal{Y}} n_{\mathcal{W}}n_{\mathcal{U}} - n_{\mathcal{A}} n_{\mathcal{Y}} n_{\mathcal{W}} - n_{\mathcal{U}} + 1$
    \State Compute each $\beta_{ijk}$ and $\beta_l$
    \State Sequentially solve LP \eqref{eq: sequential linear programming to find support} to find the support $[l_{ijkl},h_{ijkl}]$ of $p_{ijkl}$ for each $p_{ijkl}\in S$
    \State Sample a value $\widehat{p}_{ijkl}$ from $F_s$ truncated to $[l_{ijkl},h_{ijkl}]$ for each $p_{ijkl}\in S$
    \State Solve the remaining $\widehat{p}_{ijkl}$ using the equality constraints in \eqref{eq: nonlinear programming for causal bounds} for all $p_{ijkl}\notin S$
    \Ensure Joint distribution of the endogenous variables represented by $\widehat{\bm p} \triangleq \{\widehat{p}_{ijkl}: \forall i,j,k,l\}$
  \end{algorithmic}
\end{algorithm}

\begin{theorem}[\citet{CEbound}]
  \label{thm:pearl'bounds}
  Given a causal diagram $G$ and a distribution compatible with $G$, let $W \cup U$ be a set of variables satisfying the back-door criterion in $G$ relative to an ordered pair $(X, Y)$, where $W \cup U$ is partially observable, i.e., only probabilities $\mathbb{P}(X, Y, W)$ and $\mathbb{P}(U)$ are given. The causal effects of $X$ on $Y$ are then bounded as follows:
  \begin{equation}
    \mathrm{LB} \leq \mathbb{P}(Y=y|\mathrm{do}(x)) \leq \mathrm{UB}
  \end{equation}
  where $\mathrm{LB}$ is the solution to the non-linear optimization problem in Equation \ref{eq:lb} and $\mathrm{UB}$ is the solution to the non-linear optimization problem in Equation \ref{eq:ub}.
  \begin{equation}\label{eq:lb}
    \mathrm{LB} = \min\sum_{w,u} \frac{a_{w,u}b_{w,u}}{c_{w,u}},
  \end{equation}
  \begin{equation}\label{eq:ub}
    \mathrm{UB} = \max \sum_{w,u} \frac{a_{w,u}b_{w,u}}{c_{w,u}},
  \end{equation}
  where,
  \begin{align*}
    \sum_{u} a_{w,u} &= \rho(x, y, w), \\
    \sum_{u} b_{w,u} &= \rho(w), \\
    \sum_{u} c_{w,u} &= \rho(x, w) \quad \text{for all } w \in W;
  \end{align*}
  and for all $w \in W$ and $u \in U$,
  \begin{align*}
    b_{w,u} &\geq c_{w,u} \geq a_{w,u}, \\
    \max\{0, \rho(x, y, w) + \rho(u) - 1\} &\leq a_{w,u} \leq \min\{\rho(x, y, w), \rho(u)\}, \\
    \max\{0, \rho(w) + \rho(u) - 1\} &\leq b_{w,u} \leq \min\{\rho(w), \rho(u)\}, \\
    \max\{0, \rho(x, w) + \rho(u) - 1\} &\leq c_{w,u} \leq \min\{\rho(x, w), \rho(u)\}.
  \end{align*}
\end{theorem}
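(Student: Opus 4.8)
The plan is to exhibit the true (but unknown) joint law of $(X,Y,W,U)$ as a feasible point of the optimization problems \eqref{eq:lb} and \eqref{eq:ub}, so that the actual causal effect $\mathbb{P}(Y=y\mid\mathrm{do}(x))$ automatically lies between their optimal values by the definitions of minimum and maximum. The central observation is that the decision variables are placeholders for the unobserved joint probabilities: setting $a_{w,u}=\mathbb{P}(x,y,w,u)$, $b_{w,u}=\mathbb{P}(w,u)$, and $c_{w,u}=\mathbb{P}(x,w,u)$ turns the feasible region into a collection of constraints that the genuine distribution is guaranteed to satisfy, while the objective collapses to the adjustment formula.

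First I would invoke the back-door criterion for $W\cup U$ relative to $(X,Y)$, which yields the adjustment identity $\mathbb{P}(Y=y\mid\mathrm{do}(x))=\sum_{w,u}\mathbb{P}(Y=y\mid x,w,u)\,\mathbb{P}(w,u)$. Writing $\mathbb{P}(Y=y\mid x,w,u)=\mathbb{P}(x,y,w,u)/\mathbb{P}(x,w,u)=a_{w,u}/c_{w,u}$ and multiplying by $b_{w,u}=\mathbb{P}(w,u)$ recovers the objective $\sum_{w,u} a_{w,u}b_{w,u}/c_{w,u}$ exactly; terms with $c_{w,u}=0$ force $a_{w,u}=0$ and contribute nothing, which disposes of the $0/0$ ambiguity by convention. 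Next I would verify each family of constraints from elementary probability: the three marginalization identities follow by summing out $u$ from the known quantities $\rho(x,y,w)$, $\rho(w)$, $\rho(x,w)$; the chain $b_{w,u}\ge c_{w,u}\ge a_{w,u}$ follows from the event inclusions $\{X{=}x,Y{=}y,W{=}w,U{=}u\}\subseteq\{X{=}x,W{=}w,U{=}u\}\subseteq\{W{=}w,U{=}u\}$; and each pair of Fr\'echet--Hoeffding inequalities is the standard bound $\max\{0,P(A)+P(B)-1\}\le P(A\cap B)\le\min\{P(A),P(B)\}$ applied with $B=\{U=u\}$ (so that $\rho(u)$ enters) and $A$ equal to $\{X{=}x,Y{=}y,W{=}w\}$, $\{W{=}w\}$, or $\{X{=}x,W{=}w\}$ in turn.

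Having shown that the true assignment is feasible and that its objective value equals the causal effect, the conclusion $\mathrm{LB}\le\mathbb{P}(Y=y\mid\mathrm{do}(x))\le\mathrm{UB}$ is immediate. I expect the only genuinely delicate points to be bookkeeping rather than conceptual: stating the adjustment formula precisely (citing the do-calculus review already in the preliminaries) and handling the degenerate denominators so the objective is well defined on the whole feasible set. I would emphasize that this argument establishes \emph{validity} of the bounds but not tightness---the feasible region is strictly larger than the image of actual joint laws, since it omits the global marginal-consistency constraints that our own formulation enforces, which is precisely the source of looseness the surrounding discussion exploits.
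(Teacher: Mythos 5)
Your proof is correct, but note that the paper contains no proof of this statement to compare against: the theorem is quoted from \citet{CEbound} in \cref{sec in appendix: Benchmark Sampling Method} purely as a benchmark. Your argument---instantiate $a_{w,u}=\mathbb{P}(x,y,w,u)$, $b_{w,u}=\mathbb{P}(w,u)$, $c_{w,u}=\mathbb{P}(x,w,u)$, verify the marginalization identities, the inclusion chain $b_{w,u}\ge c_{w,u}\ge a_{w,u}$, and the Fr\'echet--Hoeffding bounds, then observe that the back-door adjustment formula makes the objective at this feasible point equal the causal effect, so the min and max bracket it---is exactly the canonical argument underlying the cited result, and every step checks out. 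The one point you should state as an explicit hypothesis rather than dispose of ``by convention'' is positivity: the adjustment identity $\mathbb{P}(Y=y\mid\mathrm{do}(x))=\sum_{w,u}\mathbb{P}(Y=y\mid x,w,u)\,\mathbb{P}(w,u)$ requires $\mathbb{P}(x,w,u)>0$ whenever $\mathbb{P}(w,u)>0$. If overlap fails on a stratum, the interventional quantity $\mathbb{P}(Y=y\mid\mathrm{do}(x),w,u)$ remains well defined in the SCM and can be nonzero even though $a_{w,u}=c_{w,u}=0$, so the $0/0$-as-$0$ convention would silently drop a genuine contribution and the claimed equality between the objective and the causal effect would fail there; this assumption is implicit in \citet{CEbound} and should be inherited explicitly. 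Finally, your closing remark that the argument establishes validity but not tightness---because the feasible region strictly contains the image of genuine joint laws, lacking global marginal-consistency constraints---is precisely the critique the paper develops in \cref{app: benchmark causal bound computation} and uses in \cref{sec: numerical} to explain why its own formulation produces strictly tighter intervals, so your proof and its caveat align fully with the paper's use of this theorem.
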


\subsubsection*{Numerical setups for \cref{fig: sampling general}}
\label{subsubsec in appendix: numerical setups for sampling general}

We set $n_{\mathcal{A}}=n_{\mathcal{Y}}=n_{\mathcal{W}}=n_{\mathcal{U}}=2$,
and randomly generate a feasible set of parameters $\beta_{i,j,k}$ and $\beta_l$ for $i=1,2$, $j=1,2$, $k=1,2$, and $l=1,2$.
To solve nonlinear optimization problems, we use SciPy's \texttt{optimize.minimize},
seeding it with multiple starting points drawn from Algorithm \ref{alg: hit-run for density}.
This simple initialization helps the optimizer escape poor local optima.

\subsubsection*{Performance Comparison of solving two optimization problems}

The known parameters for the optimization problem were randomly generated to ensure generalized evaluation.
To initialize the optimization process,
we first sampled 2,000 feasible points uniformly from the solution domain.
From this pool, subsets of 50, 100, 200, and 500 points were randomly selected as starting points for optimization oracles.
The computational overhead of the sampling phase was negligible (contributing $<0.5\%$ to total runtime),
confirming that the initialization method does not materially impact time complexity.
The dominant computational cost is attributed to the optimization algorithms themselves.

Our method outperforms \cite{CEbound} in causal bound optimization,
delivering tighter bounds more efficiently.
Our method achieves order-of-magnitude speedups (e.g., $6.4\times$ faster at 500 points: 12.3s vs. 78.8s)
and our solutions yield strictly narrower and more informative bounds.

\begin{table}[htbp]
  \centering
  \caption{Optimization performance comparison}
  \label{tab:solver_performance}
  \small
  \begin{tabular}{c *{2}{c c c}}
    \toprule
    \multirow{2}{*}{Starting Points} &
    \multicolumn{3}{c}{\citet{CEbound}} &
    \multicolumn{3}{c}{Ours} \\
    \cmidrule(lr){2-4} \cmidrule(l){5-7}
    & Time (s) & Minimum & Maximum &
    Time (s) & Minimum & Maximum \\
    \midrule
    50    & 9.7 & 0.110 & 0.454 & 0.8 & 0.165 & 0.346 \\
    100   & 30.9 & 0.110 & 0.454 & 2.3 & 0.165 & 0.346 \\
    200   & 39.4 & 0.110 & 0.454 & 4.3 & 0.165 & 0.380 \\
    500   & 78.8 & 0.110 & 0.467 & 12.3 & 0.136 & 0.381 \\
    \bottomrule
  \end{tabular}
\end{table}

\subsection{Transfer Learning for Multi-Armed Bandits.}

\begin{table}[htbp]
  \centering
  \caption{Detailed numerical results for \cref{alg: MAB with noisy causal bounds} (Arm 3)}
  \label{tab: Detailed numerical results for Arm 3}
  \small
  \begin{tabular}{cccccc}
    \toprule
    Estimation Error & \multicolumn{2}{c}{Final Regret} & \multicolumn{2}{c}{Selection Count} \\
    \cmidrule(lr){2 - 3} \cmidrule(lr){4 - 5}
    ${\epsilon}_3(\delta)$ & Mean & SD & Mean & SD \\
    \midrule
    0.015 & 57.248 & 3.481 & 0.0 & 0.0 \\
    0.018 & 59.708 & 4.713 & 20.7 & 24.611 \\
    0.020 & 70.132 & 4.075 & 130.16 & 21.692 \\
    0.022 & 78.686 & 5.294 & 213.98 & 25.781 \\
    0.025 & 87.146 & 4.653 & 299.58 & 27.370 \\
    0.030 & 92.872 & 3.428 & 358.62 & 25.509 \\
    \bottomrule
  \end{tabular}
\end{table}

\begin{table}[htbp]
  \centering
  \caption{Detailed numerical results for \cref{alg: MAB with noisy causal bounds} (Arm 4)}
  \label{tab: Detailed numerical results for Arm 4}
  \small
  \begin{tabular}{cccccc}
    \toprule
    Estimation Error & \multicolumn{2}{c}{Final Regret} & \multicolumn{2}{c}{Selection Count} \\
    \cmidrule(lr){2 - 3} \cmidrule(lr){4 - 5}
    ${\epsilon}_4(\delta)$ & Mean & SD & Mean & SD \\
    \midrule
    0.015 & 93.846 & 4.356 & 374.66 & 30.335 \\
    0.018 & 93.984 & 4.780 & 368.70 & 29.131 \\
    0.02 & 93.346 & 4.335 & 366.72 & 32.515 \\
    0.022 & 92.244 & 4.385 & 368.38 & 33.180 \\
    0.025 & 94.218 & 4.618 & 371.60 & 26.967 \\
    0.03 & 94.823 & 4.726 & 373.46 & 31.629 \\
    \bottomrule
  \end{tabular}
\end{table}

\subsection{Negative Transfer in Multi-Armed Bandits.}

\definecolor{barcolor}{RGB}{52,152,219}
\definecolor{linecolor}{RGB}{231,76,60}
\definecolor{errorcolor}{RGB}{44,62,80}
\begin{figure}[htbp]
  \centering
  \begin{subfigure}{0.46\textwidth}
    \centering
    \begin{tikzpicture}
      \begin{axis}[
          width=0.9\textwidth,
          height=0.62\textwidth,
          axis y line*=left,
          xlabel={Number of Offline Samples for Arm 4},
          ylabel={Selection Count},
          ymin=0, ymax=2200,
          ytick={0,500,1000,1500,2000},
          yticklabel style={font=\footnotesize},
          ylabel style={font=\small},
          xmin=-200, xmax=3500,
          xtick={100,1000,1500,2000,2500,3000},
          xticklabel style={font=\footnotesize},
          xlabel style={font=\small},
          legend style={at={(0.8,1.3)}, font=\small},
          grid=major,
          grid style={dashed, gray!30},
          bar width=15pt,
          error bars/y dir=both,
          error bars/y explicit,
          error bars/error bar style={color=errorcolor, thick}
        ]
        \addplot+[
          ybar,
          fill=barcolor!40,
          draw=barcolor,
        ]
        coordinates {
          (100, 483.76) +- (0,29.021)
          (1000, 1063.24) +- (0,39.581)
          (1500, 1296.6) +- (0,35.955)
          (2000, 1501.86) +- (0,43.817)
          (2500, 1668.98) +- (0,55.242)
          (3000, 1865.28) +- (0,62.670)
        };
        \addlegendentry{Selection Count}

        \node at (axis cs:100, 483.76) [below, font=\tiny, xshift=-2pt] {483.8};
        \node at (axis cs:1000, 1063.24) [below, font=\tiny, xshift=-2pt] {1063.2};
        \node at (axis cs:1500, 1296.6) [below, font=\tiny, xshift=-2pt] {1296.6};
        \node at (axis cs:2000, 1501.86) [below, font=\tiny, xshift=-2pt] {1501.9};
        \node at (axis cs:2500, 1668.98) [below, font=\tiny, xshift=-2pt] {1669.0};
        \node at (axis cs:3000, 1865.28) [below, font=\tiny, xshift=-2pt] {1865.3};
      \end{axis}

      \begin{axis}[
          width=0.9\textwidth,
          height=0.62\textwidth,
          axis y line*=right,
          axis x line=none,
          ymin=0, ymax=300,
          xmin=-200, xmax=3500,
          yticklabel style={font=\footnotesize},
          ylabel style={font=\small},
          legend style={at={(0.98,0.88)}, anchor=north east, font=\small},
        ]
        \addplot+[
          color=linecolor,
          mark=*,
          mark options={fill=white, scale=1.2},
          line width=1.2pt,
          error bars/.cd,
          y dir=both,
          y explicit,
          error bar style={color=errorcolor, thick}
        ]
        coordinates {
          (100, 119.824) +- (0,5.368)
          (1000, 177.006) +- (0,4.835)
          (1500, 200.104) +- (0,5.126)
          (2000, 220.530) +- (0,5.968)
          (2500, 235.528) +- (0,7.904)
          (3000, 256.762) +- (0,7.941)
        };

        \node at (axis cs:100,119.824) [above, font=\tiny,  yshift=2pt] {119.8};
        \node at (axis cs:1000,177.006) [above, font=\tiny, yshift=2pt] {177.0};
        \node at (axis cs:1500,200.104) [above, font=\tiny, yshift=2pt] {200.1};
        \node at (axis cs:2000,220.530) [above, font=\tiny, yshift=2pt] {220.5};
        \node at (axis cs:2500,235.528) [above, font=\tiny, yshift=2pt] {235.5};
        \node at (axis cs:3000,256.762) [above, font=\tiny, yshift=2pt] {256.8};
      \end{axis}
    \end{tikzpicture}
    \caption{Impact on Arm 4}
    \label{fig:negative_impact_arm4}
  \end{subfigure}
  \hfill
  \begin{subfigure}{0.46\textwidth}
    \centering
    \begin{tikzpicture}
      \begin{axis}[
          width=0.9\textwidth,
          height=0.62\textwidth,
          axis y line*=left,
          xlabel={Number of Offline Samples for Arm 5},
          ymin=8500, ymax=9400,
          ytick={8500,8700,8900,9100,9300},
          yticklabel style={font=\footnotesize},
          ylabel style={font=\small},
          xmin=-200, xmax=3500,
          xtick={100,1000,1500,2000,2500,3000},
          xticklabel style={font=\footnotesize},
          xlabel style={font=\small},
          legend style={at={(0.98,0.98)}, anchor=north east, font=\small},
          grid=major,
          grid style={dashed, gray!30},
          bar width=15pt,
          error bars/y dir=both,
          error bars/y explicit,
          error bars/error bar style={color=errorcolor, thick}
        ]
        \addplot+[
          ybar,
          fill=barcolor!40,
          draw=barcolor,
        ]
        coordinates {
          (100, 9107.62) +- (0,41.382)
          (1000, 9018.4) +- (0,47.626)
          (1500, 8988.3) +- (0,38.118)
          (2000, 8866.48) +- (0,64.606)
          (2500, 8716.82) +- (0,83.119)
          (3000, 8622.4) +- (0,72.802)
        };

        \node at (axis cs:100, 9107.62) [above, font=\tiny, yshift=3pt] {9107.6};
        \node at (axis cs:1000, 9018.4) [below, font=\tiny, yshift=-3pt] {9018.4};
        \node at (axis cs:1500, 8988.3) [below, font=\tiny, yshift=-2pt] {8988.3};
        \node at (axis cs:2000, 8866.48) [above, font=\tiny, yshift=3pt] {8866.5};
        \node at (axis cs:2500, 8716.82) [above, font=\tiny, yshift=4pt] {8716.8};
        \node at (axis cs:3000, 8622.4) [above, font=\tiny, yshift=4pt] {8622.4};
      \end{axis}

      \begin{axis}[
          width=0.9\textwidth,
          height=0.62\textwidth,
          axis y line*=right,
          axis x line=none,
          yticklabel style={font=\footnotesize},
          ylabel={Cumulative Regret},
          xmin=-200, xmax=3500,
          ymin=0, ymax=200,
          ylabel style={font=\small},
          legend style={at={(0.8,1.3)}, font=\small},
        ]
        \addplot+[
          color=linecolor,
          mark=*,
          mark options={fill=white, scale=1.2},
          line width=1.2pt,
          error bars/.cd,
          y dir=both,
          y explicit,
          error bar style={color=errorcolor, thick}
        ]
        coordinates {
          (100, 113.658) +- (0,4.254)
          (1000, 123.662) +- (0,5.105)
          (1500, 127.368) +- (0,4.126)
          (2000, 139.906) +- (0,6.755)
          (2500, 155.344) +- (0,8.504)
          (3000, 165.412) +- (0,7.413)
        };
        \addlegendentry{Cumulative Regret}

        \node at (axis cs:100,113.658) [below, font=\tiny, yshift=-2pt] {113.7};
        \node at (axis cs:1000,123.662) [above, font=\tiny, yshift=2pt] {123.7};
        \node at (axis cs:1500,127.368) [above, font=\tiny, yshift=2pt] {127.4};
        \node at (axis cs:2000,139.906) [above, font=\tiny, yshift=4pt] {139.9};
        \node at (axis cs:2500,155.344) [above, font=\tiny, yshift=4pt] {155.3};
        \node at (axis cs:3000,165.412) [above, font=\tiny, yshift=4pt] {165.4};
      \end{axis}
    \end{tikzpicture}
    \caption{Impact on Arm 5}
    \label{fig:negative_impact_arm5}
  \end{subfigure}

  \caption{Impact of offline dataset size on transfer learning algorithm performance.
    For both subgraphs, bars represent the average selection count of the test arm (left axis), while the line shows
  the mean final regret (right axis). Error bars indicate $\pm1$ standard deviation.}
  \label{fig:negative_transfer}
\end{figure}
To simulate naive knowledge transfer, the UCB-variant algorithm is warm-started using potentially incorrect prior reward estimates derived from a source environment.
Specifically, we initialize the prior estimates for the six arms as (0.5, 0.6, 0.7, 0.78, 0.85, 0.75).
Crucially, these priors introduce bias, most notably causing misidentification of the optimal arm during online learning.
This is demonstrated by assuming the true optimal arm in the target environment has a below-average expected reward of 0.75 (Arm 5),
while the priors incorrectly suggest Arm 4 (0.85) is optimal.
The true mean rewards during online learning correspond to the target environment configuration in \cref{tab:arm configuration in MAB}.
This discrepancy between the warm-start priors and the target environment's reality models the negative transfer effect inherent in naive knowledge transfer.

To investigate offline data volume impacts,
we vary Arm 4's/Arm 5's sample size from 100 to 3000 while maintaining other arms at 30 samples.
After executing $T = 10^4$ rounds over 50 trials (results in \cref{fig:negative_impact_arm4}),
we observe that increasing offline samples for either Arm 4 or Arm 5 (see \cref{fig:negative_impact_arm5}) degrades performance below standard UCB (\cref{fig:negative_transfer}).
Crucially, larger offline samples increase final regret—conclusive evidence of negative transfer.
This manifests in shifting arm selection:
The suboptimal Arm 4's selection count increases with offline data volume,
while optimal Arm 5's decreases, demonstrating how biased priors mislead exploration.

\subsection{Limiting Behavior}

We set a fixed estimation error $\epsilon_a(\delta)$ for each causal bound.
In this experiment, this fixed value was configured from $0.1$ to $0.01$.


\begin{figure}[htbp]
  \centering
  \begin{tikzpicture}
    \begin{axis}[
        width = 12cm,
        height = 8cm,
        xlabel = {Estimation Error ($\epsilon_a(\delta)$)},
        ylabel = {Mean Final Regret},
        grid = major,
        grid style = {dashed, gray!30},
        legend style={
          at={(0.38,0.8)},
          anchor=north east,
          font=\small,
          legend columns=1,
          cells={anchor=west},
        },
        xmode = log,
        log basis x = 10,
        xmin = 0.0005, xmax = 1.36,
        ymin = 0, ymax = 120,
        xtick = {0.001,0.01,0.1,1},
        extra x ticks = {0.02,0.04,0.06},
        extra x tick labels = {0.02,0.04},
        error bars/y dir = both,
        error bars/y explicit,
        error bars/error bar style = {blue, thick},
        title = {Limiting Behavior of \cref{alg: MAB with noisy causal bounds}},
        title style = {font=\bfseries}
      ]
      \addplot[name path=top1, draw=none, forget plot] coordinates {(0.0005,118.9) (1.36,118.9)};
      \addplot[name path=bottom1, draw=none, forget plot] coordinates {(0.0005,109.9) (1.36,109.9)};
      \addplot[green!20, fill opacity=0.4] fill between[of=top1 and bottom1];
      \addlegendentry{Plain UCB region}

      \addplot[green, thick, no marks] coordinates {(0.0005,114.4) (1.36,114.4)};
      \addlegendentry{Plain UCB}

      \node[green, anchor=west] at (axis cs: 0.002, 110) {\small $114.4 \pm 4.5$};
      \addplot[name path=top2, draw=none, forget plot] coordinates {(0.0005,32.9) (1.36,32.9)};
      \addplot[name path=bottom2, draw=none, forget plot] coordinates {(0.0005,27.3) (1.36,27.3)};
      \addplot[red!20, fill opacity=0.4] fill between[of=top2 and bottom2];
      \addlegendentry{\cref{alg: TL-MAB} region}

      \addplot[red, thick, no marks] coordinates {(0.0005,30.1) (1.36,30.1)};
      \addlegendentry{\cref{alg: TL-MAB}}

      \node[red, anchor=west] at (axis cs: 0.002, 34) {\small $30.1 \pm 2.8$};
      \addplot+ [
        blue,
        mark = square*,
        mark options = {solid, fill = blue!20},
        thick,
      ]
      coordinates {
        (0.001, 36.322) +- (0, 2.566)
        (0.01,  37.034) +- (0, 3.232)
        (0.02,  43.596) +- (0, 4.466)
        (0.04,  74.096) +- (0, 4.216)
        (0.06,  80.908) +- (0, 4.658)
        (0.08,  88.720) +- (0, 4.667)
        (0.1,   95.062) +- (0, 4.465)
        (1.0,   113.71) +- (0, 4.399)
      };
      \addlegendentry{\cref{alg: MAB with noisy causal bounds}}

      \node[font=\small] at (axis cs: 0.001,36.322) [above]  {36.3};
      \node[font=\small] at (axis cs: 0.01, 37.034) [above]  {37.0};
      \node[font=\small] at (axis cs: 0.02, 43.596) [above]  {43.6};
      \node[font=\small] at (axis cs: 0.04, 74.096) [above]  {74.1};
      \node[font=\small] at (axis cs: 0.06, 80.908) [above]  {80.9};
      \node[font=\small] at (axis cs: 0.08, 88.720) [above]  {88.7};
      \node[font=\small] at (axis cs: 0.1,  95.062) [above]  {95.1};
      \node[font=\small] at (axis cs: 1,  113.71) [below]  {113.7};
    \end{axis}
  \end{tikzpicture}
\end{figure}

\begin{itemize}
  \item \textbf{Superior Performance with Precise Causal Information:} Algorithm \ref{alg: MAB with noisy causal bounds} demonstrates remarkable efficiency when provided with accurate causal bounds ($\epsilon_a(\delta) \leq 0.01$).
    In this regime, it achieves near-optimal regret ($\sim$37), coming remarkably close to the performance ceiling set by Algorithm \ref{alg: TL-MAB} ($30.1 \pm 2.8$) which requires \textit{perfect} causal knowledge.
    This represents a $\sim67\%$ reduction in regret compared to the plain UCB baseline ($114.4 \pm 4.5$),
    showcasing its ability to effectively leverage reliable causal structures for substantial performance gains.
    The algorithm maintains this significant advantage across the low-error regime ($\epsilon_a(\delta) \leq 0.1$), with performance optimization directly linked to causal estimation quality.

  \item \textbf{Robustness and Asymptotic Guarantees:} Crucially, even when causal bounds become unreliable ($\epsilon_a(\delta) = 1.0$), the regret ($113.7 \pm 4.399$) remains statistically indistinguishable from the non-causal baseline ($114.4 \pm 4.5$). This demonstrates that \textit{our algorithm provably avoids negative transfer}, ensuring performance never deteriorates below the plain UCB benchmark. The results confirm the theoretical limiting behavior: as $\epsilon_a(\delta) \to 0^+$, regret approaches the optimal TL-MAB level, while as $\epsilon_a(\delta) \to \infty$, it converges to the plain UCB baseline. This graceful degradation guarantees robust performance, leveraging causal information when accurate while maintaining baseline-level efficiency when estimates are uninformative.
\end{itemize}

\subsection{Transfer Learning in Contextual Bandits.}

\subsubsection*{Numerical Setup in \cref{sec: numerical}}
In this numerical setup, we define the feature vectors $\phi(a, w)$ in \cref{tb:feature_vectors},
the lower and upper bounds $l(a, w)$ and $h(a, w)$ in \cref{tb:causal bounds in numerical experiment},
as well as two candidate sets for each context $w$ in \cref{tb: Comparison of Candidate Sets}.

\begin{table}[htbp]
  \centering
  \caption{Feature Vector Definitions $\phi(a,w)$}
  \label{tb:feature_vectors}
  \small
  \begin{tabular}{lccccc}
    \toprule
    Context & $a_1$ & $a_2$ & $a_3$ & $a_4$ & $a_5$ \\
    \midrule
    $w_1$ & $[1.0, 0.0]$ & $[0.0, 1.0]$ & $[1.0, 1.0]$ & $[0.5, 0.5]$ & $[2.0, 0.0]$ \\
    $w_2$ & $[1.0, 0.0]$ & $[0.0, 1.0]$ & $[0.5, 0.5]$ & $[1.0, 1.0]$ & $[0.0, 1.0]$ \\
    $w_3$ & $[0.8, 0.0]$ & $[0.0, 0.8]$ & $[0.0, 0.0]$ & $[0.0, 0.0]$ & $[0.0, 0.0]$ \\
    $w_4$ & $[1.2, 0.0]$ & $[0.0, 1.2]$ & $[0.0, 0.0]$ & $[0.0, 0.0]$ & $[0.0, 0.0]$ \\
    $w_5$ & $[1.0, 0.0]$ & $[0.0, 1.0]$ & $[1.0, 1.0]$ & $[0.5, 0.5]$ & $[0.0, 0.0]$ \\
    $w_6$ & $[1.0, 0.0]$ & $[0.0, 1.0]$ & $[0.5, 0.5]$ & $[1.0, 1.0]$ & $[1.0, 0.5]$ \\
    $w_7$ & $[1.0, 0.0]$ & $[0.0, 1.0]$ & $[0.5, 0.5]$ & $[0.5, 0.5]$ & $[2.0, 0.0]$ \\
    $w_8$ & $[1.5, 0.0]$ & $[0.0, 1.0]$ & $[0.7, 0.7]$ & $[0.1, 0.1]$ & $[0.1, 0.1]$ \\
    $w_9$ & $[1.0, 0.0]$ & $[0.0, 1.0]$ & $[0.1, 0.1]$ & $[0.0, 1.0]$ & $[0.1, 0.1]$ \\
    $w_{10}$ & $[0.5, 0.0]$ & $[0.0, 0.5]$ & $[0.5, 0.5]$ & $[0.5, 0.5]$ & $[0.5, 0.5]$ \\
    $w_{11}$ & $[1.0, 0.0]$ & $[0.0, 1.0]$ & $[0.0, 0.0]$ & $[1.0, 1.0]$ & $[0.1, 2.0]$ \\
    \bottomrule
  \end{tabular}%
\end{table}

\begin{table}[htbp]
  \centering
  \caption{Bound Definitions $[l(a,w), h(a,w)]$ for Expected Rewards}
  \label{tb:causal bounds in numerical experiment}
  \small
  \begin{tabular}{lccccc}
    \toprule
    Context & $a_1$ & $a_2$ & $a_3$ & $a_4$ & $a_5$ \\
    \midrule
    $w_1$ & $[0.5, 0.95]$ & $[0.5, 0.95]$ & $[0.95, 1.9]$ & $[0.0, 0.85]$ & $[1.7, 1.9]$ \\
    $w_2$ & $[0.5, 0.95]$ & $[0.5, 0.95]$ & $[0.0, 0.85]$ & $[0.95, 1.9]$ & $[0.0, 0.94]$ \\
    $w_3$ & $[0.6, 1.05]$ & $[0.0, 0.85]$ & $[0.0, 0.5]$ & $[0.0, 0.5]$ & $[0.0, 0.7]$ \\
    $w_4$ & $[0.0, 1.1]$ & $[0.8, 1.05]$ & $[0.0, 0.01]$ & $[0.0, 0.01]$ & $[0.0, 0.01]$ \\
    $w_5$ & $[0.0, 0.9]$ & $[0.0, 0.9]$ & $[0.95, 1.9]$ & $[0.0, 0.9]$ & $[0.0, 0.01]$ \\
    $w_6$ & $[0.5, 0.95]$ & $[0.5, 0.95]$ & $[0.0, 0.85]$ & $[0.95, 1.9]$ & $[0.95, 1.9]$ \\
    $w_7$ & $[0.5, 0.95]$ & $[0.5, 0.95]$ & $[0.0, 0.85]$ & $[0.0, 0.85]$ & $[1.4, 1.9]$ \\
    $w_8$ & $[0.8, 1.35]$ & $[0.8, 0.95]$ & $[0.7, 1.9]$ & $[0.0, 0.2]$ & $[0.0, 0.2]$ \\
    $w_9$ & $[0.5, 0.95]$ & $[0.8, 0.95]$ & $[0.0, 0.4]$ & $[0.8, 0.95]$ & $[0.0, 0.4]$ \\
    $w_{10}$ & $[0.0, 1.0]$ & $[0.0, 1.0]$ & $[0.0, 1.0]$ & $[0.0, 1.0]$ & $[0.0, 1.0]$ \\
    $w_{11}$ & $[0.5, 0.95]$ & $[0.5, 0.95]$ & $[0.0, 1.0]$ & $[0.0, 1.9]$ & $[0.0, 1.9]$ \\
    \bottomrule
  \end{tabular}%
\end{table}

\begin{table}[htbp]
  \centering
  \caption{Comparison of Two Candidate Sets}
  \label{tb: Comparison of Candidate Sets}
  \begin{tabular}{@{}lcc@{}}
    \toprule
    \textbf{Context} &
    \multicolumn{1}{c}{ $\{ a\in\mathcal{A} \mid  h(a,w)\geq \max_i l(i,w)\} $} &
    \multicolumn{1}{c}{ $\mathcal{A}^*(w)$} \\
    \midrule
    $w_1$ & $\{a_3, a_5\}$ & $\{a_3, a_5\}$ \\
    $w_2$ & $\{a_1, a_2, a_4\}$ & $\{a_4\}$ \\
    $w_3$ & $\{a_1, a_2, a_5\}$ & $\{a_1, a_2\}$ \\
    $w_4$ & $\{a_1, a_2\}$ & $\{a_1, a_2\}$ \\
    $w_5$ & $\{a_3\}$ & $\{a_3\}$ \\
    $w_6$ & $\{a_1, a_2, a_4, a_5\}$ & $\{a_4\}$ \\
    $w_7$ & $\{a_5\}$ & $\{a_5\}$ \\
    $w_8$ & $\{a_1, a_2, a_3\}$ & $\{a_1, a_3\}$ \\
    $w_9$ & $\{a_1, a_2, a_4\}$ & $\{a_1, a_2, a_4\}$ \\
    $w_{10}$ & $\{a_1, a_2, a_3, a_4, a_5\}$ & $\{a_1, a_2, a_3, a_4, a_5\}$ \\
    $w_{11}$ & $\{a_1, a_2, a_3, a_4, a_5\}$ & $\{a_4, a_5\}$ \\
    \bottomrule
  \end{tabular}
\end{table}

\subsubsection{Finite Function Space}
We generate function space $\mathcal{F} = {(w - w_0)^\top (a - a_0)}$ of size 50 by sampling parameters $w_0$ and $a_0$ in $\mathbb{R}^d$ from $\mathcal{N}(0, 0.1)$, where $d = 10$. A true reward function $f^*$ is randomly selected from the first 5 functions in $\mathcal{F}$. The reward is then generated as
$$
Y= f^*(W,A) + \mathcal{N}(0,0.1),
$$
where the context $W$ is drawn i.i.d. from a standard normal distribution, and $A$ is the selected action. The action set $\mathcal{A}$ is initialized uniformly at random from $[-1, 1]^d$ with a size of 10. Each experiment is repeated 50 times to smooth the regret curves.

We compare the performance of our algorithm with FALCON \citep{fasterCB}, a well-known implementation of IGW. The numerical results in \cref{fig: comparison with classical and causally enhanced algorithms in function approximation settings} demonstrate that our algorithm significantly outperforms FALCON, even without explicitly removing infeasible functions.
In the experiments, the average size of the action subset $\mathcal{A}(w)$ is $3.254$, highlighting the substantial performance gains achieved by reducing the size of the action space.
Additionally, our algorithm excels with homogeneous functions, which often attain their maximum values at the same points.
In such scenarios, adaptively eliminating suboptimal actions proves to be a highly effective strategy for minimizing regrets.

\begin{figure*}[hbtp]
  \centering
  \resizebox{.45\textwidth}{!}{
    \begin{tikzpicture}
      \begin{axis}[
          height = 0.3\textwidth,
          width = 0.5\textwidth,
          xlabel = time $t$,
          ylabel = regret,
          ymin=0,
          ymax=16000,
          xmin=0,
          xmax=130000,
          xtick pos = left,
          ytick pos = left,
          legend style={at={(0.64,0.81)}, anchor=west, nodes={scale=0.8, transform shape}}
        ]

        \addplot [color=red, line width=1.3pt ] table [x index=0,y index=1, col sep = comma] {data/IGW_mean_fixed_repeat_50_K_10.csv};
        \addplot [color=green, line width=1.3pt  ] table [x index=0,y index=1, col sep = comma] {data/IGW_mean_adaptive_repeat_50_K_10.csv};
        \addplot [color=red, dashed, line width=1.3pt ] table [x index=0,y index=1, col sep = comma] {data/IGW+std_fixed_repeat_50_K_10.csv};
        \addplot [color=red, dashed, line width=1.3pt ] table [x index=0,y index=1, col sep = comma] {data/IGW-std_fixed_repeat_50_K_10.csv};
        \addplot [name path=ada_top, color=green, dashed , line width=1.3pt] table [x index=0,y index=1, col sep = comma] {data/IGW+std_adaptive_repeat_50_K_10.csv};
        \addplot [name path=ada_down, color=green, dashed , line width=1.3pt] table [x index=0,y index=1, col sep = comma] {data/IGW-std_adaptive_repeat_50_K_10.csv};

        \legend{
          FALCON,
          ours
        }
      \end{axis}
    \end{tikzpicture}
  }
  \caption{Comparison of classical and causally enhanced algorithms in function approximation settings. The solid curves represent the average cumulative regret over time for each algorithm. The top and bottom dashed curves correspond to one standard deviation added to and subtracted from the mean cumulative regret.}
  \label{fig: comparison with classical and causally enhanced algorithms in function approximation settings}
\end{figure*}

\section{Related Materials}
\label{sec in appendix: related material}

\subsection{Causal Inference}
\begin{definition}(Back-Door Criterion)
  Given an ordered pair of variables $(X, Y )$ in a directed acyclic graph $\mathcal{G}$, a set of
  variables $\bm{Z}$ satisfies the back-door criterion relative to $(X, Y)$, if no node in $\bm{Z}$ is a descendant of $X$, and $\bm{Z}$ blocks
  every path between $X$ and $Y$ that contains an arrow into $X$.
\end{definition}

\begin{definition}{d-separation}
  In a causal diagram $\mathcal{G}$, a path $\mathcal{P}$ is blocked by a set of nodes $\bm{Z}$ if and only if
  \begin{enumerate}
    \item $\mathcal{P}$ contains a chain of nodes $A\leftarrow B \leftarrow C$ or a fork $A\rightarrow B \leftarrow C$ such that the middle node $B$ is in $\bm{Z}$ (i.e., $B$
      is conditioned on), or
    \item $\mathcal{P}$ contains a collider $A\leftarrow B \rightarrow C$ such that the collision node $B$ is not in $\bm{Z}$, and no descendant of $B$ is in $\bm{Z}$.
  \end{enumerate}
  If $\bm{Z}$ blocks every path between two nodes $X$ and $Y$ , then $X$ and $Y$ are d-separated conditional on $\bm{Z}$, and thus are
  independent conditional on $\bm{Z}$.
\end{definition}

If $X$ is a variable in a causal model, its corresponding intervention variable $I_X$ is an exogenous variable with one arrow pointing into $X$.
The range of $I_X$ is the same as the range of $X$, with one additional value we can call ``off''.
When $I_X$ is off, the value of $X$ is determined by its other parents in the causal model.
When $I_X$ takes any other value, $X$ takes the same value as $I_X$, regardless of the value of $X$'s other parents.
If $X$ is a set of variables, then $I_X$ will be the set of corresponding intervention variables.
We introduce the following do-calculus rules proposed in \cite{causaloverview}.

\textbf{Rule 1 (Insertion/deletion of observations)}
$$
\Prob{\bm{Y}\mid \mathrm{do}(\bm{x}),\bm{Z},\bm{W}  } = \Prob{\bm{Y}\mid \mathrm{do}(\bm{x}),\bm{W}}
$$
if $\bm{Y}$ and $I_{\bm{Z}}$ are d-separated by $\bm{x} \cup \bm{W}$ in $\mathcal{G}^*$,
the graph obtained from $\mathcal{G}$ by removing all arrows pointing into variables in $\bm{x}$.

\textbf{Rule 2 (Action/observation exchange)}
$$
\Prob{\bm{Y}\mid \mathrm{do}(\bm{x}), \mathrm{do}(\bm{Z}),\bm{W}  } = \Prob{\bm{Y}\mid \mathrm{do}(\bm{x}),\bm{Z},\bm{W}}
$$
if $\bm{Y}$ and $I_{\bm{Z}}$ are d-separated by $\bm{x} \cup \bm{Z}\cup \bm{W}$ in $\mathcal{G}^{\dagger}$,
the graph obtained from $\mathcal{G}$ by removing all arrows pointing into variables in $\bm{x}$ and all arrows pointing out of variables in $\bm{z}$.

\textbf{Rule 3 (Insertion/deletion of actions)}
$$
\Prob{\bm{Y}\mid \mathrm{do}(\bm{x}), \mathrm{do}(\bm{Z}),\bm{W}  } = \Prob{\bm{Y}\mid \mathrm{do}(\bm{x}),\bm{W}}
$$
if $\bm{Y}$ and $I_{\bm{Z}}$ are d-separated by $\bm{x} \cup \bm{W}$ in $\mathcal{G}^*$,
the graph obtained from $\mathcal{G}$ by removing all arrows pointing into variables in $\bm{x}$.

\subsection{Hausdorff Convergence}

Since we mainly consider the normed linear space, we focus on the norm $\norm{\cdot}$ instead of general distance measures.
The Hausdorff distance $d_{H}$ for two sets $\Omega_1$ and $\Omega_2$ is defined as
$$
d_H(\Omega_1,\Omega_2)
=  \max\Bigl\{
  \sup_{a\in \Omega_1}\inf_{b\in \Omega_2}\|a-b\|,
  \sup_{b\in \Omega_2}\inf_{a\in \Omega_1}\|a-b\|
\Bigr\}.
$$

\begin{theorem}(\citealt{VAbook})
  \label{thm:hausdorff-convergence}
  Let $M$ be a compact metric space, and let $\{\Omega_k\}_{k=1}^\infty$ and $\Omega_\infty$ be nonempty compact subsets of $M$.  Then
  $$
  \lim_{k\to\infty}d_H(\Omega_k,\Omega_\infty) = 0
  \iff
  \limsup_{k\to\infty}\Omega_k
  =\liminf_{k\to\infty}\Omega_k
  =\Omega_\infty,
  $$
  where
  $$
  \limsup_{k\to\infty}\Omega_k
  = \bigl\{x\in M:\exists k_j\to\infty, x_{k_j}\in\Omega_{k_j}, x_{k_j}\to x\bigr\},
  $$
  and
  $$
  \liminf_{k\to\infty}\Omega_k
  = \bigl\{x\in M:\forall k\ge N, \exists x_k\in\Omega_k, x_k\to x\text{ as }k\to\infty\bigr\}.
  $$
\end{theorem}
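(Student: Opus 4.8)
The plan is to prove both implications by reducing the Hausdorff distance to its two one-sided \emph{excess} functionals and exploiting the sequential compactness of the ambient space $M$. Write $e(A,B)=\sup_{a\in A}\operatorname{dist}(a,B)$ for the excess of $A$ over $B$, so that $d_H(\Omega_k,\Omega_\infty)=\max\{e(\Omega_k,\Omega_\infty),\,e(\Omega_\infty,\Omega_k)\}$; hence $d_H(\Omega_k,\Omega_\infty)\to0$ if and only if both excesses vanish. Throughout I would use that $x\mapsto\operatorname{dist}(x,A)$ is $1$-Lipschitz (hence continuous), that compactness of each set makes the relevant infima and suprema attained, and the always-valid inclusion $\liminf_k\Omega_k\subseteq\limsup_k\Omega_k$.

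For the forward direction, assume $d_H(\Omega_k,\Omega_\infty)\to0$. First I would show $\limsup_k\Omega_k\subseteq\Omega_\infty$: given $x$ with $x_{k_j}\in\Omega_{k_j}$ and $x_{k_j}\to x$, the bound $\operatorname{dist}(x_{k_j},\Omega_\infty)\le e(\Omega_{k_j},\Omega_\infty)\le d_H(\Omega_{k_j},\Omega_\infty)\to0$ together with continuity of $\operatorname{dist}(\cdot,\Omega_\infty)$ forces $\operatorname{dist}(x,\Omega_\infty)=0$, so $x\in\Omega_\infty$ by closedness. Next I would show $\Omega_\infty\subseteq\liminf_k\Omega_k$: for a fixed $x\in\Omega_\infty$, compactness of $\Omega_k$ lets me pick $x_k\in\Omega_k$ attaining $\operatorname{dist}(x,\Omega_k)\le e(\Omega_\infty,\Omega_k)\to0$, whence $x_k\to x$. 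Chaining $\Omega_\infty\subseteq\liminf_k\Omega_k\subseteq\limsup_k\Omega_k\subseteq\Omega_\infty$ collapses all three sets to $\Omega_\infty$.

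For the reverse direction, assume $\limsup_k\Omega_k=\liminf_k\Omega_k=\Omega_\infty$ and argue each excess vanishes by contradiction. If $e(\Omega_{k},\Omega_\infty)\not\to0$, pass to a subsequence and use compactness of $\Omega_{k_j}$ to choose $x_{k_j}\in\Omega_{k_j}$ with $\operatorname{dist}(x_{k_j},\Omega_\infty)\ge\varepsilon$; compactness of $M$ yields a further convergent subsequence $x_{k_{j_i}}\to x$, so $x\in\limsup_k\Omega_k=\Omega_\infty$, yet continuity gives $\operatorname{dist}(x,\Omega_\infty)\ge\varepsilon>0$, a contradiction. Symmetrically, if $e(\Omega_\infty,\Omega_{k})\not\to0$, compactness of $\Omega_\infty$ supplies $y_{k_j}\in\Omega_\infty$ with $\operatorname{dist}(y_{k_j},\Omega_{k_j})\ge\varepsilon$ and a convergent subsequence $y_{k_{j_i}}\to y\in\Omega_\infty=\liminf_k\Omega_k$; the inner-limit property provides $z_k\in\Omega_k$ with $z_k\to y$, and the triangle inequality $\operatorname{dist}(y_{k_{j_i}},\Omega_{k_{j_i}})\le\|y_{k_{j_i}}-y\|+\|y-z_{k_{j_i}}\|\to0$ contradicts the lower bound $\varepsilon$.

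The main obstacle is the reverse direction: the inner and outer limits are defined purely sequentially, so the only leverage for converting ``the excess fails to vanish'' into a statement about $\limsup_k$ or $\liminf_k$ is to extract convergent subsequences, which is exactly where compactness of $M$ (to obtain cluster points) and of the individual sets (to attain the excess suprema and infima) is indispensable. Care is needed to keep the subsequence bookkeeping consistent, so that each extraction preserves the defining inequality $\ge\varepsilon$ along the chosen indices, and to invoke the correct limit object, namely $\limsup_k$ for the first excess and $\liminf_k$ for the second. The forward direction is comparatively routine once the excess decomposition and the $1$-Lipschitz continuity of the distance function are in hand.
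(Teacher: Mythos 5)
Your proof is correct. Note that the paper does not prove this statement at all---it is quoted verbatim from the cited reference (\citealt{VAbook}) in the ``Related Materials'' appendix---so there is no in-paper argument to compare against; your excess-functional decomposition $d_H(\Omega_k,\Omega_\infty)=\max\{e(\Omega_k,\Omega_\infty),\,e(\Omega_\infty,\Omega_k)\}$, combined with the $1$-Lipschitz continuity of $x\mapsto\operatorname{dist}(x,\cdot)$ and subsequence extraction via compactness, is exactly the standard textbook route, and all four implications (each excess vanishing versus the outer/inner limit inclusions) are handled correctly, including the subsequence bookkeeping in the reverse direction. One trivial slip: since $M$ is only a metric space, the triangle inequality in your last display should be written with the metric $d(\cdot,\cdot)$ rather than a norm $\|\cdot\|$; this does not affect the argument.
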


\begin{corollary}
  Let $\{\Omega_k\}$ be a sequence of nonempty compact sets in a compact space $M$, and let $\Omega_\infty$ be a nonempty compact subset of $M$.  Then $\Omega_k\to\Omega_\infty$ in Hausdorff distance if and only if:
  \begin{enumerate}
    \item For every $x\in\Omega_\infty$, there exists a sequence $x_k\in\Omega_k$ such that $x_k\to x$.
    \item Whenever $x_k\in\Omega_k$ is any sequence with $x_k\to x$, we have $x\in\Omega_\infty$.
  \end{enumerate}
\end{corollary}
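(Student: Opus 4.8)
The plan is to derive the corollary directly from \cref{thm:hausdorff-convergence} by translating the two conditions into inclusions between $\Omega_\infty$ and the set-theoretic limits $\liminf_{k\to\infty}\Omega_k$ and $\limsup_{k\to\infty}\Omega_k$. First I would record the elementary inclusion $\liminf_{k\to\infty}\Omega_k \subseteq \limsup_{k\to\infty}\Omega_k$, which holds in general: any $x$ admitting points $x_k\in\Omega_k$ with $x_k\to x$ is in particular a subsequential limit, hence a cluster point. Since $M$ is compact, both limit sets are nonempty compact subsets of $M$, so \cref{thm:hausdorff-convergence} applies with $\Omega_\infty$ playing the role of the common limit.

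Next I would match each condition to an inclusion. Condition~1 states that every $x\in\Omega_\infty$ is reachable as $x=\lim_k x_k$ with $x_k\in\Omega_k$; by the definition of $\liminf$, this is precisely $\Omega_\infty\subseteq\liminf_{k\to\infty}\Omega_k$. Condition~2 states that every limit point of a sequence drawn from the $\Omega_k$ lies in $\Omega_\infty$; reading this as a statement about cluster points (equivalently, subsequential limits), it is precisely $\limsup_{k\to\infty}\Omega_k\subseteq\Omega_\infty$. Combining these two inclusions with the general chain gives
\[
  \Omega_\infty \subseteq \liminf_{k\to\infty}\Omega_k \subseteq \limsup_{k\to\infty}\Omega_k \subseteq \Omega_\infty,
\]
forcing $\liminf_{k\to\infty}\Omega_k = \limsup_{k\to\infty}\Omega_k = \Omega_\infty$. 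By \cref{thm:hausdorff-convergence} this equality is equivalent to $d_H(\Omega_k,\Omega_\infty)\to 0$, establishing the ``$\Leftarrow$'' direction. For the converse, if $d_H(\Omega_k,\Omega_\infty)\to0$ then the theorem yields $\liminf_{k\to\infty}\Omega_k=\limsup_{k\to\infty}\Omega_k=\Omega_\infty$; the inclusion $\Omega_\infty\subseteq\liminf_{k\to\infty}\Omega_k$ reproduces Condition~1, and $\limsup_{k\to\infty}\Omega_k\subseteq\Omega_\infty$ reproduces Condition~2, since any $x$ with $x_k\in\Omega_k$ and $x_k\to x$ is a cluster point and hence lies in $\limsup_{k\to\infty}\Omega_k=\Omega_\infty$.

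The main obstacle---really the only subtle point---is the correct reading of Condition~2. The outer limit $\limsup$ is defined through subsequential limits, so the equivalence ``Condition~2 $\Leftrightarrow \limsup_{k\to\infty}\Omega_k\subseteq\Omega_\infty$'' must interpret ``any sequence $x_k\in\Omega_k$ with $x_k\to x$'' as ranging over all cluster points of such sequences, not merely over those index sets on which convergence happens to occur; otherwise one can manufacture sequences $\{\Omega_k\}$ satisfying both conditions verbatim while $d_H$ fails to converge. Here compactness of $M$ is essential: it guarantees that every sequence $x_k\in\Omega_k$ possesses convergent subsequences, so the set of cluster points is exactly $\limsup_{k\to\infty}\Omega_k$ and the translation is faithful. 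I would make this identification explicit before invoking the theorem, after which the argument reduces to chaining the inclusions above.
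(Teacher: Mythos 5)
Your proof is correct and is essentially the argument the paper intends: the corollary is stated without proof as an immediate consequence of \cref{thm:hausdorff-convergence}, obtained exactly by your identifications Condition~1 $\Leftrightarrow$ $\Omega_\infty\subseteq\liminf_{k\to\infty}\Omega_k$ and Condition~2 $\Leftrightarrow$ $\limsup_{k\to\infty}\Omega_k\subseteq\Omega_\infty$, followed by chaining the inclusions. Your closing observation that Condition~2 must be read as a statement about subsequential limits is a genuine and correct refinement: under the literal full-sequence reading the ``if'' direction would fail (take $\Omega_k=\{0\}$ for even $k$, $\Omega_k=\{0,1\}$ for odd $k$, and $\Omega_\infty=\{0\}$, where both conditions hold verbatim yet $d_H(\Omega_k,\Omega_\infty)\not\to 0$).
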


\end{document}